\documentclass[lettersize,journal]{IEEEtran}
\usepackage{amsmath,amsfonts}
\usepackage{algorithmic}
\usepackage{array}
\usepackage[caption=false,font=normalsize,labelfont=sf,textfont=sf]{subfig}
\usepackage{textcomp}
\usepackage{stfloats}
\usepackage{url}
\usepackage{verbatim}
\usepackage{graphicx}
\usepackage{amsmath,amsfonts,amssymb,amsthm,mathrsfs,mathtools}
\usepackage{algorithmic}
\usepackage[linesnumbered,ruled,vlined]{algorithm2e}
\usepackage{xcolor}

\usepackage[normalem]{ulem}
\usepackage{multirow}
\usepackage{booktabs}
\usepackage{makecell}
\usepackage[table]{xcolor} 
\usepackage{cite}
\usepackage{mdframed}
\usepackage{enumerate}
\usepackage{enumitem}
   
\theoremstyle{definition}
\newtheorem{assumption}{Assumption}
\newtheorem{theorem}{Theorem}
\newtheorem{definition}{Definition}
\newtheorem{corollary}{Corollary}
\newtheorem{lemma}{Lemma}

\newtheorem{proposition}{Proposition}

\def\BibTeX{{\rm B\kern-.05em{\sc i\kern-.025em b}\kern-.08em
    T\kern-.1667em\lower.7ex\hbox{E}\kern-.125emX}}
\usepackage{balance}
\begin{document}
\bstctlcite{IEEEexample:BSTcontrol}
\title{Efficient Exact Unlearning for Federated Zeroth-Order Optimization}

\author{Weijun~Yin, 
Yanqing~Yao,
Falko~Dressler,~\IEEEmembership{Fellow,~IEEE},
Di~Wang,~\IEEEmembership{Member,~IEEE},
Youming~Tao~\IEEEmembership{Member,~IEEE}

\thanks{Weijun Yin and Yanqing Yao are with the School of Cyber Science and Technology, Beihang University, Beijing 100191, China, and also with the Beijing Advanced Innovation Center for Future Blockchain and Privacy Computing, Beihang University, Beijing 100083, China (e-mail: yinweijun@buaa.edu.cn; yaoyq@buaa.edu.cn).}%
\thanks{Falko Dressler and Youming Tao are with the School of Electrical Engineering and Computer Science, TU Berlin, 10623 Berlin, Germany (e-mail: dressler@ccs-labs.org; tao@ccs-labs.org).}%
\thanks{Di Wang is with the Division of Computer, Electrical and Mathematical Sciences and Engineering, King Abdullah University of Science and Technology, Saudi Arabia (email: di.wang@kaust.edu.sa).}%
}



\maketitle

\begin{abstract}
Federated zeroth-order optimization~(FedZOO) reduces the memory and communication costs of collaborative training for edge intelligence by avoiding backpropagation while communicating only random seeds and scalar coefficients. However, existing methods lack efficient mechanisms for exact unlearning to fully remove the influence of training data when users request deletion. A key insight is that total variation stability of the learning algorithm creates overlap between model distributions before and after data deletion, allowing exact unlearning to reuse computation through coupling. Based on this observation, we jointly design two stable zeroth-order learning algorithms and their corresponding exact-unlearning procedures: \textsc{FedZOO-S} obtains stability through client and mini-batch subsampling, enabling \textsc{FedZOUL-S} to perform efficient unlearning by preserving earlier updates and restarting from the first affected step. \textsc{FedZOO-NS} further combines subsampling with Gaussian noise to increase overlap between update distributions; \textsc{FedZOUL-NS} exploits this overlap through rejection--reflection coupling to reuse affected updates when the coupling is accepted. We prove exact unlearning for both algorithms: the model and saved state have the same joint distribution as retraining with the corresponding learner on retained data. We further establish stability and convergence guarantees for the learning algorithms and derive upper bounds on expected unlearning cost. Experiments demonstrate effective unlearning with competitive model utility and up to $56.1\%$ less cumulative unlearning time than re-optimization. Compared with first-order baselines, our methods reduce communication overhead by up to approximately four orders of magnitude.

\end{abstract}

\begin{IEEEkeywords}
Federated learning, Exact Unlearning, Zeroth-Order Optimization, Total Variation Stability
\end{IEEEkeywords}

\section{Introduction}

Driven by advances in mobile and IoT hardware, edge intelligence has developed rapidly in recent years. Federated learning~(FL) enables collaborative optimization across edge devices by allowing them to exchange model parameters or gradients without moving data out of the local domain~\cite{mcmahan2017communication}. However, practical deployment remains constrained by limited memory and communication resources, with growing model sizes further amplifying these challenges. 
Techniques such as quantization~\cite{fu2020fractrain,reisizadeh2020fedpaq,ding2025lbifl}, sparsification~\cite{lin2018deep,tao2024private,guastella2025sparsyfed}, pruning and submodel training~\cite{huang2020deepadapter,horvath2021fjord,chen2025adaptive,tao2026autosub}, and knowledge distillation~\cite{zhang2019your,lin2020ensemble,fan2025fedcot} can reduce computation or communication costs. 
More recently, parameter-efficient and subspace-based federated fine-tuning methods have been developed to adapt large language models~(LLMs) with fewer trainable parameters~\cite{zhang2023fedpetuning,chen2026fedsubmuon}. 
Nevertheless, these training approaches still rely on backpropagation, whose intermediate activations can impose substantial memory overhead even when only a small subset of parameters is trainable.
Exchanging tensor-valued updates can also remain costly over bandwidth-constrained edge networks.

Federated zeroth-order optimization~(FedZOO) provides a promising alternative.
Unlike first-order methods, zeroth-order optimization~(ZOO) estimates descent
directions using only function evaluations and therefore avoids
backpropagation~\cite{wang2025scalable}. This substantially reduces the memory
required for optimization. Moreover, recent FedZOO methods exploit shared
randomness to represent high-dimensional perturbations by random seeds and
scalar coefficients~\cite{qin2024federated}, enabling communication costs that
do not scale with the model dimension. In particular, a fixed finite seed pool
allows accumulated zeroth-order updates to be represented in a compact
seed-coordinate space. These properties make FedZOO particularly attractive
for collaborative optimization of large models on resource-constrained edge
devices.

Existing FedZOO methods, however, do not address machine unlearning~(MU).
Federated systems may receive requests to remove training samples because of
data corrections, user withdrawal, or privacy requirements such as the
right to be forgotten under regulations under GDPR~\cite{voigt2024EU}. Simply deleting the requested samples from
local storage is insufficient because their effects may already have been
incorporated into the global model. Federated unlearning therefore aims to
remove these historical contributions from the learned model without repeating
the entire training process. This requirement is especially relevant to edge
intelligence, where participating devices may continuously contribute
user-generated and potentially sensitive data.

Achieving efficient exact unlearning for FedZOO is challenging.
Existing federated unlearning methods are largely designed around first- or
second-order optimization and can reintroduce backpropagation,
high-dimensional model states, or tensor-level communication, thereby losing
the resource advantages that motivate FedZOO.
Moreover, exact unlearning requires distributional equivalence to rerunning
the learning algorithm on the retained data~\cite{tao2024communication},
for which retraining from scratch is always valid but computationally expensive.

A key insight from stability-based exact unlearning is that this difficulty can be addressed from the learning side~\cite{ullah2021machine,tao2024communication,tao2026one-shot}.
If a randomized learning algorithm is stable in total variation under a single-sample change, then the output distributions on neighboring datasets overlap, creating opportunities to reuse computation through an appropriate coupling.
This leads to a fundamental question in our setting:

\begin{mdframed}[
    leftmargin=0pt,
    rightmargin=0pt,
    innerleftmargin=4pt,
    innerrightmargin=4pt,
    innertopmargin=4pt,
    innerbottommargin=4pt,
]
\emph{How can we design federated zeroth-order learning algorithms that are
TV-stable while retaining the communication and memory advantages of FedZOO?}
\end{mdframed}

To answer this question, we develop two TV-stable FedZOO learning algorithms
and their corresponding exact-unlearning procedures, built on a common
seed-encoded representation.
The first, \textsc{FedZOO-S}, obtains stability from client and mini-batch
subsampling: a single sample changes the realized trajectory only when its
client participates and the sample is selected for a local update.
Its unlearning counterpart, \textsc{FedZOUL-S}, therefore preserves the
unaffected prefix and replays only from the first affected step.
The second, \textsc{FedZOO-NS}, augments subsampling with Gaussian
randomization.
The added noise creates overlap between the conditional update distributions
on neighboring datasets, allowing some affected updates to be coupled rather
than immediately replayed.
Its unlearning counterpart, \textsc{FedZOUL-NS}, exploits this overlap through
rejection--reflection coupling so that some affected realized updates can
still be reused exactly.

Our contributions are summarized as follows.

\begin{enumerate}[leftmargin=*, label=(\arabic*)]
    \item We formulate efficient exact federated unlearning for zeroth-order optimization through the lens of algorithmic total variation stability. Rather than treating unlearning as a post hoc correction to an arbitrary FedZOO procedure, we design the learning algorithm itself to be TV-stable, so that neighboring training datasets induce overlapping output distributions that can be exploited through coupling for computation reuse. At the same time, we retain the fixed finite seed representation of FedZOO to avoid high-dimensional model-update communication.
    \item We propose \textsc{FedZOO-S}, a TV-stable federated zeroth-order learning algorithm whose stability arises from client and mini-batch subsampling, together with its exact-unlearning algorithm \textsc{FedZOUL-S}. A deletion leaves the realized trajectory unchanged when the deleted samples have not contributed to any realized local update. Otherwise, \textsc{FedZOUL-S} preserves the unaffected prefix and replays only the suffix starting from the first affected local step.
    \item We further propose \textsc{FedZOO-NS}, which augments subsampling with Gaussian randomization, together with its corresponding exact-unlearning algorithm \textsc{FedZOUL-NS}. The added noise creates overlap between the conditional update distributions on neighboring datasets, enabling rejection--reflection coupling. Consequently, some affected realized updates can still be reused exactly, and replay is required only after the first rejected coupling.
    \item We establish exact-unlearning guarantees for both learning--unlearning pairs and analyze their TV stability, optimization utility, and unlearning efficiency. We further evaluate their optimization and unlearning performance across federated learning settings with conventional models and large language models.
\end{enumerate}

\section{Related Work}
\subsection{Federated Zeroth-Order Optimization.}



In recent years, zeroth-order optimization (ZO), has shown strong potential in FL, especially in resource-constrained edge collaboration settings, because it avoids backpropagation. Fang et al.~\cite{fang2022communication} propose FedZO as one of the first derivative free federated ZO methods. For LLMs, Ling et al.~\cite{ling2024convergence} propose FedMeZO, which significantly reduces memory overhead and provides convergence analysis. Later, FedKSeed~\cite{qin2024federated} and DeComFL~\cite{li2025achieving} use scalar gradients and random seeds to synchronize perturbation randomness, achieving dimension-independent communication between edge nodes and the cloud. HiSo~\cite{li2026converge} further accelerate the convergence efficiency of federated ZO. At the same time, since transmitting estimated gradients or scalar gradients may expose privacy risks, methods such as DPZero~\cite{zhang2024dpzero}, DP-ZOSO~\cite{liu2026differentially} and DP-AggZO~\cite{bao2025unlocking} combine differential privacy with zeroth-order gradient estimation. These methods inject carefully calibrated noise during training to provide rigorous privacy guarantees. It is worth noting that this noise-based design not only strengthens privacy protection, but also motivates the noise perturbation strategy in our TV stable algorithm.

\subsection{Federated Unlearning.}




Federated unlearning (FU) aims to remove the influence of specified training data from a collaboratively learned model. Existing approaches can be broadly categorized into approximate and exact unlearning according to their guarantees relative to retraining on the retained data. Approximate methods seek to recover retraining-like behavior through model correction and gradient calibration~\cite{liu2022right,liu2021federaser,wang2024server,su2024f2ul,pan2025federated}, perturbation~\cite{che2023fast,zhang2023fedrecovery,fraboni2024sifu,wu2026certified,wu2026trajectoryaware}, or knowledge distillation~\cite{zhou2025federated,mora2026federated}. Related work on LLM unlearning also seeks to remove targeted knowledge while preserving useful model capabilities~\cite{zhang2026oblivionis,li2026not}. Although some approaches provide certified guarantees that bound distributional discrepancies through calibrated randomization~\cite{fraboni2024sifu,wu2026certified,wu2026trajectoryaware}, these guarantees do not establish exact equality between the unlearning and retraining distributions.
Exact FU requires the unlearning output to follow the same distribution as training on the retained data. To avoid full retraining, stability-based approaches exploit overlap between learning distributions to reuse prior computation~\cite{tao2024communication,tao2026one-shot}. However, these methods are developed around gradient-based training and model- or curvature-level states, which do not directly retain the memory and communication advantages of zeroth-order optimization. Extending them to FedZOO requires more than replacing the gradient oracle: seed-and-scalar updates change the state needed for replay and the transition distributions used for coupling. We therefore jointly design TV-stable zeroth-order learners and exact sample-level unlearning procedures around compact seed-encoded update histories. Our procedures reuse computation through subsampling and Gaussian coupling while preserving both the resource advantages of FedZOO and exact joint distributional equivalence of the model and saved state to retraining.

\section{Preliminaries}

\subsection{Federated Zeroth-Order Optimization}

Consider a federated optimization system with $M$ clients, where client
$c\in[M]$ holds a finite local dataset
$\mathcal D_c=\{x_{c,1},\ldots,x_{c,n_c}\}$ of size
$n_c=|\mathcal D_c|$.
The clients collaboratively optimize a global model
$\boldsymbol w\in\mathbb R^d$ under the coordination of a server.
We consider the uniformly weighted finite-sum objective
\begin{equation}
\label{eq:fed_objective}
\min_{\boldsymbol w\in\mathbb R^d}
f(\boldsymbol w)
\coloneqq
\frac{1}{M}
\sum_{c=1}^{M}
f_c(\boldsymbol w),
\qquad
f_c(\boldsymbol w)
\coloneqq
\frac{1}{n_c}
\sum_{x\in\mathcal D_c}
\ell(\boldsymbol w;x).
\end{equation}
Training proceeds for $R$ communication rounds.
In each round, $m$ clients participate, and each participating client
performs $E$ local optimization steps before the server uniformly aggregates
their updates.

Zeroth-order optimization estimates descent directions using function
evaluations without backpropagation.
Given a model $\boldsymbol w$, a sample $x$, and a perturbation
$\boldsymbol z\sim\mathcal N(\boldsymbol 0,I_d)$, the two-point scalar gradient is
\begin{equation}
\label{eq:zo_scalar}
g(\boldsymbol w;x,\boldsymbol z)
=
\frac{
\ell(\boldsymbol w+\mu\boldsymbol z;x)
-
\ell(\boldsymbol w-\mu\boldsymbol z;x)
}{
2\mu
},
\end{equation}
where $\mu>0$ is the perturbation scale.
The corresponding zeroth-order gradient estimator is
$g(\boldsymbol w;x,\boldsymbol z)\boldsymbol z$.

For a mini-batch $\mathcal B$ of size $b$, suppose that $q$ perturbations
$\{\boldsymbol z_{i,j}\}_{j=1}^{q}$ are used for each
$x_i\in\mathcal B$.
One local step is the mini-batch averaged zeroth-order update
\begin{equation}
\label{eq:zo_minibatch_update}
\boldsymbol w^{+}
=
\boldsymbol w
-
\frac{\eta}{bq}
\sum_{x_i\in\mathcal B}
\sum_{j=1}^{q}
g(\boldsymbol w;x_i,\boldsymbol z_{i,j})
\boldsymbol z_{i,j},
\end{equation}
where all $bq$ scalar gradients are evaluated at the same pre-update model
$\boldsymbol w$.

Following the seed-based representation of FedKSeed~\cite{qin2024federated},
a perturbation can be reproduced from its random seed when the same
pseudorandom number generator is used across devices.
A fixed shared pool of $K$ seeds can therefore represent zeroth-order
updates through their scalar coefficients in seed-coordinate space, avoiding
the transmission and storage of $d$-dimensional update vectors.

\subsection{Federated Unlearning}
\label{sec:fu_prelim}

Let
$\mathcal D
=
\{\mathcal D_1,\ldots,\mathcal D_M\}$
denote the federated dataset.
A randomized federated learning algorithm $\mathcal O$ produces a model
output together with auxiliary state maintained for subsequent unlearning:
\begin{equation}
\label{eq:learning_output_state}
(\boldsymbol W,\boldsymbol S)
\sim
\mathcal O(\mathcal D).
\end{equation}
Here, $\boldsymbol W$ denotes the model output and $\boldsymbol S$ collects
the internal state saved by the learning procedure.
The randomness of $\mathcal O$ may include client participation, data
sampling, zeroth-order perturbations, and other algorithmic randomness.
We use $\mathcal O_{\mathrm{out}}(\mathcal D)$ to denote the marginal
distribution of the model output $\boldsymbol W$.

An unlearning request specifies samples to be removed from one or more
clients.
Let $\mathcal C_{ul}$ be the set of affected clients and
$\mathcal D_{c,ul}\subseteq\mathcal D_c$ the samples requested for deletion
from client $c\in\mathcal C_{ul}$.
The retained federated dataset is
\[
\mathcal D^{-}
=
\{\mathcal D_1^{-},\ldots,\mathcal D_M^{-}\},
\qquad
\mathcal D_c^{-}
=
\begin{cases}
\mathcal D_c\setminus\mathcal D_{c,ul},
& c\in\mathcal C_{ul},\\
\mathcal D_c,
& c\notin\mathcal C_{ul}.
\end{cases}
\]
We consider valid deletion requests for which every retained local dataset
remains large enough for the prescribed mini-batch sampling, i.e.,
$|\mathcal D_c^{-}|\ge b$ for all affected clients.

The request may arrive during training.
We denote by $(R_{ul},t_{ul})$ the local training position immediately
preceding the request; that is, the request arrives immediately after local
step $t_{ul}$ of round $R_{ul}$.
Given the current model and saved state, an unlearning algorithm
$\mathcal U$ updates them according to the deletion request:
\[
(\boldsymbol W_{ul},\boldsymbol S_{ul})
=
\mathcal U
\left(
\boldsymbol W,
\boldsymbol S,
\{\mathcal D_{c,ul}\}_{c\in\mathcal C_{ul}}
\right).
\]

\begin{definition}[Exact Federated Unlearning]
\label{def:exact_fu}
A federated unlearning algorithm $\mathcal U$ is exact with respect to a
randomized learning algorithm $\mathcal O$ if, for every federated dataset
$\mathcal D$ and every valid deletion request,
\begin{equation}
\label{eq:exact_fu}
\mathcal U
\left(
\mathcal O(\mathcal D),
\{\mathcal D_{c,ul}\}_{c\in\mathcal C_{ul}}
\right)
\overset{d}{=}
\mathcal O(\mathcal D^{-}),
\end{equation}
where $\overset{d}{=}$ denotes equality in distribution over both the model
output and the maintained internal state.
\end{definition}

Thus, exact unlearning requires the post-unlearning procedure to have the
same distribution as rerunning the original learning algorithm on the
retained federated dataset, rather than merely producing a nearby model.

\subsection{Total Variation Stability}
\label{def-tv}

For two probability measures $P$ and $Q$ on the same measurable space, their
total variation (TV) distance is
\begin{equation}
\label{eq:tv_distance}
\operatorname{TV}(P,Q)
=
\sup_{A}
|P(A)-Q(A)|,
\end{equation}
where the supremum is over measurable events $A$.
When $P$ and $Q$ admit densities $p$ and $q$ with respect to a common
reference measure,
\[
\operatorname{TV}(P,Q)
=
\frac{1}{2}\|p-q\|_1.
\]

For two federated datasets
$\mathcal D=\{\mathcal D_c\}_{c=1}^{M}$ and
$\mathcal D'=\{\mathcal D'_c\}_{c=1}^{M}$, define
\[
\Delta(\mathcal D,\mathcal D')
=
\sum_{c=1}^{M}
|\mathcal D_c\triangle\mathcal D'_c|.
\]
We call them sample-level adjacent, denoted by
$\mathcal D\sim\mathcal D'$, if
$\Delta(\mathcal D,\mathcal D')=1$.

Following the stability-based exact-unlearning framework, TV stability is
defined on the marginal distribution of the learning algorithm's model
output, rather than on its auxiliary internal state.

\begin{definition}[$\rho$-Sample-Level TV Stability]
\label{def:tv_stability}
A randomized federated learning algorithm $\mathcal O$ is
$\rho$-sample-level TV stable if
\begin{equation}
\label{eq:tv_stability}
\sup_{\mathcal D\sim\mathcal D'}
\operatorname{TV}
\left(
\mathcal O_{\mathrm{out}}(\mathcal D),
\mathcal O_{\mathrm{out}}(\mathcal D')
\right)
\le \rho,
\end{equation}
where $\rho\in[0,1]$ is the stability parameter.
\end{definition}

TV stability therefore controls how much the distribution of the learned
model can change after modifying a single training sample.
For neighboring datasets, a smaller $\rho$ implies greater overlap between
the two output distributions, which can be exploited by coupling-based
unlearning to reuse computations from the original training trajectory.

\section{Our Method} \label{sec:method} We develop two pairs of federated learning and exact-unlearning algorithms. Both pairs build on a common seed-encoded representation of zeroth-order updates, which allows the realized training trajectory to be stored and replayed without retaining high-dimensional model updates. The first pair, \textsc{FedZOO-S} and \textsc{FedZOUL-S}, exploits client and sample subsampling: if a deleted sample has not entered the realized trajectory, the existing state remains valid; otherwise, exact unlearning replays the affected suffix. The second pair, \textsc{FedZOO-NS} and \textsc{FedZOUL-NS}, further randomizes local zeroth-order updates with Gaussian noise. The resulting overlap between update distributions on neighboring datasets allows some affected updates to be reused through coupling, reducing the need for replay.

\subsection{Seed-Encoded Zeroth-Order Updates} \label{sec:seed_encoded_zoo} All four algorithms use the same compact representation of zeroth-order updates. At initialization, the server generates a finite pool of $K$ distinct perturbation seeds \[ \mathbb{S}=\{s_1,\ldots,s_K\}, \] which is shared by the server and all clients and remains fixed throughout training. With a fixed pseudorandom number generator, each seed $s_k$ deterministically generates a Gaussian perturbation $\boldsymbol z_k=\mathsf{Perturb}(s_k)\in\mathbb{R}^d$, where $\boldsymbol z_k\sim\mathcal{N}(\boldsymbol 0,I_d)$. For model $\boldsymbol w$, sample $x$, and seed $s_k$, define the two-point zeroth-order scalar gradient as \[ g(\boldsymbol w;x,s_k) = \frac{ \ell(\boldsymbol w+\mu\boldsymbol z_k;x) - \ell(\boldsymbol w-\mu\boldsymbol z_k;x) }{2\mu}. \] The corresponding zeroth-order gradient estimator is $g(\boldsymbol w;x,s_k)\boldsymbol z_k$. Hence, once $\mathbb S$ is shared, an update along $\boldsymbol z_k$ is completely specified by its seed index and scalar coefficient, rather than by a $d$-dimensional vector. Because the seed pool is fixed and reused throughout training, coefficients associated with the same perturbation can be accumulated in seed-coordinate space. Let $\boldsymbol a^r=(a_1^r,\ldots,a_K^r)\in\mathbb{R}^K$ denote the global scalar-gradient accumulator after round $r$, where $a_k^r$ is the cumulative coefficient associated with $\boldsymbol z_k$. The corresponding global model is reconstructed as \begin{equation} \label{eq:reconstruction} \mathsf{Rec}(\boldsymbol a^r) = \boldsymbol w^0-\eta\sum_{k=1}^{K}a_k^r\boldsymbol z_k, \end{equation} and we write $\boldsymbol w^r=\mathsf{Rec}(\boldsymbol a^r)$. Thus, the persistent training state is represented by the $K$-dimensional accumulator $\boldsymbol a^r$, while the full model is reconstructed only when needed. At the beginning of round $r$, client $c$ reconstructs $\boldsymbol w^{r-1}$ from $\boldsymbol a^{r-1}$ and initializes $\boldsymbol a_c^{r,0}=\boldsymbol 0_K$. The local accumulator records only the updates generated by client $c$ during the current round. Each local step consists of one mini-batch averaged zeroth-order update. Specifically, let $\Delta\boldsymbol a_c^{r,t}\in\mathbb{R}^K$ denote the seed-coordinate increment computed at local step $t$ from $\boldsymbol w_c^{r,t-1}$. All scalar gradients in this step are evaluated at the same model $\boldsymbol w_c^{r,t-1}$, accumulated into $\Delta\boldsymbol a_c^{r,t}$, and applied jointly: \begin{align} \boldsymbol a_c^{r,t} &= \boldsymbol a_c^{r,t-1} + \Delta\boldsymbol a_c^{r,t}, \label{eq:local_accumulator_update} \\ \boldsymbol w_c^{r,t} &= \boldsymbol w_c^{r,t-1} - \eta \sum_{k=1}^{K} \Delta a_{c,k}^{r,t}\boldsymbol z_k. \label{eq:local_model_update} \end{align} Equivalently, \[ \boldsymbol w_c^{r,t} = \boldsymbol w^{r-1} - \eta\sum_{k=1}^{K} a_{c,k}^{r,t}\boldsymbol z_k. \] After local training, the server aggregates the final local accumulators of the participating clients and adds the resulting increment to $\boldsymbol a^{r-1}$, thereby obtaining $\boldsymbol a^{r}$. Exact unlearning additionally requires the realized randomness that determines the training trajectory to be recoverable. We therefore record the sampled clients, mini-batches, and perturbation seeds needed to reproduce each realized update. Distinct sources of randomness used by the algorithms are generated from independent random streams unless explicitly coupled by the unlearning procedure.

\subsection{Subsampling-Stable Learning and Unlearning}
\label{sec:subsampling_framework}

\textbf{Learning: FedZOO-S.}
A deletion does not affect the realized training trajectory if the deleted
sample has never been used to compute an update.
We exploit this property through two levels of subsampling: only a subset of
clients participates in each communication round, and each participating
client computes its local zeroth-order updates from randomly sampled
mini-batches.
This gives our first learning algorithm, \textsc{FedZOO-S}.

At round $r$, the server uniformly samples
$\mathcal C^r\subseteq[M]$ with $|\mathcal C^r|=m$ and sends the current
global accumulator $\boldsymbol a^{r-1}$ to the participating clients.
Each $c\in\mathcal C^r$ initializes
\[
\boldsymbol w_c^{r,0}
=
\mathsf{Rec}(\boldsymbol a^{r-1}),
\qquad
\boldsymbol a_c^{r,0}
=
\boldsymbol 0_K,
\]
and performs $E$ local steps.

At local step $t$, client $c$ samples a mini-batch
$\mathcal B_c^{r,t}\subseteq\mathcal D_c$ of size $b$ uniformly without
replacement.
For each $x_i\in\mathcal B_c^{r,t}$, it further samples $q$ distinct seeds
\[
\mathbb S_{c,i}^{r,t}
=
\{s_{i,1},\ldots,s_{i,q}\}
\subseteq\mathbb S
\]
uniformly without replacement.
Let
$\boldsymbol z_{i,j}=\mathsf{Perturb}(s_{i,j})$.
All scalar gradients within the local step are evaluated at the same
pre-update model:
\begin{equation}
\label{eq:fedzoos_scalar}
g_{i,j}^{r,t}
=
\frac{
\ell(\boldsymbol w_c^{r,t-1}+\mu\boldsymbol z_{i,j};x_i)
-
\ell(\boldsymbol w_c^{r,t-1}-\mu\boldsymbol z_{i,j};x_i)
}{
2\mu
}.
\end{equation}
The $bq$ scalar gradients jointly form one mini-batch averaged zeroth-order
update.
For each $s_k\in\mathbb S$, define
\begin{equation}
\label{eq:fedzoos_step_increment}
\Delta a_{c,k}^{r,t}
=
\frac{1}{bq}
\sum_{\substack{
x_i\in\mathcal B_c^{r,t},\,j\in[q]:\\
s_{i,j}=s_k
}}
g_{i,j}^{r,t},
\end{equation}
where an empty sum is zero.
The client then applies
\begin{align}
\boldsymbol a_c^{r,t}
&=
\boldsymbol a_c^{r,t-1}
+
\Delta\boldsymbol a_c^{r,t},
\label{eq:fedzoos_local_acc}
\\
\boldsymbol w_c^{r,t}
&=
\boldsymbol w_c^{r,t-1}
-
\eta
\sum_{k=1}^{K}
\Delta a_{c,k}^{r,t}\boldsymbol z_k.
\label{eq:fedzoos_local_model}
\end{align}
For subsequent unlearning, the client records
$\mathcal B_c^{r,t}$,
$\{\mathbb S_{c,i}^{r,t}\}_{x_i\in\mathcal B_c^{r,t}}$,
and $\boldsymbol a_c^{r,t}$.

After the local steps, the server updates
\begin{equation}
\label{eq:fedzoos_server}
\boldsymbol a^r
=
\boldsymbol a^{r-1}
+
\frac{1}{m}
\sum_{c\in\mathcal C^r}
\boldsymbol a_c^{r,E},
\end{equation}
and records $\mathcal C^r$ and $\boldsymbol a^r$.
Algorithm~\ref{algo:FedZOOS} summarizes \textsc{FedZOO-S}.

The two levels of subsampling directly enable efficient unlearning.
A sample affects the realized trajectory only when its client participates
in a round and the sample is subsequently selected into a local mini-batch.
If this never occurs before an unlearning request, deleting the sample does
not invalidate the realized trajectory.
Otherwise, the first local step containing the sample identifies the earliest
point from which replay is necessary.

\medskip
\noindent\textbf{Exact unlearning: FedZOUL-S.}
Let $\mathcal C_{ul}$ be the clients involved in an unlearning request, and
let $\mathcal D_{c,ul}\subseteq\mathcal D_c$ contain the samples to be deleted
from client $c$.
Suppose the request arrives immediately after local step
$(R_{ul},t_{ul})$.
For later use, define the next training position as
\[
\mathsf{Next}(r,t)
=
\begin{cases}
(r,t+1), & t<E,\\
(r+1,1), & t=E.
\end{cases}
\]
Using the client-participation sets and mini-batches recorded during
\textsc{FedZOO-S}, \textsc{FedZOUL-S} searches chronologically over the
realized prefix up to $(R_{ul},t_{ul})$ for the earliest step
$(r^\star,t^\star)$ satisfying
\begin{equation}
\label{eq:first_affected_step}
\mathcal B_c^{r^\star,t^\star}
\cap
\mathcal D_{c,ul}
\neq
\varnothing
\end{equation}
for some
$c\in\mathcal C_{ul}\cap\mathcal C^{r^\star}$.

If no such step exists, none of the deleted samples has contributed to the
realized trajectory.
After removing the requested samples, training therefore continues from
$\mathsf{Next}(R_{ul},t_{ul})$ using the current state, without replaying
any previous step.

Otherwise, the algorithm preserves the entire trajectory before
$(r^\star,t^\star)$ and restores the optimization state immediately preceding
this step.
Specifically, it loads $\boldsymbol a^{r^\star-1}$ and, for each
$c\in\mathcal C^{r^\star}$, restores
$\boldsymbol a_c^{r^\star,t^\star-1}$ and reconstructs
\begin{equation}
\label{eq:fedzouls_restore}
\boldsymbol w_c^{r^\star,t^\star-1}
=
\mathsf{Rec}(\boldsymbol a^{r^\star-1})
-
\eta
\sum_{k=1}^{K}
a_{c,k}^{r^\star,t^\star-1}\boldsymbol z_k.
\end{equation}
Training then resumes from $(r^\star,t^\star)$ on the retained datasets
using the same \textsc{FedZOO-S} procedure.
All newly generated states replace their original counterparts.

The preserved prefix contains no update involving a deleted sample. The coupling analysis in Section~\ref{sec:fedzoos_theory} shows that the two branches of \textsc{FedZOUL-S} jointly transport the original training law to the retained-data law: the realized prefix is preserved when no deleted sample appears, and the trajectory is regenerated from the first affected step otherwise. Therefore, the resulting model and saved state have the same distribution as a \textsc{FedZOO-S} run on the retained datasets.
Algorithm~\ref{algo:FedZOULS} summarizes the procedure.
We state the algorithm for set-valued deletion requests; the theoretical
analysis considers a single deletion, with multiple requests handled
sequentially.

The limitation of \textsc{FedZOUL-S} is that once a deleted sample has
contributed to a local update, the trajectory must be replayed from that
update onward.
The next learning--unlearning pair adds Gaussian randomization so that an
affected update can sometimes be reused as well.

\subsection{Noise-Stable Learning and Unlearning}
\label{sec:noise_framework}

\textbf{Learning: FedZOO-NS.}
\textsc{FedZOO-S} avoids replay when deleted samples have not contributed to
the realized trajectory, but an affected update still requires replay of the
remaining suffix.
To further reduce this cost, we randomize each local zeroth-order update with
Gaussian noise.
The resulting overlap between update distributions on neighboring datasets
allows some affected realized updates to be reused through coupling.
We call the resulting learning algorithm \textsc{FedZOO-NS}.

Client, mini-batch, and perturbation-seed sampling follow
\textsc{FedZOO-S}.
At local step $(r,t)$, let
$\mathcal B_c^{r,t}$ be the sampled mini-batch and
$\mathbb S_{c,i}^{r,t}$ the $q$ perturbation seeds sampled for
$x_i\in\mathcal B_c^{r,t}$.
All scalar gradients are evaluated at the same pre-update model
$\boldsymbol w_c^{r,t-1}$:
\begin{equation}
\label{eq:fedzoons_raw}
g_{i,j}^{r,t}
=
\frac{
\ell(\boldsymbol w_c^{r,t-1}+\mu\boldsymbol z_{i,j};x_i)
-
\ell(\boldsymbol w_c^{r,t-1}-\mu\boldsymbol z_{i,j};x_i)
}{
2\mu
},
\end{equation}
where
$\boldsymbol z_{i,j}=\mathsf{Perturb}(s_{i,j})$.
We clip each scalar gradient at level $C$ and normalize it according to the
mini-batch averaged update:
\begin{equation}
\label{eq:fedzoons_clean_coeff}
h_{i,j}^{r,t}
=
\frac{1}{bq}
\operatorname{clip}_C(g_{i,j}^{r,t}).
\end{equation}

Fix a deterministic ordering of the $bq$ sample--seed pairs and collect
their coefficients into
$\boldsymbol h_c^{r,t}\in\mathbb R^{bq}$.
Client $c$ independently draws
\[
\boldsymbol\xi_c^{r,t}
\sim
\mathcal N(\boldsymbol0,\sigma_\rho^2 I_{bq})
\]
and forms
\begin{equation}
\label{eq:fedzoons_noisy_vector}
\widetilde{\boldsymbol h}_c^{r,t}
=
\boldsymbol h_c^{r,t}
+
\boldsymbol\xi_c^{r,t}.
\end{equation}
The Gaussian noise is independent of the client, mini-batch, and
perturbation-seed sampling.

The noisy coefficient vector is mapped to the seed-coordinate increment
$\Delta\widetilde{\boldsymbol a}_c^{r,t}\in\mathbb R^K$.
For each seed $s_k\in\mathbb S$, the increment is
$\Delta\widetilde a_{c,k}^{r,t}
=
\sum_{\substack{
x_i\in\mathcal B_c^{r,t},\,j\in[q]:\\
s_{i,j}=s_k
}}
\widetilde h_{i,j}^{r,t}$,
where $\widetilde h_{i,j}^{r,t}$ denotes the component of
$\widetilde{\boldsymbol h}_c^{r,t}$ corresponding to the sampled pair
$(x_i,s_{i,j})$. The sum includes all sampled pairs sharing $s_k$
and is zero if no such pair is selected.
The client then updates
\begin{align}
\boldsymbol a_c^{r,t}
&=
\boldsymbol a_c^{r,t-1}
+
\Delta\widetilde{\boldsymbol a}_c^{r,t},
\label{eq:fedzoons_acc_update}
\\
\boldsymbol w_c^{r,t}
&=
\boldsymbol w_c^{r,t-1}
-
\eta
\sum_{k=1}^{K}
\Delta\widetilde a_{c,k}^{r,t}\boldsymbol z_k.
\label{eq:fedzoons_model_update}
\end{align}
After $E$ local steps, the server performs the same uniform aggregation as
in \textsc{FedZOO-S}:
\begin{equation}
\label{eq:fedzoons_server}
\boldsymbol a^r
=
\boldsymbol a^{r-1}
+
\frac{1}{m}
\sum_{c\in\mathcal C^r}
\boldsymbol a_c^{r,E}.
\end{equation}

For subsequent unlearning, each client records
$\mathcal B_c^{r,t}$,
$\{\mathbb S_{c,i}^{r,t}\}_{x_i\in\mathcal B_c^{r,t}}$,
$\widetilde{\boldsymbol h}_c^{r,t}$,
and $\boldsymbol a_c^{r,t}$.
The clean vector $\boldsymbol h_c^{r,t}$ need not be stored, since it can be
recomputed from the saved pre-update state, mini-batch, and perturbation
seeds.
Algorithm~\ref{algo:FedZOONS} summarizes \textsc{FedZOO-NS}.

\medskip
\noindent\textbf{Exact unlearning: FedZOUL-NS.}
Consider a deletion request
$\{\mathcal D_{c,ul}\}_{c\in\mathcal C_{ul}}$
arriving immediately after local step $(R_{ul},t_{ul})$.
\textsc{FedZOUL-NS} scans the realized prefix up to the request in
chronological order.
For an affected client
$c\in\mathcal C_{ul}\cap\mathcal C^r$, define
\[
\mathcal B_{c,ul}^{r,t}
=
\mathcal B_c^{r,t}\cap\mathcal D_{c,ul}.
\]
If
$\mathcal B_{c,ul}^{r,t}=\varnothing$,
the recorded local update is unaffected and is preserved.
Otherwise, the algorithm couples the recorded mini-batch to a mini-batch
distributed as a uniform sample from the retained local dataset.
It removes
$\mathcal B_{c,ul}^{r,t}$
and replaces these samples by the same number of retained samples, drawn
uniformly without replacement from the retained dataset while excluding the
samples already present in the mini-batch.
Denote the resulting mini-batch by
$(\mathcal B_c^{r,t})'$.
The perturbation-seed sets of retained samples are unchanged, while each
replacement sample inherits the seed set associated with the deleted sample
that it replaces.
Since data sampling and perturbation-seed sampling are independent, this
produces the required coupled realization of the sampling randomness.

Using the saved pre-update model, the coupled mini-batch, and the associated
seeds, the client recomputes the clean coefficient vector
$(\boldsymbol h_c^{r,t})'$.
Conditioned on the pre-update state and sampling realizations, the original
and retained-data randomized updates follow
\[
P
=
\mathcal N
\left(
\boldsymbol h_c^{r,t},
\sigma_\rho^2 I_{bq}
\right),
\qquad
Q
=
\mathcal N
\left(
(\boldsymbol h_c^{r,t})',
\sigma_\rho^2 I_{bq}
\right).
\]
Let $p$ and $q$ denote their densities.
For the recorded realization
$\widetilde{\boldsymbol h}_c^{r,t}$,
the rejection step accepts with probability
\begin{equation}
\label{eq:rr_acceptance}
\alpha_c^{r,t}
=
\min\left\{
1,\,
\frac{
q(\widetilde{\boldsymbol h}_c^{r,t})
}{
p(\widetilde{\boldsymbol h}_c^{r,t})
}
\right\}.
\end{equation}
Equivalently,
\begin{equation}
\label{eq:rr_acceptance_gaussian}
\alpha_c^{r,t}
=
\min\left\{
1,\,
\exp\!\left(
\frac{
\|\widetilde{\boldsymbol h}_c^{r,t}
-\boldsymbol h_c^{r,t}\|_2^2
-
\|\widetilde{\boldsymbol h}_c^{r,t}
-(\boldsymbol h_c^{r,t})'\|_2^2
}{
2\sigma_\rho^2
}
\right)
\right\}.
\end{equation}

If the test accepts, the recorded noisy coefficient vector is reused:
\[
(\widetilde{\boldsymbol h}_c^{r,t})'
=
\widetilde{\boldsymbol h}_c^{r,t}.
\]
The local model and accumulator therefore remain unchanged; only the saved
mini-batch and its sample--seed correspondence are replaced by the coupled
retained-data realization.

If the test rejects, the algorithm uses reflection to obtain a realization
from the retained-data transition.
Let
\[
\boldsymbol d_c^{r,t}
=
(\boldsymbol h_c^{r,t})'
-
\boldsymbol h_c^{r,t},
\qquad
\boldsymbol m_c^{r,t}
=
\frac{
\boldsymbol h_c^{r,t}
+
(\boldsymbol h_c^{r,t})'
}{2}.
\]
For
$\boldsymbol d_c^{r,t}\neq\boldsymbol0$, define
\[
\boldsymbol e_c^{r,t}
=
\frac{
\boldsymbol d_c^{r,t}
}{
\|\boldsymbol d_c^{r,t}\|_2
}.
\]
The reflected coefficient vector is
\begin{equation}
\label{eq:rr_reflection}
(\widetilde{\boldsymbol h}_c^{r,t})'
=
\widetilde{\boldsymbol h}_c^{r,t}
-
2
\left\langle
\widetilde{\boldsymbol h}_c^{r,t}
-
\boldsymbol m_c^{r,t},
\boldsymbol e_c^{r,t}
\right\rangle
\boldsymbol e_c^{r,t}.
\end{equation}
If the two clean coefficient vectors coincide, then $P=Q$ and the step is
accepted directly.

The reflected vector is mapped to the fixed seed-coordinate space to obtain
$\Delta\widetilde{\boldsymbol a}_c^{r,t}$ and is applied to the state
immediately preceding $(r,t)$:
\begin{align}
\boldsymbol a_c^{r,t}
&=
\boldsymbol a_c^{r,t-1}
+
\Delta\widetilde{\boldsymbol a}_c^{r,t},
\\
\boldsymbol w_c^{r,t}
&=
\boldsymbol w_c^{r,t-1}
-
\eta
\sum_{k=1}^{K}
\Delta\widetilde a_{c,k}^{r,t}\boldsymbol z_k.
\end{align}
The saved state of this local step is then replaced by
\[
\left(
(\mathcal B_c^{r,t})',
\{\mathbb S_{c,i}^{r,t}\}_{x_i\in(\mathcal B_c^{r,t})'},
(\widetilde{\boldsymbol h}_c^{r,t})',
\boldsymbol a_c^{r,t}
\right),
\]
namely, the coupled mini-batch and sample--seed correspondence, the
reflected noisy coefficient vector, and the corrected local accumulator.
All subsequent states depend on this corrected endpoint and are therefore
regenerated on the retained datasets.

If a rejection occurs at the last local step $t=E$, the corrected local
endpoint first replaces its original counterpart and the server recomputes
\[
\boldsymbol a^r
=
\boldsymbol a^{r-1}
+
\frac{1}{m}
\sum_{c\in\mathcal C^r}
\boldsymbol a_c^{r,E}
\]
before training proceeds from $\mathsf{Next}(r,E)=(r+1,1)$.

If every affected step up to the request is accepted, the realized
optimization state remains unchanged.
After removing the requested samples, training continues from
$\mathsf{Next}(R_{ul},t_{ul})$.
If a rejection occurs at $(r^\star,t^\star)$, training instead continues
from $\mathsf{Next}(r^\star,t^\star)$ using the corrected state.
Algorithm~\ref{algo:FedZOULNS} summarizes the procedure.

Thus, \textsc{FedZOUL-NS} first exploits subsampling to preserve unaffected
steps and then uses Gaussian coupling to reuse some affected randomized
updates.
Replay is required only after the first rejected coupling.

\begin{algorithm}[t]
\caption{\texttt{FedZOO-S}: Subsampling-Stable Federated ZOO}
\label{algo:FedZOOS}
\KwIn{$\boldsymbol w^0,R,E,K,q,b,m,\eta,\mu$.}
\KwOut{$\boldsymbol w^R$.}

Initialize fixed seed pool
$\mathbb S=\{s_1,\ldots,s_K\}$ and
$\boldsymbol a^0=\boldsymbol 0_K$\;

\For{$r=1,\ldots,R$}{
    Sample $\mathcal C^r\subseteq[M]$, $|\mathcal C^r|=m$,
    uniformly without replacement\;

    \ForEach{$c\in\mathcal C^r$ \textbf{in parallel}}{
        $\boldsymbol w_c^{r,0}\gets\mathsf{Rec}(\boldsymbol a^{r-1})$;
        $\boldsymbol a_c^{r,0}\gets\boldsymbol 0_K$\;

        \For{$t=1,\ldots,E$}{
            Sample $\mathcal B_c^{r,t}$ and
            $\{\mathbb S_{c,i}^{r,t}\}_{x_i\in\mathcal B_c^{r,t}}$\;

            Compute $\Delta\boldsymbol a_c^{r,t}$
            by Eqs.~\eqref{eq:fedzoos_scalar}--\eqref{eq:fedzoos_step_increment}\;

            Update
            $(\boldsymbol w_c^{r,t},\boldsymbol a_c^{r,t})$
            by Eqs.~\eqref{eq:fedzoos_local_acc}--\eqref{eq:fedzoos_local_model}\;

            $\mathsf{save}\!\left(
            \mathcal B_c^{r,t},
            \{\mathbb S_{c,i}^{r,t}\},
            \boldsymbol a_c^{r,t}
            \right)$\;
        }
    }

    $\boldsymbol a^r
    \gets
    \boldsymbol a^{r-1}
    +
    \frac1m
    \sum_{c\in\mathcal C^r}
    \boldsymbol a_c^{r,E}$\;

    $\mathsf{save}(\mathcal C^r,\boldsymbol a^r)$\;
}

\KwRet{$\mathsf{Rec}(\boldsymbol a^R)$}\;
\end{algorithm}

\begin{algorithm}[t]
\caption{\texttt{FedZOUL-S}: Exact Unlearning by Subsampling}
\label{algo:FedZOULS}
\KwIn{Deletion request
$\{\mathcal D_{c,ul}\}_{c\in\mathcal C_{ul}}$
after $(R_{ul},t_{ul})$; saved \texttt{FedZOO-S} states.}
\KwOut{Unlearned model $\boldsymbol w^R$.}

$\mathsf{affected}\gets\mathsf{false}$\;

\For{each $(r,t)\preceq(R_{ul},t_{ul})$ in chronological order}{
    \ForEach{$c\in\mathcal C_{ul}\cap\mathcal C^r$ \textbf{in parallel}}{
        $\mathsf{affected}_c
        \gets
        \mathbf 1\!\left\{
        \mathcal B_c^{r,t}\cap\mathcal D_{c,ul}
        \neq\varnothing
        \right\}$\;
    }

    \If{$\bigvee_{c\in\mathcal C_{ul}\cap\mathcal C^r}
        \mathsf{affected}_c$}{
        $(r^\star,t^\star)\gets(r,t)$\;
        $\mathsf{affected}\gets\mathsf{true}$\;
        \textbf{break}\;
    }
}

Remove $\mathcal D_{c,ul}$ from each affected local dataset\;

\eIf{$\mathsf{affected}$}{
    Restore the saved state immediately before
    $(r^\star,t^\star)$\;
    Continue \texttt{FedZOO-S} from
    $(r^\star,t^\star)$ on the retained datasets\;
}{
    Continue \texttt{FedZOO-S} from
    $\mathsf{Next}(R_{ul},t_{ul})$ using the current state\;
}

\KwRet{$\boldsymbol w^R$}\;
\end{algorithm}

\begin{algorithm}[t]
\caption{\texttt{FedZOO-NS}: Noise-Stable Federated ZOO}
\label{algo:FedZOONS}
\KwIn{$\boldsymbol w^0,R,E,K,q,b,m,\eta,\mu,C,\sigma_\rho$.}
\KwOut{$\boldsymbol w^R$.}

Initialize fixed seed pool
$\mathbb S=\{s_1,\ldots,s_K\}$
and $\boldsymbol a^0=\boldsymbol0_K$\;

\For{$r=1,\ldots,R$}{
    Sample $\mathcal C^r\subseteq[M]$, $|\mathcal C^r|=m$,
    uniformly without replacement\;

    \ForEach{$c\in\mathcal C^r$ \textbf{in parallel}}{
        $\boldsymbol w_c^{r,0}
        \gets\mathsf{Rec}(\boldsymbol a^{r-1})$;
        $\boldsymbol a_c^{r,0}\gets\boldsymbol0_K$\;

        \For{$t=1,\ldots,E$}{
            Sample $\mathcal B_c^{r,t}$ and
            $\{\mathbb S_{c,i}^{r,t}\}_{x_i\in\mathcal B_c^{r,t}}$\;

            Compute $\boldsymbol h_c^{r,t}$
            by Eqs.~\eqref{eq:fedzoons_raw}--\eqref{eq:fedzoons_clean_coeff}\;

            Draw
            $\boldsymbol\xi_c^{r,t}
            \sim\mathcal N(\boldsymbol0,\sigma_\rho^2I_{bq})$
            and set
            $\widetilde{\boldsymbol h}_c^{r,t}
            \gets
            \boldsymbol h_c^{r,t}+\boldsymbol\xi_c^{r,t}$\;

            Map $\widetilde{\boldsymbol h}_c^{r,t}$ to
            $\Delta\widetilde{\boldsymbol a}_c^{r,t}$\;

            Update
            $(\boldsymbol w_c^{r,t},\boldsymbol a_c^{r,t})$
            by Eqs.~\eqref{eq:fedzoons_acc_update}--\eqref{eq:fedzoons_model_update}\;

            $\mathsf{save}\!\left(
            \mathcal B_c^{r,t},
            \{\mathbb S_{c,i}^{r,t}\},
            \widetilde{\boldsymbol h}_c^{r,t},
            \boldsymbol a_c^{r,t}
            \right)$\;
        }
    }

    $\boldsymbol a^r
    \gets
    \boldsymbol a^{r-1}
    +
    \frac1m
    \sum_{c\in\mathcal C^r}
    \boldsymbol a_c^{r,E}$\;

    $\mathsf{save}(\mathcal C^r,\boldsymbol a^r)$\;
}

\KwRet{$\mathsf{Rec}(\boldsymbol a^R)$}\;
\end{algorithm}

\begin{algorithm}[t]
\caption{\texttt{FedZOUL-NS}: Exact Unlearning by Rejection--Reflection}
\label{algo:FedZOULNS}
\KwIn{Deletion request
$\{\mathcal D_{c,ul}\}_{c\in\mathcal C_{ul}}$
after $(R_{ul},t_{ul})$; saved \texttt{FedZOO-NS} states.}
\KwOut{Unlearned model $\boldsymbol w^R$.}

$\mathsf{reject}\gets\mathsf{false}$\;

\For{each $(r,t)\preceq(R_{ul},t_{ul})$ in chronological order}{
    \ForEach{$c\in\mathcal C_{ul}\cap\mathcal C^r$ \textbf{in parallel}}{
        $\mathsf{reject}_c\gets\mathsf{false}$\;

        $\mathcal B_{c,ul}^{r,t}
        \gets
        \mathcal B_c^{r,t}\cap\mathcal D_{c,ul}$\;

        \If{$\mathcal B_{c,ul}^{r,t}\neq\varnothing$}{
            Construct coupled mini-batch
            $(\mathcal B_c^{r,t})'$
            and its sample--seed correspondence\;

            Recompute
            $\boldsymbol h_c^{r,t}$ and
            $(\boldsymbol h_c^{r,t})'$\;

            Draw $u\sim\operatorname{Unif}(0,1)$\;

            \eIf{$u\le\alpha_c^{r,t}$ in Eq.~\eqref{eq:rr_acceptance}}{
                Replace the saved mini-batch and sample--seed
                correspondence by the coupled realization\;
            }{
                Compute
                $(\widetilde{\boldsymbol h}_c^{r,t})'$
                by Eq.~\eqref{eq:rr_reflection}\;

                Map
                $(\widetilde{\boldsymbol h}_c^{r,t})'$
                to
                $\Delta\widetilde{\boldsymbol a}_c^{r,t}$\;

                $\boldsymbol a_c^{r,t}
                \gets
                \boldsymbol a_c^{r,t-1}
                +
                \Delta\widetilde{\boldsymbol a}_c^{r,t}$\;
            
                $\boldsymbol w_c^{r,t}
                \gets
                \boldsymbol w_c^{r,t-1}
                -
                \eta
                \sum_{k=1}^{K}
                \Delta\widetilde a_{c,k}^{r,t}\boldsymbol z_k$\;
            
                $\mathsf{save}\!\left(
                (\mathcal B_c^{r,t})',
                \{\mathbb S_{c,i}^{r,t}\}_{x_i\in(\mathcal B_c^{r,t})'},
                (\widetilde{\boldsymbol h}_c^{r,t})',
                \boldsymbol a_c^{r,t}
                \right)$\;
            
                $\mathsf{reject}_c\gets\mathsf{true}$\;
            }
        }
    }

    \If{$\bigvee_{c\in\mathcal C_{ul}\cap\mathcal C^r}
        \mathsf{reject}_c$}{
        $(r^\star,t^\star)\gets(r,t)$;
        $\mathsf{reject}\gets\mathsf{true}$\;
        \textbf{break}\;
    }
}

Remove $\mathcal D_{c,ul}$ from each affected local dataset\;

\eIf{$\mathsf{reject}$}{
    \If{$t^\star=E$}{
        Recompute $\boldsymbol a^{r^\star}$ from the corrected
        local endpoints by Eq.~\eqref{eq:fedzoons_server}\;
    }
    Continue \texttt{FedZOO-NS} from
    $\mathsf{Next}(r^\star,t^\star)$ using the corrected state\;
}{
    Continue \texttt{FedZOO-NS} from
    $\mathsf{Next}(R_{ul},t_{ul})$ using the current state\;
}

\KwRet{$\boldsymbol w^R$}\;
\end{algorithm}

\section{Theoretical Analysis}
\label{sec:theory}

We now explain why the proposed learning and unlearning procedures work. Two questions are central. First, after a sample is deleted, can we recover exactly the state that would have been obtained by training on the retained data, without rerunning everything from the beginning? Second, because all zeroth-order updates reuse the same finite seed pool, what optimization guarantee can we still give for the original federated objective? We keep the discussion here at the level of the main ideas and results; the complete statements and proofs are given in Appendix~\ref{sec:theory_appendix}.

For reference, $T=RE$ denotes the total number of local positions. For a deletion of $x^-$ from client $c^-$, let $n_{c^-}$ be that client's data size immediately before deletion, and let $\underline n$ be a lower bound on such local dataset sizes. If the request arrives after round $R_{ul}$ and local step $t_{ul}$, then $T_{ul}=(R_{ul}-1)E+t_{ul}$ positions have already been executed.

\subsection{Stability, Exact Unlearning, and Cost}

The analysis of \textsc{FedZOO-S} starts from a simple observation: deleting a sample can only change the realized trajectory if its client is selected and the sample is actually drawn into a local mini-batch. We therefore couple the executions before and after deletion by sharing all randomness until that first event occurs. Up to that point, the two executions are identical. Since one local position hits the deleted sample with probability $(m/M)(b/n_{c^-})$, the probability of a hit within the first $\tau$ positions is at most $\min\{1,\tau mb/(Mn_{c^-})\}$. This gives the TV-stability result in Theorem~\ref{thm:fedzoos_tv} of Appendix~\ref{sec:fedzoos_theory}; in particular, $Tmb/(M\underline n)\le\rho$ is sufficient for $\rho$-TV stability.

The same coupling already contains the unlearning algorithm. If the deleted sample was never used, the current state is already distributed exactly as training on the retained data. If it was used, the two executions agree until its first occurrence, so only the computation after that point has to be regenerated. This is the idea behind \textsc{FedZOUL-S}. Theorem~\ref{thm:fedzouls_exact} in Appendix~\ref{sec:fedzoos_theory} proves exact unlearning for one deletion, and Corollary~\ref{cor:fedzouls_sequential} in the same subsection extends the argument to a non-adaptive stream of deletions.

\textsc{FedZOO-NS} begins with the same subsampling effect, but the added Gaussian noise gives us another way to keep previously computed updates. When deletion changes a sampled mini-batch, the mean of the noisy coefficient vector changes, but its covariance does not. The two Gaussian transition laws therefore overlap. Appendix~\ref{sec:fedzoons_stability} quantifies this overlap by coupling the mini-batches and sample--seed pairs first and then bounding the resulting Gaussian discrepancy. Theorem~\ref{thm:fedzoons_output_tv} gives the final TV bound. A convenient sufficient noise level is
\begin{equation}
\label{eq:main_ns_sigma_overview}
\sigma_\rho
=
\max\left\{
\frac{2C}{b\sqrt q},
\frac{2C}{\underline n\sqrt q\,\rho}
\sqrt{\frac{Tm}{M}}
\right\},
\end{equation}
as stated in Corollary~\ref{cor:fedzoons_noise_calibration} of Appendix~\ref{sec:fedzoons_stability}. The factor $\sqrt{m/M}$ appears because client participation is sampled once per round and then shared across the $E$ local steps.

For unlearning, this Gaussian overlap is used constructively. At an affected local step, \textsc{FedZOUL-NS} first asks whether the previously generated noisy update is also valid under the retained-data transition. If so, it is kept. Otherwise, a reflection coupling produces an update with the correct retained-data law, and training continues from the next position. Theorem~\ref{thm:fedzoulns_exact} in Appendix~\ref{sec:fedzoulns_exactness} proves that this procedure gives exact unlearning of the complete maintained state, and Corollary~\ref{cor:fedzoulns_sequential} in the same subsection covers a non-adaptive stream of deletions. Proposition~\ref{prop:fedzoulns_replay} further bounds the probability that replay is triggered before a request at $T_{ul}$; under Eq.~\eqref{eq:main_ns_sigma_overview}, it is at most $\min\{1,\rho T_{ul}\sqrt{m/M}/\sqrt{2\pi T}\}$. This replay probability is different from the final-model TV parameter $\rho$: one belongs to the particular transitionwise coupling used for unlearning, while the other measures the distance between the final output laws.

The cost analysis follows the same picture. For \textsc{FedZOUL-S}, work is repeated only if the deleted sample was used, and replay begins at its first occurrence. For \textsc{FedZOUL-NS}, affected updates are checked first and replay begins only after the first rejection, so some computation can still be retained even after the deleted sample has appeared. Theorem~\ref{thm:expected_unlearning_queries} and Corollary~\ref{cor:stream_unlearning_cost} in Appendix~\ref{sec:expected_unlearning_cost} give the corresponding loss-evaluation, communication, and request-stream bounds. The important point is that the saving comes from reusing computation that can be coupled exactly to retained-data training; no approximation to retraining is introduced.

\subsection{Optimization with a Finite Seed Pool}

We next ask whether the same finite seed pool that makes communication and unlearning efficient still permits meaningful optimization. Seed reuse does not change the federated objective; it only restricts the directions in which the model can move. Let $\boldsymbol Z=[\boldsymbol z_1,\ldots,\boldsymbol z_K]$ and $\boldsymbol A=\boldsymbol Z/\sqrt K$. Every iterate can be written as $\boldsymbol w=\boldsymbol w^0+\boldsymbol A\boldsymbol u$ for some $\boldsymbol u\in\mathbb R^K$. Hence the algorithm is optimizing the original objective on the affine space generated by the seed pool, with
\begin{equation}
\label{eq:main_projected_gradient_overview}
\|\nabla F_{\boldsymbol Z}(\boldsymbol u)\|^2
=
\frac1K\|\boldsymbol Z^\top\nabla f(\boldsymbol w)\|^2.
\end{equation}
The quantity on the right is therefore the gradient energy seen along the shared seed directions. Because the columns of $\boldsymbol Z$ need not be orthogonal, it should not be interpreted as an orthogonal projection. Appendix~\ref{sec:fixed_seed_analysis} gives the corresponding parameterization in detail.

The convergence proof then follows the actual seed-coordinate update produced at each local step. What matters is its conditional bias, its variance, and its second moment. For \textsc{FedZOO-S}, the bias comes from the two-point finite difference. For \textsc{FedZOO-NS}, clipping adds bias and the Gaussian perturbation adds variance. Appendix~\ref{sec:finite_seed_utility} derives these quantities and states the required regularity conditions. Writing $\beta_{\mathsf A}$, $\nu_{\mathsf A}$, and $G_{\mathsf A}$ for the corresponding uniform bounds, $L_{\boldsymbol Z}$ for the smoothness constant in seed coordinates, $\zeta_{\boldsymbol Z}$ for client heterogeneity, and $\chi_m=(M-m)/(m(M-1))$ for the finite-population client-sampling factor when $M>1$ (with $\chi_m=0$ for $M=1$), Theorem~\ref{thm:finite_seed_stationarity} in Appendix~\ref{sec:finite_seed_utility} shows that, whenever $4\eta L_{\boldsymbol Z}E\le1$,
\begin{align}
&\frac1R\sum_{r=1}^R
\mathbb E\left[
\frac1K\|\boldsymbol Z^\top\nabla f(\boldsymbol w^{r-1})\|^2
\,\middle|\,\boldsymbol Z
\right]
\notag\\
&\quad\le
\frac{2(f(\boldsymbol w^0)-f^\star)}{\eta ER}
+6\beta_{\mathsf A}^2
+8L_{\boldsymbol Z}\eta\left(
E\chi_m\zeta_{\boldsymbol Z}^2+\frac{\nu_{\mathsf A}^2}{m}
\right)
\notag\\
&\qquad
+2L_{\boldsymbol Z}^2\eta^2E^2G_{\mathsf A}^2,
\label{eq:main_convergence_overview}
\end{align}
where $f^\star$ is any finite lower bound of the global objective. The four terms have a simple interpretation: finite-horizon optimization error, finite-difference or clipping bias, stochastic variation together with client heterogeneity, and the drift accumulated across local steps. In \textsc{FedZOO-NS}, the noise required for stability enters the variance and second-moment terms, so the cost of making unlearning easier appears directly in the optimization bound.

For a fixed training horizon $ER$, the stepsize $\eta$ balances the first term, which scales as $1/(\eta ER)$, against the leading stochastic term, which grows linearly with $\eta$. Corollary~\ref{cor:balanced_seed_stationarity} in Appendix~\ref{sec:finite_seed_utility} gives the resulting balanced rate when the balancing choice also respects the stepsize condition above. Since $\chi_m\le1/m$, the leading stochastic term scales as $\mathcal O(1/\sqrt{mER})$. Thus, for fixed $E$ and while this term dominates, increasing the number of participating clients gives the usual linear speedup in communication rounds. The same bound also shows where this stops: finite-difference or clipping bias does not vanish with $m$, local drift eventually matters, and \textsc{FedZOO-NS} incurs an additional variance floor from the Gaussian noise used for stability.

\subsection{From Seed-Subspace to Full-Space Guarantees}

The previous result controls exactly what the algorithm can observe through the shared seed directions. By itself, that need not imply full-space stationarity. If the seed span misses a component of the gradient, then $\boldsymbol Z^\top\nabla f(\boldsymbol w)$ can be small even when $\nabla f(\boldsymbol w)$ is not. To bridge the two, Appendix~\ref{sec:full_space_utility} assumes that the seed directions capture the gradients encountered during training: for some $\kappa_{\boldsymbol Z}>0$ and $\varepsilon_{\mathrm{cov}}\ge0$,
\begin{equation}
\label{eq:main_coverage_overview}
\frac1K\|\boldsymbol Z^\top\nabla f(\boldsymbol w)\|^2
\ge
\kappa_{\boldsymbol Z}\|\nabla f(\boldsymbol w)\|^2
-
\varepsilon_{\mathrm{cov}}^2.
\end{equation}
Corollary~\ref{cor:full_space_stationarity} in Appendix~\ref{sec:full_space_utility} then converts the seed-subspace bound into a full-space guarantee, with the expected $1/\kappa_{\boldsymbol Z}$ scaling and additive coverage error $\varepsilon_{\mathrm{cov}}^2/\kappa_{\boldsymbol Z}$.

For the Gaussian seed pool used in our algorithms, this condition has a concrete interpretation. If all global-checkpoint gradients that can arise in the analysis lie in a fixed $r$-dimensional subspace chosen independently of the seed pool, Proposition~\ref{prop:gaussian_seed_coverage} in Appendix~\ref{sec:full_space_utility} shows that sufficiently many Gaussian seed directions form a uniform near-isometry on that subspace. Together with the stepsize condition stated there, Corollary~\ref{cor:gaussian_full_space_stationarity} gives the corresponding high-probability full-space result. The counterexample in the same subsection explains why some such condition is unavoidable when $K<d$: a rank-deficient seed span can make $\boldsymbol Z^\top\nabla f(\boldsymbol w)=0$ while the full gradient remains nonzero.

Finally, exact unlearning closes the loop. After a deletion is processed, the returned state has exactly the same law as a fresh run on the retained data. There is therefore no separate optimization error caused by unlearning. If the retained objective satisfies the same regularity conditions and the inherited stepsize remains valid, the convergence analysis applies again with the retained-data constants. Corollaries~\ref{cor:retained_finite_seed_utility} and~\ref{cor:retained_full_space_utility} in Appendix~\ref{sec:post_unlearning_utility} state the resulting seed-subspace and full-space guarantees. In this sense, the two halves of the theory fit together naturally: stability tells us which past computation can be kept exactly, and the fixed-seed convergence analysis tells us the quality of the resulting model.

\section{Experiments}
In this section, we evaluate the proposed framework across different tasks and settings, focusing on optimization performance, unlearning effectiveness, and resource efficiency. We further examine how key parameters affect the trade-off between model utility and system overhead.

\begin{table*}[t]
\centering
\caption{Comparison of accuracy ($\%$), runtime (s), and communication cost (MB) against zeroth-order baselines under varying levels of data heterogeneity on vision tasks. Our methods are shaded in gray, with the best accuracies in bold.}
\label{tab:table1}
\begin{tabular}{ll ccc ccc ccc}
\toprule
\multirow{2}{*}{Dataset} & \multirow{2}{*}{Algorithm} & \multicolumn{3}{c}{$\alpha=0.1$} & \multicolumn{3}{c}{$\alpha=0.5$} & \multicolumn{3}{c}{$\alpha=1.0$} \\
\cmidrule(lr){3-5} \cmidrule(lr){6-8} \cmidrule(lr){9-11}
& & Acc$\uparrow$ & Time$\downarrow$ & Comm$\downarrow$ & Acc$\uparrow$ & Time$\downarrow$ & Comm$\downarrow$ & Acc$\uparrow$ & Time$\downarrow$ & Comm$\downarrow$ \\
\midrule

 & FedKSeed Pro           & 78.18 & 7.05 & 0.17 & 79.42 & 7.28 & 0.17 & 79.44 & 7.10 & 0.17 \\
 & DeComFL                & 66.59 & 7.93 & 0.02 & 80.03 & 7.91 & 0.02 & 79.63 & 7.98 & 0.02 \\
\cmidrule{2-11} 
\rowcolor{gray!15} \cellcolor{white} & FedZOO-S        & \textbf{83.22} & 8.19 & 0.17 & \textbf{83.81} & 8.21 & 0.17 & \textbf{84.09} & 8.17 & 0.17 \\
\rowcolor{gray!15} \cellcolor{white} & FedZOO-NS ($\sigma=0.1$) & 79.14 & 8.25 & 0.17 & 82.08 & 8.23 & 0.17 & 80.73 & 8.24 & 0.17 \\
\rowcolor{gray!15} \cellcolor{white} \multirow{-5}{*}{MNIST} & FedZOO-NS ($\sigma=0.2$) & 75.04 & 8.28 & 0.17 & 79.00 & 8.26 & 0.17 & 79.02 & 8.24 & 0.17 \\
\midrule

 & FedKSeed Pro           & 51.46 & 7.82 & 0.17 & 57.75 & 7.96 & 0.17 & 60.92 & 7.93 & 0.17 \\
 & DeComFL                & 52.83 & 8.20 & 0.02 & 55.58 & 8.33 & 0.02 & 43.75 & 8.34 & 0.02 \\
\cmidrule{2-11} 
\rowcolor{gray!15} \cellcolor{white} & FedZOO-S        & 55.69 & 9.60 & 0.17 & 63.90 & 9.57 & 0.17 & \textbf{66.93} & 9.66 & 0.17 \\
\rowcolor{gray!15} \cellcolor{white} & FedZOO-NS ($\sigma=0.1$) & 58.43 & 9.95 & 0.17 & \textbf{66.88} & 9.91 & 0.17 & 64.76 & 9.93 & 0.17 \\
\rowcolor{gray!15} \cellcolor{white} \multirow{-5}{*}{FashionMNIST} & FedZOO-NS ($\sigma=0.2$) & \textbf{60.44} & 9.95 & 0.17 & 64.11 & 9.91 & 0.17 & 62.82 & 9.90 & 0.17 \\
\midrule

 & FedKSeed Pro           & 14.33 & 18.60 & 0.17 & \textbf{19.59} & 18.38 & 0.17 & 19.64 & 18.56 & 0.17 \\
 & DeComFL                & 15.30 & 19.03 & 0.02 & \textbf{19.59} & 19.08 & 0.02 & 19.59 & 19.12 & 0.02 \\
\cmidrule{2-11} 
\rowcolor{gray!15} \cellcolor{white} & FedZOO-S        & 15.95 & 20.62 & 0.17 & \textbf{19.59} & 21.31 & 0.17 & 20.59 & 21.77 & 0.17 \\
\rowcolor{gray!15} \cellcolor{white} & FedZOO-NS ($\sigma=0.1$) & \textbf{16.07} & 22.57 & 0.17 & 19.56 & 22.08 & 0.17 & \textbf{22.22} & 22.49 & 0.17 \\
\rowcolor{gray!15} \cellcolor{white} \multirow{-5}{*}{SVHN} & FedZOO-NS ($\sigma=0.2$) & 13.37 & 21.93 & 0.17 & 16.30 & 22.45 & 0.17 & 16.97 & 22.18 & 0.17 \\
\midrule

 & FedKSeed Pro           & 18.99 & 47.88 & 0.17 & 19.13 & 49.40 & 0.17 & 18.42 & 48.64 & 0.17 \\
 & DeComFL                & 16.43 & 49.49 & 0.02 & 18.81 & 50.35 & 0.02 & 20.94 & 49.92 & 0.02 \\
\cmidrule{2-11} 
\rowcolor{gray!15} \cellcolor{white} & FedZOO-S        & 19.47 & 55.44 & 0.17 & \textbf{21.13} & 56.73 & 0.17 & \textbf{21.08} & 55.45 & 0.17 \\
\rowcolor{gray!15} \cellcolor{white} & FedZOO-NS ($\sigma=0.1$) & \textbf{20.16} & 57.18 & 0.17 & 21.05 & 58.95 & 0.17 & 20.06 & 56.26 & 0.17 \\
\rowcolor{gray!15} \cellcolor{white} \multirow{-5}{*}{CIFAR-10} & FedZOO-NS ($\sigma=0.2$) & 19.69 & 56.73 & 0.17 & 19.82 & 58.33 & 0.17 & 20.26 & 57.17 & 0.17 \\
\bottomrule

\end{tabular}%
\end{table*}

\begin{table*}[t]
\centering
\caption{Comparison of ROUGE-L scores ($\%$), runtime (s), and communication cost (MB) with zeroth-order baselines under different split settings on TOFU. Our methods are shaded in gray, with the best ROUGE-L scores in bold.}
\label{tab:table2}
\resizebox{\linewidth}{!}{%
\begin{tabular}{ll ccc ccc ccc}
\toprule
\multirow{2}{*}{Model} & \multirow{2}{*}{Algorithm} & \multicolumn{3}{c}{Split99} & \multicolumn{3}{c}{Split95} & \multicolumn{3}{c}{Split90} \\
\cmidrule(lr){3-5} \cmidrule(lr){6-8} \cmidrule(lr){9-11}
 & & ROUGE-L$\uparrow$ & Time$\downarrow$ & Comm$\downarrow$ & ROUGE-L$\uparrow$ & Time$\downarrow$ & Comm$\downarrow$ & ROUGE-L$\uparrow$ & Time$\downarrow$ & Comm$\downarrow$ \\
\midrule

 & FedKSeed Pro            & 40.66 & 63.41 & 0.17 & 38.87 & 65.77 & 0.17 & 38.42 & 68.38 & 0.17 \\
 & DeComFL                 & 42.09 & 72.55 & 0.01 & 43.72 & 74.53 & 0.01 & 44.22 & 77.14 & 0.01 \\
\cmidrule{2-11} 
\rowcolor{gray!15} \cellcolor{white} & FedZOO-S                  & \textbf{44.07} & 74.07 & 0.17 & \textbf{44.47} & 76.47 & 0.17 & \textbf{45.01} & 78.61 & 0.17 \\
\rowcolor{gray!15} \cellcolor{white} & FedZOO-NS ($\sigma=10$) & 42.58 & 74.17 & 0.17 & 43.57 & 76.57 & 0.17 & 42.65 & 79.23 & 0.17 \\
\rowcolor{gray!15} \cellcolor{white} \multirow{-5}{*}{\makecell{Llama-3.2-\\1B-Instruct}} & FedZOO-NS ($\sigma=20$) & 38.61 & 74.08 & 0.17 & 41.03 & 76.59 & 0.17 & 40.68 & 79.47 & 0.17 \\
\midrule

 & FedKSeed Pro            & 41.83 & 178.74 & 0.17 & 42.04 & 185.11 & 0.17 & 45.38 & 191.99 & 0.17 \\
 & DeComFL                 & 43.48 & 194.49 & 0.01 & 46.38 & 196.60 & 0.01 & 46.63 & 204.29 & 0.01 \\
\cmidrule{2-11} 
\rowcolor{gray!15} \cellcolor{white} & FedZOO-S                  & \textbf{46.84} & 201.88 & 0.17 & \textbf{48.79} & 205.42 & 0.17 & \textbf{48.78} & 212.13 & 0.17 \\
\rowcolor{gray!15} \cellcolor{white} & FedZOO-NS ($\sigma=10$) & 46.16 & 207.60 & 0.17 & 48.70 & 212.93 & 0.17 & 46.21 & 220.67 & 0.17 \\
\rowcolor{gray!15} \cellcolor{white} \multirow{-5}{*}{\makecell{Llama-3.2-\\3B-Instruct}} & FedZOO-NS ($\sigma=20$) & 43.02 & 207.21 & 0.17 & 46.56 & 213.70 & 0.17 & 45.90 & 220.34 & 0.17 \\
\bottomrule

\end{tabular}%
}
\end{table*}

\subsection{Experimental Setup}
\subsubsection{Datasets and Models}
We evaluate our methods on MNIST~\cite{lecun1998gradient}, FashionMNIST~\cite{xiao2017fashion}, SVHN~\cite{netzer2011reading}, CIFAR-10~\cite{krizhevsky2009learning}, and the TOFU benchmark for LLM unlearning~\cite{maini2024tofu}. Vision data are partitioned across clients using a Dirichlet distribution with $\alpha\in\{0.1,0.5,1.0\}$. To construct low-quality forget sets, we randomly select $20\%$ of clients and flip the labels of $20\%$ of their local samples. For TOFU, we use the standard Split99, Split95, and Split90, with retain/forget ratios of $99\%/1\%$, $95\%/5\%$, and $90\%/10\%$. Retain data are uniformly distributed across all clients, whereas forget data are uniformly assigned to $20\%$ of clients. We use a two-hidden-layer MLP with 512 and 256 neurons and dropout for MNIST, a two-layer CNN for FashionMNIST and SVHN, and ResNet-18 for CIFAR-10. TOFU experiments use Llama-3.2-1B-Instruct and Llama-3.2-3B-Instruct.

\subsubsection{Baselines}
We compare our methods with state-of-the-art baselines in federated optimization and unlearning. For optimization, we consider two zeroth-order baselines: FedKSeed Pro~\cite{qin2024federated} uses a fixed seed pool and scalar-gradient accumulators; DeComFL~\cite{li2025achieving} reconstructs client models from historical seeds and aggregated gradient scalars, making per-round uplink and downlink costs independent of model dimension. In vision unlearning: Re-optimization retrains from scratch; FATS-SU~\cite{tao2024communication} replays training from the first step involving a target sample; FedRR~\cite{liu2022right} uses Fisher information for fast retraining; and FUDD~\cite{mora2026federated} generates private synthetic data through knowledge distillation for server-side unlearning. For TOFU, we consider two baselines: re-optimization and Oblivionis~\cite{zhang2026oblivionis}, which is a LoRA-based method that alternates between retain-set fine-tuning and forget-set unlearning.

\subsubsection{Hyperparameters}
The following settings are used unless otherwise specified: total number of clients $M=100$, number of participating clients per round $m=10$, accumulator size $K=4096$, number of local steps $E=10$, mini-batch size $b=32$, and number of perturbation directions per sample $q=1$. For vision tasks, we run $R=300$ communication rounds with perturbation scale $\mu=10^{-3}$ and clipping threshold $C=50.0$, and tune the learning rate $\eta$ over $\{10^{-5},3\times10^{-5},10^{-4},3\times10^{-4},10^{-3},3\times10^{-3}\}$. For TOFU, the corresponding settings are $R=30$, $\mu=5\times10^{-5}$, and $C=100.0$; we tune $\eta$ over $\{5\times10^{-7},10^{-6},3\times10^{-6},5\times10^{-6},10^{-5},3\times10^{-5}\}$.


\subsubsection{Evaluation Metrics}
(1) \textit{test accuracy} evaluates model utility on vision tasks and tracks its changes under sequential unlearning requests;
(2) \textit{ROUGE-L} measures generation quality on TOFU, with retain- and forget-set scores reported before and after unlearning, and Real Authors and World Facts scores used to assess general knowledge retention;
(3) \textit{time cost} denotes wall-clock execution time;
(4) \textit{communication cost} accounts for the total bytes exchanged between the server and clients; and
(5) \textit{peak GPU memory} reports the maximum GPU memory usage during unlearning.
Time and communication costs are reported per round for optimization and cumulatively over sequential requests for unlearning.

\begin{figure}[t]
    \centering
    \includegraphics[width=0.49\linewidth]{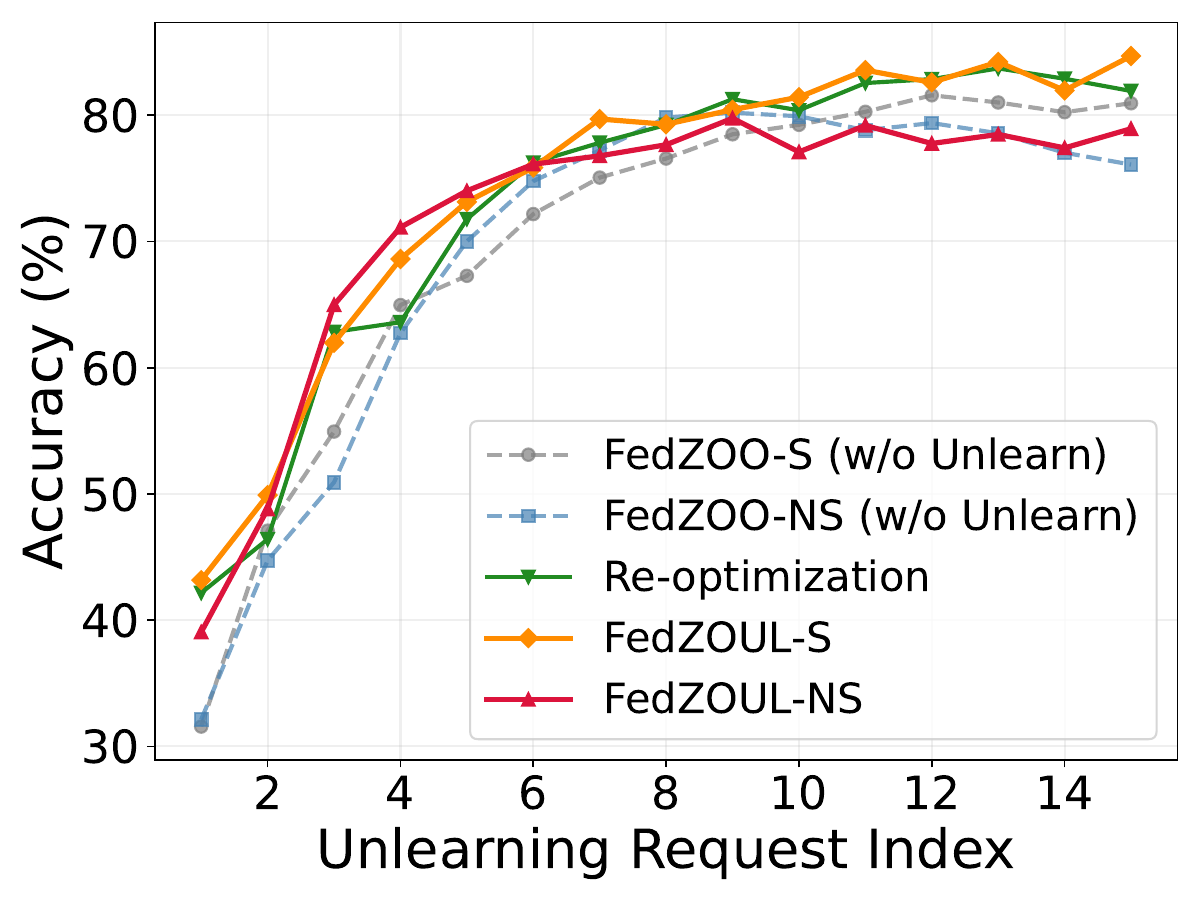}
    \hfill    
    \includegraphics[width=0.49\linewidth]{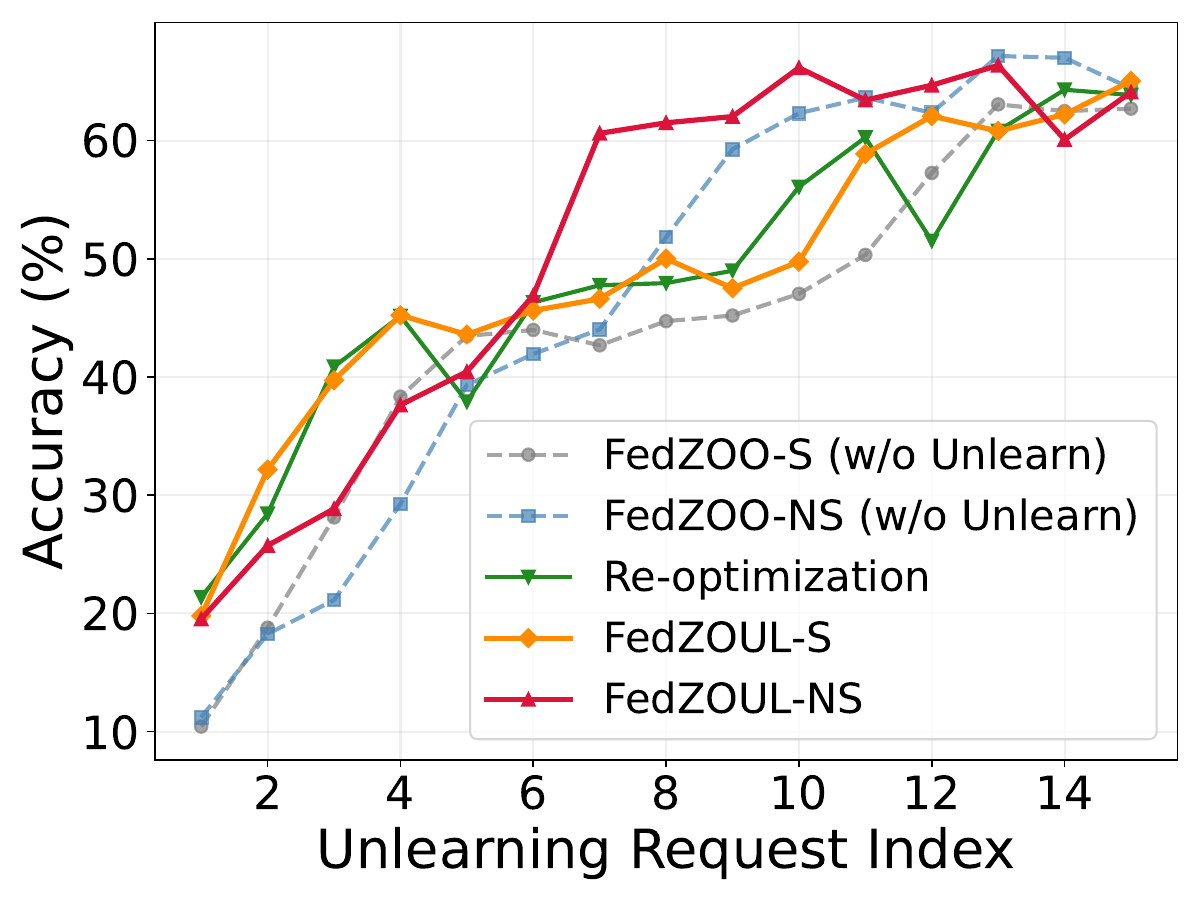}

    \includegraphics[width=0.49\linewidth]{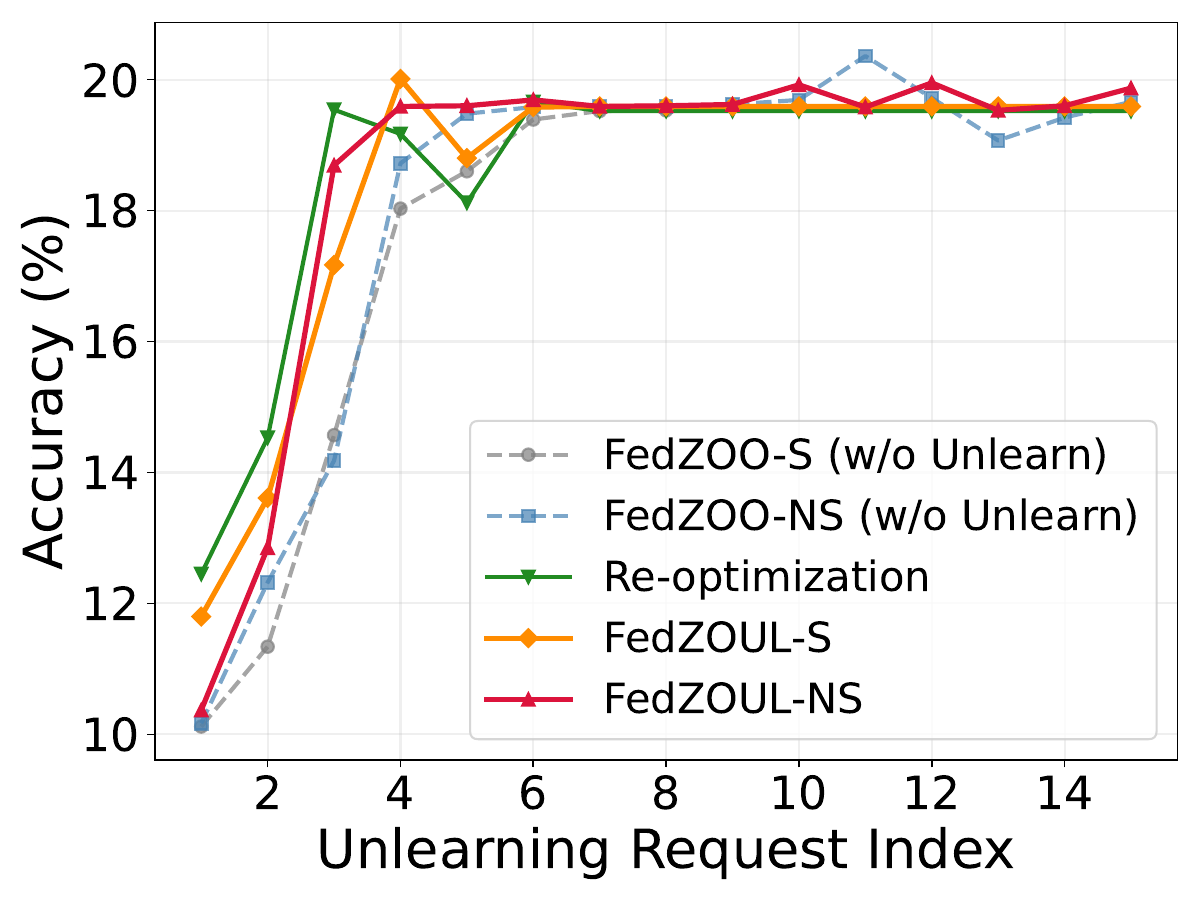}
    \hfill
    \includegraphics[width=0.49\linewidth]{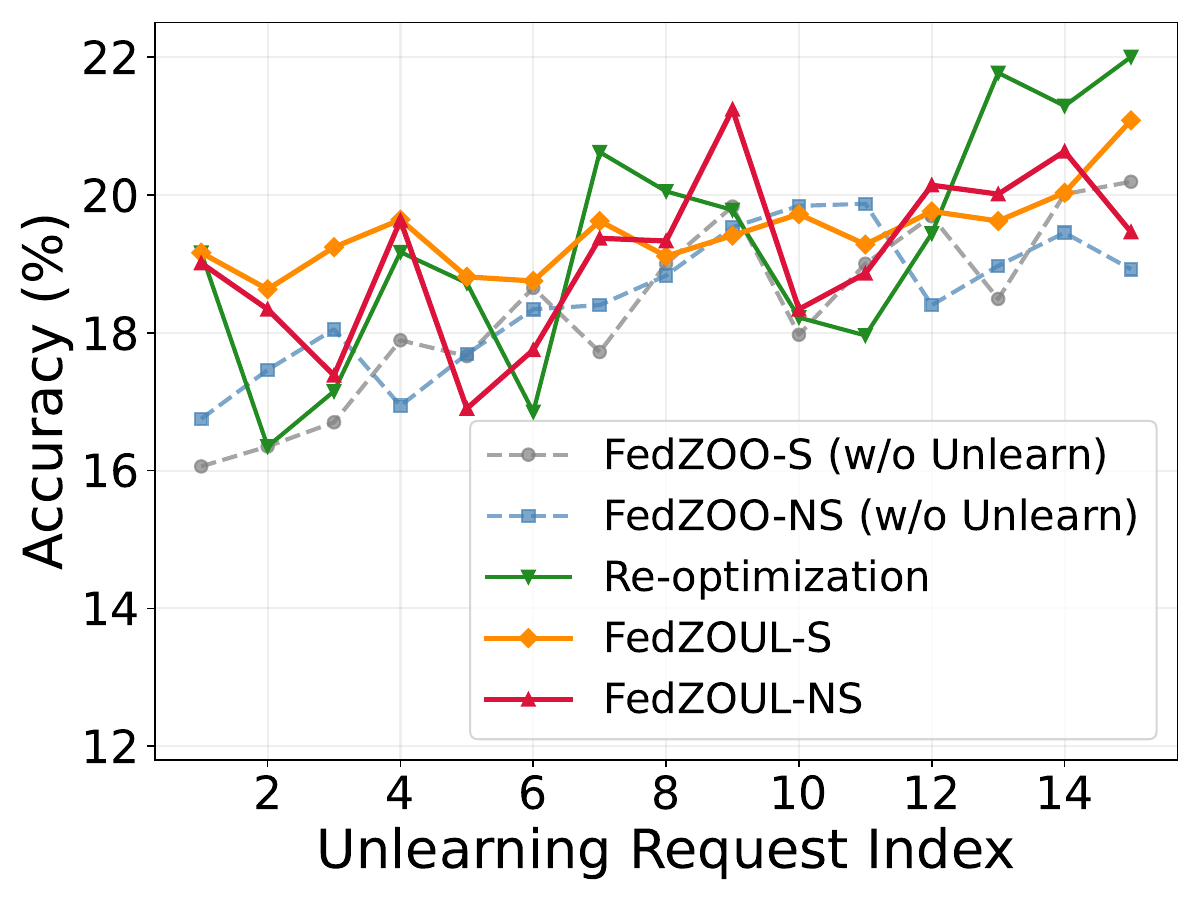}

    \caption{Accuracy comparison of FedZOUL-S/NS against FedZOO-S/NS (before unlearning) and re-optimization under stream unlearning requests on four vision datasets. Top-left:MNIST, top-right: FashionMNIST, bottom-left: SVHN, bottom-right: CIFAR-10.}
    \label{fig:vision-acc-fedzoul-s/ns}
\end{figure}

\begin{figure}[t]
    \centering
    \includegraphics[width=1\linewidth]{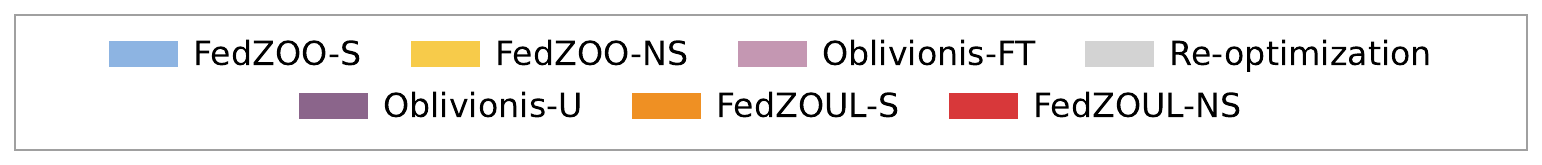} \\
    \vspace{0.2cm} 

    \includegraphics[width=0.49\linewidth]{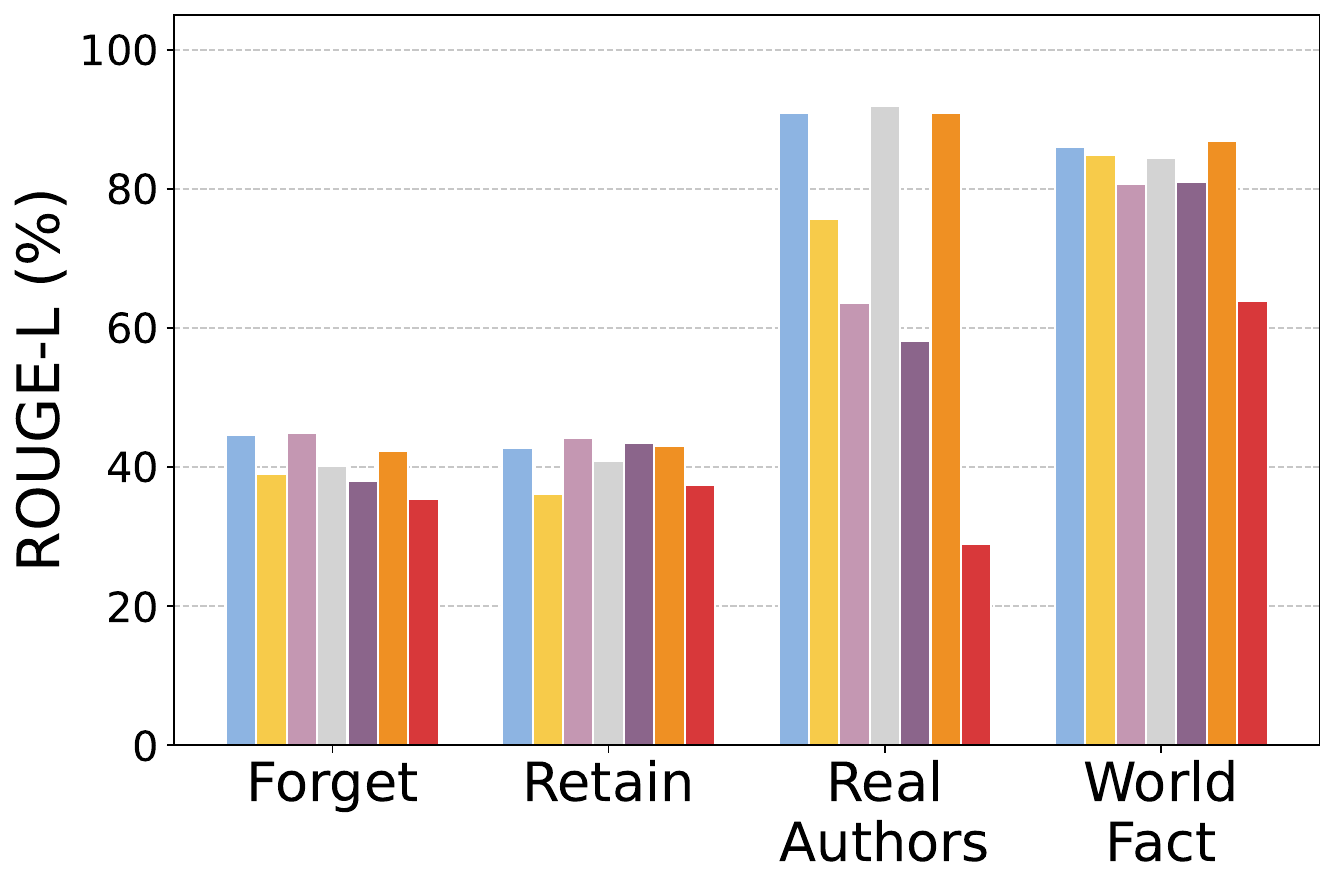}
    \hfill  
    \includegraphics[width=0.49\linewidth]{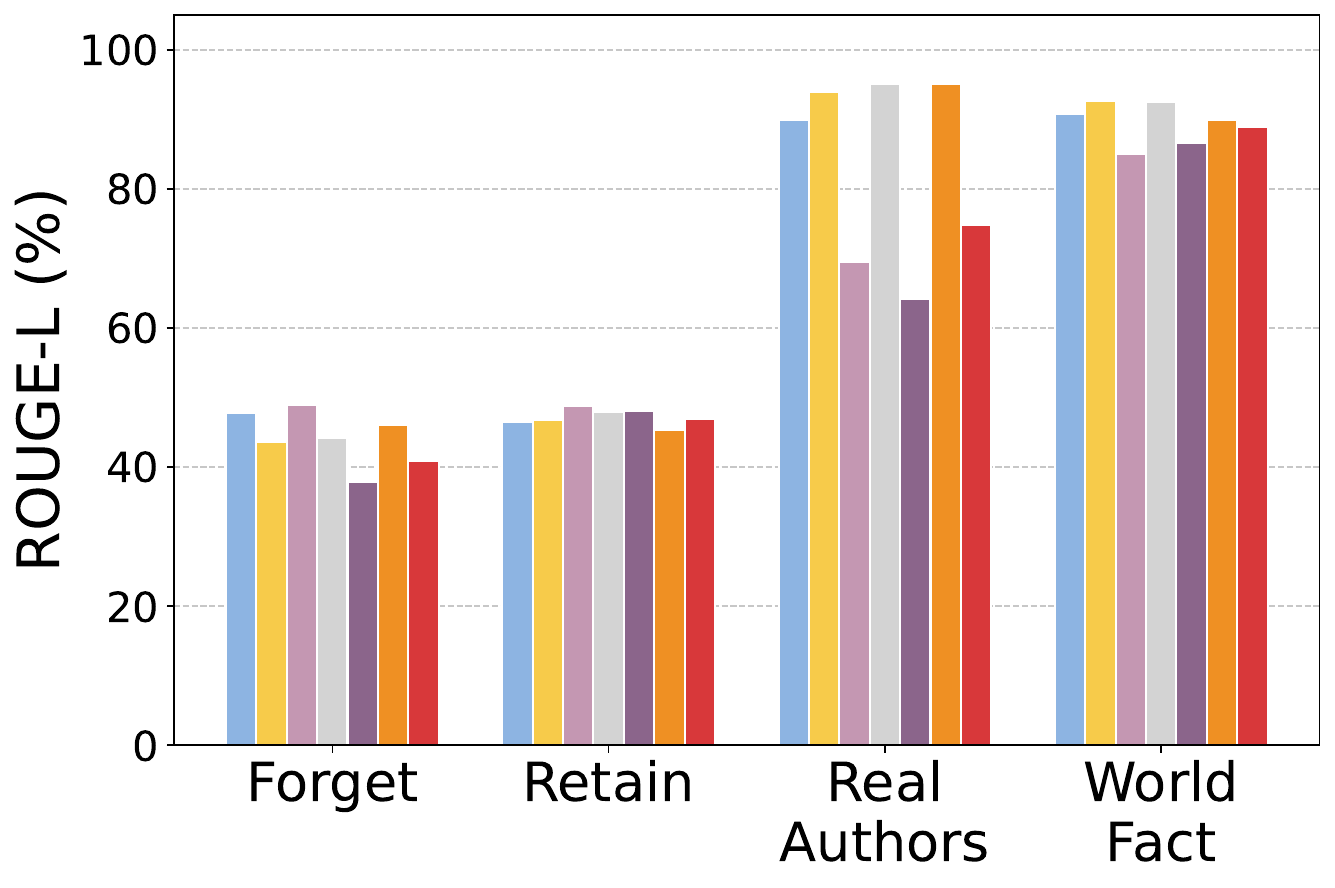}

    \caption{ROUGE-L scores of FedZOUL-S and FedZOUL-NS compared with re-optimization and Oblivionis before and after stream unlearning requests on TOFU (Split95). Oblivionis-FT and Oblivionis-U denote fine-tuning and unlearning under the Oblivionis framework, respectively. Left: Llama-3.2-1B-Instruct. Right: Llama-3.2-3B-Instruct.}
    \label{fig:tofu-rouge-l-fedzoul-s/ns-baselines}
\end{figure}

\subsection{Experimental Results}

\subsubsection{Performance of FedZOO-S and FedZOO-NS}
We evaluate \textsc{FedZOO-S} and \textsc{FedZOO-NS} on four vision benchmarks, as well as TOFU. For \textsc{FedZOO-NS}, we consider multiple noise scales. We vary the degree of data heterogeneity on the vision benchmarks using $\alpha\in\{0.1,0.5,1.0\}$ and evaluate TOFU under retain/forget splits of $99\%/1\%$, $95\%/5\%$, and $90\%/10\%$, using only the retain set for optimization. We compare our algorithms with two state-of-the-art baselines.

As shown in Table~\ref{tab:table1}, our methods match or outperform the baselines across all 12 vision settings, with accuracy gains of up to $9.13$ percentage points. \textsc{FedZOO-S} achieves the best or tied-best accuracy in seven settings. These gains may partly arise from using $bqE$ sample--direction pairs per participating client in each round, compared with $E$ direction evaluations in FedKSeed Pro and DeComFL. For \textsc{FedZOO-NS}, the broader trend is that increasing the noise scale reduces accuracy. Nevertheless, it outperforms \textsc{FedZOO-S} in several settings, showing that moderate noise does not necessarily reduce model utility. Nevertheless, \textsc{FedZOO-NS} outperforms \textsc{FedZOO-S} in several settings, showing that moderate noise does not necessarily reduce model utility. With respect to data heterogeneity, \textsc{FedZOO-S} generally improves as $\alpha$ increases and data heterogeneity decreases, whereas \textsc{FedZOO-NS} exhibits a non-monotonic trend, possibly due to noise-induced variation in its optimization trajectory. Table~\ref{tab:table2} reports the TOFU results. \textsc{FedZOO-S} achieves the highest ROUGE-L score for both model sizes across all splits. \textsc{FedZOO-NS} performs worse than \textsc{FedZOO-S}, with $\sigma=20$ consistently underperforming $\sigma=10$. This contrasts with the mixed effect of noise on the vision tasks and suggests that LLM fine-tuning may be more sensitive to scalar noise. The 3B model consistently outperforms the 1B model across all methods and splits, which may reflect its greater representational capacity.

Taken together, Tables~\ref{tab:table1} and~\ref{tab:table2} show that our methods trade additional local computation for improved utility while retaining dimension-independent communication. Their higher runtime stems from evaluating $bqE$ sample--direction pairs per participating client in each round, compared with $E$ direction evaluations in FedKSeed Pro and DeComFL. \textsc{FedZOO-S} and \textsc{FedZOO-NS} match FedKSeed Pro in per-round communication by exchanging fixed-size scalar-gradient accumulators. Under our settings, this cost exceeds that of DeComFL, whose clients upload only per-update scalars and reconstruct models from server-provided seeds and historical aggregates. Overall, both methods incur modest additional overhead, while their improved utility provides a strong foundation for the unlearning procedures.

\begin{figure}[t]
    \centering

    \includegraphics[width=0.49\linewidth]{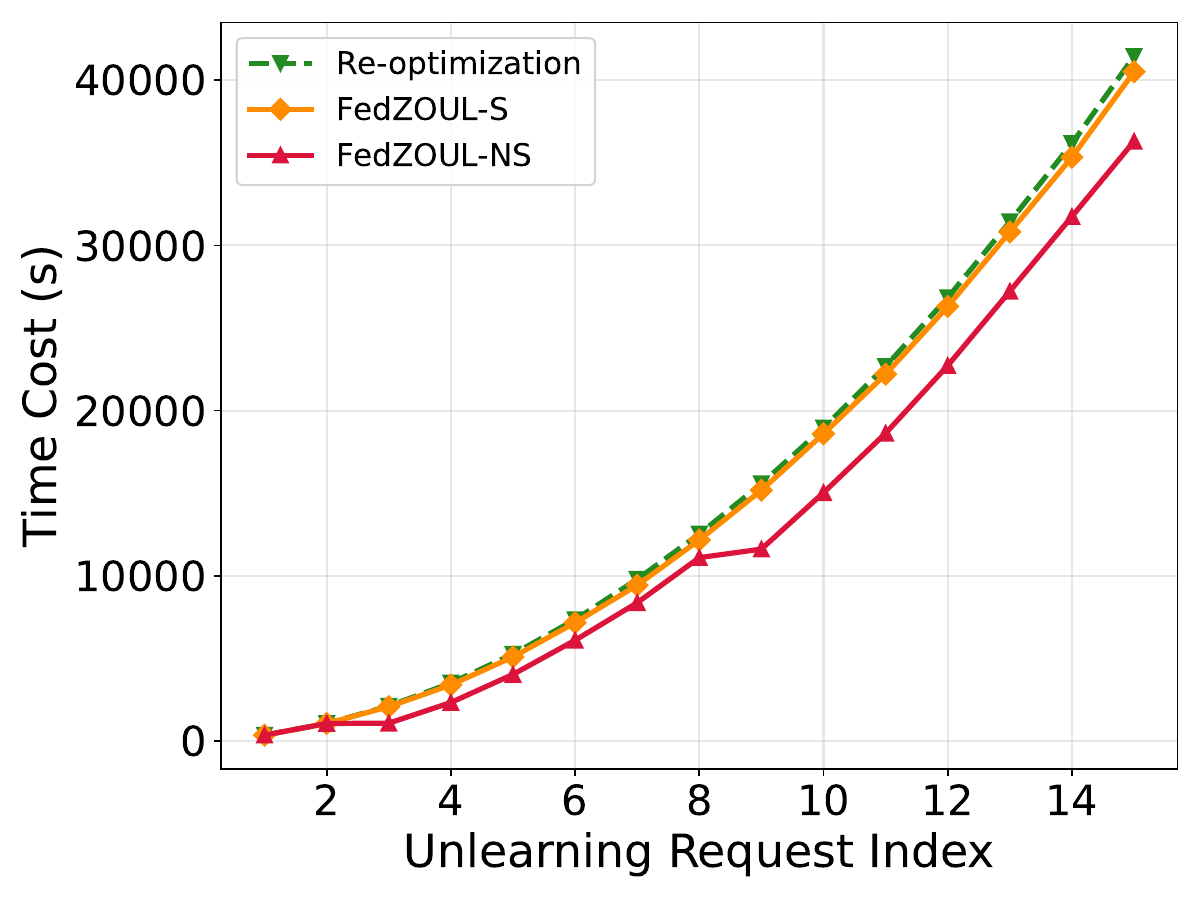}
    \hfill  
    \includegraphics[width=0.49\linewidth]{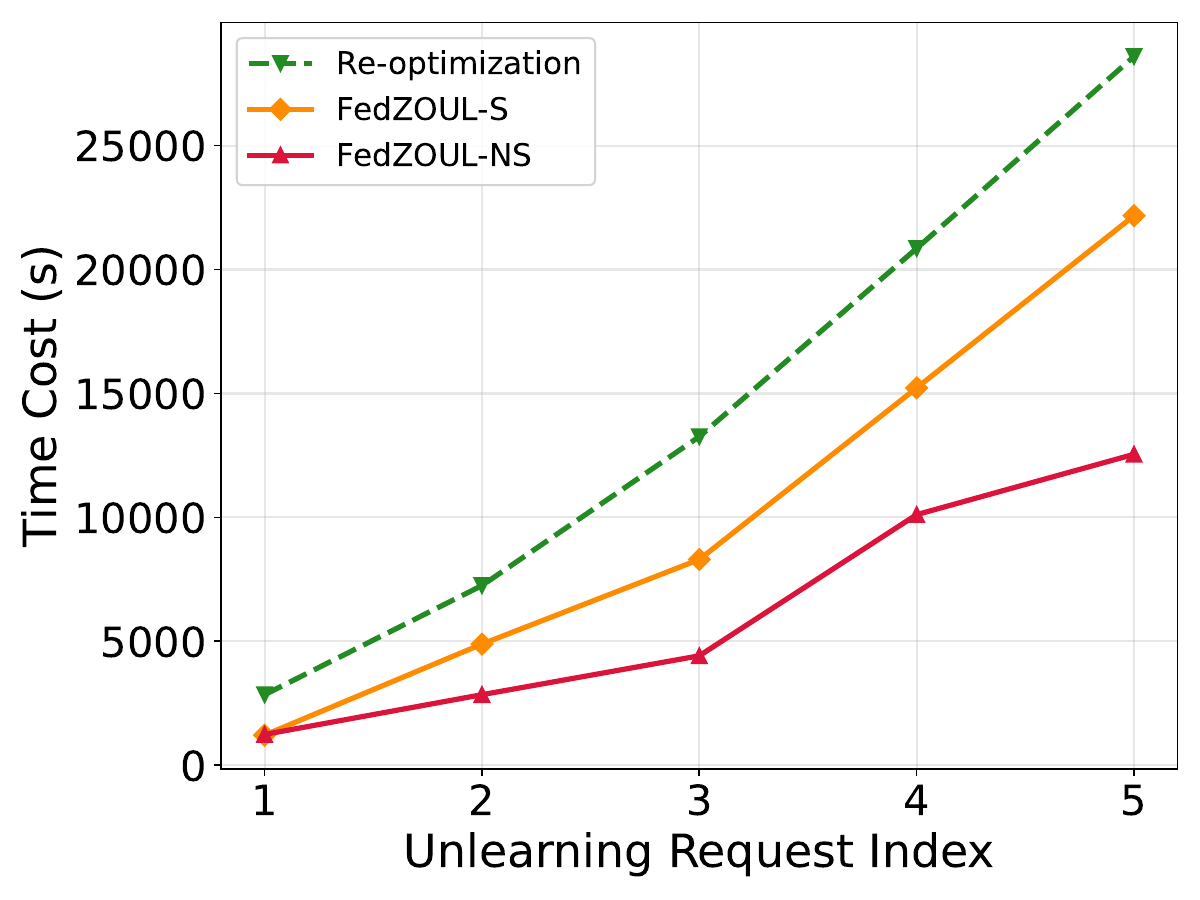}

    \caption{Cumulative time cost of FedZOUL-S/NS against re-optimization under stream unlearning requests on four vision datasets. Left: CIFAR-10, right: TOFU (Llama-3.2-3B-Instruct).}
    \label{fig:cumulative-time-cost-fedzoul-s/ns-re-optimization}
\end{figure}

\subsubsection{Performance of FedZOUL-S and FedZOUL-NS}

To evaluate the unlearning algorithms, we introduce a stream of unlearning requests at fixed intervals, with each request made by one or more target clients to remove samples from their local datasets.
Figures~\ref{fig:vision-acc-fedzoul-s/ns} and~\ref{fig:tofu-rouge-l-fedzoul-s/ns-baselines} summarize the performance on the vision tasks and TOFU under Split95, respectively. On the vision tasks, \textsc{FedZOUL-S} improves over \textsc{FedZOO-S} by up to $13.37$ percentage points after removing low-quality target data, while its final accuracy remains within $2.77$ percentage points of re-optimization across all four tasks. On TOFU, both variants reduce forget-set ROUGE-L after unlearning; although \textsc{FedZOUL-S} has higher forget-set scores than Oblivionis-U, it better preserves general knowledge, achieving $90.93\%$ and $95.13\%$ on Real Authors for the 1B and 3B models, compared with $58.08\%$ and $64.13\%$ for Oblivionis-U. \textsc{FedZOUL-NS} achieves competitive final accuracy on the vision tasks and forget-set performance similar to Oblivionis-U and re-optimization on TOFU. However, with the additional randomness introduced by noise injection, \textsc{FedZOUL-NS} shows larger deviations at some intermediate vision requests and greater degradation in general knowledge, particularly for the 1B model, consistent with the behavior observed for \textsc{FedZOO-NS} in the federated optimization experiments.

\begin{figure}[t]
    \centering
    \includegraphics[width=0.8\linewidth]{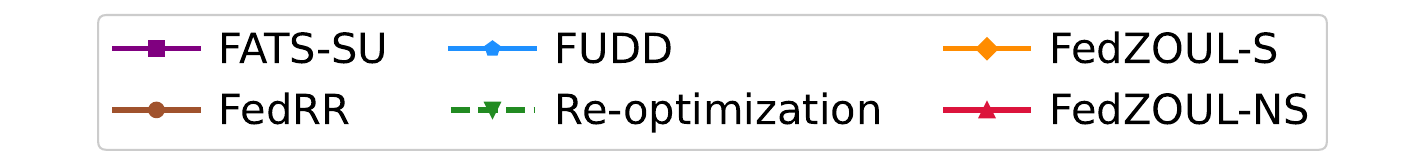} \\
    \vspace{0.2cm} 

    \includegraphics[width=0.49\linewidth]{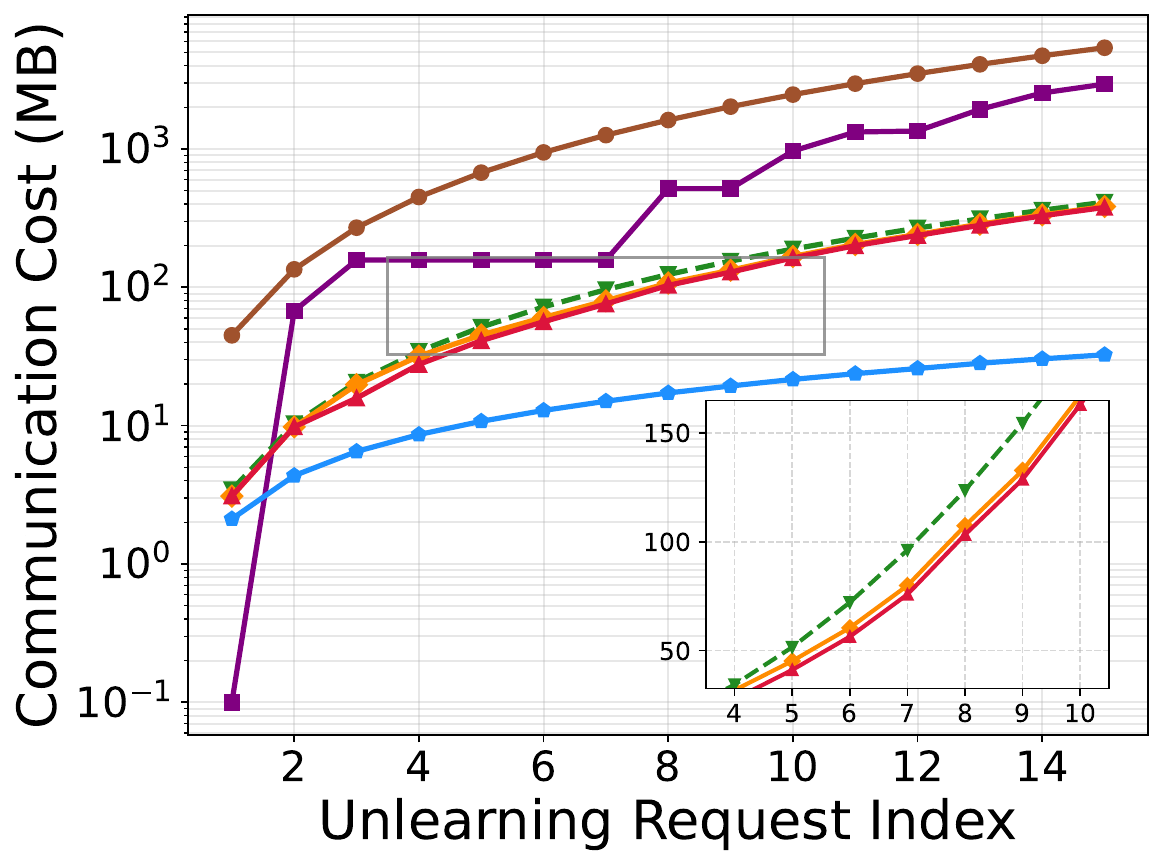}
    \hfill  
    \includegraphics[width=0.49\linewidth]{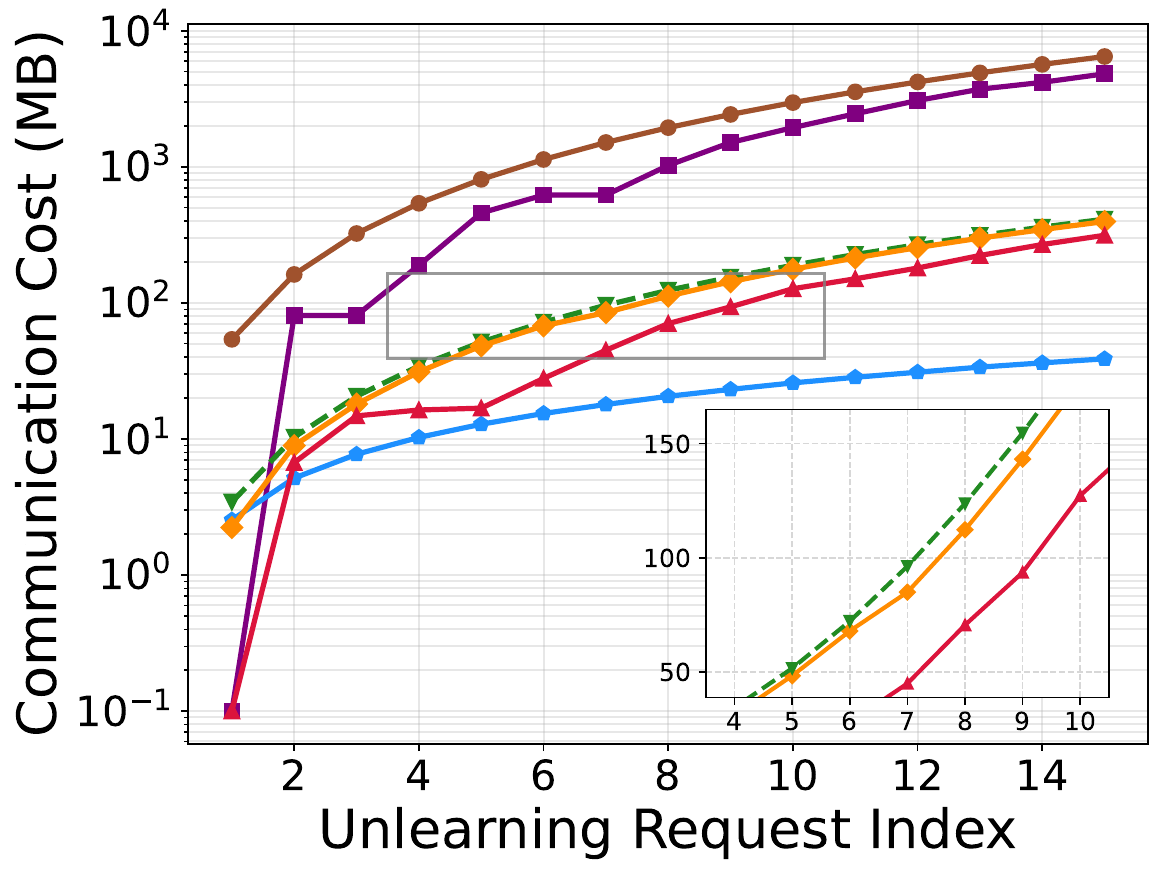}

    \vspace{0.1cm}

    \includegraphics[width=0.49\linewidth]{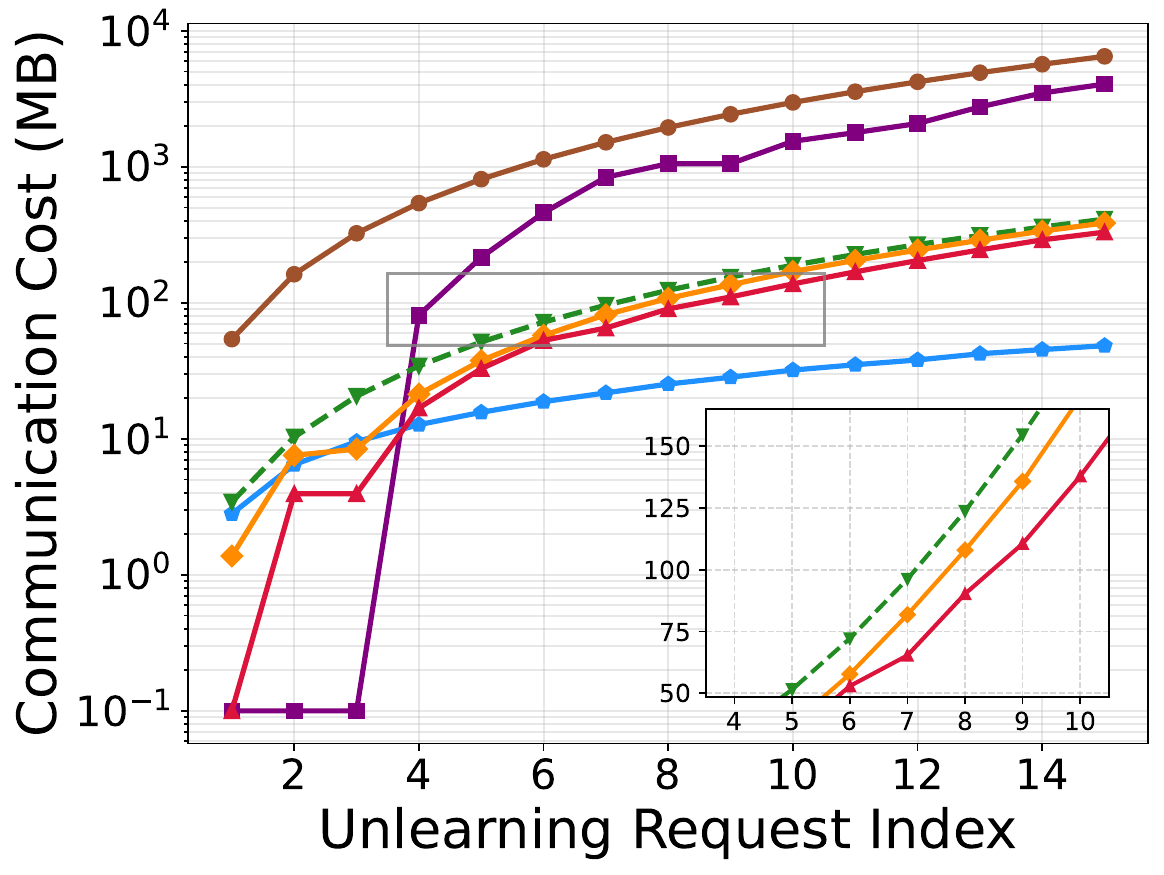}
    \hfill
    \includegraphics[width=0.49\linewidth]{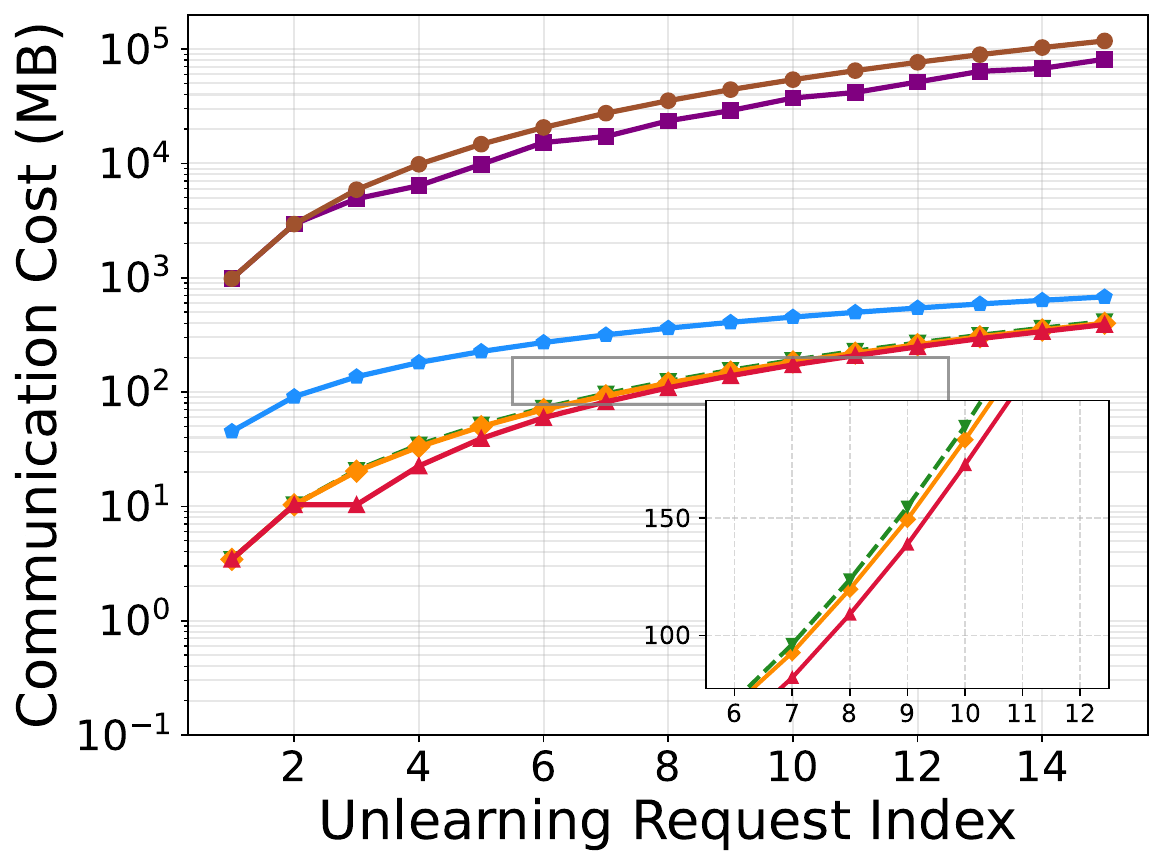}

    \caption{Cumulative communication cost of FedZOUL-S/NS against re-optimization and first-order unlearning baselines under stream unlearning requests on four vision datasets. Top-left: MNIST, top-right: FashionMNIST, bottom-left: SVHN, bottom-right: CIFAR-10.}
    \label{fig:vision-cumulative-comm-cost-fedzoul-s/ns-baselines}
\end{figure}

\begin{figure}[t]
    \centering
    
    \includegraphics[width=0.49\linewidth]{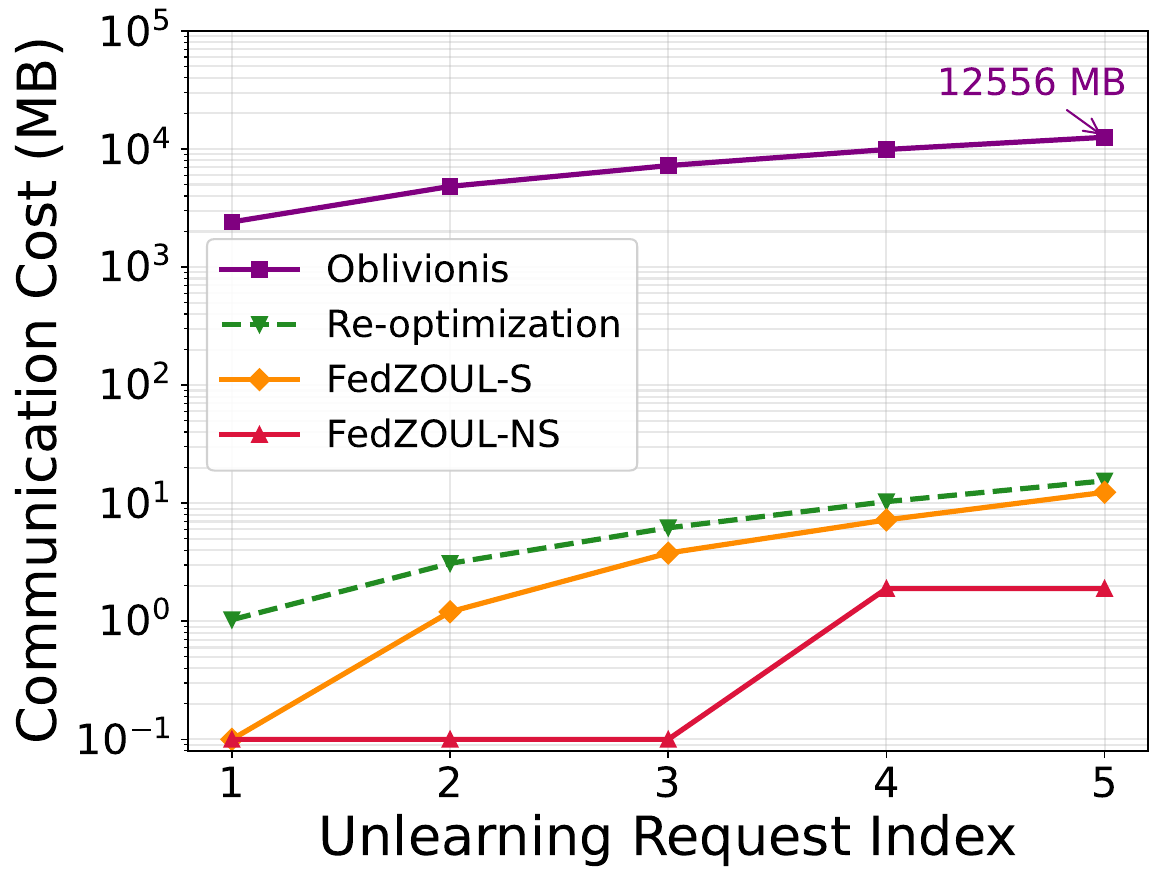}
    \hfill  
    \includegraphics[width=0.49\linewidth]{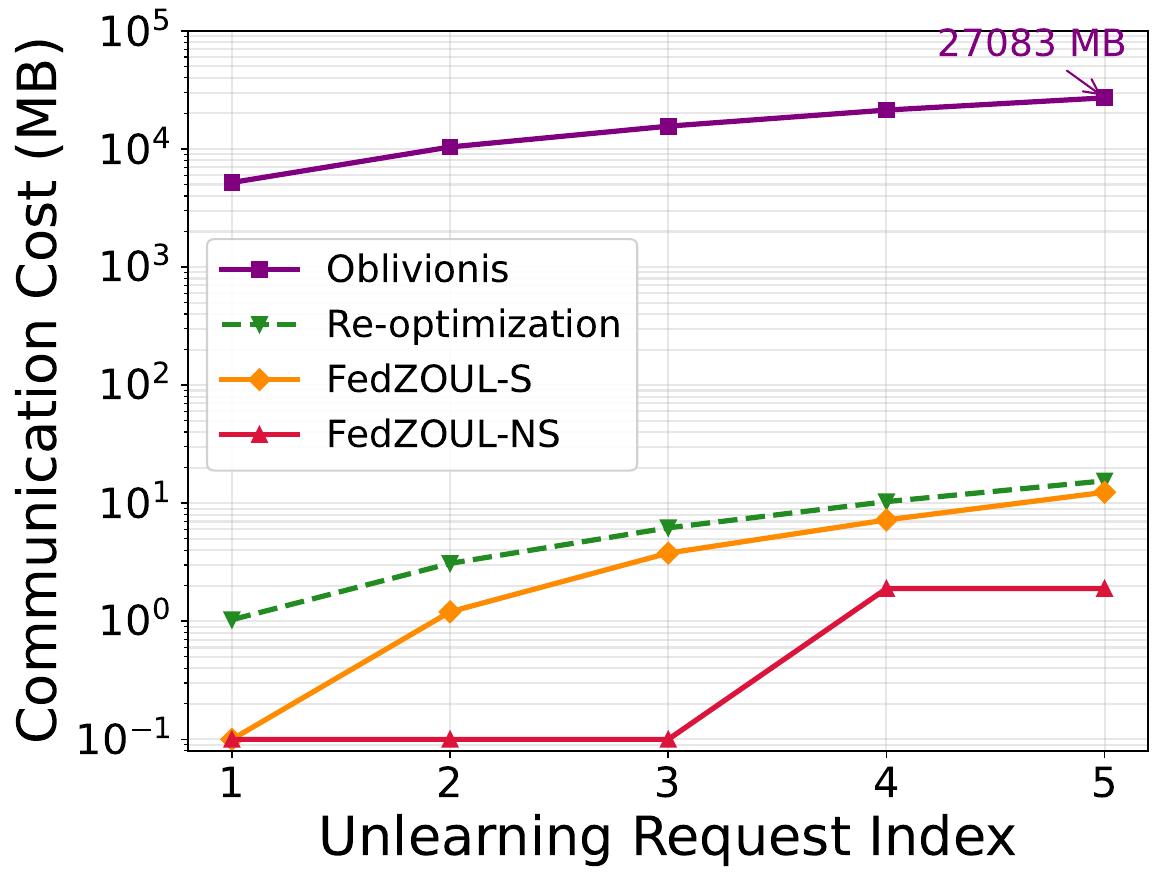}

    \caption{Cumulative communication cost of FedZOUL-S/NS against re-optimization and Oblivionis under stream unlearning requests on TOFU (Split95). Left: Llama-3.2-1B-Instruct, right: Llama-3.2-3B-Instruct.}
    \label{fig:tofu-cumulative-comm-cost-fedzoul-s/ns-baselines}
\end{figure}

To evaluate efficiency, we compare the cumulative runtime of \textsc{FedZOUL-S} and \textsc{FedZOUL-NS} with re-optimization in Figure~\ref{fig:cumulative-time-cost-fedzoul-s/ns-re-optimization}. At the final request, \textsc{FedZOUL-S} and \textsc{FedZOUL-NS} reduce runtime by $2.2\%$ and $12.4\%$ on CIFAR-10 and by $22.5\%$ and $56.1\%$ on TOFU with the 3B model, respectively. These savings arise because only a fraction of requests trigger suffix replay, while noise injection and rejection sampling further reduce replay in \textsc{FedZOUL-NS}.
Figures~\ref{fig:vision-cumulative-comm-cost-fedzoul-s/ns-baselines} and~\ref{fig:tofu-cumulative-comm-cost-fedzoul-s/ns-baselines} show similar gains in communication. Unlike FATS-SU and FedRR, which transmit parameter-sized information, our methods exchange fixed-size scalar-gradient accumulators and reduce suffix replay. After 15 requests on CIFAR-10, \textsc{FedZOUL-NS} incurs 389.1 MB, which is $99.5\%$, $99.7\%$, $42.7\%$, and $5.7\%$ lower than FATS-SU, FedRR, FUDD, and re-optimization, respectively. After five requests on TOFU with the 3B model, \textsc{FedZOUL-S} and \textsc{FedZOUL-NS} incur only 12.38 MB and 1.89 MB, compared with 27.08 GB for Oblivionis. Thus, \textsc{FedZOUL-NS} reduces communication by approximately four orders of magnitude, highlighting its benefit for bandwidth-constrained federated unlearning.

In terms of resource efficiency, we further evaluate peak GPU memory on vision tasks and TOFU, with detailed results reported in the Appendix~\ref{app:memory}. Both \textsc{FedZOUL-S} and \textsc{FedZOUL-NS} require less memory than the first-order unlearning baselines. On TOFU, their memory advantage over Oblivionis becomes more pronounced as model size increases. These results complement the time and communication savings and support deployment on devices with limited memory.

\subsubsection{Impact of the Client Sampling Ratio $m/M$}

\begin{figure}[t]
    \centering
    \includegraphics[width=0.49\linewidth]{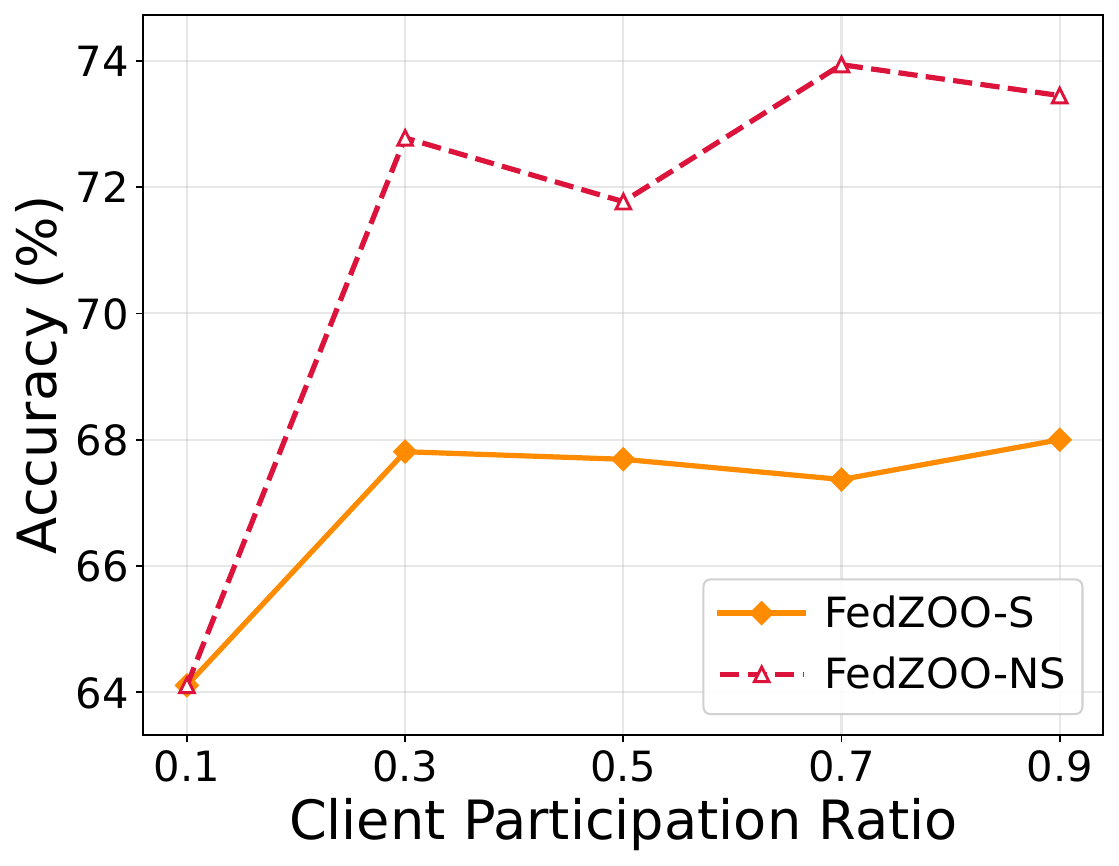}
    \hfill
    \includegraphics[width=0.49\linewidth]{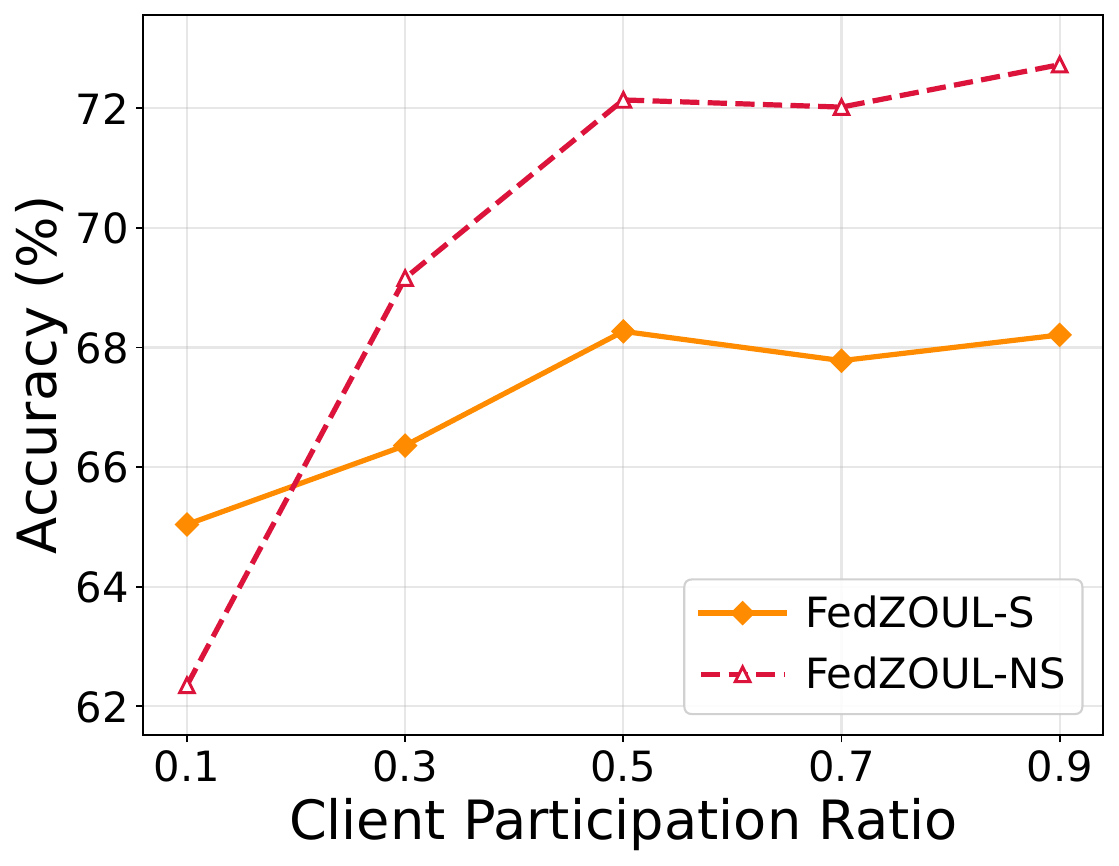}

    \vspace{0.1cm}

    \includegraphics[width=0.8\linewidth]{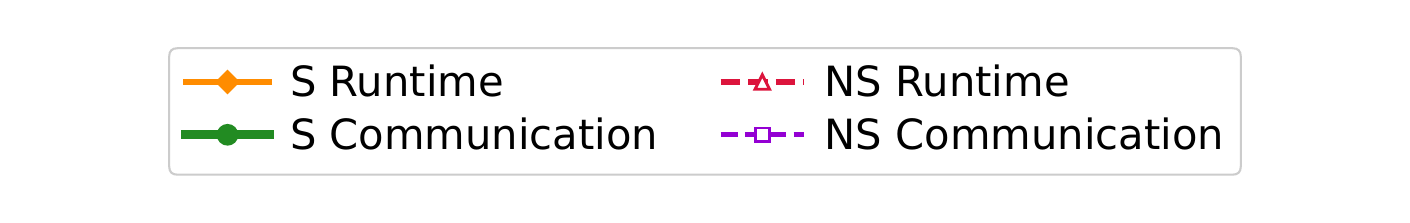} \\\vspace{-0.2cm}
    \includegraphics[width=0.493\linewidth]{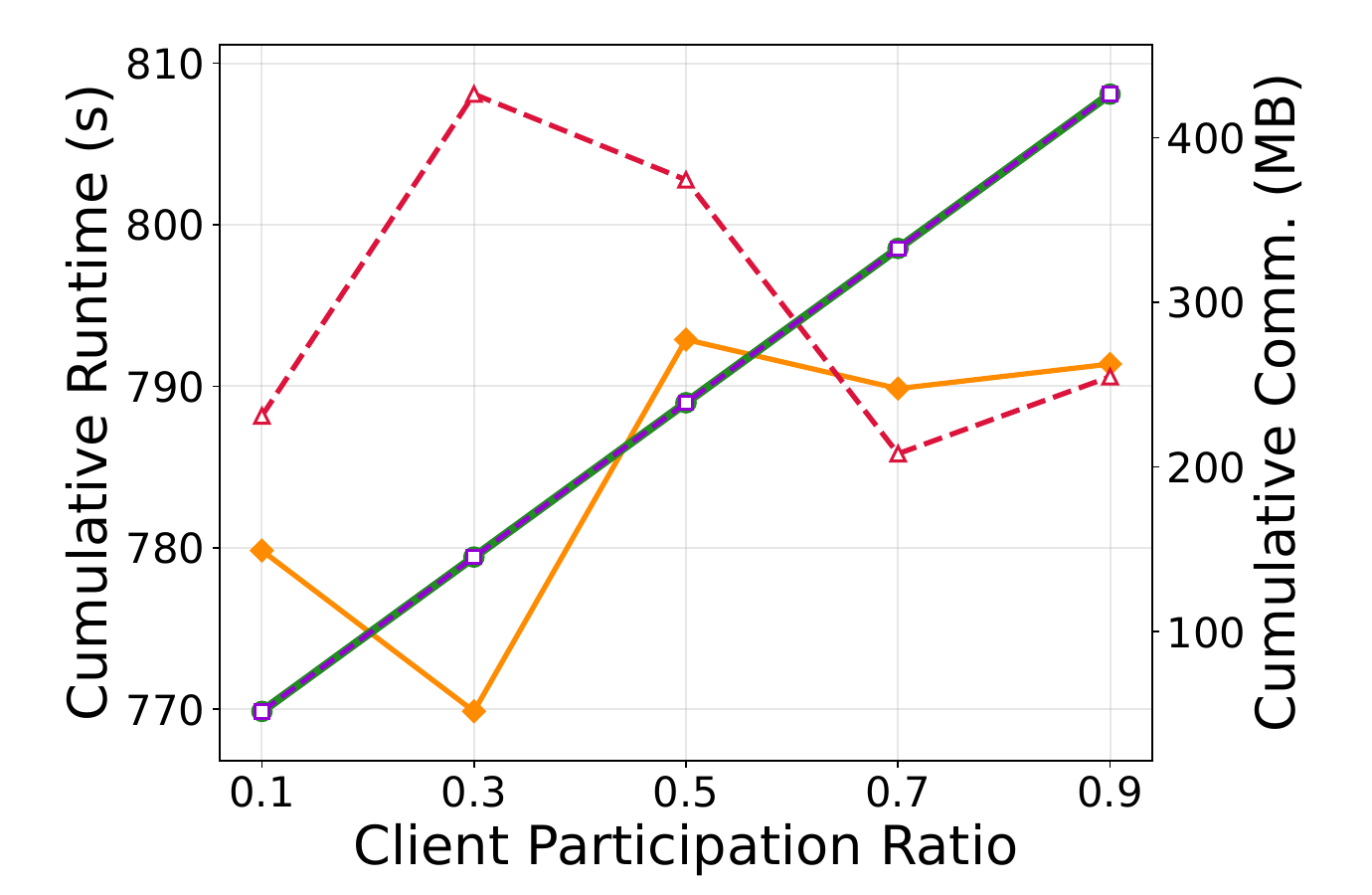}
    \hfill
    \includegraphics[width=0.493\linewidth]{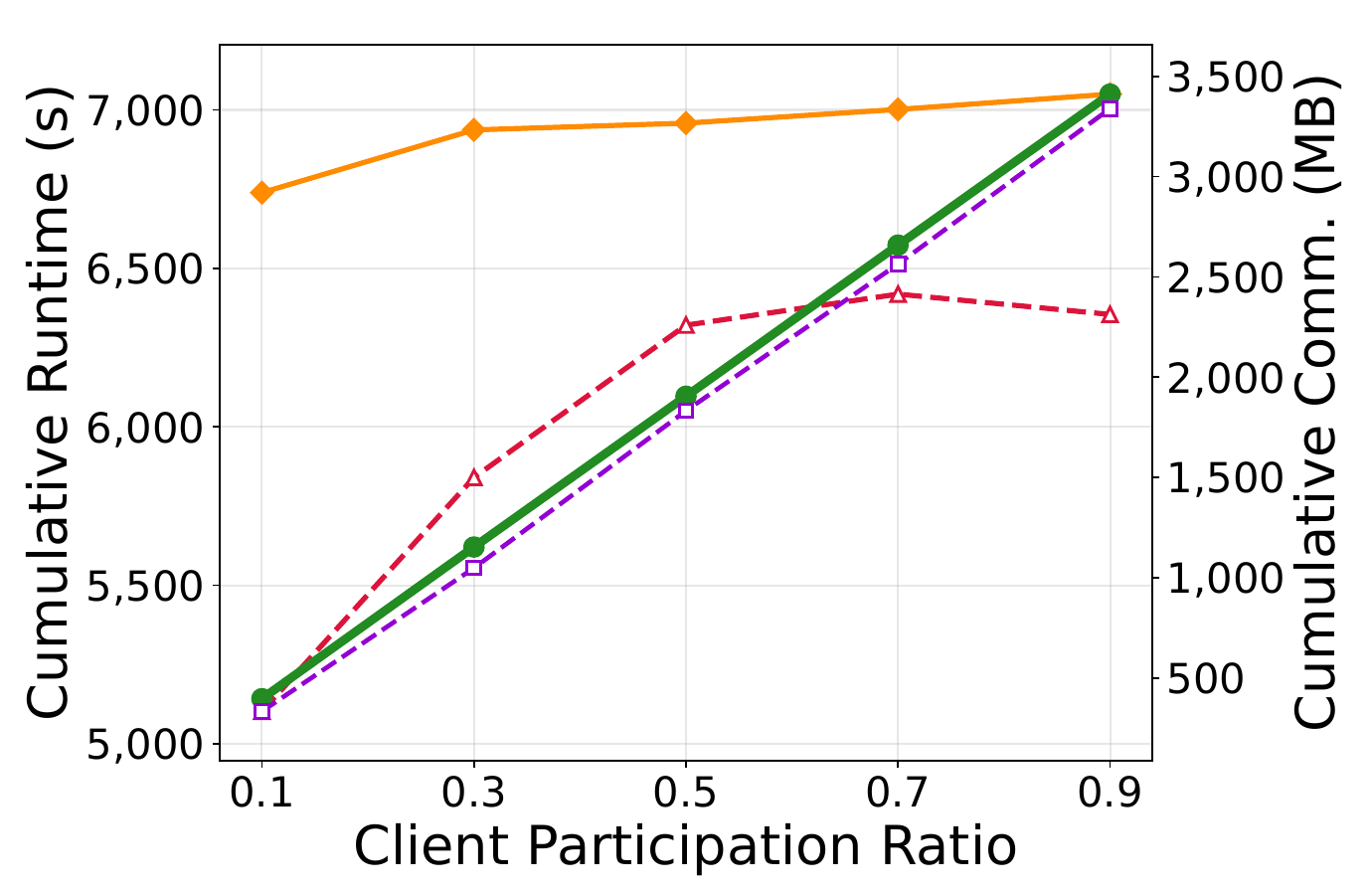}

    \caption{Impact of the client sampling ratio ($m/M$) on accuracy, cumulative runtime, and cumulative communication cost on FashionMNIST. Top row: accuracy; bottom row: cumulative runtime and communication cost. Left column: FedZOO-S/NS; right column: FedZOUL-S/NS.}
    \label{fig:fashionmnist-fedzoo-fedzoul-impact-client-ratio}
\end{figure}

We study how the client sampling ratio $m/M$ affects the utility and efficiency of \textsc{FedZOO-S/NS} and \textsc{FedZOUL-S}/NS on FashionMNIST. This ratio determines client participation and affects the TV-stability bounds of both learning algorithms. Figure~\ref{fig:fashionmnist-fedzoo-fedzoul-impact-client-ratio} shows that increasing $m/M$ from $0.1$ to $0.3$ improves optimization accuracy, while further increases provide limited gains. For unlearning, \textsc{FedZOUL-S} achieves its highest accuracy of $68.27\%$ at $m/M=0.5$, whereas \textsc{FedZOUL-NS} reaches $72.73\%$ at $0.9$. These results show that greater participation generally improves model utility, although accuracy does not increase monotonically. Except at $m/M=0.1$, \textsc{FedZOO-NS} and \textsc{FedZOUL-NS} achieve higher accuracy than \textsc{FedZOO-S} and \textsc{FedZOUL-S}, respectively, indicating that introducing randomness does not reduce model utility.

However, greater participation increases communication overhead, as each selected client uploads a fixed-size accumulator per round. The cumulative communication cost of \textsc{FedZOO-S/NS} increases by a factor of $8.27$ from $m/M=0.1$ to $0.9$, while runtime changes little under parallel client execution. For unlearning, communication grows approximately linearly and runtime generally increases. This is because a higher client sampling ratio increases both the communication cost per round and the probability that local updates use target samples, which may require replay during unlearning. \textsc{FedZOUL-NS} consistently uses less time and communication than \textsc{FedZOUL-S}: Gaussian coupling allows affected updates to be reused until the first rejection. However, its runtime and communication savings decrease from $24.3\%$ and $15.9\%$ at $m/M=0.1$ to $9.9\%$ and $2.1\%$ at $0.9$, respectively, indicating a smaller efficiency advantage at higher participation levels, where more frequent use of target samples can increase replay probabilities for both methods.

\subsubsection{Impact of Hyperparameters $K$ and $q$}
We examine the effects of $K$ and $q$ on FashionMNIST, with detailed results reported in the Appendix~\ref{app:Kq}. Smaller seed pools generally favor \textsc{FedZOO-S}, whereas larger pools tend to benefit \textsc{FedZOO-NS}. Increasing $q$ can improve accuracy for both methods, although the gains are not monotonic. Communication cost grows linearly with $K$ and is independent of $q$, while runtime increases with both parameters.

\section{Conclusion}
We present a framework for efficient federated zeroth-order optimization and exact unlearning based on total variation stability. \textsc{FedZOO-S} obtains stability through subsampling, and \textsc{FedZOUL-S} reuses unaffected training computation to achieve efficient exact unlearning. Building on this design, \textsc{FedZOO-NS} incorporates Gaussian randomization, allowing \textsc{FedZOUL-NS} to reuse some affected updates through coupling and further reduce replay for more efficient exact unlearning. We establish theoretical guarantees and demonstrate competitive utility with lower runtime, communication, and memory costs.

\bibliographystyle{IEEEtran}
\bibliography{references}

@inproceedings{qin2024federated,
    author = {Qin, Zhen and Chen, Daoyuan and Qian, Bingchen and Ding, Bolin and Li, Yaliang and Deng, Shuiguang},
    title = {{Federated Full-Parameter Tuning of Billion-Sized Language Models with Communication Cost under 18 Kilobytes}},
    publisher = {PMLR},
    address = {Vienna, Austria},
    booktitle = {41st International Conference on Machine Learning (ICML 2024)},
    month = {7},
    year = {2024}
}

@inproceedings{wang2025scalable,
    author = {Wang, Liangyu and Ren, Jie and Xu, Hang and Wang, Junxiao and Xie, Huanyi and Keyes, David E and Wang, Di},
    title = {{Scalable Zeroth-Order Fine-Tuning for Extremely Large Language Models with Limited GPU Memory}},
    publisher = {OpenReview.net},
    address = {Montr{\'{e}}al, Canada},
    booktitle = {Second Conference on Language Modeling (COLM 2025)},
    month = {10},
    year = {2025}
}

@article{tao2024communication,
    author = {Tao, Youming and Wang, Cheng-Long and Pan, Miao and Yu, Dongxiao and Cheng, Xiuzhen and Wang, Di},
    doi = {10.14778/3641204.3641220},
    title = {{Communication Efficient and Provable Federated Unlearning}},
    pages = {1119--1131},
    journal = {Proceedings of the VLDB Endowment},
    issn = {2150-8097},
    publisher = {VLDB Endowment},
    month = {1},
    number = {5},
    volume = {17},
    year = {2024}
}

@article{lecun1998gradient,
    author = {Lecun, Yann and Bottou, L{\'{e}}on and Bengio, Yoshua and Haffner, Patrick},
    doi = {10.1109/5.726791},
    title = {{Gradient-based learning applied to document recognition}},
    pages = {2278--2324},
    journal = {Proceedings of the IEEE},
    issn = {0018-9219},
    publisher = {IEEE},
    month = {11},
    number = {11},
    volume = {86},
    year = {1998}
}

@inproceedings{netzer2011reading,
    author = {Netzer, Yuval and Wang, Tao and Coates, Adam and Bissacco, Alessandro and Wu, Bo and Ng, Andrew Y},
    title = {{Reading digits in natural images with unsupervised feature learning}},
    address = {Granada, Spain},
    booktitle = {24th International Conference on Neural Information Processing Systems (NIPS 2011), NIPS Workshop on Deep Learning and Unsupervised Feature Learning},
    month = {12},
    year = {2011}
}

@techreport{krizhevsky2009learning,
    author = {Krizhevsky, Alex},
    title = {{Learning multiple layers of features from tiny images}},
    institution = {University of Toronto},
    location = {Toronto, Canada},
    month = {4},
    number = {TR-2009},
    type = {Technical Report},
    year = {2009}
}

@techreport{xiao2017fashion,
    author = {Xiao, Han and Rasul, Kashif and Vollgraf, Roland},
    doi = {10.48550/arXiv.1708.07747},
    title = {{Fashion-MNIST: a Novel Image Dataset for Benchmarking Machine Learning Algorithms}},
    institution = {arXiv},
    month = {8},
    number = {1708.07747},
    type = {cs.LG},
    year = {2017}
}

@inproceedings{li2025achieving,
    author = {Li, Zhe and Ying, Bicheng and Liu, Zidong and Dong, Chaosheng and Yang, Haibo},
    title = {{Achieving dimension-free communication in federated learning via zeroth-order optimization}},
    publisher = {OpenReview.net},
    volume = {2025},
    address = {Singapore, Singapore},
    booktitle = {13rd International Conference on Learning Representations (ICLR 2025)},
    month = {4},
    year = {2025}
}

@inproceedings{liu2022right,
    author = {Liu, Yi and Xu, Lei and Yuan, Xingliang and Wang, Cong and Li, Bo},
    doi = {10.1109/INFOCOM48880.2022.9796721},
    title = {{The right to be forgotten in federated learning: An efficient realization with rapid retraining}},
    pages = {1749--1758},
    publisher = {IEEE},
    address = {Virtual Conference},
    booktitle = {41st IEEE International Conference on Computer Communications (INFOCOM 2022)},
    month = {5},
    year = {2022}
}

@article{mora2026federated,
    author = {Mora, Alessio and Valerio, Lorenzo and Bellavista, Paolo},
    doi = {10.1109/TMC.2026.3666356},
    title = {{Federated Unlearning via Distilled Data}},
    pages = {10607 -- 10618},
    journal = {IEEE Transactions on Mobile Computing},
    issn = {1536-1233},
    publisher = {IEEE},
    number = {7},
    volume = {25},
    year = {2026}
}

@inproceedings{zhang2026oblivionis,
    author = {Zhang, Fuyao and Yan, Xinyu and Wu, Tiantong and Li, Wenjie and Chen, Tianxiang and Cao, Yang and Yan, Ran and Huang, Longtao and Lim, Wei Yang Bryan and Yang, Qiang},
    doi = {10.1609/aaai.v40i33.40045},
    title = {{Oblivionis: A Lightweight Learning and Unlearning Framework for Federated Large Language Models}},
    pages = {28185--28193},
    publisher = {AAAI},
    volume = {40},
    address = {Singapore, Singapore},
    booktitle = {40th AAAI Conference on Artificial Intelligence (AAAI 2026)},
    month = {1},
    year = {2026}
}

@inproceedings{liu2021federaser,
    author = {Liu, Gaoyang and Ma, Xiaoqiang and Yang, Yang and Wang, Chen and Liu, Jiangchuan},
    doi = {10.1109/IWQOS52092.2021.9521274},
    title = {{FedEraser: Enabling efficient client-level data removal from federated learning models}},
    publisher = {IEEE},
    address = {Chiyoda, Japan},
    booktitle = {29th IEEE/ACM International Symposium on Quality of Service (IWQOS 2021)},
    month = {6},
    year = {2021}
}

@inproceedings{pan2025federated,
    author = {Pan, Zibin and Wang, Zhichao and Li, Chi and Zheng, Kaiyan and Wang, Boqi and Tang, Xiaoying and Zhao, Junhua},
    doi = {10.1609/aaai.v39i19.34181},
    title = {{Federated unlearning with gradient descent and conflict mitigation}},
    pages = {19804--19812},
    publisher = {AAAI},
    address = {Philadelphia, PA},
    booktitle = {39th AAAI Conference on Artificial Intelligence (AAAI 2025)},
    month = {2},
    year = {2025}
}

@article{wang2024server,
    author = {Wang, Pengfei and Song, Wei and Qi, Heng and Zhou, Changjun and Li, Fuliang and Wang, Yong and Sun, Peng and Zhang, Qiang},
    doi = {10.1109/TSC.2024.3355188},
    title = {{Server-initiated federated unlearning to eliminate impacts of low-quality data}},
    pages = {1196--1211},
    journal = {IEEE Transactions on Services Computing},
    publisher = {IEEE},
    number = {3},
    volume = {17},
    year = {2024}
}

@article{su2024f2ul,
    author = {Su, Weijian and Wang, Pengfei and Wang, Zhiyi and Muhammad, Ameen and Tao, Tiwei and Tong, Xiangrong and Zhang, Qiang},
    doi = {10.1109/TMC.2024.3429228},
    title = {{F2ul: Fairness-aware federated unlearning for data trading}},
    pages = {13539--13555},
    journal = {IEEE Transactions on Mobile Computing},
    issn = {1536-1233},
    publisher = {IEEE},
    month = {12},
    number = {12},
    volume = {23},
    year = {2024}
}

@inproceedings{che2023fast,
    author = {Che, Tianshi and Zhou, Yang and Zhang, Zijie and Lyu, Lingjuan and Liu, Ji and Yan, Da and Dou, Dejing and Huan, Jun},
    title = {{Fast federated machine unlearning with nonlinear functional theory}},
    publisher = {PMLR},
    address = {Honolulu, HI},
    booktitle = {40th International Conference on Machine Learning (ICML 2023)},
    month = {7},
    year = {2023}
}

@article{zhang2023fedrecovery,
    author = {Zhang, Lefeng and Zhu, Tianqing and Zhang, Haibin and Xiong, Ping and Zhou, Wanlei},
    doi = {10.1109/TIFS.2023.3297905},
    title = {{Fedrecovery: Differentially private machine unlearning for federated learning frameworks}},
    pages = {4732--4746},
    journal = {IEEE Transactions on Information Forensics and Security},
    publisher = {IEEE},
    volume = {18},
    year = {2023}
}

@article{zhou2025federated,
    author = {Zhou, Changjun and Pan, Chenglin and Li, Minglu and Wang, Pengfei},
    doi = {10.1109/TMC.2025.3563265},
    title = {{Federated unlearning with fast recovery}},
    pages = {9709 -- 9725},
    journal = {IEEE Transactions on Mobile Computing},
    issn = {1536-1233},
    publisher = {IEEE},
    number = {10},
    volume = {24},
    year = {2025}
}

@inproceedings{fraboni2024sifu,
    author = {Fraboni, Yann and Van Waerebeke, Martin and Scaman, Kevin and Vidal, Richard and Kameni, Laetitia and Lorenzi, Marco},
    title = {{Sifu: Sequential informed federated unlearning for efficient and provable client unlearning in federated optimization}},
    pages = {3457--3465},
    publisher = {PMLR},
    address = {Valencia, Spain},
    booktitle = {27th International Conference on Artificial Intelligence and Statistics (AISTATS 2024)},
    month = {5},
    year = {2024}
}

@article{fang2022communication,
    author = {Fang, Wenzhi and Yu, Ziyi and Jiang, Yuning and Shi, Yuanming and Jones, Colin N and Zhou, Yong},
    doi = {10.1109/TSP.2022.3214122},
    title = {{Communication-efficient stochastic zeroth-order optimization for federated learning}},
    pages = {5058--5073},
    journal = {IEEE Transactions on Signal Processing},
    issn = {1053-587X},
    publisher = {IEEE},
    volume = {70},
    year = {2022}
}

@inproceedings{ling2024convergence,
    author = {Ling, Zhenqing and Chen, Daoyuan and Yao, Liuyi and Li, Yaliang and Shen, Ying},
    doi = {10.1145/3637528.3671865},
    title = {{On the convergence of zeroth-order federated tuning for large language models}},
    pages = {1827--1838},
    publisher = {ACM},
    address = {Barcelona, Spain},
    booktitle = {30th ACM SIGKDD Conference on Knowledge Discovery and Data Mining (KDD 2024)},
    month = {8},
    year = {2024}
}

@inproceedings{li2026converge,
    author = {Li, Zhe and Ying, Bicheng and Liu, Zidong and Dong, Chaosheng and Yang, Haibo},
    title = {{Converge faster, talk less: Hessian-informed federated zeroth-order optimization}},
    publisher = {OpenReview.net},
    volume = {2026},
    address = {Rio de Janeiro, Brazil},
    booktitle = {14th International Conference on Learning Representations (ICLR 2026)},
    month = {4},
    year = {2026}
}

@inproceedings{zhang2024dpzero,
    author = {Zhang, Liang and Li, Bingcong and Thekumparampil, Kiran Koshy and Oh, Sewoong and He, Niao},
    title = {{DPZero: private fine-tuning of language models without backpropagation}},
    pages = {59210--59246},
    publisher = {PMLR},
    address = {Vienna, Austria},
    booktitle = {41st International Conference on Machine Learning (ICML 2024)},
    month = {7},
    year = {2024}
}

@article{liu2026differentially,
    author = {Liu, Zhihao and Lou, Jian and Bao, Wenjie and Hu, Yuke and Wang, Wenli and Qin, Zhan and Ren, Kui},
    doi = {10.1109/TIFS.2026.3679194},
    title = {{Differentially Private Zeroth-Order Methods for Scalable Large Language Model Fine-tuning}},
    pages = {4163 -- 4175},
    journal = {IEEE Transactions on Information Forensics and Security},
    publisher = {IEEE},
    volume = {21},
    year = {2026}
}

@inproceedings{bao2025unlocking,
    author = {Bao, Ergute and Jiang, Yangfan and Wei, Fei and Xiao, Xiaokui and Li, Zitao and Li, Yaliang and Ding, Bolin},
    title = {{Unlocking the Power of Differentially Private Zeroth-order Optimization for Fine-tuning $\$LLMs$\$}},
    pages = {1569--1588},
    publisher = {USENIX},
    address = {Seattle, WA},
    booktitle = {34th USENIX Security Symposium (USENIX Security 2025)},
    month = {8},
    year = {2025}
}

@inproceedings{mcmahan2017communication,
    author = {McMahan, H. Brendan and Moore, Eider and Ramage, Daniel and Hampson, Seth and Arcas, Blaise Ag{\"{u}}era y},
    title = {{Communication-efficient learning of deep networks from decentralized data}},
    pages = {1273--1282},
    address = {Fort Lauderdale, FL},
    booktitle = {20th International Conference on Artificial Intelligence and Statistics (AISTATS 2017)},
    month = {4},
    year = {2017}
}

@inproceedings{fu2020fractrain,
    author = {Fu, Yonggan and You, Haoran and Zhao, Yang and Wang, Yue and Li, Chaojian and Gopalakrishnan, Kailash and Wang, Zhangyang and Lin, Yingyan},
    title = {{FracTrain: Fractionally Squeezing Bit Savings Both Temporally and Spatially for Efficient DNN Training}},
    pages = {12127--12139},
    editor = {Larochelle, Hugo and Ranzato, M. and Hadsell, R. and Balcan, Maria-Florina F and Lin, Hsuan-Tien},
    publisher = {Curran Associates Inc.},
    volume = {33},
    address = {Virtual Conference},
    booktitle = {34th Conference on Neural Information Processing Systems (NeurIPS 2020)},
    month = {12},
    year = {2020}
}

@inproceedings{huang2020deepadapter,
    author = {Huang, Yakun and Qiao, Xiuquan and Tang, Jian and Ren, Pei and Liu, Ling and Pu, Calton and Chen, Junliang},
    doi = {10.1109/INFOCOM41043.2020.9155379},
    title = {{Deepadapter: A collaborative deep learning framework for the mobile web using context-aware network pruning}},
    pages = {834--843},
    publisher = {IEEE},
    address = {Virtual Conference},
    booktitle = {39th IEEE International Conference on Computer Communications (INFOCOM 2020)},
    month = {7},
    year = {2020}
}

@inproceedings{zhang2019your,
    author = {Zhang, Linfeng and Song, Jiebo and Gao, Anni and Chen, Jingwei and Bao, Chenglong and Ma, Kaisheng},
    doi = {10.1109/ICCV.2019.00381},
    title = {{Be your own teacher: Improve the performance of convolutional neural networks via self distillation}},
    pages = {3712--3721},
    publisher = {IEEE},
    address = {Seoul, South Korea},
    booktitle = {IEEE/CVF International Conference on Computer Vision (ICCV 2019)},
    month = {10},
    year = {2019}
}

@inproceedings{zhang2023fedpetuning,
    author = {Zhang, Zhuo and Yang, Yuanhang and Dai, Yong and Wang, Qifan and Yu, Yue and Qu, Lizhen and Xu, Zenglin},
    doi = {10.18653/v1/2023.findings-acl.632},
    title = {{Fedpetuning: When federated learning meets the parameter-efficient tuning methods of pre-trained language models}},
    pages = {9963--9977},
    publisher = {ACL},
    address = {Toronto, Canada},
    booktitle = {Findings of the Association for Computational Linguistics: ACL 2023},
    month = {7},
    year = {2023}
}

@book{voigt2024EU,
    author = {Voigt, Paul and Von dem Bussche, Axel},
    doi = {10.1007/978-3-031-62328-8},
    title = {{The EU General Data Protection Regulation (GDPR): A Practical Guide}},
    isbn = {978-3-031-62327-1},
    pages = {339},
    publisher = {Springer},
    year = {2024}
}

@inproceedings{maini2024tofu,
    author = {Maini, Pratyush and Feng, Zhili and Schwarzschild, Avi and Lipton, Zachary C. and Kolter, J. Zico},
    title = {{TOFU: A Task of Fictitious Unlearning for LLMs}},
    publisher = {Curran Associates Inc.},
    volume = {37},
    address = {Vancouver, Canada},
    booktitle = {38th Conference on Neural Information Processing Systems (NeurIPS 2024), NeurIPS 2024 Workshop on  Red Teaming GenAI},
    month = {12},
    year = {2024}
}

@inproceedings{ullah2021machine,
    author = {Ullah, Enayat and Mai, Tung and Rao, Anup and Rossi, Ryan A and Arora, Raman},
    title = {{Machine unlearning via algorithmic stability}},
    pages = {4126--4142},
    publisher = {PMLR},
    address = {Boulder, CO},
    booktitle = {34th Annual Conference on Learning Theory (COLT 2021)},
    month = {8},
    year = {2021}
}

@incollection{vershynin2012introduction,
    author = {Vershynin, Roman},
    doi = {10.1017/CBO9780511794308.006},
    title = {{Introduction to the non-asymptotic analysis of random matrices}},
    pages = {210--268},
    booktitle = {Compressed Sensing: Theory and Applications},
    publisher = {Cambridge University Press},
    year = {2012}
}

@inproceedings{chen2026fedsubmuon,
    author = {Chen, Shaolong and Tao, Youming and Chen, Shuzhen and Dressler, Falko and Ye, Qingqing and Wang, Di},
    title = {{FedSubMuon: Communication-Efficient Federated LLM Fine-Tuning via Structured Subspace Muon}},
    publisher = {ACL},
    address = {Budapest, Hungary},
    booktitle = {Conference on Empirical Methods in Natural Language Processing (EMNLP 2026)},
    month = {10},
    year = {2026}
}

@article{chen2025adaptive,
    author = {Chen, Shuzhen and Yuan, Yuan and Tao, Youming and Wang, Tianzhu and Cai, Zhipeng and Yu, Dongxiao},
    doi = {10.1016/j.tcs.2024.114987},
    title = {{Adaptive pruning-based Newton's method for distributed learning}},
    pages = {114987},
    journal = {Theoretical Computer Science},
    issn = {0304-3975},
    publisher = {Elsevier},
    month = {2},
    volume = {1026},
    year = {2025}
}

@article{tao2024private,
    author = {Tao, Youming and Chen, Shuzhen and Zhang, Congwei and Wang, Di and Yu, Dongxiao and Cheng, Xiuzhen and Dressler, Falko},
    doi = {10.1109/TMC.2024.3411295},
    title = {{Private Over-the-Air Federated Learning at Band-Limited Edge}},
    pages = {12444--12460},
    journal = {IEEE Transactions on Mobile Computing},
    issn = {1536-1233},
    publisher = {IEEE},
    month = {12},
    number = {12},
    volume = {23},
    year = {2024}
}

@inproceedings{wu2026trajectoryaware,
    author = {Wu, Hengliang and Yang, Jiale and Chen, Shuzhen and Wang, Di and Yu, Dongxiao and Tao, Youming},
    title = {{Trajectory-Aware Certified Decentralized Unlearning via SGD Stability}},
    publisher = {PMLR},
    address = {Seoul, South Korea},
    booktitle = {43rd International Conference on Machine Learning (ICML 2026), Poster Session},
    month = {7},
    year = {2026}
}

@inproceedings{wu2026certified,
    author = {Wu, Hengliang and Tao, Youming and Zhou, Anhao and Chen, Shuzhen and Dressler, Falko and Yu, Dongxiao},
    doi = {10.1109/INFOCOM59046.2026.11571416},
    title = {{Certified Unlearning in Decentralized Federated Learning}},
    pages = {1--10},
    publisher = {IEEE},
    isbn = {979-8-3315-4961-9},
    address = {Tokyo, Japan},
    booktitle = {45th IEEE International Conference on Computer Communications (INFOCOM 2026)},
    month = {5},
    year = {2026}
}

@inproceedings{tao2026autosub,
    author = {Tao, Youming and Simsek, Meryem and Dressler, Falko},
    note = {to appear},
    title = {{AutoSub: Autonomous Submodel Averaging}},
    publisher = {IEEE},
    address = {Kobe, Japan},
    booktitle = {IEEE Annual Congress on Artificial Intelligence of Things (AioT 2026)},
    month = {12},
    year = {2026}
}

@article{tao2026one-shot,
    author = {Tao, Youming and Yin, Weijun and Xie, Huanyi and Yao, Yanqing and Yu, Dongxiao and Cheng, Xiuzhen and Wang, Di and Dressler, Falko},
    doi = {10.1109/TON.2026.3745736},
    note = {to appear},
    title = {{One-Shot Federated Model Editing for Device Dynamics at the Edge}},
    journal = {IEEE Transactions on Networking},
    issn = {2998-4157},
    publisher = {IEEE},
    month = {10},
    year = {2026}
}

@techreport{li2026not,
    author = {Li, Jie and Tao, Youming and He, Bowei and Ding, Meng},
    title = {{Not All Change Is Damage: Update-Matched Unlearning for Large Language Models}},
    institution = {arXiv},
    month = {10},
    number = {2610.21230},
    type = {cs.LG},
    url = {https://arxiv.org/abs/2610.21230},
    year = {2026}
}

@inproceedings{reisizadeh2020fedpaq,
    author = {Reisizadeh, Amirhossein and Mokhtari, Aryan and Hassani, Hamed and Jadbabaie, Ali and Pedarsani, Ramtin},
    title = {{FedPAQ: A Communication-Efficient Federated Learning Method with Periodic Averaging and Quantization}},
    pages = {2021--2031},
    publisher = {PMLR},
    volume = {108},
    address = {Virtual Conference},
    booktitle = {23rd International Conference on Artificial Intelligence and Statistics (AISTATS 2020)},
    month = {8},
    year = {2020}
}

@inproceedings{lin2018deep,
    author = {Lin, Yujun and Han, Song and Mao, Huizi and Wang, Yu and Dally, Bill},
    title = {{Deep Gradient Compression: Reducing the Communication Bandwidth for Distributed Training}},
    publisher = {ICLR},
    address = {Vancouver, Canada},
    booktitle = {6th International Conference on Learning Representations (ICLR 2018)},
    month = {4},
    year = {2018}
}

@inproceedings{lin2020ensemble,
    author = {Lin, Tao and Kong, Lingjing and Stich, Sebastian U and Jaggi, Martin},
    title = {{Ensemble distillation for robust model fusion in federated learning}},
    pages = {2351--2363},
    editor = {Ranzato, M. and Beygelzimer, A. and Dauphin, Y. and Liang, P.S. and Vaughan, J. Wortman},
    publisher = {Curran Associates Inc.},
    isbn = {978-1-71382-954-6},
    address = {Virtual Conference},
    booktitle = {34th Conference on Neural Information Processing Systems (NeurIPS 2020)},
    month = {12},
    year = {2020}
}

@inproceedings{horvath2021fjord,
    author = {Horvath, Samuel and Laskaridis, Stefanos and Almeida, Mario and Leontiadis, Ilias and Venieris, Stylianos I. and Lane, Nicholas D.},
    title = {{FjORD: Fair and Accurate Federated Learning under Heterogeneous Targets with Ordered Dropout}},
    editor = {Ranzato, Marc' aurelio and Beygelzimer, Alina and Dauphin, Yann and Liang, Percy and Vaughan, Jennifer Wortman},
    publisher = {Curran Associates Inc.},
    volume = {34},
    isbn = {978-1-71384-539-3},
    address = {Virtual Conference},
    booktitle = {35th Conference on Neural Information Processing Systems (NeurIPS 2021)},
    month = {12},
    year = {2021}
}

@inproceedings{ding2025lbifl,
    author = {Ding, Li and Zhang, Hao and Dai, Wenrui and Li, Chenglin and Lu, Weijia and Yang, Zhifei and Zhang, Xiaodong and Ma, Xiaofeng and Zou, Junni and Xiong, Hongkai},
    title = {{LBI-FL: Low-Bit Integerized Federated Learning with Temporally Dynamic Bit-Width Allocation}},
    pages = {13885--13899},
    publisher = {PMLR},
    volume = {267},
    address = {Vancouver, Canada},
    booktitle = {42nd International Conference on Machine Learning (ICML 2025)},
    month = {7},
    year = {2025}
}

@inproceedings{guastella2025sparsyfed,
    author = {Guastella, Adriano and Sani, Lorenzo and Iacob, Alex and Mora, Alessio and Bellavista, Paolo and Lane, Nicholas Donald},
    title = {{SparsyFed: Sparse Adaptive Federated Learning}},
    publisher = {OpenReview.net},
    volume = {2025},
    address = {Singapore, Singapore},
    booktitle = {13rd International Conference on Learning Representations (ICLR 2025)},
    month = {4},
    year = {2025}
}

@inproceedings{fan2025fedcot,
    author = {Fan, Tao and Chen, Weijing and Kang, Yan and Ma, Guoqiang and Gu, Hanlin and Song, Yuanfeng and Fan, Lixin and Yang, Qiang},
    doi = {10.18653/v1/2025.findings-emnlp.454},
    title = {{FedCoT: Federated Chain-of-Thought Distillation for Large Language Models}},
    pages = {8546--8557},
    publisher = {ACL},
    isbn = {979-8-89176-335-7},
    address = {Suzhou, China},
    booktitle = {30th Conference on Empirical Methods in Natural Language Processing (EMNLP 2025)},
    month = {11},
    year = {2025}
}

\newpage
\onecolumn

\appendices
\section{Additional experimental results}
\subsection{GPU Memory cost of FedZOUL-S/NS}
\label{app:memory}
We evaluate peak GPU memory in Figure~\ref{fig:memory-cost-fedzoul-s/ns-baselines}. \textsc{FedZOUL-S} and \textsc{FedZOUL-NS} achieve the lowest peak memory across all tasks by avoiding parameter-sized gradients and backpropagation activations, while maintaining performance comparable to re-optimization. On CIFAR-10, each method requires $128.08$ MB, $60.8\%$ below FedRR and $54.0\%$ below FUDD. On TOFU, their peak memory is $6.63$ GB for the 3B model, compared with $11.15$ GB for Oblivionis, and the reduction relative to Oblivionis increases from $24.3\%$ for the 1B model to $40.5\%$ for the 3B model. These results show that our methods maintain low peak memory across tasks, supporting deployment on resource-constrained edge devices. This advantage becomes more pronounced as model size increases, indicating favorable scalability.

\begin{figure}[h]
    \centering
    \includegraphics[height=0.285\linewidth]{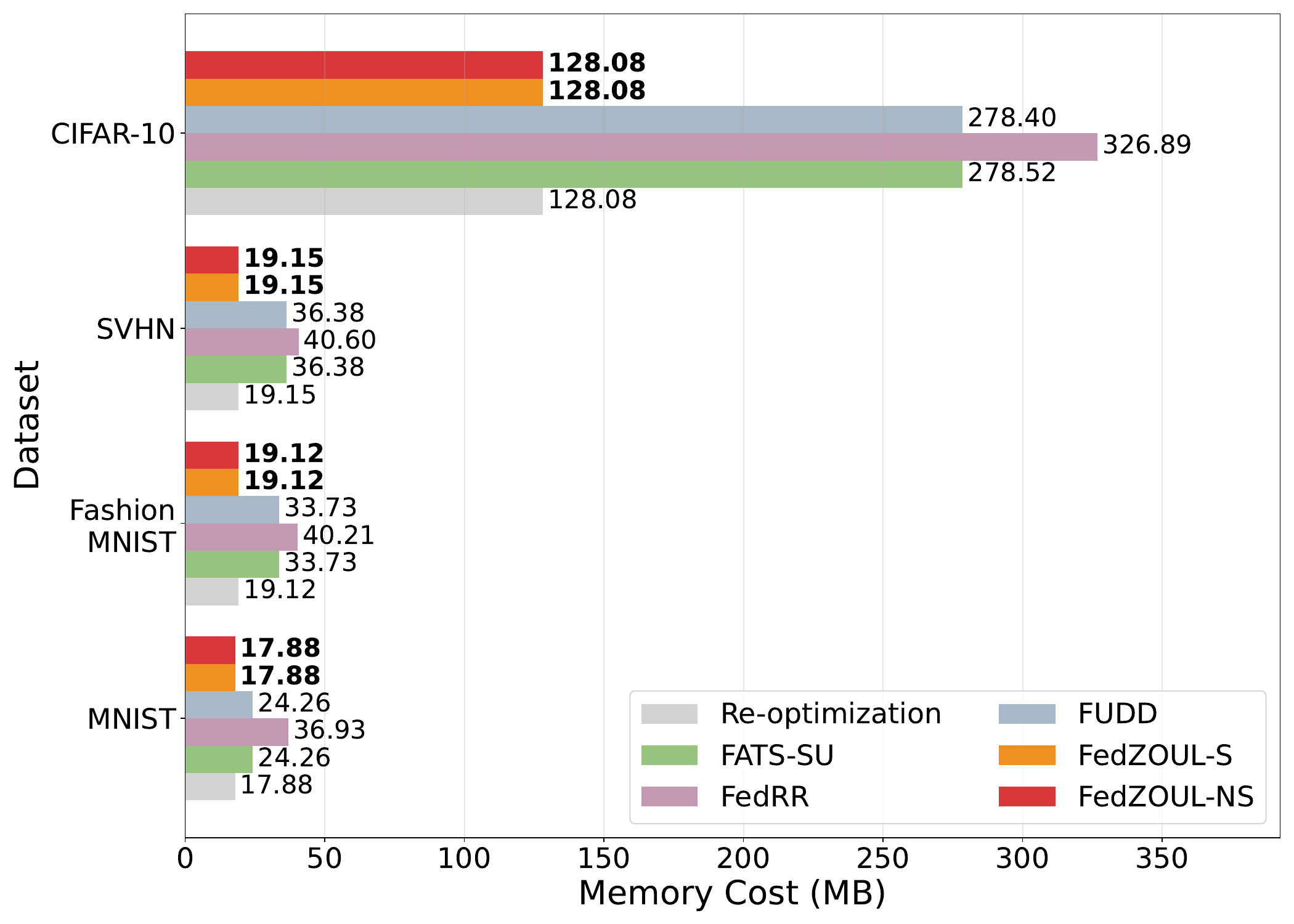}
    \hfill
    \includegraphics[height=0.285\linewidth]{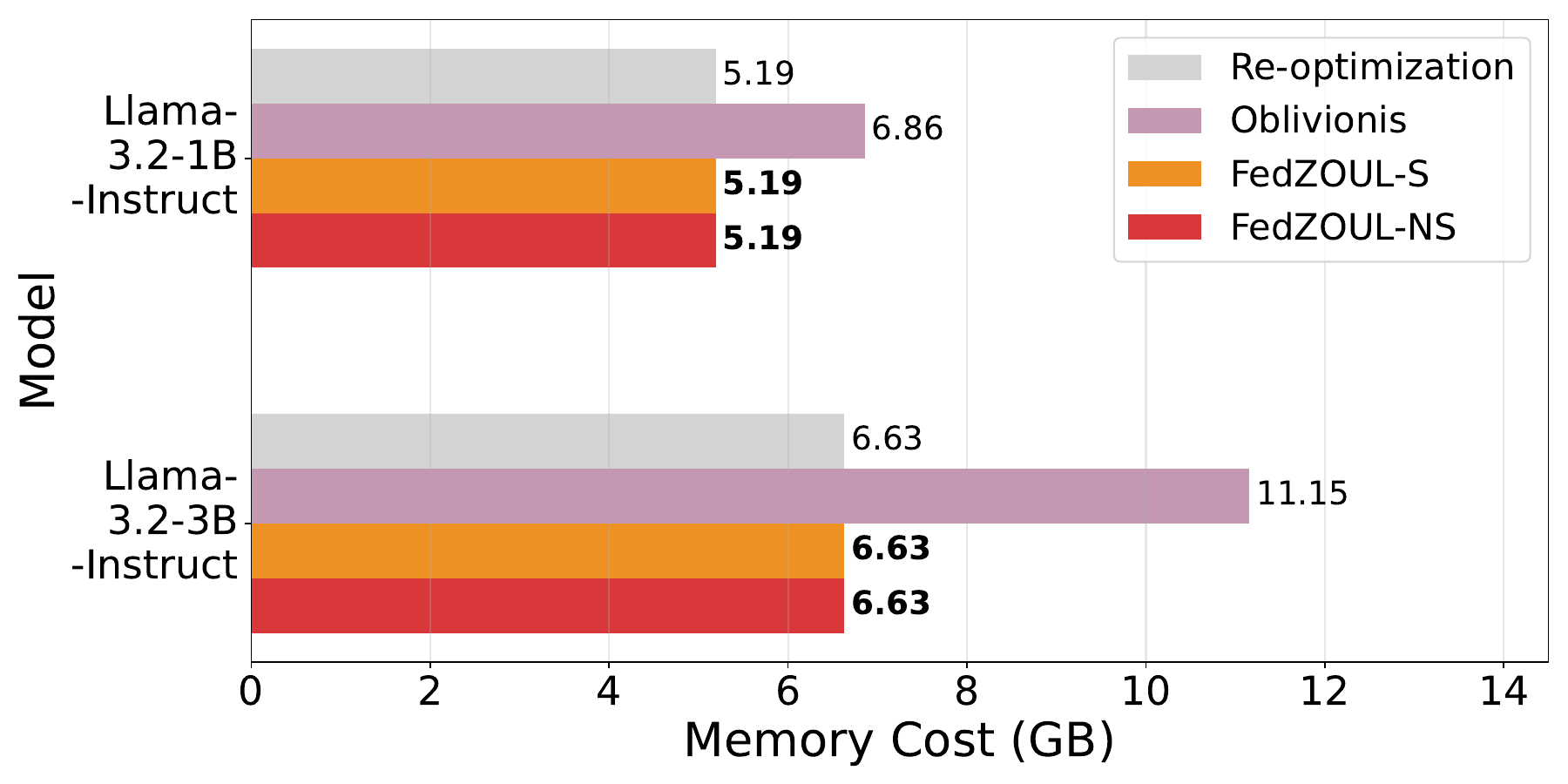}
    \caption{Peak GPU memory of FedZOUL-S/NS against re-optimization and first-order unlearning baselines under stream unlearning requests. Left: four vision datasets. Right: TOFU (Split95).}
    \label{fig:memory-cost-fedzoul-s/ns-baselines}
\end{figure}

\subsection{Impact of Hyperparameters $K$ and $q$}
\label{app:Kq}
We investigate how the accuracy, communication cost, and time cost of \textsc{FedZOO-S} and \textsc{FedZOO-NS} vary with $q$ and $K$ on FashionMNIST, as shown in Figure~\ref{fig:fashionmnist-fedzoo-s/ns-impact-k/q}. For \textsc{FedZOO-S}, increasing $q$ generally improves accuracy, although the gains are not monotonic. At $K=1024$, accuracy increases from $64.46\%$ at $q=1$ to $68.12\%$ at $q=3$, before decreasing to $67.71\%$ at $q=4$. \textsc{FedZOO-NS} also exhibits a non-monotonic response: increasing $q$ from 1 to 2 improves accuracy for every tested $K$. Its best accuracy is attained at $q=2$ for $K\in\{1024,2048,4096\}$ and at $q=3$ for $K=8192$.

The effect of $K$ differs between the two methods. \textsc{FedZOO-S} achieves its highest accuracy at $K=1024$ for $q\in\{2,3,4\}$, whereas $K=2048$ performs best at $q=1$. Increasing $K$ can reduce accuracy: at $q=3$, accuracy falls from $68.12\%$ to $64.29\%$ as $K$ increases from 1024 to 8192. For \textsc{FedZOO-NS}, larger seed pools generally yield higher accuracy, but the trend is not monotonic. For example, at $q=1$, accuracy increases from $55.38\%$ at $K=1024$ to $64.11\%$ at $K=4096$, then decreases to $60.19\%$ at $K=8192$.

In terms of system overhead, the per-round communication cost is independent of $q$ and increases linearly with $K$, with identical costs for both methods. Each of the $m=10$ participating clients uploads a $K$-dimensional float32 accumulator, while the server broadcasts the global accumulator, resulting in a per-round communication cost of $4(m+1)K=44K$ bytes. The average per-round runtime increases with both $q$ and $K$ in all tested configurations. At $K=4096$, increasing $q$ from 1 to 2 improves \textsc{FedZOO-NS} accuracy from $64.11\%$ to $71.26\%$, while increasing per-round runtime from $2.883$ to $3.073$~s and leaving communication cost unchanged. We use $K=4096$ as an intermediate pool size and $q=1$ to limit local computation.

\begin{figure}[t]
    \centering

    \begin{minipage}{0.75\linewidth}
    \centering
    \includegraphics[width=0.49\linewidth]{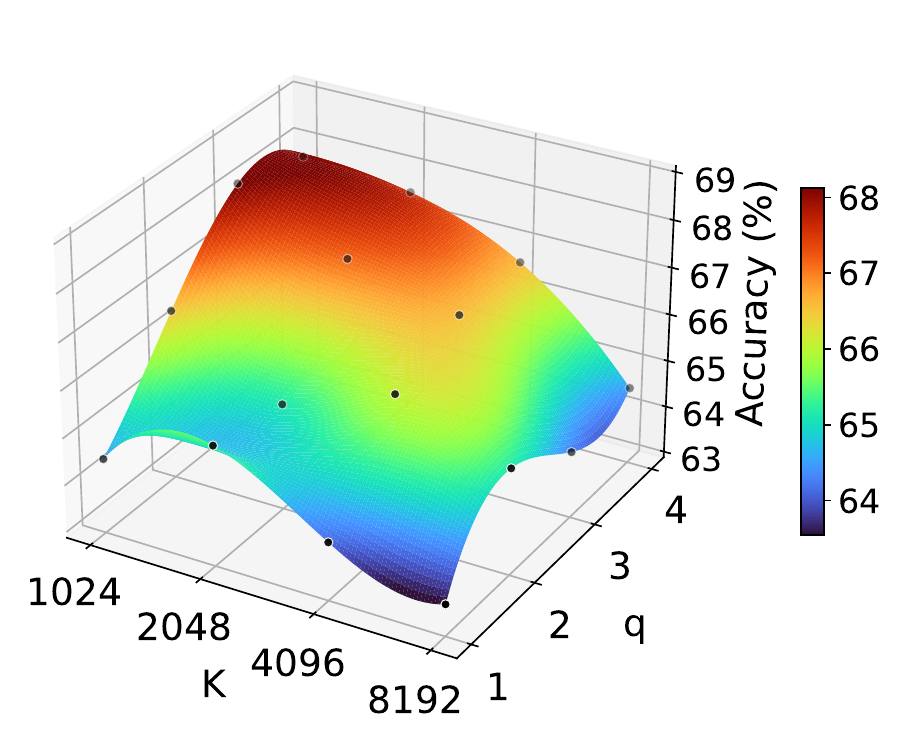}
    \hfill  
    \includegraphics[width=0.49\linewidth]{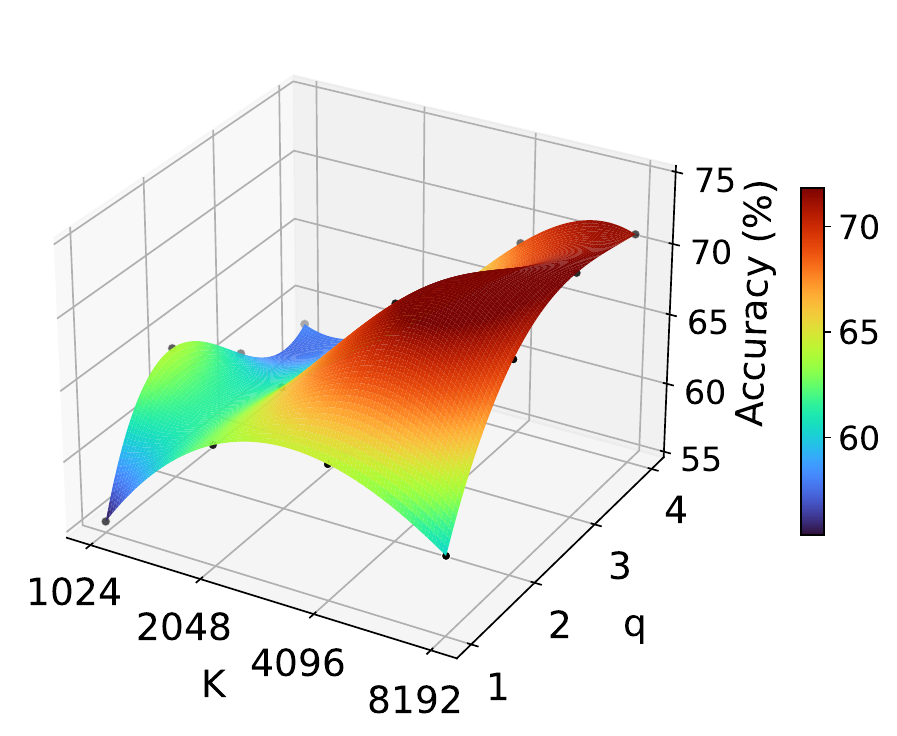}

    \vspace{0.1cm}

    \includegraphics[width=0.49\linewidth]{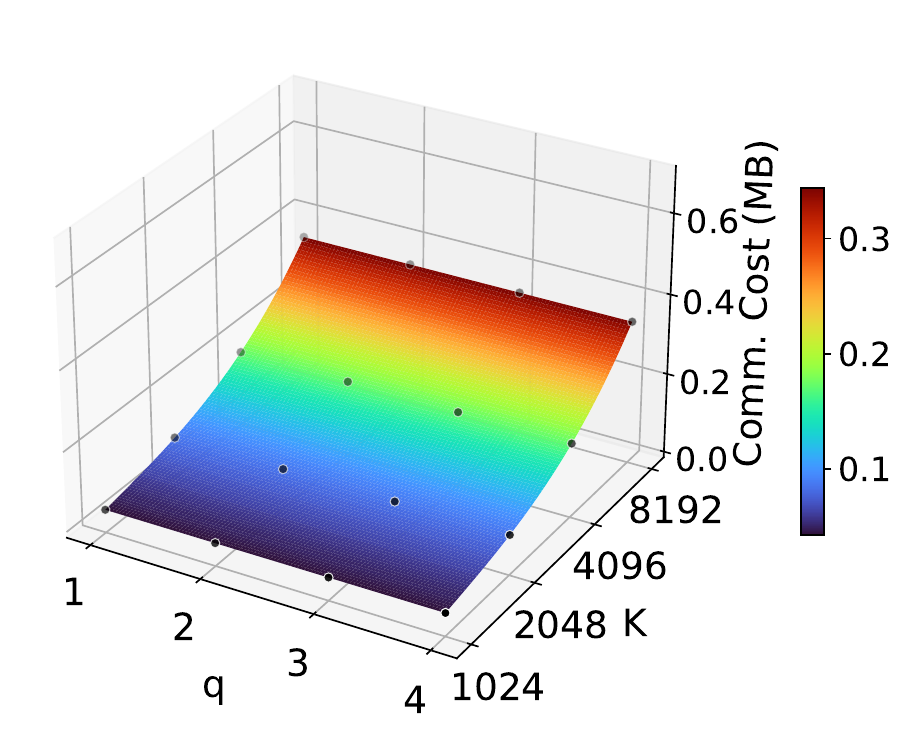}
    \hfill
    \includegraphics[width=0.49\linewidth]{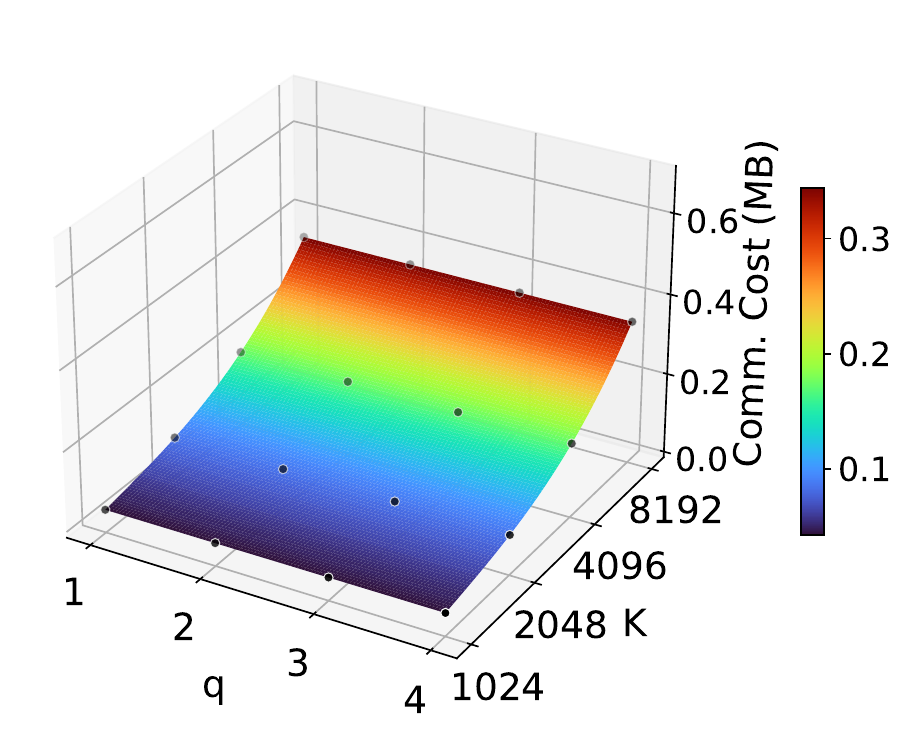}

    \includegraphics[width=0.49\linewidth]{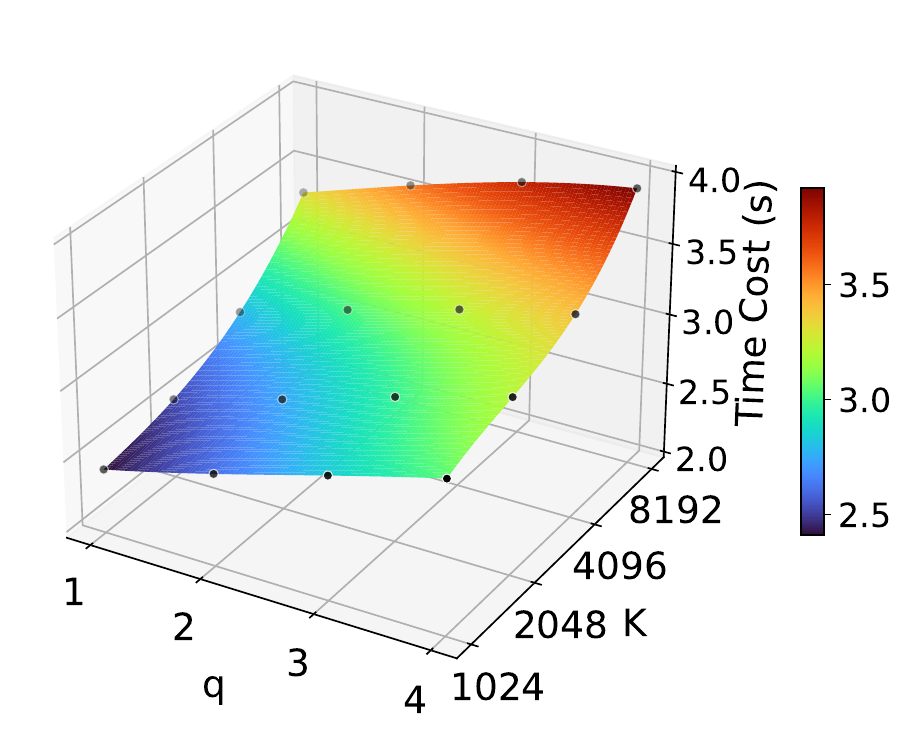}
    \hfill
    \includegraphics[width=0.49\linewidth]{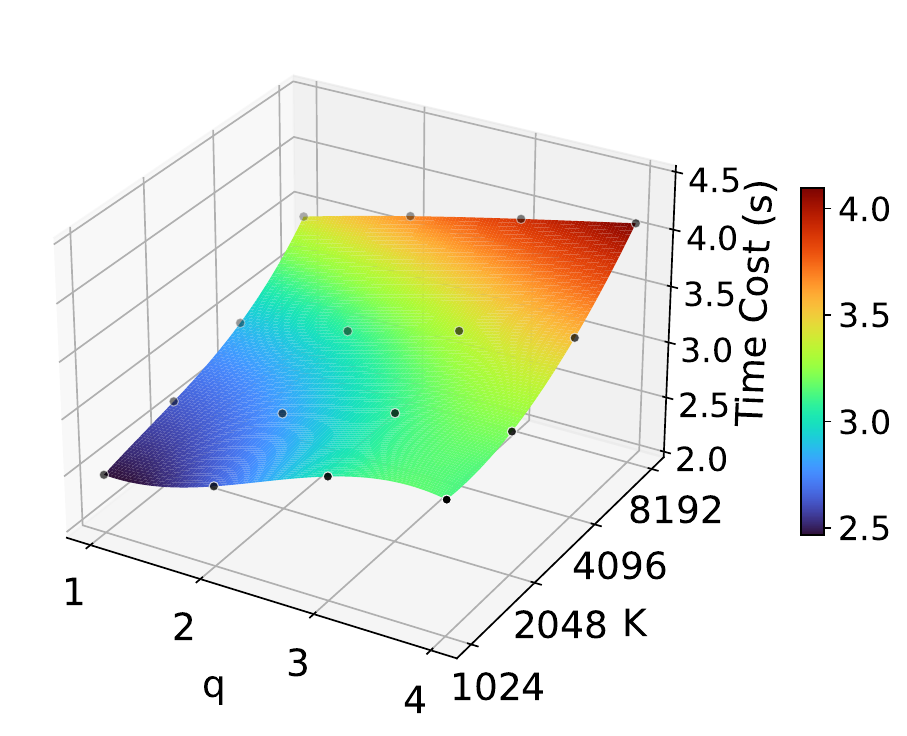}
    \end{minipage}

    \caption{Impact of the number of per-sample perturbation directions ($q$) and the scalar gradient accumulator size ($K$) on the accuracy, per-round communication cost, and per-round time cost of FedZOO-S and FedZOO-NS on FashionMNIST. Left column: FedZOO-S, right column: FedZOO-NS. Top row: accuracy, middle row: communication cost, bottom row: time cost.}
    \label{fig:fashionmnist-fedzoo-s/ns-impact-k/q}
\end{figure}

\newpage
\section{Complete Theoretical Analysis}
\label{sec:theory_appendix}

We now establish the theoretical guarantees of the proposed learning and
unlearning procedures. Two questions are central. First, after a sample is
deleted, we need to show that the state returned by the unlearning algorithm
has exactly the same distribution as a fresh execution on the retained data,
while avoiding a full rerun whenever possible. Second, because both training
algorithms reuse a finite set of perturbation directions, we need to
understand what optimization guarantee this restriction provides for the
original federated objective.

The two learning algorithms support exact unlearning in different ways. In
\textsc{FedZOO-S}, a deletion has no effect unless the removed sample was
actually selected during training. In \textsc{FedZOO-NS}, Gaussian noise
provides an additional possibility: even when the deleted sample changes a
local update, the old and retained-data updates can sometimes be coupled to
the same value. We first analyze these two mechanisms and their resulting
unlearning costs. We then study the optimization behavior of the learning
algorithms under the shared finite seed pool and relate the resulting
subspace guarantee to the full model space.

The perturbation pool is sampled independently of the federated data and is
shared by the neighboring executions considered below. We therefore condition
on its realization throughout the analysis. All conditional guarantees remain
valid after averaging over the seed-pool randomness.

We first ask how a deletion can change the state that has already been
produced. Exact unlearning requires more than a nearby model: after processing
the request, the entire maintained state must have the same distribution as if
the deleted sample had never been present. The useful computation is therefore
precisely the part of the realized trajectory that can be kept without changing
this law.

Let $T=RE$ denote the total number of local positions, and index them in
chronological order by
\begin{equation}
\label{eq:chronological_index_theory}
\tau(r,t)=(r-1)E+t.
\end{equation}
For a request arriving after $(R_{ul},t_{ul})$, write
$T_{ul}=\tau(R_{ul},t_{ul})$. A position contains the local updates of the
$m$ participating clients and, when $t=E$, the corresponding server
aggregation. Position zero denotes the initialization before the first client
set is sampled.

For $\bullet\in\{\mathsf S,\mathsf{NS}\}$, let
$\mathcal O_{\bullet}^{[\tau]}(\mathcal D)$ denote the joint law of the
optimization variables and all saved state after $\tau$ positions when
training on $\mathcal D$. We use
$\mathcal O_{\bullet,\mathrm{out}}(\mathcal D)$ for the marginal law of the
final reconstructed model. With this notation, exact unlearning at
$T_{ul}$ means that the state produced after processing the deletion has law
$\mathcal O_{\bullet}^{[T_{ul}]}(\mathcal D^-)$, and continuing training from
that state must yield $\mathcal O_{\bullet}^{[T]}(\mathcal D^-)$. Whenever
part of the trajectory is rerun, the newly generated state replaces the old
saved state over that part of the execution.

\subsection{FedZOO-S Stability and Exact Unlearning}
\label{sec:fedzoos_theory}

We first consider \textsc{FedZOO-S} and its unlearning procedure \textsc{FedZOUL-S}.

A deleted sample can change a \textsc{FedZOO-S} trajectory only if it was
actually drawn into a local mini-batch. This observation suggests coupling the
mini-batches before and after deletion so that they remain identical whenever
the deleted sample was not selected. The following elementary fact gives the
coupling we need.

\begin{lemma}[Deletion coupling for a uniform mini-batch]
\label{lem:uniform_batch_deletion}
Let $\mathcal D$ contain $n$ samples, let $x^{-}\in\mathcal D$, and let
$\mathcal D^{-}=\mathcal D\setminus\{x^{-}\}$, where $n-1\ge b$.
Draw $\mathcal B$ uniformly from the size-$b$ subsets of $\mathcal D$.
Construct $\mathcal B^{-}$ as follows:
\begin{equation}
\label{eq:batch_deletion_coupling}
\mathcal B^{-}
=
\begin{cases}
\mathcal B,
&x^{-}\notin\mathcal B,\\
\widehat{\mathcal B},
&x^{-}\in\mathcal B,
\end{cases}
\end{equation}
where, on the second branch, $\widehat{\mathcal B}$ is drawn uniformly from
the size-$b$ subsets of $\mathcal D^{-}$, independently of $\mathcal B$.
Then $\mathcal B^{-}$ is uniform over the size-$b$ subsets of
$\mathcal D^{-}$ and
\begin{equation}
\label{eq:batch_coupling_disagreement}
\mathbb P(\mathcal B^{-}\ne\mathcal B)
=
\frac bn.
\end{equation}
\end{lemma}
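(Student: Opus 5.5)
The plan is to compute the law of $\mathcal B^{-}$ directly by conditioning on whether $x^{-}\in\mathcal B$, and then to read off the disagreement probability from the same case split.

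\textbf{Marginal law.} Fix an arbitrary size-$b$ subset $S\subseteq\mathcal D^{-}$. Since $x^{-}\notin S$, the event $\{\mathcal B^{-}=S\}$ splits into two disjoint parts.
\begin{itemize}
\item On the first branch, $x^{-}\notin\mathcal B$ and $\mathcal B=S$. This has probability $1/\binom{n}{b}$, because $S$ is one of the size-$b$ subsets of $\mathcal D$ and the event $\{\mathcal B=S\}$ already implies $x^{-}\notin\mathcal B$.
\item On the second branch, $x^{-}\in\mathcal B$, which has probability $b/n$. Independently, $\widehat{\mathcal B}=S$ with probability $1/\binom{n-1}{b}$.
\end{itemize}
Adding the two parts gives
\[
\mathbb P(\mathcal B^{-}=S)=\frac{1}{\binom{n}{b}}+\frac bn\cdot\frac{1}{\binom{n-1}{b}}.
\]
Use the identity $\binom{n}{b}=\frac{n}{n-b}\binom{n-1}{b}$, so that $1/\binom{n}{b}=\frac{n-b}{n}\cdot\frac{1}{\binom{n-1}{b}}$. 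The sum then becomes
\[
\left(\frac{n-b}{n}+\frac bn\right)\frac{1}{\binom{n-1}{b}}=\frac{1}{\binom{n-1}{b}}.
\]
This is the uniform law on size-$b$ subsets of $\mathcal D^{-}$. The hypothesis $n-1\ge b$ guarantees that such subsets exist, so the uniform law and the second branch are well defined.

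\textbf{Disagreement probability.} On the first branch, $\mathcal B^{-}=\mathcal B$ by construction. On the second branch, $x^{-}\in\mathcal B$ but $\mathcal B^{-}=\widehat{\mathcal B}\subseteq\mathcal D^{-}$ cannot contain $x^{-}$, so the two batches necessarily differ. Hence $\{\mathcal B^{-}\neq\mathcal B\}=\{x^{-}\in\mathcal B\}$, and this event has probability $\binom{n-1}{b-1}/\binom{n}{b}=b/n$.

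The only step needing care is the binomial identity that combines the two branch contributions. Everything else is a direct case split, so I expect no real obstacle.
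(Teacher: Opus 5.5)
Your proof is correct and follows essentially the same route as the paper. Both arguments fix a size-$b$ subset of $\mathcal D^{-}$ and add the non-hit contribution $1/\binom nb$ to the resampling contribution $\frac bn\cdot 1/\binom{n-1}{b}$, combine them via $\binom nb=\frac{n}{n-b}\binom{n-1}{b}$, and observe that the batches disagree exactly on the hit branch, which has probability $b/n$.
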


\begin{proof}[Proof of Lemma~\ref{lem:uniform_batch_deletion}]
Fix a size-$b$ subset $A\subseteq\mathcal D^{-}$. The first branch outputs
$A$ precisely when $\mathcal B=A$, while the second branch is entered with
probability $b/n$ and then outputs $A$ with probability
$1/\binom{n-1}{b}$. Hence
\begin{align}
\mathbb P(\mathcal B^{-}=A)
&=
\frac1{\binom nb}
+
\frac bn\frac1{\binom{n-1}{b}}
\notag\\
&=
\frac{n-b}{n}\frac1{\binom{n-1}{b}}
+
\frac bn\frac1{\binom{n-1}{b}}
=
\frac1{\binom{n-1}{b}}.
\end{align}
Since the two mini-batches cannot agree on the second branch, the
disagreement probability is $b/n$.
\end{proof}

When $\mathcal B^{-}=\mathcal B$, use the same independently sampled size-$q$
seed set for every sample in the two executions. On the second branch, draw
the retained execution's sample--seed correspondence afresh according to
\textsc{FedZOO-S}. Because seed sampling is independent of data sampling,
the second execution has the correct joint mini-batch--seed marginal.

Consider one deletion $x^{-}\in\mathcal D_{c^{-}}$ and write
$n_{c^{-}}=|\mathcal D_{c^{-}}|$. At local position $(r,t)$, define
\begin{equation}
\label{eq:fedzoos_hit_event}
\mathcal H_{r,t}(x^{-})
=
\{c^{-}\in\mathcal C^r,
\ x^{-}\in\mathcal B_{c^{-}}^{r,t}\}.
\end{equation}
For the first $\tau$ local positions, let
\begin{equation}
\label{eq:fedzoos_lambda}
\Lambda_{\mathsf S}(\tau,c^{-})
=
\min\left\{
1,
\frac{\tau mb}{Mn_{c^{-}}}
\right\}.
\end{equation}

The same coupling can now be followed through the whole training state. Use the
same initialization and client set in the two executions, couple unaffected
clients identically, and apply Lemma~\ref{lem:uniform_batch_deletion} whenever
the affected client participates. Until the deleted sample first appears in a
mini-batch, the sampled data, seed sets, coefficient increments, accumulators,
and reconstructed models are identical. If a first hit occurs, the retained-data
execution draws fresh local randomness from that position onward, while keeping
the already sampled client set for the remainder of the current round. Thus the
second execution has exactly the law of \textsc{FedZOO-S} on the retained data.

At any local position, a hit occurs with probability
\begin{equation}
\label{eq:fedzoos_hit_probability}
\frac{m}{M}\frac{b}{n_{c^{-}}}.
\end{equation}
The same client set is reused within a round, so these events need not be
independent. A union bound over the first $\tau$ positions nevertheless gives
\begin{equation}
\label{eq:fedzoos_state_disagreement}
\mathbb P(\text{the two states differ by position }\tau)
\le
\Lambda_{\mathsf S}(\tau,c^{-}).
\end{equation}

\begin{theorem}[TV stability of FedZOO-S]
\label{thm:fedzoos_tv}
For every admissible adjacent pair differing by deletion of
$x^{-}\in\mathcal D_{c^{-}}$,
\begin{equation}
\label{eq:fedzoos_output_tv}
\operatorname{TV}\left(
\mathcal O_{\mathsf S,\mathrm{out}}(\mathcal D),
\mathcal O_{\mathsf S,\mathrm{out}}(\mathcal D^{-})
\right)
\le
\Lambda_{\mathsf S}(T,c^{-}).
\end{equation}
If a target $0<\rho\le1$ is fixed before training and
\begin{equation}
\label{eq:fedzoos_subsampling_calibration}
\frac{Tmb}{M\underline n}
\le
\rho,
\end{equation}
then \textsc{FedZOO-S} is $\rho$-sample-level TV stable on the dataset class
satisfying~\eqref{eq:stream_size_lower_bound}.
\end{theorem}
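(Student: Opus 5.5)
The plan is the standard coupling inequality. $\operatorname{TV}(P,Q)\le\mathbb P(X\ne Y)$ holds for any coupling $(X,Y)$ with $X\sim P$ and $Y\sim Q$. We take $X$ to be the final reconstructed model $\mathsf{Rec}(\boldsymbol a^R)$ of \textsc{FedZOO-S} on $\mathcal D$. We take $Y$ to be the final model of the coupled execution on $\mathcal D^{-}$ constructed just before the theorem: shared initialization and client sets, identical local randomness for unaffected clients, Lemma~\ref{lem:uniform_batch_deletion} applied to the affected client's mini-batches, shared seed sets when the batches agree, and fresh randomness for the retained run from the first hit onward.

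The first step is to verify that $Y$ has exactly the law $\mathcal O_{\mathsf S,\mathrm{out}}(\mathcal D^{-})$. I would argue position by position, conditioning on the history up to the previous position. The client set is uniform of size $m$, independent of the data. By Lemma~\ref{lem:uniform_batch_deletion}, the affected client's batch under the coupling is uniform over size-$b$ subsets of $\mathcal D_{c^-}^{-}$. Seed sampling is independent of batch sampling, so the joint batch--seed law is correct. Once a hit occurs, all subsequent randomness is drawn freshly from the \textsc{FedZOO-S} kernels on $\mathcal D^{-}$, keeping only the already drawn client set of the current round, which is itself correctly distributed. By induction on $\tau$, the retained trajectory has the Markov transitions of \textsc{FedZOO-S} on $\mathcal D^{-}$. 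Since the model is a deterministic function of the trajectory through Eqs.~\eqref{eq:fedzoos_scalar}--\eqref{eq:fedzoos_server}, the marginal of $Y$ is the correct one.

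The second step is the disagreement bound. On the event that no hit $\mathcal H_{r,t}(x^-)$ occurs in the $T$ positions, the two executions use identical clients, batches, seeds and pre-update models. Hence the scalar gradients, increments, accumulators and $\boldsymbol a^R$ coincide, and $X=Y$. Therefore $\mathbb P(X\ne Y)\le\mathbb P(\bigcup_{\tau\le T}\mathcal H)$. Each position has marginal hit probability $(m/M)(b/n_{c^-})$ by Eq.~\eqref{eq:fedzoos_hit_probability}: participation has probability $m/M$ and, given participation, inclusion has probability $b/n_{c^-}$. A union bound, together with the trivial bound $1$, gives $\Lambda_{\mathsf S}(T,c^-)$, which is exactly Eq.~\eqref{eq:fedzoos_state_disagreement} at $\tau=T$. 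This yields \eqref{eq:fedzoos_output_tv}.

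The main subtlety is the marginal hit probability after the coupling has been in force for a while. The union bound is over the events in the original run on $\mathcal D$, and in that run every batch is drawn uniformly from $\mathcal D_{c^-}$ regardless of the coupling, so the per-position probability is exact. I would emphasize that the hit events are defined on $X$'s run, which is why correlations within a round from the shared client set do not matter.

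For the second claim, take any adjacent pair $\mathcal D\sim\mathcal D'$. By symmetry of TV we may assume $\mathcal D'=\mathcal D\setminus\{x^-\}$, with the larger dataset indexed as $\mathcal D$. Then $n_{c^-}\ge\underline n$ on the admissible class, so $\Lambda_{\mathsf S}(T,c^-)\le Tmb/(M\underline n)\le\rho$. Taking the supremum over adjacent pairs gives Definition~\ref{def:tv_stability}. Admissibility also guarantees $n_{c^-}-1\ge b$, which is the condition the lemma requires.
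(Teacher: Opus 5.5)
Your proposal is correct and follows essentially the same route as the paper. Both arguments use the same coupling: Lemma~\ref{lem:uniform_batch_deletion} for the affected client's batches, with fresh retained-data randomness after the first hit. Both then union-bound the hit events to get $\Lambda_{\mathsf S}(T,c^{-})$, pass to the final model via the coupling inequality (the paper phrases this as projection), and finish with $n_{c^{-}}\ge\underline n$ and symmetry of total variation. You only spell out more explicitly than the paper why the retained run has the \textsc{FedZOO-S} law on $\mathcal D^{-}$, and why the per-position hit probability is exact in the original run.
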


\begin{proof}[Proof of Theorem~\ref{thm:fedzoos_tv}]
The construction above is a valid coupling of the complete training states on
$\mathcal D$ and $\mathcal D^{-}$. By
Eq.~\eqref{eq:fedzoos_state_disagreement}, their disagreement probability at
$\tau=T$ is at most $\Lambda_{\mathsf S}(T,c^{-})$. Projecting the coupled
states onto the final reconstructed model can only decrease total variation.
Under~\eqref{eq:fedzoos_subsampling_calibration},
$n_{c^{-}}\ge\underline n$ gives
$\Lambda_{\mathsf S}(T,c^{-})\le\rho$. Symmetry of total variation covers
both orientations of every adjacent pair.
\end{proof}

The bound follows the way a deletion enters the trajectory. At each local position, the target
client is selected with probability $m/M$, and conditional on participation,
the deleted sample enters its mini-batch with probability $b/n_{c^{-}}$.
Thus, increasing client participation or the mini-batch size makes it more
likely that the deletion changes the realized trajectory. No additional
noise is needed for this form of stability.

This stability statement also suggests the unlearning procedure itself. If the
deleted sample never appears before the request, the realized state already has
the retained-data law and can be kept. Otherwise, the two executions agree up
to the first affected mini-batch, so training only needs to be rerun from that
point. Let $\mathcal U_{\mathsf S}^{[T_{ul}]}$ denote the state returned by
\textsc{FedZOUL-S} at position $T_{ul}$ after processing the deletion, and
let $\mathcal U_{\mathsf S}^{T_{ul}\to T}$ denote the state after the
remaining training positions are completed.

\begin{theorem}[Exact unlearning of FedZOUL-S]
\label{thm:fedzouls_exact}
For every valid single-sample deletion
$x^{-}\in\mathcal D_{c^{-}}$ arriving after local position $T_{ul}$,
\begin{equation}
\label{eq:fedzouls_prefix_exact}
\mathcal U_{\mathsf S}^{[T_{ul}]}
\left(
\mathcal O_{\mathsf S}^{[T_{ul}]}(\mathcal D),
\{x^{-}\}
\right)
\overset{d}{=}
\mathcal O_{\mathsf S}^{[T_{ul}]}(\mathcal D^{-}).
\end{equation}
Consequently,
\begin{equation}
\label{eq:fedzouls_exact}
\mathcal U_{\mathsf S}^{T_{ul}\to T}
\left(
\mathcal O_{\mathsf S}^{[T_{ul}]}(\mathcal D),
\{x^{-}\}
\right)
\overset{d}{=}
\mathcal O_{\mathsf S}^{[T]}(\mathcal D^{-}),
\end{equation}
where both equalities hold jointly over the model and every maintained state.
If $\mathsf{Replay}_{\mathsf S}$ denotes the event that the deletion causes
training to be rerun from an affected position, then
\begin{equation}
\label{eq:fedzouls_replay_probability}
\mathbb P(\mathsf{Replay}_{\mathsf S})
\le
\Lambda_{\mathsf S}(T_{ul},c^{-})
\le
\frac{T_{ul}mb}{Mn_{c^{-}}}.
\end{equation}
Under~\eqref{eq:fedzoos_subsampling_calibration}, this probability is at most
$\rho$.
\end{theorem}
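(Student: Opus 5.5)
We prove \eqref{eq:fedzouls_prefix_exact} by realizing \textsc{FedZOUL-S} as the second execution of the coupling constructed before Theorem~\ref{thm:fedzoos_tv}. Run \textsc{FedZOO-S} on $\mathcal D$ for $T_{ul}$ positions, producing the saved state. Build a retained-data execution on the same probability space as follows. Both executions share the initialization and all client sets $\mathcal C^r$. Unaffected clients are coupled identically. At every position where $c^-$ participates, we apply Lemma~\ref{lem:uniform_batch_deletion} and share seed sets whenever the two batches agree.

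Let $\tau^\star$ be the first hit position, i.e., the first $(r,t)$ at which $\mathcal H_{r,t}(x^-)$ occurs, with $\tau^\star=\infty$ if there is none up to $T_{ul}$. At $\tau^\star$ and afterwards, the retained execution draws fresh local randomness: fresh mini-batches on $\mathcal D^-$ and fresh seeds at position $\tau^\star$, fresh randomness at later positions, and fresh client sets from round $r^\star+1$ on. It keeps $\mathcal C^{r^\star}$ and, for every participating client, the state immediately preceding $(r^\star,t^\star)$.

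We then check, step by step, that this retained execution has law $\mathcal O_{\mathsf S}^{[T_{ul}]}(\mathcal D^-)$. The argument is a sequential (Markov-kernel) one. Conditional on the shared history up to any position, the next step's client set, mini-batches and seeds in the second execution have the \textsc{FedZOO-S} transition law on $\mathcal D^-$. This holds on the agreement branch by Lemma~\ref{lem:uniform_batch_deletion} together with the independence of seed sampling, and on the post-hit branch because the randomness is fresh and independent. The deterministic updates \eqref{eq:fedzoos_scalar}--\eqref{eq:fedzoos_server} are identical maps in both executions. Chaining these kernels by induction over $\tau=1,\dots,T_{ul}$ gives the joint law of all saved variables $(\mathcal C^r,\mathcal B_c^{r,t},\mathbb S_{c,i}^{r,t},\boldsymbol a_c^{r,t},\boldsymbol a^r)$.

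Next we identify the coupled execution with the output of \textsc{FedZOUL-S}. The scan in Algorithm~\ref{algo:FedZOULS} finds exactly $\tau^\star$, since a hit is precisely $\mathcal B_{c^-}^{r,t}\ni x^-$ with $c^-\in\mathcal C^r$. If $\tau^\star>T_{ul}$, the coupled state equals the original one, which the algorithm keeps. Otherwise the algorithm restores the prefix state before $(r^\star,t^\star)$, namely $\boldsymbol a^{r^\star-1}$, the $\boldsymbol a_c^{r^\star,t^\star-1}$, and $\mathcal C^{r^\star}$. It then reruns \textsc{FedZOO-S} on $\mathcal D^-$ with fresh randomness up to $T_{ul}$ and overwrites the saved states. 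This is the same measurable map of (original randomness, fresh randomness) as in the coupling.

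The main obstacle is the subtle point that the restart decision depends on the original randomness, namely the location of the first hit. One must verify that conditioning on ``no hit before $\tau$'' does not bias the retained prefix. I would handle this by noting that Lemma~\ref{lem:uniform_batch_deletion} yields the uniform law of $\mathcal B^-$ \emph{jointly} over both branches. So, conditioned on any shared past, the combined step (keep if no hit, resample if hit) has the correct kernel, and the kernel-level induction avoids conditioning on future events. Equation \eqref{eq:fedzouls_exact} then follows because training continues from $\mathsf{Next}$ with the same \textsc{FedZOO-S} Markov kernel on $\mathcal D^-$ and fresh randomness; composing it with an initial law equal to $\mathcal O_{\mathsf S}^{[T_{ul}]}(\mathcal D^-)$ gives $\mathcal O_{\mathsf S}^{[T]}(\mathcal D^-)$. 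Finally, replay occurs exactly on $\{\tau^\star\le T_{ul}\}$, and this event coincides with the state-disagreement event in \eqref{eq:fedzoos_state_disagreement}. The union bound over hits, each with probability \eqref{eq:fedzoos_hit_probability}, gives \eqref{eq:fedzouls_replay_probability}. Under \eqref{eq:fedzoos_subsampling_calibration}, using $T_{ul}\le T$ and $n_{c^-}\ge\underline n$, this probability is at most $\rho$.
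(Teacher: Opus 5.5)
Your proposal is correct and takes essentially the same route as the paper. Like the paper, you realize \textsc{FedZOUL-S} as the retained-data branch of the first-hit coupling from the proof of Theorem~\ref{thm:fedzoos_tv}, check that both branches (no hit: keep the state; hit: restore the prefix before $(r^\star,t^\star)$ and rerun on $\mathcal D^{-}$) follow the retained-data kernels, and bound replay by the union bound over hit events. Your explicit kernel-by-kernel induction adds rigor to a point the paper only asserts: because Lemma~\ref{lem:uniform_batch_deletion} gives the uniform law of $\mathcal B^{-}$ jointly over both branches, the data-dependent restart decision does not bias the preserved prefix.
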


\begin{proof}[Proof of Theorem~\ref{thm:fedzouls_exact}]
Use the same coupling as in the proof of Theorem~\ref{thm:fedzoos_tv}
through position $T_{ul}$, and let $(r^\star,t^\star)$ be the first hit in
the original execution.

If no hit occurs, the two executions are identical up to the request time. After removing
$x^{-}$, \textsc{FedZOUL-S} keeps that state and continues from
$\mathsf{Next}(R_{ul},t_{ul})$. For the remainder of the current round it
keeps the saved client set and current local states, while all future
mini-batches and seed sets are drawn from the retained datasets. This is the
retained-data continuation of the second coupled execution.

If the first hit is $(r^\star,t^\star)$, the coupled executions agree through
the state immediately before that position. The unlearning algorithm restores
that common server and participating-client state and replays all
participating clients from $(r^\star,t^\star)$ on the retained datasets. The
saved client set $\mathcal C^{r^\star}$ is kept for the remainder of the
current round, while all local randomness from the hit onward and all
randomness in later rounds follow the retained-data transition law.

In both branches, the state returned at $T_{ul}$ has law
$\mathcal O_{\mathsf S}^{[T_{ul}]}(\mathcal D^{-})$. Completing the horizon
with the retained-data kernels proves~\eqref{eq:fedzouls_exact}. Training is rerun only if a hit appears before the request, giving
\eqref{eq:fedzouls_replay_probability}.
\end{proof}

Nothing in the preceding argument depends on there being only one request. It
can be applied repeatedly, provided future deletions are not chosen in response
to the randomness revealed by earlier training or unlearning. Consider a valid
request stream
\begin{equation}
\label{eq:nonadaptive_request_stream}
\bigl(c_j^{-},x_j^{-},T_{ul}^{(j)}\bigr)_{j=1}^{k},
\qquad
0\le T_{ul}^{(1)}\le\cdots\le T_{ul}^{(k)}\le T.
\end{equation}
The deleted samples and their request times may be fixed in
advance or generated by external randomness, but they do not depend on the
realized models, sampled mini-batches, Gaussian noise, or the outcomes of
previous deletion requests. Arrival positions refer to the original training
timeline and do not count any extra computation used to process earlier
requests.

Let $\mathcal D^{(1)}=\mathcal D$, and let $\mathcal D^{(j)}$ and
$\mathcal D^{(j+1)}$ be the datasets immediately before and after request
$j$. Thus $x_j^{-}\in\mathcal D_{c_j^{-}}^{(j)}$ is removed to form
$\mathcal D_{c_j^{-}}^{(j+1)}$, while the other local datasets are unchanged.
We keep the learning hyperparameters fixed throughout the stream and assume
that every retained client dataset has size at least
\begin{equation}
\label{eq:stream_size_lower_bound}
\underline n
\le
\min_{1\le j\le k+1,\,c\in[M]}|\mathcal D_c^{(j)}|,
\qquad
\underline n\ge b.
\end{equation}
For a single request we use the actual pre-request dataset size; $\underline n$
is only needed when one common bound is used for the whole stream.

\begin{corollary}[Sequential deletion requests for FedZOUL-S]
\label{cor:fedzouls_sequential}
Consider the valid non-adaptive single-sample request stream in
Eq.~\eqref{eq:nonadaptive_request_stream}, and let
$n_j=|\mathcal D_{c_j^{-}}^{(j)}|$ be the affected client's pre-request data
size. After every request $j$, the returned state at $T_{ul}^{(j)}$ has law
$\mathcal O_{\mathsf S}^{[T_{ul}^{(j)}]}(\mathcal D^{(j+1)})$.
If
$N_{\mathsf S}^{\mathrm{rep}}$ is the number of requests that trigger replay,
then
\begin{align}
\mathbb E[N_{\mathsf S}^{\mathrm{rep}}]
&\le
\sum_{j=1}^{k}
\min\left\{
1,
\frac{T_{ul}^{(j)}mb}{Mn_j}
\right\},
\label{eq:fedzouls_expected_replays}
\\
\mathbb P(N_{\mathsf S}^{\mathrm{rep}}\ge1)
&\le
\min\left\{
1,
\sum_{j=1}^{k}
\frac{T_{ul}^{(j)}mb}{Mn_j}
\right\}.
\label{eq:fedzouls_any_replay}
\end{align}
Under~\eqref{eq:stream_size_lower_bound} and
\eqref{eq:fedzoos_subsampling_calibration}, these bounds are at most
$k\rho$ and $\min\{1,k\rho\}$, respectively.
\end{corollary}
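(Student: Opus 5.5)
We argue by induction on the request index $j$, applying Theorem~\ref{thm:fedzouls_exact} once per request. The induction hypothesis is that, immediately before request $j$ is processed, the maintained state at position $T_{ul}^{(j)}$ has law $\mathcal O_{\mathsf S}^{[T_{ul}^{(j)}]}(\mathcal D^{(j)})$. For $j=1$ this is just \textsc{FedZOO-S} run on $\mathcal D^{(1)}=\mathcal D$ up to $T_{ul}^{(1)}$.

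For the inductive step, let the state after request $j-1$ have law $\mathcal O_{\mathsf S}^{[T_{ul}^{(j-1)}]}(\mathcal D^{(j)})$. Training then continues with the retained-data kernels from $T_{ul}^{(j-1)}$ to $T_{ul}^{(j)}$, so the Markov structure of the training positions gives law $\mathcal O_{\mathsf S}^{[T_{ul}^{(j)}]}(\mathcal D^{(j)})$ at the arrival of request $j$. Non-adaptivity is needed here. Because $(c_j^-,x_j^-,T_{ul}^{(j)})$ is independent of all algorithmic randomness, we can condition on the request stream. Under that conditioning, Theorem~\ref{thm:fedzouls_exact} applies with $\mathcal D^{(j)}$ in place of $\mathcal D$ and $x_j^-$ as the deletion, and validity holds by \eqref{eq:stream_size_lower_bound}. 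The theorem only uses the law of its input state, not how that state was produced, so its output has law $\mathcal O_{\mathsf S}^{[T_{ul}^{(j)}]}(\mathcal D^{(j+1)})$. I expect the main obstacle to be stating this cleanly: the exactness guarantee depends on the input law only, and the replay decision at request $j$ may depend on a state already rewritten by earlier replays. Equality in law of the full joint state, including the saved mini-batches used for the hit search, is exactly what makes this step go through.

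For the replay counts, write $N_{\mathsf S}^{\mathrm{rep}}=\sum_j \mathbf 1\{\mathsf{Replay}_j\}$. Since the state entering request $j$ has the same law as a fresh run on $\mathcal D^{(j)}$, the replay bound \eqref{eq:fedzouls_replay_probability} of Theorem~\ref{thm:fedzouls_exact} gives
\[
\mathbb P(\mathsf{Replay}_j)\le\min\left\{1,\frac{T_{ul}^{(j)}mb}{Mn_j}\right\}.
\]
Linearity of expectation then yields \eqref{eq:fedzouls_expected_replays}. A union bound over $j$, combined with the trivial bound $1$, yields \eqref{eq:fedzouls_any_replay}.

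Finally, under \eqref{eq:stream_size_lower_bound} we have $n_j\ge\underline n$, and $T_{ul}^{(j)}\le T$. Together with \eqref{eq:fedzoos_subsampling_calibration} these give
\[
\frac{T_{ul}^{(j)}mb}{Mn_j}\le\frac{Tmb}{M\underline n}\le\rho
\]
for every $j$. Each summand is therefore at most $\rho$, which gives the bounds $k\rho$ and $\min\{1,k\rho\}$.
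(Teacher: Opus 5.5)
Your proposal is correct and follows essentially the same route as the paper. Both arguments run an induction on $j$ that tracks the marginal law of the complete saved state before and after each request, applying Theorem~\ref{thm:fedzouls_exact} with fresh unlearning randomness and the retained-data kernels between arrivals; the replay bounds then come from the single-request bound on the pre-request marginal, linearity of expectation and a union bound (no independence needed), with the calibrated forms following from $n_j\ge\underline n$ and $T_{ul}^{(j)}\le T$.
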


\begin{proof}[Proof of Corollary~\ref{cor:fedzouls_sequential}]
Fix the request stream and the perturbation pool. Let $X_j^{-}$ and $X_j^{+}$
be the complete saved states immediately before and after processing request
$j$, both at position $T_{ul}^{(j)}$. We prove by induction that
\begin{align}
X_j^{-}&\overset d=
\mathcal O_{\mathsf S}^{[T_{ul}^{(j)}]}(\mathcal D^{(j)}),
\notag\\
X_j^{+}&\overset d=
\mathcal O_{\mathsf S}^{[T_{ul}^{(j)}]}(\mathcal D^{(j+1)}).
\label{eq:fedzouls_stream_state_laws}
\end{align}
The first equality for $j=1$ follows from the initial learning run.
Given its validity for request $j$, apply
Theorem~\ref{thm:fedzouls_exact} with the fixed deletion $x_j^{-}$ and fresh
unlearning randomness to obtain the second equality. Continue with the
retained-data learning kernels and fresh randomness until
$T_{ul}^{(j+1)}$. This gives the first equality for request $j+1$ and closes
the induction. Equal arrival positions require no intervening learning.
These are marginal state laws over all learning and unlearning randomness;
they do not condition on the particular saved state produced by earlier
requests or on whether those requests required replay.

Let $I_j$ indicate whether request $j$ triggers replay. Applying the
single-request replay bound to the first marginal in
Eq.~\eqref{eq:fedzouls_stream_state_laws} gives
\[
\mathbb E[I_j]=\mathbb P(I_j=1)
\le\min\left\{1,\frac{T_{ul}^{(j)}mb}{Mn_j}\right\}.
\]
Since $N_{\mathsf S}^{\mathrm{rep}}=\sum_{j=1}^{k}I_j$, linearity of
expectation proves Eq.~\eqref{eq:fedzouls_expected_replays}; a union bound
proves Eq.~\eqref{eq:fedzouls_any_replay}. Independence of the $I_j$ is not
needed. The bounds are uniform in the perturbation pool, so averaging over
it preserves them. The calibrated bounds follow from $n_j\ge\underline n$
and $T_{ul}^{(j)}\le T$.
\end{proof}

For \textsc{FedZOO-NS}, subsampling plays the same first role: if the deleted
sample is not selected, nothing changes. The difference is what happens after a
hit. Replacing that sample changes the mean of the noisy coefficient vector but
not its covariance, so the old and retained-data update distributions still
overlap. This overlap can keep the two executions together even at an affected
step. We quantify that one-step discrepancy first and then accumulate it over
training.

\subsection{FedZOO-NS Stability}
\label{sec:fedzoons_stability}

For probability measures $P\ll Q$, write
\begin{align}
D_{\mathrm{KL}}(P\|Q)
&=
\int\log\left(\frac{dP}{dQ}\right)dP,
\notag\\
D_2(P\|Q)
&=
\log\int\left(\frac{dP}{dQ}\right)^2dQ,
\end{align}
and let $\chi^2(P\|Q)=\exp(D_2(P\|Q))-1$.

\begin{lemma}[Single-deletion replacement coupling]
\label{lem:ns_replacement_batch}
Let $\mathcal D=\mathcal D^{-}\cup\{x^{-}\}$ with
$|\mathcal D|=n$ and $|\mathcal D^{-}|=n-1\ge b$.
Draw $\mathcal B$ uniformly from the size-$b$ subsets of $\mathcal D$ and
construct
\begin{equation}
\label{eq:ns_replacement_batch}
\mathcal B'
=
\begin{cases}
\mathcal B,
&x^{-}\notin\mathcal B,\\
(\mathcal B\setminus\{x^{-}\})\cup\{Y\},
&x^{-}\in\mathcal B,
\end{cases}
\end{equation}
where, on the second branch, $Y$ is uniform on
$\mathcal D^{-}\setminus(\mathcal B\setminus\{x^{-}\})$, independently of
the seed sets and Gaussian noise. Then $\mathcal B'$ is uniform over the
size-$b$ subsets of $\mathcal D^{-}$, conditional on either branch and
unconditionally. Keeping the seed sets of retained samples and assigning
the seed set of $x^{-}$ to $Y$ gives the joint mini-batch--seed law of a
retained-data local step.
\end{lemma}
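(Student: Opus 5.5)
The plan is to verify uniformity of $\mathcal B'$ separately on each branch by a direct counting argument, in the same spirit as the proof of Lemma~\ref{lem:uniform_batch_deletion}. After that, the claim about the joint mini-batch--seed law will follow from the independence of the seed sampling.

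\emph{First branch.} Conditional on $x^{-}\notin\mathcal B$, the batch $\mathcal B$ is uniform over the $\binom{n-1}{b}$ size-$b$ subsets of $\mathcal D^{-}$. This is because every such subset has the same unconditional probability $1/\binom nb$. Since $\mathcal B'=\mathcal B$ on this branch, conditional uniformity holds immediately.

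\emph{Second branch.} Conditional on $x^{-}\in\mathcal B$, the set $\mathcal B_0=\mathcal B\setminus\{x^{-}\}$ is uniform over the $(b-1)$-subsets of $\mathcal D^{-}$. The element $Y$ is then uniform on the $n-b$ elements of $\mathcal D^{-}\setminus\mathcal B_0$; this set is nonempty because $n-1\ge b$. Fix a size-$b$ subset $A\subseteq\mathcal D^{-}$. The event $\mathcal B'=A$ occurs exactly when $\mathcal B_0=A\setminus\{y\}$ and $Y=y$ for some $y\in A$. There are $b$ disjoint such ways, so
\[
\mathbb P(\mathcal B'=A\mid x^{-}\in\mathcal B)
=b\cdot\frac{1}{\binom{n-1}{b-1}}\cdot\frac1{n-b}
=\frac{1}{\binom{n-1}{b}},
\]
using the identity $\binom{n-1}{b}=\binom{n-1}{b-1}\frac{n-b}{b}$. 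Since $\mathcal B'$ is uniform conditional on each branch, it is also uniform unconditionally by the law of total probability.

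\emph{Seed sets.} Conditional on $\mathcal B$, the seed sets are i.i.d.\ uniform size-$q$ subsets of $\mathbb S$, one per sample, and they are independent of $\mathcal B$, $Y$, and the Gaussian noise. On the first branch the relabeling is the identity. On the second branch, the map sending the seed set of $x^{-}$ to $Y$ is a deterministic bijection between sample labels, given $(\mathcal B,Y)$. It therefore preserves the i.i.d.\ uniform product law. Hence, conditional on $\mathcal B'$, the reassigned seed sets are again i.i.d.\ uniform and independent of $\mathcal B'$. Together with the uniformity of $\mathcal B'$, this is exactly the joint mini-batch--seed law of a \textsc{FedZOO-NS} local step on $\mathcal D^{-}$.

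\emph{Main obstacle.} The only delicate point is the conditioning in the seed argument. The relabeling depends on $\mathcal B$ and $Y$, so I would carry out that argument conditionally on $(\mathcal B,Y)$ and only then integrate over them. The counting identity itself is routine.
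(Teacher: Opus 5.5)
Your proposal is correct and follows the paper's proof essentially step for step. Both arguments show uniformity directly on the non-hit branch, then use the same $b$-way counting on the hit branch, giving $b/\bigl(\binom{n-1}{b-1}(n-b)\bigr)=1/\binom{n-1}{b}$, and both handle the seed sets by treating replacement as a relabeling that is independent of their values, conditionally on $(\mathcal B,Y)$, before averaging over these.
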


\begin{proof}[Proof of Lemma~\ref{lem:ns_replacement_batch}]
Conditional on $x^{-}\notin\mathcal B$, the original mini-batch is already a
uniform size-$b$ subset of $\mathcal D^{-}$. Conditional on a hit, fix a
size-$b$ subset $A\subseteq\mathcal D^{-}$. For each $y\in A$, the output is
$A$ when $\mathcal B\setminus\{x^{-}\}=A\setminus\{y\}$ and $Y=y$. Hence
\begin{equation}
\label{eq:ns_set_batch_probability}
\mathbb P(\mathcal B'=A\mid x^{-}\in\mathcal B)
=
\frac{b}{\binom{n-1}{b-1}(n-b)}
=
\frac1{\binom{n-1}{b}}.
\end{equation}
The two branches therefore have the same retained mini-batch law.

Conditional on any original batch and replacement, the seed sets assigned
to the $b$ resulting samples remain independent, each with the uniform
size-$q$ law on $\mathbb S$. Indeed, replacement only relabels these seed
sets and is independent of their values. This conditional law depends on
neither the original batch nor the replacement. Averaging over their
preimages proves the joint mini-batch--seed claim.
\end{proof}

The mini-batch itself remains an unordered set. For the coupling argument, we only need to match the coefficients associated with corresponding sample--seed pairs in the two executions. If the deleted sample is not selected, the two batches and all their seed sets are identical. If it is selected, the replacement sample inherits its seed set, while every retained sample keeps its original seeds. We can therefore compare the two clean coefficient vectors after placing corresponding sample--seed pairs in the same deterministic order. This ordering is only a representation: permuting the pairs together with their labels leaves both the Gaussian law and the accumulated seed-coordinate update unchanged.

Under this correspondence, only the $q$ coefficients that belonged to the deleted sample can change. Since each clipped and normalized clean coefficient lies in $[-C/(bq),C/(bq)]$, the two clean vectors satisfy
\begin{equation}
\label{eq:ns_scalar_sensitivity}
\|\boldsymbol h-\boldsymbol h'\|_2
\le
\frac{2C}{b\sqrt q}.
\end{equation}
Here $\boldsymbol h'$ denotes the retained-data coefficients written in the matched ordering. This is the only sensitivity bound needed below.

For output stability, it is enough to follow the resulting seed-coordinate increment, because the local model and accumulator updates are deterministic once that increment is fixed. The saved sample identities and sample--seed correspondence matter for exact unlearning, but they do not affect the distributional comparison of the model updates themselves.

\begin{lemma}[One-step divergence after mini-batch subsampling]
\label{lem:ns_one_step_divergence}
Fix a common pre-update state and suppose the affected client participates.
Let $n$ be its local data size before deletion. Let $P$ and $Q$ be the
conditional laws of the resulting seed-coordinate increment on
$\mathcal D_c$ and $\mathcal D_c^{-}$, respectively. Then
\begin{equation}
\label{eq:ns_one_step_divergence}
D_{\mathrm{KL}}(P\|Q)
\le
D_2(P\|Q)
\le
\varepsilon_{\mathrm{step}}(n,\sigma_\rho),
\end{equation}
where
\begin{equation}
\label{eq:ns_step_epsilon}
\varepsilon_{\mathrm{step}}(n,\sigma)
=
\log\left[
1+
\left(\frac bn\right)^2
\left(
\exp\left(\frac{4C^2/(b^2q)}{\sigma^2}\right)-1
\right)
\right].
\end{equation}
\end{lemma}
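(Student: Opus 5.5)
We prove the bound by viewing both laws as mixtures over the sampling randomness, using the coupling of Lemma~\ref{lem:ns_replacement_batch}. Conditionally on the mini-batch and seed sets, each increment is Gaussian, and the whole question reduces to one Gaussian Rényi computation. The inequality $D_{\mathrm{KL}}\le D_2$ is standard monotonicity of Rényi divergence in its order; by Jensen, $\int \log(dP/dQ)\,dP\le\log\int (dP/dQ)\,dP=D_2(P\|Q)$. So it remains to bound $D_2(P\|Q)$, equivalently $\chi^2(P\|Q)\le (b/n)^2(e^{4C^2/(b^2q\sigma^2)}-1)$.

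\textbf{Step 1: common reference and mixture decomposition.} Since the seed-coordinate increment is a deterministic function of the noisy coefficient vector together with the sample--seed correspondence, data processing lets us work with the joint law of (mini-batch, seeds, noisy vector in the matched ordering). Let $\omega$ denote the realization of the original batch, the seed sets and the replacement $Y$. Write $\omega\in H$ for the hit event $x^-\in\mathcal B$, which has probability $b/n$. By Lemma~\ref{lem:ns_replacement_batch} the retained law is obtained by averaging over the same $\omega$. We therefore write
\[
P=\mathbb E_\omega\bigl[\mathcal N(\boldsymbol h(\omega),\sigma^2I)\bigr],\qquad Q=\mathbb E_\omega\bigl[\mathcal N(\boldsymbol h'(\omega),\sigma^2I)\bigr],
\]
with $\boldsymbol h=\boldsymbol h'$ off $H$. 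Hence $P=(1-\gamma)Q_0+\gamma P_1$ and $Q=(1-\gamma)Q_0+\gamma Q_1$, where $\gamma=b/n$, $Q_0$ is the common no-hit component, and $P_1,Q_1$ are the hit components.

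\textbf{Step 2: the main obstacle, amplification by subsampling for $\chi^2$.} Here $P-Q=\gamma(P_1-Q_1)$, so
\[
\chi^2(P\|Q)=\gamma^2\int\frac{(dP_1-dQ_1)^2}{dQ}.
\]
We must control $dQ$ from below by something matched to $Q_1$, and this is the delicate point. The mixture $Q$ contains $\gamma Q_1$, but $\gamma Q_1$ alone gives only a factor $\gamma$, not $\gamma^2$. I would try the following. Couple $P_1$ and $Q_1$ pointwise in $\omega$, so that they are mixtures of $\mathcal N(\boldsymbol h,\sigma^2 I)$ and $\mathcal N(\boldsymbol h',\sigma^2 I)$ over the same $\omega$ with $\|\boldsymbol h-\boldsymbol h'\|\le\Delta:=2C/(b\sqrt q)$ by Eq.~\eqref{eq:ns_scalar_sensitivity}. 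Then use joint convexity of $(a,c)\mapsto a^2/c$, or equivalently the convexity of $\chi^2$ for mixtures sharing the same mixing variable, to reduce to a single Gaussian pair. Conditioning on the full pre-step sampling state including the non-hit branch, we have $\chi^2(P\|Q)\le\mathbb E_\omega[\cdots]$. A crude reduction of this kind would lose the $\gamma^2$. To keep it, I would use the standard subsampled-Gaussian argument, as in privacy amplification for Poisson/without-replacement sampling with Rényi order $2$: treat $\omega$ restricted to the remaining $b-1$ samples as shared, and condition on them. Conditional on the shared part, $Q$ is a $(1-\gamma)$--$\gamma$ mixture in which the no-hit component already lower-bounds the denominator.

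\textbf{Step 3: the Gaussian computation.} For one Gaussian pair with mean shift of norm at most $\Delta$, the identity
\[
\chi^2\bigl(\mathcal N(\boldsymbol h,\sigma^2I)\,\|\,\mathcal N(\boldsymbol h',\sigma^2I)\bigr)=e^{\|\boldsymbol h-\boldsymbol h'\|^2/\sigma^2}-1\le e^{\Delta^2/\sigma^2}-1
\]
holds. Here $\Delta^2=4C^2/(b^2q)$, which gives exactly the inner term of $\varepsilon_{\mathrm{step}}$. Combining this with Step 2 yields $\chi^2(P\|Q)\le\gamma^2(e^{\Delta^2/\sigma^2}-1)$, and taking $D_2=\log(1+\chi^2)$ gives Eq.~\eqref{eq:ns_step_epsilon}. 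Finally, the pushforward to the seed-coordinate increment is handled by data processing for $D_2$. I expect Step 2 to be the only nontrivial part: obtaining the quadratic factor $(b/n)^2$ requires correctly pairing the no-hit mass in the denominator. If the direct bound proves awkward, I would fall back on the known order-$2$ subsampled Gaussian bound of the form $\log(1+\gamma^2(e^{\Delta^2/\sigma^2}-1))$ for replace-one adjacency.
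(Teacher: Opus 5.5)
Your Steps 1 and 3 are sound, but Step 2 is never carried out. It is the only step that produces the factor $(b/n)^2$, so the closing claim of Step 3, that ``combining with Step 2'' gives $\chi^2(P\|Q)\le\gamma^2\bigl(e^{4C^2/(b^2q\sigma_\rho^2)}-1\bigr)$, rests on an argument that is only sketched or deferred to an outside result.

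The missing observation is already in Lemma~\ref{lem:ns_replacement_batch}: the retained mini-batch--seed law is the same conditional on \emph{either} branch. On a non-hit, the original batch is already a uniform size-$b$ subset of $\mathcal D_c^{-}$. On a hit, the replaced batch is again uniform. So in your notation $Q_0=Q_1=Q$, and in particular the no-hit component of the \emph{original} law is $Q$ itself. Hence $P=(1-\gamma)Q+\gamma P_1$, and your own formula collapses to $\chi^2(P\|Q)=\gamma^2\int(dP_1-dQ)^2/dQ=\gamma^2\chi^2(P_1\|Q)$. Equivalently, with $u=dP_1/dQ$, one has $\int((1-\gamma)+\gamma u)^2\,dQ=1+\gamma^2\chi^2(P_1\|Q)$. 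There is no denominator to lower-bound.

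Your idea of conditioning on a shared set of $b-1$ samples can be made to work for the same reason. Given a shared uniform $(b-1)$-subset $S\subseteq\mathcal D_c^{-}$, the original conditional law is $(1-\gamma)Q_S+\gamma P_{1,S}$. As written, however, you attribute the mixture structure to $Q$ rather than $P$, and you do not check that $S$ has the same law in both executions.

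The fallback to a known replace-one subsampled-Gaussian bound is not a substitute. Here the adjacency is deletion ($n$ versus $n-1$ samples), and the mechanism is a seed-dependent mixture rather than a single Gaussian with a fixed coordinate layout.

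For $\chi^2(P_1\|Q)$, your joint-convexity reduction to a single Gaussian pair is the right step, but the lifting described in Step 1 must be done differently. If the lifted variable exposes the mini-batch, then $P_1$ and $Q$ are mutually singular, since batches containing $x^-$ have $Q$-probability zero, and data processing gives nothing. Nor can the increment be recovered from the noisy vector alone, because it needs the seed assignment.

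The fix is to keep $\lambda$ (original batch, $Y$, and all seed sets) latent, with the same hit-conditional law $\pi$ under both measures. In matched coordinates, set $\overline P_1=\pi(d\lambda)\,\mathcal N(\boldsymbol h_\lambda,\sigma_\rho^2I_{bq})$ and $\overline Q=\pi(d\lambda)\,\mathcal N(\boldsymbol h'_\lambda,\sigma_\rho^2I_{bq})$. Then $\chi^2(\overline P_1\|\overline Q)=\mathbb E_\pi\bigl[e^{\|\boldsymbol h_\lambda-\boldsymbol h'_\lambda\|_2^2/\sigma_\rho^2}\bigr]-1\le e^{4C^2/(b^2q\sigma_\rho^2)}-1$. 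Push both forward by the same map sending $(\lambda,v)$ to the seed-coordinate increment. This yields $P_1$ and, by the hit-conditional clause of Lemma~\ref{lem:ns_replacement_batch}, exactly $Q$. Data processing together with the identity above then gives the lemma; this is the paper's proof.

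Doing this augmented computation on the whole law, without first splitting off the no-hit branch, yields only $\gamma\bigl(e^{4C^2/(b^2q\sigma_\rho^2)}-1\bigr)$. That is the loss of a factor $\gamma$ you anticipated.
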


\begin{proof}[Proof of Lemma~\ref{lem:ns_one_step_divergence}]
Let $\beta=b/n$, and let $P_1$ denote the law of the seed-coordinate
increment conditional on $x^{-}$ belonging to the original mini-batch.
Conditional on a non-hit, the batch and its sample-specific seed sets have
the retained-data law. Since the increment does not depend on how those
pairs are enumerated,
\begin{equation}
\label{eq:ns_step_mixture}
P=(1-\beta)Q+\beta P_1.
\end{equation}

On a hit, couple the batches by Lemma~\ref{lem:ns_replacement_batch}.
Let $\lambda$ collect the original batch, its replacement $Y$, and all
sample-specific seed sets, and let $\pi$ be their joint law conditional on
the hit. Given $\lambda$, write both coefficient vectors in the original
pair coordinates. Their Gaussian laws are
\[
G_\lambda=\mathcal N(\boldsymbol h_\lambda,\sigma_\rho^2I_{bq}),
\qquad
G'_\lambda=\mathcal N(\boldsymbol h'_{\lambda},
\sigma_\rho^2I_{bq}).
\]
Define augmented measures
$\overline P_1(d\lambda,dv)=\pi(d\lambda)G_\lambda(dv)$ and
$\overline Q(d\lambda,dv)=\pi(d\lambda)G'_\lambda(dv)$.
They share the same latent measure. Conditional Gaussian integration gives
\begin{align}
\chi^2(\overline P_1\|\overline Q)
&=
\mathbb E_{\lambda\sim\pi}
\left[
\exp\!\left(
\frac{\|\boldsymbol h_\lambda-
\boldsymbol h'_{\lambda}\|_2^2}{\sigma_\rho^2}
\right)-1
\right]
\notag\\
&\le
\exp\!\left(\frac{4C^2/(b^2q)}{\sigma_\rho^2}\right)-1.
\label{eq:ns_augmented_hit_chi_square}
\end{align}

Map both coefficient vectors to their seed-coordinate increments by summing
coefficients that use the same seed. Under the matched sample--seed
correspondence, the same labeled coefficients are summed on the two sides;
only their values may differ. Lemma~\ref{lem:ns_replacement_batch} gives the
correct retained batch--seed law, so the second pushforward is exactly $Q$.
Data processing therefore yields
\begin{equation}
\label{eq:ns_hit_chi_square}
\chi^2(P_1\|Q)
\le
\exp\!\left(\frac{4C^2/(b^2q)}{\sigma_\rho^2}\right)-1.
\end{equation}

Writing $u=dP_1/dQ$, Eq.~\eqref{eq:ns_step_mixture} implies
\begin{align}
\exp(D_2(P\|Q))
&=
\int((1-\beta)+\beta u)^2dQ
\notag\\
&=
1+\beta^2\chi^2(P_1\|Q).
\end{align}
Combining this identity with Eq.~\eqref{eq:ns_hit_chi_square} and
$D_{\mathrm{KL}}\le D_2$ proves the claim. The argument concerns the actual seed-coordinate update; no distributional comparison of an arbitrary storage ordering is needed.
\end{proof}

\begin{theorem}[Output TV stability of FedZOO-NS]
\label{thm:fedzoons_output_tv}
Consider an adjacent pair $\mathcal D$ and $\mathcal D^{-}$ differing by the
deletion of $x^{-}\in\mathcal D_{c^{-}}$, and let
$n_{c^{-}}=|\mathcal D_{c^{-}}|$ before deletion. Conditionally on the fixed
seed pool,
\begin{align}
&\operatorname{TV}\left(
\mathcal O_{\mathsf{NS},\mathrm{out}}(\mathcal D),
\mathcal O_{\mathsf{NS},\mathrm{out}}(\mathcal D^{-})
\right)
\notag\\
&\quad\le
\min\left\{
1,
\sqrt{
\frac{Tm}{2M}
\varepsilon_{\mathrm{step}}
(n_{c^{-}},\sigma_\rho)
}
\right\}.
\label{eq:fedzoons_output_tv_exact}
\end{align}
The same bound holds unconditionally over the random seed pool.
\end{theorem}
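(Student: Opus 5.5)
The plan is to bound the KL divergence between the laws of the complete training trajectories on $\mathcal D$ and $\mathcal D^{-}$ via the chain rule. Pinsker's inequality then converts this into a TV bound. Finally, data processing (projecting onto the final reconstructed model) transfers the bound to the output laws.

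Work conditionally on the fixed seed pool. The trajectory is generated position by position. At each round $r$ we first draw the client set $\mathcal C^r$, which has identical law under both datasets. Then, for each local position $(r,t)$, every participating client draws its seed-coordinate increment given the current state. The state is a deterministic function of past increments and client sets, so it suffices to track the sequence of client sets and increments.

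Step one is the chain rule for KL. The total divergence equals the expected sum of the conditional divergences of each transition, with the expectation taken under the $\mathcal D$-trajectory. Client-set draws contribute zero. Increments of clients other than $c^{-}$ contribute zero, since their local data and transition kernels coincide; this includes $c^{-}$ in rounds where it does not participate.

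Step two handles positions where $c^{-}\in\mathcal C^r$. Conditional on the common pre-update state, Lemma~\ref{lem:ns_one_step_divergence} bounds the conditional KL by $\varepsilon_{\mathrm{step}}(n_{c^{-}},\sigma_\rho)$. The dataset sizes before the deletion are fixed, so this bound is uniform in the state.

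Step three accumulates the bound. The total KL is at most
$\varepsilon_{\mathrm{step}}\cdot\mathbb E[\#\{(r,t):c^{-}\in\mathcal C^r\}]=\varepsilon_{\mathrm{step}}\cdot R\cdot E\cdot\frac mM=\frac{Tm}{M}\varepsilon_{\mathrm{step}}$.
This is where the factor $m/M$ enters linearly inside the square root. Client participation is decided once per round and is independent of the data, so the participation indicator can be pulled outside the conditional divergence. For the same reason, no privacy-amplification argument over clients is needed.

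Pinsker's inequality then gives $\operatorname{TV}\le\sqrt{\tfrac12\,\tfrac{Tm}{M}\varepsilon_{\mathrm{step}}}$, and the trivial bound $\operatorname{TV}\le1$ supplies the $\min$. The unconditional statement follows by averaging over the seed pool: TV of mixtures with the same mixing law is at most the average TV, and the bound is uniform in the pool.

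The main obstacle is making the chain-rule step rigorous. The trajectory law must be expressed so that each transition kernel is exactly the conditional law covered by Lemma~\ref{lem:ns_one_step_divergence}. This requires working with the law of the seed-coordinate increments, with saved sample identities marginalized out, as the lemma does; the output depends only on these increments, so nothing is lost. The parallel clients within a position are conditionally independent given the state, so their joint KL splits additively, and only the $c^{-}$ term is nonzero. I would verify that the lemma's conditioning on a common pre-update state matches the filtration used in the chain rule.
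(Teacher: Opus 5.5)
Your proposal is correct and follows the paper's proof essentially step for step. You apply the KL chain rule over client sets and seed-coordinate increments, use Lemma~\ref{lem:ns_one_step_divergence} at positions where $c^{-}$ participates, take the factor $m/M$ from the once-per-round participation, and finish with data processing, Pinsker, and averaging the $\boldsymbol Z$-uniform bound over the pool; the only cosmetic difference is that you sum over individual positions instead of first bounding each round by $E\varepsilon_{\mathrm{step}}$ and then averaging over $\mathcal C^r$, which is equivalent.
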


\begin{proof}[Proof of Theorem~\ref{thm:fedzoons_output_tv}]
For the stability proof, condition on the sampled client sets and the seed-coordinate increments generated before the current local step. Mini-batch identities and the storage order of sample--seed coefficients need not be retained in this conditioning. Given
$\boldsymbol Z$, the initial state, and the preceding increments, the current
model is deterministic, so Lemma~\ref{lem:ns_one_step_divergence} applies to the next update from the same pre-update model in the two executions.

Fix this history before round $r$. The client set $\mathcal C^r$ has the
same data-independent law in both executions. If
$c^{-}\notin\mathcal C^r$, all conditional transitions in this round are
identical. If $c^{-}\in\mathcal C^r$, the KL chain rule and
Lemma~\ref{lem:ns_one_step_divergence} give
\begin{equation}
\label{eq:ns_selected_round_kl}
D_{\mathrm{KL}}(P_r\|Q_r
\mid\mathcal C^r,\mathcal H_{r-1})
\le
E\varepsilon_{\mathrm{step}}
(n_{c^{-}},\sigma_\rho).
\end{equation}
Averaging over the common client-set law yields
\begin{equation}
\label{eq:ns_round_kl}
D_{\mathrm{KL}}(P_r\|Q_r\mid\mathcal H_{r-1})
\le
\frac mM
E\varepsilon_{\mathrm{step}}
(n_{c^{-}},\sigma_\rho).
\end{equation}
Applying the KL chain rule over the $R$ rounds gives
\begin{equation}
\label{eq:ns_total_kl}
D_{\mathrm{KL}}(P_{1:R}\|Q_{1:R})
\le
\frac{Tm}{M}
\varepsilon_{\mathrm{step}}
(n_{c^{-}},\sigma_\rho).
\end{equation}
Data processing to the final model and Pinsker's inequality prove
\eqref{eq:fedzoons_output_tv_exact}. The bound is uniform in
$\boldsymbol Z$, so averaging over the pool gives the unconditional claim.
\end{proof}

\begin{corollary}[Noise level for a target stability $\rho$]
\label{cor:fedzoons_noise_calibration}
Fix $0<\rho\le1$ before training and choose the standard deviation of every
normalized scalar-noise coordinate as
\begin{equation}
\label{eq:fedzoons_sigma_calibration}
\sigma_\rho
=
\max\left\{
\frac{2C}{b\sqrt q},
\frac{2C}{\underline n\sqrt q\,\rho}
\sqrt{\frac{Tm}{M}}
\right\}.
\end{equation}
Then \textsc{FedZOO-NS} is $\rho$-sample-level TV stable on the dataset class
satisfying~\eqref{eq:stream_size_lower_bound}. More precisely,
\begin{align}
&\operatorname{TV}\left(
\mathcal O_{\mathsf{NS},\mathrm{out}}(\mathcal D),
\mathcal O_{\mathsf{NS},\mathrm{out}}(\mathcal D^{-})
\right)
\notag\\
&\quad\le
\frac{2C}{n_{c^{-}}\sqrt q\,\sigma_\rho}
\sqrt{\frac{Tm}{M}}
\le
\rho.
\label{eq:fedzoons_simple_tv}
\end{align}
\end{corollary}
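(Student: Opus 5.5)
The plan is to start from Theorem~\ref{thm:fedzoons_output_tv} and bound $\varepsilon_{\mathrm{step}}(n_{c^{-}},\sigma_\rho)$ by an explicit quadratic in $1/\sigma_\rho$. The key is to use the first branch of the maximum in Eq.~\eqref{eq:fedzoons_sigma_calibration} to linearize the exponential. Write
\[
\Delta=\frac{2C}{b\sqrt q},\qquad x=\frac{\Delta^2}{\sigma_\rho^2}=\frac{4C^2/(b^2q)}{\sigma_\rho^2}.
\]
Since $\sigma_\rho\ge\Delta$, we have $x\le1$. By convexity of $e^x-1$ on $[0,1]$, the chord bound gives $e^x-1\le (e-1)x$. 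We will need a factor at most $2$ so that the final constant matches; since $e-1<2$, this holds.

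Next, combine this with $\log(1+y)\le y$ in Eq.~\eqref{eq:ns_step_epsilon}:
\[
\varepsilon_{\mathrm{step}}(n,\sigma_\rho)\le\Bigl(\frac bn\Bigr)^2(e^x-1)\le 2\Bigl(\frac bn\Bigr)^2\frac{4C^2}{b^2q\,\sigma_\rho^2}=\frac{8C^2}{n^2q\,\sigma_\rho^2}.
\]
Substituting into Eq.~\eqref{eq:fedzoons_output_tv_exact} gives
\[
\sqrt{\frac{Tm}{2M}\cdot\frac{8C^2}{n_{c^{-}}^2q\sigma_\rho^2}}=\frac{2C}{n_{c^{-}}\sqrt q\,\sigma_\rho}\sqrt{\frac{Tm}{M}}.
\]
The $\min\{1,\cdot\}$ can only decrease the left side, so this is the first inequality of Eq.~\eqref{eq:fedzoons_simple_tv}.

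For the second inequality, use $n_{c^{-}}\ge\underline n$ from Eq.~\eqref{eq:stream_size_lower_bound} together with the second branch of the maximum, $\sigma_\rho\ge \frac{2C}{\underline n\sqrt q\,\rho}\sqrt{Tm/M}$. Together these give a bound of at most $\rho$. Since this holds for every admissible adjacent pair, and TV is symmetric so both orientations are covered, Definition~\ref{def:tv_stability} yields $\rho$-sample-level TV stability. The bound is uniform in the seed pool, so it also holds unconditionally.

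The only delicate step is the constant in the exponential linearization. It works because the first branch of the noise level forces $x\le1$, where $e^x-1\le(e-1)x\le 2x$. Without that branch, the sensitivity term could be large and the exponential would not be controlled by a quadratic. I would also check that $\rho$ is fixed before training, so that $\sigma_\rho$ is data-independent and the hypotheses of Lemma~\ref{lem:ns_one_step_divergence} apply as stated.
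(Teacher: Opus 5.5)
Your proposal is correct and follows essentially the same route as the paper's proof. The first branch of the maximum gives $x\le1$, so $e^x-1\le2x$ and $\log(1+y)\le y$ yield $\varepsilon_{\mathrm{step}}\le 8C^2/(n^2q\sigma_\rho^2)$; substituting into Theorem~\ref{thm:fedzoons_output_tv} and then using $n_{c^{-}}\ge\underline n$ with the second branch gives the bound $\rho$.
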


\begin{proof}[Proof of Corollary~\ref{cor:fedzoons_noise_calibration}]
The first term in~\eqref{eq:fedzoons_sigma_calibration} ensures that the squared mean shift between the two Gaussian coefficient vectors is at most their noise variance. Using
$e^x-1\le2x$ for $0\le x\le1$ and $\log(1+x)\le x$ gives
\begin{equation}
\varepsilon_{\mathrm{step}}(n,\sigma_\rho)
\le
\frac{8C^2}{n^2q\sigma_\rho^2}.
\end{equation}
Substitution into Theorem~\ref{thm:fedzoons_output_tv} yields the first
inequality in~\eqref{eq:fedzoons_simple_tv}. The second term in
\eqref{eq:fedzoons_sigma_calibration}, together with
$n_{c^{-}}\ge\underline n$, gives the target bound.
\end{proof}

The NS bound reflects two events. A deletion can matter only when the affected client participates and the deleted sample enters its mini-batch. Even then, the two noisy updates may still coincide under the Gaussian coupling; this becomes more likely as the noise level grows relative to the possible change in the clean coefficients. Corollary~\ref{cor:fedzoons_noise_calibration} chooses the noise so that these effects accumulate to at most the target $\rho$.

One detail in this bound is worth making explicit. The target client is
sampled once per round, and that client set is reused for all $E$ local steps.
The factor $m/M$ therefore appears once for the whole round, rather than as an
independent subsampling factor at every local position. This is why the
sufficient noise scale depends on $\sqrt{m/M}$.

TV stability concerns only the distribution of the output. Exact unlearning is stronger: the saved state after a deletion must itself have the law of a fresh run on the retained data. For \textsc{FedZOO-NS}, the Gaussian noise lets us enforce this transition by transition. At an affected local step, we first try to keep the previously generated noisy coefficients through rejection sampling. If that fails, a reflection maps them to the correct retained-data Gaussian law, after which training continues on the retained data. This is the same coupling principle used for stability-based exact unlearning in \cite{ullah2021machine}, adapted here to the sample--seed coefficients and accumulators maintained by the federated algorithm.

\subsection{FedZOUL-NS Exact Unlearning}
\label{sec:fedzoulns_exactness}

\begin{lemma}[Gaussian maximal coupling by hyperplane reflection]
\label{lem:gaussian_hyperplane_coupling}
Let
$P=\mathcal N(\boldsymbol\mu,\sigma^2I_p)$ and
$Q=\mathcal N(\boldsymbol\nu,\sigma^2I_p)$, with densities $p$ and $q$.
For $\boldsymbol\mu\ne\boldsymbol\nu$, define
\begin{align}
\boldsymbol m
&=
\frac{\boldsymbol\mu+\boldsymbol\nu}{2},
&
\boldsymbol e
&=
\frac{\boldsymbol\nu-\boldsymbol\mu}
{\|\boldsymbol\nu-\boldsymbol\mu\|_2},
\notag\\
\mathsf R_{\boldsymbol\mu,\boldsymbol\nu}(\boldsymbol x)
&=
\boldsymbol x
-2\langle\boldsymbol x-\boldsymbol m,\boldsymbol e\rangle
\boldsymbol e.
\label{eq:general_hyperplane_reflection}
\end{align}
Draw $X\sim P$ and $U\sim\operatorname{Unif}(0,1)$ independently, and set
\begin{equation}
\label{eq:gaussian_hyperplane_coupling}
Y
=
\begin{cases}
X,
&U\le\min\{1,q(X)/p(X)\},\\
\mathsf R_{\boldsymbol\mu,\boldsymbol\nu}(X),
&\text{otherwise}.
\end{cases}
\end{equation}
Then $Y\sim Q$, and
\begin{align}
\mathbb P(X\ne Y)
&=
\operatorname{TV}(P,Q)
\notag\\
&=
2\Phi\left(
\frac{\|\boldsymbol\mu-\boldsymbol\nu\|_2}{2\sigma}
\right)-1
\notag\\
&\le
\frac{\|\boldsymbol\mu-\boldsymbol\nu\|_2}
{\sqrt{2\pi}\sigma},
\label{eq:gaussian_coupling_rejection}
\end{align}
where $\Phi$ is the standard Gaussian CDF. If
$\boldsymbol\mu=\boldsymbol\nu$, set $Y=X$.
\end{lemma}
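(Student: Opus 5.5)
The plan is to verify the law of $Y$ by computing its density directly, splitting according to the two branches. Write $\alpha(\boldsymbol x)=\min\{1,q(\boldsymbol x)/p(\boldsymbol x)\}$ and $\mathsf R=\mathsf R_{\boldsymbol\mu,\boldsymbol\nu}$. The reflection $\mathsf R$ is an affine isometry and an involution with unit Jacobian. It swaps $\boldsymbol\mu$ and $\boldsymbol\nu$, since $\boldsymbol\mu-\boldsymbol m=-\tfrac12\|\boldsymbol\nu-\boldsymbol\mu\|_2\boldsymbol e$ and reflection negates this component. Consequently $p(\mathsf R\boldsymbol y)=q(\boldsymbol y)$ and $q(\mathsf R\boldsymbol y)=p(\boldsymbol y)$ for every $\boldsymbol y$. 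The two branches then contribute as follows.

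\textbf{Accept branch.} This branch contributes density $p(\boldsymbol y)\alpha(\boldsymbol y)=\min\{p(\boldsymbol y),q(\boldsymbol y)\}$ at $\boldsymbol y$.

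\textbf{Reject branch.} By the change of variables $\boldsymbol x=\mathsf R\boldsymbol y$, this branch contributes
\[
p(\mathsf R\boldsymbol y)\bigl(1-\alpha(\mathsf R\boldsymbol y)\bigr)
=\bigl(p(\mathsf R\boldsymbol y)-q(\mathsf R\boldsymbol y)\bigr)_+
=\bigl(q(\boldsymbol y)-p(\boldsymbol y)\bigr)_+ .
\]

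Summing the two contributions gives $\min\{p,q\}+(q-p)_+=q$, so $Y\sim Q$.

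For the disagreement probability, $X\neq Y$ can occur only on rejection. A rejection with $\mathsf R X=X$ happens only on the hyperplane $\langle \boldsymbol x-\boldsymbol m,\boldsymbol e\rangle=0$, which is $P$-null. Hence
\[
\mathbb P(X\ne Y)=\int p\,(1-\alpha)=\int (p-q)_+=\operatorname{TV}(P,Q).
\]
To evaluate this TV distance, project onto the direction $\boldsymbol e$. The set $\{p>q\}$ is the half-space $\langle\boldsymbol x-\boldsymbol m,\boldsymbol e\rangle<0$, and $\langle X-\boldsymbol m,\boldsymbol e\rangle$ under $P$ is $\mathcal N(-\delta/2,\sigma^2)$ with $\delta=\|\boldsymbol\mu-\boldsymbol\nu\|_2$, while under $Q$ it is $\mathcal N(\delta/2,\sigma^2)$. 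Therefore
\[
P(H)-Q(H)=\Phi\!\left(\frac{\delta}{2\sigma}\right)-\Phi\!\left(-\frac{\delta}{2\sigma}\right)=2\Phi\!\left(\frac{\delta}{2\sigma}\right)-1 .
\]
The final inequality holds because the standard normal density is bounded by $1/\sqrt{2\pi}$, so $2\Phi(s)-1\le 2s/\sqrt{2\pi}$ with $s=\delta/(2\sigma)$. The degenerate case $\boldsymbol\mu=\boldsymbol\nu$ is trivial, since then $Y=X$ and $P=Q$.

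The only delicate step is the reflection identity $p\circ\mathsf R=q$. I expect to verify it by writing $\|\mathsf R\boldsymbol y-\boldsymbol\mu\|_2=\|\mathsf R\boldsymbol y-\mathsf R\boldsymbol\nu\|_2=\|\boldsymbol y-\boldsymbol\nu\|_2$, using that $\mathsf R$ is an isometry and an involution with $\mathsf R\boldsymbol\nu=\boldsymbol\mu$. The rest of the argument is routine.
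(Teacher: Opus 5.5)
Your proposal is correct and follows essentially the same route as the paper: the reflection identity $p\circ\mathsf R=q$, an accept/reject decomposition in which a unit-Jacobian change of variables turns the reflected branch into $(q-p)_+$ so the branches sum to $\min\{p,q\}+(q-p)_+=q$, a projection onto $\boldsymbol e$ over the half-space $\{p>q\}$ for the $2\Phi(\cdot)-1$ formula, and the density bound $\phi\le 1/\sqrt{2\pi}$ for the final inequality. Your observation that $X\neq Y$ almost surely on rejection is a detail the paper leaves implicit when it equates the disagreement probability with the rejection probability.
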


\begin{proof}[Proof of Lemma~\ref{lem:gaussian_hyperplane_coupling}]
The map $\mathsf R_{\boldsymbol\mu,\boldsymbol\nu}$ is an involutive
isometry with absolute Jacobian determinant one and exchanges the two means.
Thus,
\begin{equation}
\label{eq:reflection_swaps_densities}
p(\mathsf R_{\boldsymbol\mu,\boldsymbol\nu}(\boldsymbol x))
=
q(\boldsymbol x),
\qquad
q(\mathsf R_{\boldsymbol\mu,\boldsymbol\nu}(\boldsymbol x))
=
p(\boldsymbol x).
\end{equation}
For any measurable set $A$, the accepted branch contributes
$\int_A\min\{p,q\}$, while a change of variables in the reflected branch
shows that it contributes $\int_A(q-p)_+$. Their sum is $Q(A)$. The
rejection probability is $\int(p-q)_+=\operatorname{TV}(P,Q)$.
For distinct means, $p(\boldsymbol x)>q(\boldsymbol x)$ precisely when
$\langle\boldsymbol x-\boldsymbol m,\boldsymbol e\rangle<0$.
Integrating the two one-dimensional Gaussian projections on this halfspace
gives the CDF formula. The final inequality follows from
$\Phi(u)-1/2\le u/\sqrt{2\pi}$. Importantly, the target law $Q$ is obtained
by combining the acceptance and reflection branches. Conditional on
rejection, the reflected value instead has density
$(q-p)_+/\operatorname{TV}(P,Q)$ when this denominator is positive; it is
not an independent draw from the full $Q$.
\end{proof}

For exact unlearning, the saved state also includes sample identities and seed assignments. The correspondence described above is enough to compare the two executions: retained samples keep their seeds, and the replacement sample inherits the deleted sample's seeds. Whenever two coefficient vectors are compared, corresponding sample--seed pairs are written in the same deterministic order. Changing this order together with the labels does not change either the Gaussian law or the seed-coordinate update.

\begin{lemma}[One-step exact transition for FedZOO-NS]
\label{lem:fedzoulns_one_position}
Fix the seed pool, a local position $(r,t)$, the participating client set, and common server and client optimization variables immediately before this position. Suppose the two datasets differ only by deleting $x^{-}\in\mathcal D_{c^{-}}$. Couple the affected mini-batch as in Lemma~\ref{lem:ns_replacement_batch}, keep the corresponding seed assignments, and apply the rejection--reflection rule of Eqs.~\eqref{eq:rr_acceptance}--\eqref{eq:rr_reflection} when the affected client uses the deleted sample. Then the resulting mini-batches, seed assignments, noisy coefficients, and post-update variables have exactly the conditional law of one \textsc{FedZOO-NS} step on the retained data. If the rejection test accepts, the local model and accumulator remain unchanged.
\end{lemma}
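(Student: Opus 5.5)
The plan is to condition in layers and reduce the claim to two earlier lemmas: Lemma~\ref{lem:ns_replacement_batch} for the sampling randomness and Lemma~\ref{lem:gaussian_hyperplane_coupling} for the Gaussian noise. Fix the seed pool, the position $(r,t)$, the client set $\mathcal C^r$, and the common pre-update variables $(\boldsymbol a^{r-1},\{\boldsymbol a_c^{r,t-1}\}_{c\in\mathcal C^r})$, hence the common models $\boldsymbol w_c^{r,t-1}$. Clients other than $c^{-}$ have identical local datasets, and their mini-batch, seed, and noise draws are independent of those of $c^{-}$. Keeping these draws unchanged therefore gives them the retained-data conditional law. This reduces the claim to the single client $c^{-}$, provided the constructed variables of $c^{-}$ are independent of the other clients' draws. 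That independence holds because the coupling only uses $c^{-}$'s own randomness and a fresh uniform variable $u$. If $c^{-}\notin\mathcal C^r$, nothing changes and the claim is trivial.

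For client $c^{-}$, split on the hit event $x^{-}\in\mathcal B_{c^{-}}^{r,t}$. On a non-hit, the mini-batch, its seed sets, and the noise $\boldsymbol\xi$ are kept. By Lemma~\ref{lem:ns_replacement_batch}, the conditional law of the mini-batch given a non-hit is uniform on size-$b$ subsets of $\mathcal D_{c^{-}}^{-}$. The seed sets are independent of the batch and uniform. Given these, $\widetilde{\boldsymbol h}=\boldsymbol h+\boldsymbol\xi$ with $\boldsymbol h$ computed on retained samples, which is exactly the retained Gaussian transition. On a hit, Lemma~\ref{lem:ns_replacement_batch} shows that $(\mathcal B',\{\mathbb S_{c,i}\})$, with the inherited seeds, again has the retained batch--seed law, and this law is the same conditional on either branch. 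Mixing the two branches with weights $1-b/n$ and $b/n$ then shows that the sampling part has the retained law unconditionally.

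Next, condition on the hit, on $\lambda=(\mathcal B,Y,\text{seed sets})$, and on the common pre-update model. Then $\widetilde{\boldsymbol h}\sim P=\mathcal N(\boldsymbol h_\lambda,\sigma_\rho^2I_{bq})$, and the rule of Eqs.~\eqref{eq:rr_acceptance}--\eqref{eq:rr_reflection} is exactly the construction of Lemma~\ref{lem:gaussian_hyperplane_coupling} with $\boldsymbol\mu=\boldsymbol h_\lambda$ and $\boldsymbol\nu=\boldsymbol h'_\lambda$, where $u$ is independent of everything. That lemma gives $(\widetilde{\boldsymbol h})'\sim\mathcal N(\boldsymbol h'_\lambda,\sigma_\rho^2I_{bq})$ conditionally on $\lambda$. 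If $\boldsymbol h_\lambda=\boldsymbol h'_\lambda$, the vector is kept and the same conclusion holds trivially. So, given the retained batch--seed realization, the noisy coefficients have the retained Gaussian law in the matched ordering.

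The main obstacle, which I would treat carefully, is that the conditional law of the coupled batch--seed pair must not be tilted by $\lambda$. The noisy coefficient depends on $\lambda$ only through $\boldsymbol h'_\lambda$, which is a function of the coupled realization $(\mathcal B',\text{seeds})$ and the fixed model. Its conditional law is therefore a function of the retained realization alone, and integrating over $\lambda$ gives the joint retained law of $((\mathcal B',\text{seeds}),(\widetilde{\boldsymbol h})')$. The ordering issue is settled by the remark that permuting pairs with their labels leaves the Gaussian law and the seed-coordinate map unchanged. Finally, the increment $\Delta\widetilde{\boldsymbol a}$, the updates \eqref{eq:fedzoons_acc_update}--\eqref{eq:fedzoons_model_update}, and the aggregation \eqref{eq:fedzoons_server} when $t=E$ are deterministic functions of the noisy coefficients, the seed correspondence, and the common pre-update state. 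Pushing the law forward through these maps yields the retained-data post-update law. On acceptance the noisy vector is unchanged and the matched seed labels are the same, so $\Delta\widetilde{\boldsymbol a}$, and hence $\boldsymbol a_c^{r,t}$ and $\boldsymbol w_c^{r,t}$, are unchanged.
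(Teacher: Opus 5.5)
Your proposal is correct and takes essentially the same route as the paper. You reduce to the affected client, get the retained batch--seed law from Lemma~\ref{lem:ns_replacement_batch}, and get the retained Gaussian law of the coefficients from Lemma~\ref{lem:gaussian_hyperplane_coupling}, conditional on the coupled sampling variables; you then push forward through the deterministic accumulator and model updates, with unchanged seed-wise sums giving the acceptance claim. Your branch-mixing step, and the observation that the conditional law of $(\widetilde{\boldsymbol h})'$ depends on $\lambda$ only through $\boldsymbol h'_\lambda$, make rigorous what the paper compresses into ``integrating over these coupled sampling variables.''
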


\begin{proof}[Proof of Lemma~\ref{lem:fedzoulns_one_position}]
All unaffected clients can use exactly the same transition. For the affected client, condition on the coupled original and retained mini-batches and their seed assignments. In the matched ordering, the original noisy coefficient vector has law
\[
\mathcal N(\boldsymbol h,\sigma_\rho^2I_{bq}),
\]
while the retained-data step requires
\[
\mathcal N(\boldsymbol h',\sigma_\rho^2I_{bq}).
\]
Lemma~\ref{lem:gaussian_hyperplane_coupling} shows that the acceptance and reflection branches together produce exactly this second Gaussian law. Reordering the transported coefficients together with their sample and seed labels, if needed by the deterministic storage convention, does not change that law.

Lemma~\ref{lem:ns_replacement_batch} already gives the correct retained mini-batch and seed-assignment distribution. Integrating over these coupled sampling variables therefore yields the correct joint law of the saved local information. The accumulator and model are deterministic functions of the noisy coefficients and the current optimization variables, so their post-update law is also correct. On acceptance, the coefficient values are unchanged under the sample--seed correspondence; because the same values are summed for each seed, the seed-coordinate increment, and hence the model and accumulator endpoints, are unchanged as well.
\end{proof}

Let $\mathcal U_{\mathsf{NS}}^{[T_{ul}]}$ denote the complete state returned by \textsc{FedZOUL-NS} at $T_{ul}$ after processing the deletion, and let
$\mathcal U_{\mathsf{NS}}^{T_{ul}\to T}$ include completion of the horizon.

\begin{theorem}[Exact unlearning of FedZOUL-NS]
\label{thm:fedzoulns_exact}
For every valid single-sample deletion
$x^{-}\in\mathcal D_{c^{-}}$ arriving after local position $T_{ul}$,
\begin{equation}
\label{eq:fedzoulns_prefix_exact}
\mathcal U_{\mathsf{NS}}^{[T_{ul}]}
\left(
\mathcal O_{\mathsf{NS}}^{[T_{ul}]}(\mathcal D),
\{x^{-}\}
\right)
\overset{d}{=}
\mathcal O_{\mathsf{NS}}^{[T_{ul}]}(\mathcal D^{-}).
\end{equation}
Consequently,
\begin{equation}
\label{eq:fedzoulns_exact}
\mathcal U_{\mathsf{NS}}^{T_{ul}\to T}
\left(
\mathcal O_{\mathsf{NS}}^{[T_{ul}]}(\mathcal D),
\{x^{-}\}
\right)
\overset{d}{=}
\mathcal O_{\mathsf{NS}}^{[T]}(\mathcal D^{-}).
\end{equation}
Both equalities hold jointly over the reconstructed model, global and local
accumulators, sampled client sets, mini-batches, sample--seed
correspondences, noisy coefficient vectors, and every other maintained
state. The result holds for every $\sigma_\rho>0$.
\end{theorem}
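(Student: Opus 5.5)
The proof will be a chronological induction over local positions $\tau=0,1,\ldots,T_{ul}$, carrying a coupling between the original \textsc{FedZOO-NS} execution on $\mathcal D$ and the state produced by the scan of \textsc{FedZOUL-NS}. The invariant we maintain is the following. Let $\tau^\star$ be the first rejected position, or $\tau^\star=\infty$ if no rejection occurs. Then for every $\tau\le\min\{\tau^\star,T_{ul}\}$, the corrected saved state through position $\tau$ has law $\mathcal O_{\mathsf{NS}}^{[\tau]}(\mathcal D^{-})$. This state consists of the client sets, the coupled mini-batches and sample--seed correspondences, the noisy coefficient vectors, and the local and global accumulators. Moreover, the optimization variables (models and accumulators) coincide with those of the original run.

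The base case $\tau=0$ is trivial. Initialization and the seed pool are data-independent.

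For the inductive step at position $\tau(r,t)$, we condition on the common optimization variables before the step. The client set $\mathcal C^r$ is either inherited within the round or drawn with its data-independent law, so it is the same in both executions. Unaffected clients, and the affected client at a non-hit, use identical transitions. This is valid because Lemma~\ref{lem:ns_replacement_batch} shows the non-hit batch already has the retained law. At a hit, Lemma~\ref{lem:fedzoulns_one_position} gives exactly the retained one-step conditional law for the batch, the seeds, the noisy coefficients, and the post-update variables. On acceptance it also leaves the model and accumulator unchanged. This propagates the invariant on the acceptance event.

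Two further points are needed in the inductive step.

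\emph{Recorded values.} The algorithm uses the recorded $\widetilde{\boldsymbol h}$, which conditionally on the past is $\mathcal N(\boldsymbol h,\sigma_\rho^2I)$. This holds because the noise is fresh and independent of everything earlier. Fresh $U$ and the replacement $Y$ are drawn independently, as the lemmas require.

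\emph{Server aggregation.} When $t=E$, the aggregation is a deterministic function of the corrected local endpoints. Hence recomputing $\boldsymbol a^r$ preserves the law.

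Next we handle the case of a rejection at $\tau^\star\le T_{ul}$. The invariant holds just before $\tau^\star$. By Lemma~\ref{lem:fedzoulns_one_position}, the combined accept/reflect output at $\tau^\star$ has the correct retained conditional law. The algorithm then discards the original suffix. From $\mathsf{Next}(r^\star,t^\star)$ it regenerates positions $\tau^\star+1,\ldots,T_{ul}$ with the retained-data kernels and fresh randomness, keeping $\mathcal C^{r^\star}$ for the rest of the round. By the Markov property (the chain rule for kernels), the state at $T_{ul}$ then has law $\mathcal O_{\mathsf{NS}}^{[T_{ul}]}(\mathcal D^{-})$.

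The main obstacle sits exactly here. The branch ``reject at $\tau^\star$'' is selected by the coupling itself. So we must argue on the \emph{joint} law rather than conditionally on rejection; as noted after Lemma~\ref{lem:gaussian_hyperplane_coupling}, the reflected value alone is not $Q$-distributed. We resolve this by viewing the whole procedure as sequential composition of one-step couplings. At each position the output kernel, marginalised over the accept/reflect choice, is the retained kernel. Regeneration after a rejection uses the retained kernel directly, and the retained kernel equals the law of the coupled continuation. Therefore the mixture over stopping times $\tau^\star$ telescopes to the product of retained kernels.

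With no rejection, the invariant at $T_{ul}$ gives \eqref{eq:fedzoulns_prefix_exact} directly. In both cases we then remove $x^-$ and continue with retained kernels to $T$. Since the law at $T_{ul}$ is correct and subsequent kernels are the retained ones, this yields \eqref{eq:fedzoulns_exact}. The argument uses only $\sigma_\rho>0$ (so that the Gaussian densities and the reflection are well defined), consistent with the claim.
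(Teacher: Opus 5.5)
Your proposal is correct and follows essentially the same route as the paper. Both make a chronological pass in which Lemma~\ref{lem:fedzoulns_one_position} supplies the exact retained-data transition at each hit, leaving the model and accumulators untouched on acceptance. A rejection is followed by fresh retained-data regeneration from $\mathsf{Next}(r^\star,t^\star)$, keeping the round's client set and recomputing the aggregation when $t^\star=E$. The returned state therefore follows the retained kernels in order, and continuation to $T$ gives the second identity.

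Your explicit insistence on arguing on the joint law is worthwhile. The point is that the next corrected step has the retained conditional law given the corrected past on either branch, so you never condition on the random rejection time. This makes precise what the paper compresses into ``applying this argument successively.'' The only slip is that the optimization variables coincide with the original run for $\tau<\tau^\star$, not at $\tau^\star$ itself.
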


\begin{proof}[Proof of Theorem~\ref{thm:fedzoulns_exact}]
Condition on the shared seed pool and process the executed training positions in chronological order. Before the first rejection, the two executions have the same optimization variables. At the next position, the original mini-batch, seed assignments, and Gaussian noise have their usual fresh distributions. Lemma~\ref{lem:fedzoulns_one_position} then turns this step into the correct retained-data transition while preserving the current model and accumulator whenever the rejection test accepts.

If the test accepts, only the saved sample identity may change: the noisy coefficient values are kept under the sample--seed correspondence, so the induced update is unchanged. If the first rejection occurs at $(r^\star,t^\star)$, the reflected coefficients give the correct retained-data update at that position. From $\mathsf{Next}(r^\star,t^\star)$ onward, the algorithm generates fresh randomness on the retained data. The already sampled client set is kept for the rest of the current round, as it would be in the original training procedure, and later rounds draw fresh client sets. When $t^\star=E$, the server first recomputes the round aggregation from the corrected local endpoints, exactly as specified in Algorithm~\ref{algo:FedZOULNS}.

Applying this argument successively over the positions up to $T_{ul}$ shows that the state returned after the deletion follows the same transition kernels, in the same order, as a fresh \textsc{FedZOO-NS} execution on $\mathcal D^{-}$. Hence it has law $\mathcal O_{\mathsf{NS}}^{[T_{ul}]}(\mathcal D^{-})$. Continuing with the same retained-data kernels proves Eq.~\eqref{eq:fedzoulns_exact}. Finally, averaging over the independently generated seed pool removes the conditioning.
\end{proof}

Define
\begin{equation}
\label{eq:fedzoulns_single_hit_rejection}
\vartheta_\rho
=
2\Phi\left(
\frac{2C/(b\sqrt q)}{2\sigma_\rho}
\right)-1.
\end{equation}
This bounds the rejection probability conditional on a common
pre-update model, coupled sampling variables, and a hit, before the current
Gaussian noise and acceptance uniform are exposed. It is not a bound
conditional on the realized noisy coefficient vector.

\begin{proposition}[Probability of replay]
\label{prop:fedzoulns_replay}
Let $\mathsf{Replay}_{\mathsf{NS}}$ denote the event that at least one
verification test rejects. A rejection at $T_{ul}$ is included even though there is no later position to rerun: the endpoint at that position must still be corrected.
For a deletion arriving at $T_{ul}$,
\begin{align}
\mathbb P(\mathsf{Replay}_{\mathsf{NS}})
&\le
\Gamma_{\mathsf{NS}}(T_{ul},c^{-})
\notag\\
&:=
\min\left\{
1,
\frac{T_{ul}mb}{Mn_{c^{-}}}\vartheta_\rho
\right\}
\label{eq:fedzoulns_replay_general}
\\
&\le
\min\left\{
1,
\frac{2CT_{ul}m}
{\sqrt{2\pi}Mn_{c^{-}}\sqrt q\,\sigma_\rho}
\right\}.
\label{eq:fedzoulns_replay_sensitivity}
\end{align}
Under~\eqref{eq:fedzoons_sigma_calibration},
\begin{align}
\mathbb P(\mathsf{Replay}_{\mathsf{NS}})
&\le
\min\left\{
1,
\frac{\rho T_{ul}}{\sqrt{2\pi T}}
\sqrt{\frac mM}
\right\}
\notag\\
&\le
\min\left\{
1,
\rho\sqrt{\frac{Tm}{2\pi M}}
\right\}.
\label{eq:fedzoulns_replay_calibrated}
\end{align}
\end{proposition}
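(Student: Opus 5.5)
The plan is to bound $\mathbb P(\mathsf{Replay}_{\mathsf{NS}})$ by a union bound over the executed positions $\tau=1,\dots,T_{ul}$. We use the chronological processing of Theorem~\ref{thm:fedzoulns_exact}: before the first rejection, the scanned original trajectory is exactly the realized \textsc{FedZOO-NS} run on $\mathcal D$. So for each position we bound the probability that the rejection test fires there, given that no earlier rejection occurred. This is at most the unconditional probability that, at that position of the original run, the event $\mathcal H_{r,t}(x^-)$ occurs and the test rejects. Summing these per-position probabilities over $\tau\le T_{ul}$ bounds the probability of at least one rejection, and clipping at one gives the $\min\{1,\cdot\}$.

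Next we bound the per-position probability. Fix $(r,t)$ and the history before it. Client participation is data-independent, so $\mathbb P(c^-\in\mathcal C^r)=m/M$. Given participation, the mini-batch is uniform, so $\mathbb P(x^-\in\mathcal B_{c^-}^{r,t})=b/n_{c^-}$, exactly as in Eq.~\eqref{eq:fedzoos_hit_probability}. Now condition further on a hit and on the coupled sampling variables $\lambda$ of Lemma~\ref{lem:ns_replacement_batch}. The recorded noisy vector is then a fresh draw from $\mathcal N(\boldsymbol h_\lambda,\sigma_\rho^2I_{bq})$, independent of the acceptance uniform. Lemma~\ref{lem:gaussian_hyperplane_coupling} therefore gives a rejection probability of $2\Phi(\|\boldsymbol h_\lambda-\boldsymbol h'_\lambda\|_2/(2\sigma_\rho))-1$. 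The sensitivity bound \eqref{eq:ns_scalar_sensitivity} and monotonicity of $\Phi$ bound this by $\vartheta_\rho$.

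Multiplying the three factors gives a per-position bound of $(mb/(Mn_{c^-}))\vartheta_\rho$. Averaging over histories and summing over $T_{ul}$ positions yields \eqref{eq:fedzoulns_replay_general}.

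The remaining inequalities are calculations.
\begin{itemize}
\item Applying $\Phi(u)-1/2\le u/\sqrt{2\pi}$ gives $\vartheta_\rho\le 2C/(\sqrt{2\pi}\,b\sqrt q\,\sigma_\rho)$. Substituting this gives \eqref{eq:fedzoulns_replay_sensitivity}, since the $b$ cancels.
\item Under \eqref{eq:fedzoons_sigma_calibration} we have $\sigma_\rho\ge \frac{2C}{\underline n\sqrt q\rho}\sqrt{Tm/M}$, and $n_{c^-}\ge\underline n$. Substituting gives $\frac{2CT_{ul}m}{\sqrt{2\pi}Mn_{c^-}\sqrt q\sigma_\rho}\le \frac{\rho T_{ul}}{\sqrt{2\pi}}\frac{m}{M}\sqrt{M/(Tm)}=\frac{\rho T_{ul}}{\sqrt{2\pi T}}\sqrt{m/M}$.
\item Using $T_{ul}\le T$ gives the final form.
\end{itemize}

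The main subtlety is the first step: justifying that the per-position rejection probability can be computed under the original training law, even though the unlearning scan stops at the first rejection. We must also check that the Gaussian noise at a hit is still "unexposed" relative to the conditioning. Here we would stress that the event "some test rejects" equals the union over positions of "reject at $\tau$ and no earlier rejection". Each of these is contained in an event determined by the original run's randomness at $\tau$ together with a fresh uniform. The original run's noise at $\tau$ is independent of everything before it, so the conditional Gaussian computation from Lemma~\ref{lem:gaussian_hyperplane_coupling} applies. The non-independence of hits within a round (shared client set) is harmless, since only a union bound is used.
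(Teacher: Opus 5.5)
Your proposal is correct and follows essentially the same route as the paper. Both drop the no-earlier-rejection indicator and take a union bound over the first $T_{ul}$ positions. The per-position factor $(m/M)(b/n_{c^-})\vartheta_\rho$ comes from participation, the uniform mini-batch hit probability, and the rejection probability of Lemma~\ref{lem:gaussian_hyperplane_coupling} combined with \eqref{eq:ns_scalar_sensitivity}, and the same substitutions then give the remaining bounds.

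One small precision point: for $t\ge 2$ the history before $(r,t)$ already contains $\mathcal C^r$. So $m/M$ should enter only by averaging $\mathbf 1\{c^-\in\mathcal C^r\}$ after conditioning, as the paper does, which your ``averaging over histories'' already does implicitly.
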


\begin{proof}[Proof of Proposition~\ref{prop:fedzoulns_replay}]
Use the causal joint construction in the proof of
Theorem~\ref{thm:fedzoulns_exact}. Given the paired past before a local
position and its current client set, the original batch and noise have
the prescribed fresh laws. If no earlier rejection has occurred and the
affected client participates, the probability of a first rejection at this
position is at most $(b/n_{c^{-}})\vartheta_\rho$, by
Lemma~\ref{lem:gaussian_hyperplane_coupling} and
Eq.~\eqref{eq:ns_scalar_sensitivity}. Otherwise, a first rejection at this
position is impossible. Dropping the no-earlier-rejection indicator and
averaging over client participation bounds this probability by
$(m/M)(b/n_{c^{-}})\vartheta_\rho$. Summing over the first $T_{ul}$ positions
proves Eq.~\eqref{eq:fedzoulns_replay_general}. This calculation uses the marginal law at each position and does not condition on a particular saved training state. Equation
\eqref{eq:gaussian_coupling_rejection} and
$\|\boldsymbol h-\boldsymbol h'\|_2\le 2C/(b\sqrt q)$ prove
\eqref{eq:fedzoulns_replay_sensitivity}. Substituting
\eqref{eq:fedzoons_sigma_calibration}, using
$n_{c^{-}}\ge\underline n$, and then $T_{ul}\le T$ gives
\eqref{eq:fedzoulns_replay_calibrated}.
\end{proof}

\begin{corollary}[Sequential deletion requests for FedZOUL-NS]
\label{cor:fedzoulns_sequential}
Consider the valid non-adaptive single-sample request stream in
Eq.~\eqref{eq:nonadaptive_request_stream}. After every request $j$, the
state returned at $T_{ul}^{(j)}$ has law
$\mathcal O_{\mathsf{NS}}^{[T_{ul}^{(j)}]}(\mathcal D^{(j+1)})$.
In $\Gamma_{\mathsf{NS}}(T_{ul}^{(j)},c_j^{-})$, evaluate the affected
client's pre-request data size as $n_j=|\mathcal D_{c_j^{-}}^{(j)}|$.
If $N_{\mathsf{NS}}^{\mathrm{rep}}$ counts requests for which the rejection test fails, including a rejection at the request endpoint, then
\begin{align}
\mathbb E[N_{\mathsf{NS}}^{\mathrm{rep}}]
&\le
\sum_{j=1}^{k}
\Gamma_{\mathsf{NS}}(T_{ul}^{(j)},c_j^{-}),
\label{eq:fedzoulns_expected_replays}
\\
\mathbb P(N_{\mathsf{NS}}^{\mathrm{rep}}\ge1)
&\le
\min\left\{
1,
\sum_{j=1}^{k}
\Gamma_{\mathsf{NS}}(T_{ul}^{(j)},c_j^{-})
\right\}.
\label{eq:fedzoulns_any_replay}
\end{align}
Under~\eqref{eq:stream_size_lower_bound} and
\eqref{eq:fedzoons_sigma_calibration}, define
\begin{equation}
\overline\Gamma_{\mathsf{NS}}
=
\min\left\{
1,
\rho\sqrt{\frac{Tm}{2\pi M}}
\right\}.
\end{equation}
Then the two bounds are at most
$k\overline\Gamma_{\mathsf{NS}}$ and
$\min\{1,k\overline\Gamma_{\mathsf{NS}}\}$, respectively.
\end{corollary}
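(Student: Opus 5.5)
The plan mirrors the proof of Corollary~\ref{cor:fedzouls_sequential}, with Theorem~\ref{thm:fedzoulns_exact} and Proposition~\ref{prop:fedzoulns_replay} replacing their \textsc{FedZOO-S} counterparts. Fix the non-adaptive request stream and the seed pool. For each request $j$, let $X_j^{-}$ and $X_j^{+}$ be the complete maintained states immediately before and after processing request $j$, both at position $T_{ul}^{(j)}$. By induction on $j$ we prove that $X_j^{-}\overset d=\mathcal O_{\mathsf{NS}}^{[T_{ul}^{(j)}]}(\mathcal D^{(j)})$ and $X_j^{+}\overset d=\mathcal O_{\mathsf{NS}}^{[T_{ul}^{(j)}]}(\mathcal D^{(j+1)})$.

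The base case $X_1^{-}$ is simply the initial \textsc{FedZOO-NS} run on $\mathcal D^{(1)}=\mathcal D$. For the inductive step, Theorem~\ref{thm:fedzoulns_exact} applies to the fixed deletion $x_j^{-}$ with fresh unlearning randomness (acceptance uniforms, replacement samples, and retained-data randomness after a rejection). Its hypothesis is only that the input state has law $\mathcal O_{\mathsf{NS}}^{[T_{ul}^{(j)}]}(\mathcal D^{(j)})$, and its conclusion is exactly the claimed law of $X_j^{+}$. We then run the retained-data \textsc{FedZOO-NS} kernels on $\mathcal D^{(j+1)}$ with fresh randomness from $\mathsf{Next}(T_{ul}^{(j)})$ to $T_{ul}^{(j+1)}$; this yields the law of $X_{j+1}^{-}$. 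Equal arrival times need no intervening step. Non-adaptivity is essential here: it guarantees that $x_j^{-}$ and $T_{ul}^{(j)}$ are independent of the realized state, so the single-request theorem can be applied after integrating over that state rather than conditionally on it. The validity condition~\eqref{eq:stream_size_lower_bound} ensures $|\mathcal D^{(j+1)}_c|\ge b$, so every retained-data kernel is well defined.

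For the replay counts, let $I_j$ indicate that request $j$ incurs at least one rejection, including one at the endpoint. The rejection event of request $j$ is a functional of $X_j^{-}$ together with fresh unlearning randomness. The bound in Proposition~\ref{prop:fedzoulns_replay} was proved from marginal per-position laws of a \textsc{FedZOO-NS} run on the pre-deletion dataset, and $X_j^{-}$ has exactly that law on $\mathcal D^{(j)}$. Hence $\mathbb E[I_j]\le\Gamma_{\mathsf{NS}}(T_{ul}^{(j)},c_j^{-})$ with $n_j$ as the affected client's size. Linearity of expectation then gives~\eqref{eq:fedzoulns_expected_replays}, and a union bound together with $\mathbb P\le1$ gives~\eqref{eq:fedzoulns_any_replay}; neither step needs independence of the $I_j$. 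For the calibrated bounds, $n_j\ge\underline n$ and $T_{ul}^{(j)}\le T$ reduce each term to $\overline\Gamma_{\mathsf{NS}}$ via~\eqref{eq:fedzoulns_replay_calibrated}. All these bounds are uniform in the seed pool, so averaging over the pool preserves them.

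The main obstacle is the subtle point in the replay step. The earlier requests' unlearning has altered the saved state, replacing mini-batches and reflecting coefficients, so request $j$ does not act on a state produced by one uninterrupted training run. The argument therefore must use only the marginal law of $X_j^{-}$, and must note that the proof of Proposition~\ref{prop:fedzoulns_replay} depends on the input state only through that law. It must also note that the rejection at each position is determined by the original batch and noise, which under this law have their fresh conditional distributions given the past. I would state this explicitly, as in the \textsc{FedZOO-S} corollary, so that the bounds are not mistaken for conditional bounds given earlier outcomes.
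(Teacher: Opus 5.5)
Your proposal is correct and follows the paper's proof essentially step for step. It uses the same induction on the marginal laws of $X_j^{-}$ and $X_j^{+}$, with Theorem~\ref{thm:fedzoulns_exact} as the single-request transition and fresh retained-data kernels between arrivals. It then applies Proposition~\ref{prop:fedzoulns_replay} to the pre-request marginal and uses linearity of expectation, a union bound, and the calibrated reduction via $n_j\ge\underline n$ and $T_{ul}^{(j)}\le T$. Your caveat that the replay bound depends on the input state only through its marginal law, not conditionally on earlier acceptance outcomes, is the same one the paper states explicitly.
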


\begin{proof}[Proof of Corollary~\ref{cor:fedzoulns_sequential}]
Fix the external request stream and the perturbation pool. Use the same
induction as in Corollary~\ref{cor:fedzouls_sequential}, with
Theorem~\ref{thm:fedzoulns_exact} providing the single-request transition. If $X_j^{-}$ and
$X_j^{+}$ denote the complete states immediately before and after processing request $j$,
then
\begin{align}
X_j^{-}&\overset d=
\mathcal O_{\mathsf{NS}}^{[T_{ul}^{(j)}]}(\mathcal D^{(j)}),
\notag\\
X_j^{+}&\overset d=
\mathcal O_{\mathsf{NS}}^{[T_{ul}^{(j)}]}(\mathcal D^{(j+1)}).
\label{eq:fedzoulns_stream_state_laws}
\end{align}
Fresh retained-data training between successive arrival positions preserves
these marginal laws. Each request uses fresh replacement draws and acceptance uniforms. The induction is on the marginal law of the current state; it does not condition on the realized saved trajectory or on earlier acceptance outcomes.

Let $I_j$ indicate the first-rejection event for request $j$.
Proposition~\ref{prop:fedzoulns_replay}, applied to the pre-request marginal,
gives
\[
\mathbb E[I_j]=\mathbb P(I_j=1)
\le\Gamma_{\mathsf{NS}}(T_{ul}^{(j)},c_j^{-}).
\]
Linearity of expectation yields
Eq.~\eqref{eq:fedzoulns_expected_replays}, and a union bound yields
Eq.~\eqref{eq:fedzoulns_any_replay}. The indicators need not be independent. Averaging over the perturbation pool preserves the
bounds, and the calibrated claims follow from
Eq.~\eqref{eq:fedzoulns_replay_calibrated}.
\end{proof}

It is useful to distinguish the final-model stability guarantee from the
probability that training must be rerun. The former is bounded by $\rho$.
The latter comes from the sequential coupling of local updates and is bounded
by a quantity of order $\rho\sqrt{Tm/M}$, capped at one. The two need not
coincide because \textsc{FedZOUL-NS} couples the training trajectory one local
transition at a time rather than maximally coupling the final-model laws. The
resulting replay bound is nevertheless no larger than the corresponding
subsampling bound because $\vartheta_\rho\le1$.

Exactness tells us which state must be produced after a deletion. The next
question is how much of the computation that has already been performed must be
done again. We count only this additional work up to the request position;
ordinary training after the request is not included. For \textsc{FedZOUL-S},
rerunning starts from the first position that used the deleted sample. For
\textsc{FedZOUL-NS}, affected updates are first checked through the Gaussian
coupling, and later training is rerun only if one of these checks rejects.

\subsection{Unlearning Cost}
\label{sec:expected_unlearning_cost}

We count loss evaluations, model reconstruction, and communication
separately. A loss evaluation is one call to $\ell(\boldsymbol w;x)$ for
one sample. One scalar central difference therefore uses two evaluations, so rerunning one
local position across all $m$ participating clients uses $2mbq$ loss
evaluations. Rerunning all $T_{ul}$ executed local positions uses
\begin{equation}
\label{eq:full_prefix_query_cost}
Q_{\mathrm{full}}(T_{ul})=2mbqT_{ul}.
\end{equation}
These are aggregate oracle counts. They do not include restoring models, looking up saved coefficients, drawing replacement samples, or transferring checkpoints.

For \textsc{FedZOUL-NS}, checking an affected position does not require
recomputing all $bq$ coefficients. Under the sample--seed correspondence, only
the $q$ pairs attached to the deleted sample can change. Let $J$ denote these
coordinates and write $\boldsymbol d=\boldsymbol h'-\boldsymbol h$. Then
$\boldsymbol d$ is supported on $J$, and for the saved noisy vector
$\widetilde{\boldsymbol h}$ with
$\boldsymbol m=(\boldsymbol h+\boldsymbol h')/2$,
\begin{equation}
\label{eq:sparse_likelihood_ratio}
\log\frac{q(\widetilde{\boldsymbol h})}{p(\widetilde{\boldsymbol h})}
=
\frac{\langle
\widetilde{\boldsymbol h}_{J}-\boldsymbol m_{J},
\boldsymbol d_{J}\rangle}{\sigma_\rho^2}.
\end{equation}
The reflection direction is supported on the same coordinates. Hence the
acceptance test and, when needed, the reflection use only the $q$ original and
$q$ replacement central differences, for at most $4q$ loss evaluations and
$O(q)$ coefficient operations once the pre-update model and saved coefficients
are available. The corresponding correction to the local accumulator is
obtained by summing the coefficient changes by seed; unchanged retained pairs
need no new loss evaluation. A literal implementation that recomputes both full
coefficient vectors uses up to $4bq$ evaluations instead, but the two
implementations are algebraically equivalent.

Let $\tau_{\mathsf S}^{\star}$ be the first executed position at which the deleted sample is used, with
$\tau_{\mathsf S}^{\star}=\infty$ if no such position exists. Define
\begin{equation}
\label{eq:fedzouls_replayed_positions}
L_{\mathsf S}
=
(T_{ul}-\tau_{\mathsf S}^{\star}+1)
\mathbf1\{\tau_{\mathsf S}^{\star}\le T_{ul}\}.
\end{equation}
For \textsc{FedZOUL-NS}, let $\tau_{\mathsf{NS}}^{\star}$ be the first
rejected affected position and set
\begin{equation}
\label{eq:fedzoulns_replayed_positions}
L_{\mathsf{NS}}
=
(T_{ul}-\tau_{\mathsf{NS}}^{\star})
\mathbf1\{\tau_{\mathsf{NS}}^{\star}\le T_{ul}\},
\end{equation}
because reflection already produces the correct retained-data update at the rejected position, so replay begins at
its successor. Both lengths are defined to be zero when their trigger is
absent. Let $N_{\mathrm{ver}}$ count the affected local updates actually checked before the procedure stops, including the first rejected update when one exists. A rejection at the last executed position can trigger the NS correction even though no later position needs to be rerun. Proposition~\ref{prop:fedzoulns_replay} bounds the probability of this event.

\begin{theorem}[Expected zeroth-order cost of one deletion]
\label{thm:expected_unlearning_queries}
For a fixed single-sample deletion at $1\le T_{ul}\le T$, using the sparse check above,
\begin{align}
\mathbb E[L_{\mathsf S}]
&\le
\sum_{u=1}^{T_{ul}}
\Lambda_{\mathsf S}(u,c^{-})
\notag\\
&\le
\min\left\{
T_{ul},
\frac{\frac{mbT_{ul}}{Mn_{c^{-}}}(T_{ul}+1)}{2}
\right\},
\label{eq:expected_s_replayed_positions}
\end{align}
and the aggregate loss-evaluation cost of \textsc{FedZOUL-S} satisfies
\begin{align}
\mathbb E[Q_{\mathsf S}]
&=
2mbq\,\mathbb E[L_{\mathsf S}]
\notag\\
&\le
Q_{\mathrm{full}}(T_{ul})
\Lambda_{\mathsf S}(T_{ul},c^{-}).
\label{eq:expected_s_query_cost}
\end{align}
Under~\eqref{eq:fedzoos_subsampling_calibration},
\begin{equation}
\label{eq:expected_s_query_fraction}
\mathbb E[Q_{\mathsf S}]
\le
\rho Q_{\mathrm{full}}(T_{ul}).
\end{equation}

For \textsc{FedZOUL-NS},
\begin{align}
\mathbb E[N_{\mathrm{ver}}]
&\le
\frac{mbT_{ul}}{Mn_{c^{-}}},
\label{eq:expected_ns_verifications}
\\
\mathbb E[L_{\mathsf{NS}}]
&\le
\sum_{u=1}^{T_{ul}-1}
\Gamma_{\mathsf{NS}}(u,c^{-})
\notag\\
&\le
\min\left\{
T_{ul}-1,
\frac{\frac{mb\vartheta_\rho}{Mn_{c^{-}}}T_{ul}(T_{ul}-1)}{2}
\right\}.
\label{eq:expected_ns_replayed_positions}
\end{align}
Its aggregate loss-evaluation cost satisfies
\begin{align}
\mathbb E[Q_{\mathsf{NS}}]
&\le
4q\,\frac{mbT_{ul}}{Mn_{c^{-}}}
+
2mbq
\sum_{u=1}^{T_{ul}-1}
\Gamma_{\mathsf{NS}}(u,c^{-})
\label{eq:expected_ns_query_cost}
\\
&\le
Q_{\mathrm{full}}(T_{ul})
\left[
\frac{2}{Mn_{c^{-}}}
+
\Gamma_{\mathsf{NS}}(T_{ul},c^{-})
\right].
\label{eq:expected_ns_query_fraction_general}
\end{align}
Under~\eqref{eq:fedzoons_sigma_calibration},
\begin{equation}
\frac{\mathbb E[Q_{\mathsf{NS}}]}
{Q_{\mathrm{full}}(T_{ul})}
\le
\frac{2}{Mn_{c^{-}}}
+
\min\left\{
1,
\frac{\rho T_{ul}}{\sqrt{2\pi T}}
\sqrt{\frac mM}
\right\}.
\label{eq:expected_ns_query_fraction_calibrated}
\end{equation}
Finally, when comparing one request under the same realized
client and mini-batch sampling schedule,
\begin{equation}
\label{eq:ns_no_more_replay_than_s}
L_{\mathsf{NS}}
\le
L_{\mathsf S}
\qquad\text{pathwise}.
\end{equation}
This compares replay lengths, not total query counts or running times.
For $T_{ul}=0$, all query and replay counts are zero; the ratios above are
used only for $T_{ul}>0$.
\end{theorem}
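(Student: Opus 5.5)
The plan is to handle the S and NS parts separately, and in each case to write the replay length as a sum of tail indicators. For \textsc{FedZOUL-S}, note that $L_{\mathsf S}=\sum_{u=1}^{T_{ul}}\mathbf 1\{\tau_{\mathsf S}^\star\le u\}$. Indeed, position $p\in[\tau^\star,T_{ul}]$ is counted exactly once by reindexing $u=T_{ul}-p+1$, so equivalently
\[
L_{\mathsf S}=\sum_{u=1}^{T_{ul}}\mathbf 1\{\tau_{\mathsf S}^\star\le T_{ul}-u+1\}.
\]
Taking expectations, we use the per-position hit probability $(m/M)(b/n_{c^-})$ from Eq.~\eqref{eq:fedzoos_hit_probability} together with the union bound \eqref{eq:fedzoos_state_disagreement}. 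This gives $\mathbb P(\tau_{\mathsf S}^\star\le v)\le\Lambda_{\mathsf S}(v,c^-)$, which yields the first inequality of \eqref{eq:expected_s_replayed_positions}. Summing the linear bound $umb/(Mn_{c^-})$ gives $\frac{mb}{Mn_{c^-}}T_{ul}(T_{ul}+1)/2$, and the trivial bound $L_{\mathsf S}\le T_{ul}$ gives the minimum. For the query cost, each replayed position costs $2mbq$ evaluations, so $\mathbb E[Q_{\mathsf S}]=2mbq\,\mathbb E[L_{\mathsf S}]$. Since $\Lambda_{\mathsf S}(u,c^-)\le\Lambda_{\mathsf S}(T_{ul},c^-)$ for $u\le T_{ul}$, the sum is at most $T_{ul}\Lambda_{\mathsf S}(T_{ul},c^-)$, giving \eqref{eq:expected_s_query_cost}. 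The calibration \eqref{eq:fedzoos_subsampling_calibration} with $T_{ul}\le T$ and $n_{c^-}\ge\underline n$ then gives \eqref{eq:expected_s_query_fraction}.

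For \textsc{FedZOUL-NS}, we bound the number of checks by the total number of hits. The quantity $N_{\mathrm{ver}}$ is at most the number of positions $u\le T_{ul}$ at which $c^-$ participates and $x^-$ is in its batch. The causal construction in the proof of Theorem~\ref{thm:fedzoulns_exact} gives fresh batch laws at each position before stopping. Hence each position contributes at most $(m/M)(b/n_{c^-})$ in expectation, and linearity gives \eqref{eq:expected_ns_verifications}.

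For the replay length we write
\[
L_{\mathsf{NS}}=\sum_{v=1}^{T_{ul}-1}\mathbf 1\{\tau_{\mathsf{NS}}^\star\le v\},
\]
which counts $T_{ul}-\tau^\star$ terms. By Proposition~\ref{prop:fedzoulns_replay} applied at horizon $v$, the first-rejection probability by position $v$ is at most $\Gamma_{\mathsf{NS}}(v,c^-)$; the proof of that proposition sums per-position bounds, so it applies to any prefix $v$. Summing the linear form $vmb\vartheta_\rho/(Mn_{c^-})$ gives $T_{ul}(T_{ul}-1)/2$ times that constant, and the cap $T_{ul}-1$ gives the minimum. For the cost, each check uses at most $4q$ evaluations by the sparse likelihood ratio \eqref{eq:sparse_likelihood_ratio}, and each replayed position uses $2mbq$. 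This gives \eqref{eq:expected_ns_query_cost}.

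To get the ratio form, we divide by $Q_{\mathrm{full}}=2mbqT_{ul}$. The first term becomes $2/(Mn_{c^-})$. The second is at most $(T_{ul}-1)\Gamma_{\mathsf{NS}}(T_{ul},c^-)/T_{ul}\le\Gamma_{\mathsf{NS}}(T_{ul},c^-)$, using monotonicity of $\Gamma$ in its first argument. The calibrated bound \eqref{eq:fedzoulns_replay_calibrated} then gives \eqref{eq:expected_ns_query_fraction_calibrated}.

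The main obstacle is the pathwise comparison \eqref{eq:ns_no_more_replay_than_s}. Under a common client and mini-batch schedule, a rejection in NS can only occur at a position where $x^-$ is in an affected batch. Before the first rejection, the NS trajectory coincides with the original trajectory, so its hits are exactly the original hits. Hence $\tau_{\mathsf{NS}}^\star\ge\tau_{\mathsf S}^\star$ whenever the former is finite. Then
\[
L_{\mathsf{NS}}=T_{ul}-\tau_{\mathsf{NS}}^\star\le T_{ul}-\tau_{\mathsf S}^\star<L_{\mathsf S},
\]
and if there is no rejection, $L_{\mathsf{NS}}=0\le L_{\mathsf S}$. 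The subtle points are making the phrase "same realized schedule" precise, namely that both procedures scan the same saved original trajectory, and checking that the NS replacement coupling does not alter hit detection before the first rejection.
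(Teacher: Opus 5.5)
Your proposal is correct and follows essentially the same route as the paper: tail-indicator representations of $L_{\mathsf S}$ and $L_{\mathsf{NS}}$, the union-bound hit probability and Proposition~\ref{prop:fedzoulns_replay} applied at each prefix, the pathwise accounting $Q_{\mathsf{NS}}\le 4qN_{\mathrm{ver}}+2mbqL_{\mathsf{NS}}$, and the observation that a rejection can only occur at a hit, so $\tau_{\mathsf{NS}}^\star\ge\tau_{\mathsf S}^\star$. The only cosmetic difference is in \eqref{eq:expected_s_query_cost}. You bound $\sum_{u\le T_{ul}}\Lambda_{\mathsf S}(u,c^-)$ by monotonicity, whereas the paper uses the pathwise bound $L_{\mathsf S}\le T_{ul}\mathbf 1\{\mathsf{Replay}_{\mathsf S}\}$; both work, and your extra reindexed form of $L_{\mathsf S}$ is valid but unnecessary.
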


\begin{proof}[Proof of Theorem~\ref{thm:expected_unlearning_queries}]
For \textsc{FedZOUL-S},
\begin{equation}
L_{\mathsf S}
=
\sum_{u=1}^{T_{ul}}
\mathbf1\{\tau_{\mathsf S}^{\star}\le u\}.
\end{equation}
Equation~\eqref{eq:fedzoos_state_disagreement} gives
$\mathbb P(\tau_{\mathsf S}^{\star}\le u)
\le\Lambda_{\mathsf S}(u,c^{-})$.
Taking expectations proves
\eqref{eq:expected_s_replayed_positions}. Also,
$L_{\mathsf S}\le T_{ul}\mathbf1\{\mathsf{Replay}_{\mathsf S}\}$,
which gives~\eqref{eq:expected_s_query_cost} and then
\eqref{eq:expected_s_query_fraction}.

For \textsc{FedZOUL-NS}, verification occurs only when the target client is
selected and the target sample appears in its mini-batch. The expected number
of such hits is $\frac{mbT_{ul}}{Mn_{c^{-}}}$, and stopping after the first rejection can
only reduce it. The sparse calculation above gives the verification
cost and gives the pathwise accounting inequality
\begin{equation}
Q_{\mathsf{NS}}
\le 4qN_{\mathrm{ver}}+2mbqL_{\mathsf{NS}}.
\label{eq:ns_pathwise_query_accounting}
\end{equation}
The reflected position requires no further loss evaluations. Moreover,
\begin{equation}
L_{\mathsf{NS}}
=
\sum_{u=1}^{T_{ul}-1}
\mathbf1\{\tau_{\mathsf{NS}}^{\star}\le u\}.
\end{equation}
Proposition~\ref{prop:fedzoulns_replay} gives
$\mathbb P(\tau_{\mathsf{NS}}^{\star}\le u)
\le\Gamma_{\mathsf{NS}}(u,c^{-})$.
This proves~\eqref{eq:expected_ns_replayed_positions} and the first query
bound. Monotonicity of $\Gamma_{\mathsf{NS}}$ gives the second. The bound under the chosen noise level follows from~\eqref{eq:fedzoulns_replay_calibrated}.

Under a common sampling schedule, a rejection can occur only at a position
where the deleted sample is used. Hence the first rejected position is no
earlier than the first affected position, and replay after reflection starts
one position later. This proves the pathwise inequality.
\end{proof}

For \textsc{FedZOUL-S}, the expected loss-evaluation cost is controlled by the
chance that the deleted sample appeared before the request. Under the chosen
stability level, this is at most a $\rho$ fraction of rerunning all executed
positions. \textsc{FedZOUL-NS} adds a small verification cost, but Gaussian
acceptance can avoid rerunning an affected update and everything that follows
it. Equation~\eqref{eq:ns_no_more_replay_than_s} compares only the number of
local positions rerun; model reconstruction and communication are considered
next.

Communication also depends on where the first hit or rejection occurs, because only round boundaries reached after that point need to be processed again. Server aggregation occurs at positions $E,2E,\ldots$. Hence the expected number of completed aggregations repeated by \textsc{FedZOUL-S} is at most
\begin{equation}
\label{eq:expected_s_replayed_rounds}
\sum_{r=1}^{\lfloor T_{ul}/E\rfloor}
\Lambda_{\mathsf S}(rE,c^{-}),
\end{equation}
and for \textsc{FedZOUL-NS} it is at most
\begin{equation}
\label{eq:expected_ns_replayed_rounds}
\sum_{r=1}^{\lfloor T_{ul}/E\rfloor}
\Gamma_{\mathsf{NS}}(rE,c^{-}).
\end{equation}
If the request arrives partway through a round, that last partial round is also visited whenever the first hit or rejection has already occurred. Thus the expected number of visited round blocks is bounded by
\begin{align}
&\sum_{r=1}^{\lfloor T_{ul}/E\rfloor}
\Lambda_{\mathsf S}(rE,c^{-})
+\mathbf1\{E\nmid T_{ul}\}\Lambda_{\mathsf S}(T_{ul},c^{-})
\label{eq:expected_s_visited_rounds}
\end{align}
for \textsc{FedZOUL-S}, and by
\begin{align}
&\sum_{r=1}^{\lfloor T_{ul}/E\rfloor}
\Gamma_{\mathsf{NS}}(rE,c^{-})
+\mathbf1\{E\nmid T_{ul}\}\Gamma_{\mathsf{NS}}(T_{ul},c^{-})
\label{eq:expected_ns_visited_rounds}
\end{align}
for \textsc{FedZOUL-NS}. These bounds use only the probability that the first hit or rejection has occurred by the corresponding boundary; no independence across rounds is required.

These round counts determine the communication needed for the $K$-dimensional accumulator. The seed pool is already shared, the server stores the global accumulator checkpoints, and each client stores its own local state. With separate downlinks to the $m$ clients, every visited round block needs at most one $K$-scalar global-accumulator download per client, and every completed aggregation that is repeated needs at most one $K$-scalar endpoint upload per client. Therefore, the expected additional accumulator communication of \textsc{FedZOUL-S} is at most
\begin{align}
&mK\left[
2\sum_{r=1}^{\lfloor T_{ul}/E\rfloor}
\Lambda_{\mathsf S}(rE,c^{-})
+\mathbf1\{E\nmid T_{ul}\}\Lambda_{\mathsf S}(T_{ul},c^{-})
\right].
\label{eq:expected_s_accumulator_traffic}
\end{align}
For \textsc{FedZOUL-NS}, the corresponding bound is
\begin{align}
&mK\left[
2\sum_{r=1}^{\lfloor T_{ul}/E\rfloor}
\Gamma_{\mathsf{NS}}(rE,c^{-})
+\mathbf1\{E\nmid T_{ul}\}\Gamma_{\mathsf{NS}}(T_{ul},c^{-})
\right]
+K\frac{mbT_{ul}}{Mn_{c^{-}}}.
\label{eq:expected_ns_accumulator_traffic}
\end{align}
The last term appears because checking an affected NS update may require the global accumulator that was current before that update. One such checkpoint is enough to reconstruct the corresponding model,
\begin{equation}
\boldsymbol w_c^{r,t-1}
=\boldsymbol w^0-\eta\sum_{k=1}^{K}
\bigl(a_k^{r-1}+a_{c,k}^{r,t-1}\bigr)\boldsymbol z_k,
\end{equation}
and the expected number of affected updates checked is at most $\frac{mbT_{ul}}{Mn_{c^{-}}}$. If a common downlink broadcast is counted once rather than once per recipient, the factors multiplying the completed-round sums become $(m+1)K$ instead of $2mK$, while the partial-round download contributes only $K$. Sample identifiers, control messages, and protocol headers are not included because their size depends on the implementation.

These bounds also clarify why replay length alone does not determine communication. A rejection at a completed final round can require correcting that aggregation even when there is no later local step to rerun. Conversely, NS may check several affected updates without rejecting any of them, so checkpoint downloads can be positive even when no training step is rerun.

\begin{corollary}[Cost over a request stream]
\label{cor:stream_unlearning_cost}
Consider the valid non-adaptive single-sample request stream in Eq.~\eqref{eq:nonadaptive_request_stream}. Let $T_{ul}^{(j)}$ be the executed horizon when request $j$ arrives and let $n_j$ be the affected client's data size immediately before that request. Then the cumulative expected loss-evaluation costs satisfy
\begin{align}
\mathbb E[Q_{\mathsf S}^{1:k}]
&\le
2mbq\sum_{j=1}^{k}\sum_{u=1}^{T_{ul}^{(j)}}
\min\left\{1,\frac{umb}{Mn_j}\right\},
\label{eq:stream_s_query_cost}
\\
\mathbb E[Q_{\mathsf{NS}}^{1:k}]
&\le
4q\sum_{j=1}^{k}
\frac{mbT_{ul}^{(j)}}{Mn_j}
\notag\\
&\quad+
2mbq\sum_{j=1}^{k}\sum_{u=1}^{T_{ul}^{(j)}-1}
\min\left\{1,\frac{umb\vartheta_\rho}{Mn_j}\right\}.
\label{eq:stream_ns_query_cost}
\end{align}
Empty sums are interpreted as zero.

With separate downlinks, the cumulative expected accumulator communication of \textsc{FedZOUL-S} is at most
\begin{align}
&mK\sum_{j=1}^{k}
\left[
2\sum_{r=1}^{\lfloor T_{ul}^{(j)}/E\rfloor}
\min\left\{1,\frac{rEmb}{Mn_j}\right\}
\right.
\notag\\[-0.2em]
&\hspace{7em}\left.
+\mathbf1\{E\nmid T_{ul}^{(j)}\}
\min\left\{1,\frac{T_{ul}^{(j)}mb}{Mn_j}\right\}
\right],
\label{eq:stream_s_accumulator_cost}
\end{align}
while for \textsc{FedZOUL-NS} it is at most
\begin{align}
&mK\sum_{j=1}^{k}
\left[
2\sum_{r=1}^{\lfloor T_{ul}^{(j)}/E\rfloor}
\min\left\{1,\frac{rEmb\vartheta_\rho}{Mn_j}\right\}
\right.
\notag\\[-0.2em]
&\hspace{7em}\left.
+\mathbf1\{E\nmid T_{ul}^{(j)}\}
\min\left\{1,\frac{T_{ul}^{(j)}mb\vartheta_\rho}{Mn_j}\right\}
\right]
\notag\\
&\qquad+
K\sum_{j=1}^{k}\frac{mbT_{ul}^{(j)}}{Mn_j}.
\label{eq:stream_ns_accumulator_cost}
\end{align}
The corresponding broadcast bounds replace the factor $2mK$ on each completed-round sum by $(m+1)K$ and the factor $mK$ on each partial-round term by $K$.
\end{corollary}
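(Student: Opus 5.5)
The plan is to reduce the stream bounds to the single-request bounds of Theorem~\ref{thm:expected_unlearning_queries} and the round-count bounds \eqref{eq:expected_s_accumulator_traffic}--\eqref{eq:expected_ns_accumulator_traffic}. The tool that makes this possible is the marginal-law induction already established in Corollaries~\ref{cor:fedzouls_sequential} and~\ref{cor:fedzoulns_sequential}. Fix the non-adaptive stream and the perturbation pool. By those corollaries, the complete saved state immediately before request $j$ has law $\mathcal O_{\bullet}^{[T_{ul}^{(j)}]}(\mathcal D^{(j)})$ for $\bullet\in\{\mathsf S,\mathsf{NS}\}$. That is exactly the law of a fresh training run on $\mathcal D^{(j)}$ up to position $T_{ul}^{(j)}$. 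Because the request is fixed independently of the realized randomness, the cost $Q_\bullet^{(j)}$ of processing request $j$ is a measurable functional of this pre-request state together with fresh unlearning randomness. Its expectation therefore equals the expected cost of a single deletion of $x_j^-$ from a fresh run on $\mathcal D^{(j)}$.

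With this in hand, I will apply Theorem~\ref{thm:expected_unlearning_queries} to each request, using dataset $\mathcal D^{(j)}$, size $n_j$, and horizon $T_{ul}^{(j)}$.
\begin{itemize}
\item For \textsc{FedZOUL-S}, $\mathbb E[Q_{\mathsf S}^{(j)}]=2mbq\,\mathbb E[L_{\mathsf S}^{(j)}]\le 2mbq\sum_{u=1}^{T_{ul}^{(j)}}\Lambda_{\mathsf S}(u,c_j^-)$, where $\Lambda_{\mathsf S}(u,c_j^-)=\min\{1,umb/(Mn_j)\}$.
\item For \textsc{FedZOUL-NS}, the pathwise accounting \eqref{eq:ns_pathwise_query_accounting} together with \eqref{eq:expected_ns_verifications} and \eqref{eq:expected_ns_replayed_positions} gives $4q\,mbT_{ul}^{(j)}/(Mn_j)+2mbq\sum_{u=1}^{T_{ul}^{(j)}-1}\Gamma_{\mathsf{NS}}(u,c_j^-)$, with $\Gamma_{\mathsf{NS}}(u,c_j^-)=\min\{1,umb\vartheta_\rho/(Mn_j)\}$.
\end{itemize}
Summing over $j$ and using linearity of expectation, $Q^{1:k}=\sum_j Q^{(j)}$, yields \eqref{eq:stream_s_query_cost} and \eqref{eq:stream_ns_query_cost}. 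No independence across requests is needed.

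The communication bounds follow the same route. Per request, I will invoke \eqref{eq:expected_s_accumulator_traffic} and \eqref{eq:expected_ns_accumulator_traffic}, again with $n_j$ and $T_{ul}^{(j)}$, and substitute the explicit forms of $\Lambda_{\mathsf S}(rE,c_j^-)$ and $\Gamma_{\mathsf{NS}}(rE,c_j^-)$. Summing over $j$ then gives \eqref{eq:stream_s_accumulator_cost} and \eqref{eq:stream_ns_accumulator_cost}. The broadcast variant follows by the same replacement of factors noted after \eqref{eq:expected_ns_accumulator_traffic}. When $T_{ul}^{(j)}=0$, all costs vanish, which matches the empty-sum convention.

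I expect the main obstacle to be justifying that the per-request bounds hold unconditionally even though the state before request $j$ depends on earlier replays. I will handle this the same way as in Corollary~\ref{cor:fedzouls_sequential}. The single-request bounds were derived from the marginal hit and rejection probabilities at each position, Eq.~\eqref{eq:fedzoos_state_disagreement} and Proposition~\ref{prop:fedzoulns_replay}, under the law of a fresh run. These probabilities depend only on the pre-request marginal law, so they transfer once that law is identified. Non-adaptivity is essential here, because it guarantees that $x_j^-$ and $T_{ul}^{(j)}$ are not correlated with the saved state. Finally, I will average over the perturbation pool; this preserves the bounds since they are uniform in $\boldsymbol Z$.
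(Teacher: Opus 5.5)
Your proposal is correct and follows the paper's proof: you identify the pre-request marginal law via Corollaries~\ref{cor:fedzouls_sequential} and~\ref{cor:fedzoulns_sequential}, apply the single-request loss-evaluation and accumulator-traffic bounds with $n_j$ and $T_{ul}^{(j)}$, and sum by linearity of expectation without needing independence across requests. You also state explicitly that each request's cost is a measurable function of the pre-request state together with independent fresh unlearning randomness, which is the justification the paper leaves implicit; the paper additionally notes that the bounds persist after averaging over a random stream chosen independently of the algorithmic randomness.
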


\begin{proof}[Proof of Corollary~\ref{cor:stream_unlearning_cost}]
Fix the external request stream. By Corollaries~\ref{cor:fedzouls_sequential} and~\ref{cor:fedzoulns_sequential}, immediately before each request the current state has the same marginal law as the corresponding learning algorithm run on the data then present. The single-request bounds above therefore apply to every request with its current horizon $T_{ul}^{(j)}$ and affected-client size $n_j$. Summing these bounds and using linearity of expectation gives Eqs.~\eqref{eq:stream_s_query_cost}--\eqref{eq:stream_ns_accumulator_cost}. No independence across requests is needed. If the request stream itself is random but chosen independently of the algorithmic randomness, the same bounds hold after averaging over the stream.
\end{proof}

Loss evaluations and communication capture the dominant costs, but processing a deletion also entails ordinary arithmetic. Reconstructing one model from the $K$ shared directions costs $O(Kd)$ operations in a direct implementation. Rerunning one local step costs $O(bqd+K)$ beyond the loss evaluations, a sparse NS check costs $O(qd+q)$, and one server aggregation costs $O(mK)$. Combining these per-operation costs with the expected numbers of checked updates, rerun positions, visited round blocks, and repeated aggregations above gives the corresponding expected arithmetic work. The chronological search for the first affected mini-batch can require up to $O(bT_{ul})$ comparisons with a direct scan; an inverted index can reduce lookup time at the price of maintaining that index whenever stored mini-batches change.

These are aggregate work bounds rather than elapsed-time guarantees. Under ideal parallel execution, the forward-evaluation component alone is $2bq$ evaluations per rerun S step and $4q$ evaluations per checked NS update plus $2bq$ evaluations per rerun NS step. Model reconstruction, saved-state access, communication, and synchronization remain additional costs. In particular, the TV parameter $\rho$ directly bounds the expected S query fraction under the chosen subsampling level, whereas for NS the final-model TV guarantee and the transitionwise replay probability are different quantities. Consequently, neither the replay-length comparison nor the oracle-cost fraction by itself implies that one method has lower total unlearning time than the other.

We next ask whether the same finite seed pool that makes communication and
unlearning efficient still permits meaningful optimization. The objective being
evaluated is unchanged, but every update lies in the span of the shared
perturbation directions. We therefore first express the actual training
trajectory in these seed coordinates, prove convergence in the resulting
subspace, and then determine when that guarantee also controls the full
model-space gradient.

\subsection{Fixed-Seed Parameterization}
\label{sec:fixed_seed_analysis}

Every update is a linear combination of the shared perturbation directions, so the entire trajectory stays in the affine space generated by the seed pool. It is convenient to use the accumulated seed coefficients themselves as optimization variables; this makes clear that the algorithm restricts the reachable model space, not the underlying federated objective.

Let
\begin{equation}
\label{eq:fixed_seed_matrix}
\boldsymbol Z
=
[\boldsymbol z_1,\ldots,\boldsymbol z_K]
\in\mathbb R^{d\times K}
\end{equation}
be the perturbation matrix generated by the shared seed pool
$\mathbb S=\{s_1,\ldots,s_K\}$, where
$\boldsymbol z_k=\mathsf{Perturb}(s_k)$.
Define the normalized seed matrix
\begin{equation}
\label{eq:normalized_seed_matrix}
\boldsymbol A
=
\frac{1}{\sqrt K}\boldsymbol Z.
\end{equation}
For the global and local accumulators in
Section~\ref{sec:seed_encoded_zoo}, define
\begin{align}
\boldsymbol u^r
&=
-\eta\sqrt K\,\boldsymbol a^r,
\label{eq:global_seed_parameter}
\\
\boldsymbol u_c^{r,t}
&=
\boldsymbol u^{r-1}
-\eta\sqrt K\,\boldsymbol a_c^{r,t}.
\label{eq:local_seed_parameter}
\end{align}
The reconstructed models are therefore
\begin{align}
\boldsymbol w^r
&=
\boldsymbol w^0+\boldsymbol A\boldsymbol u^r,
\notag\\
\boldsymbol w_c^{r,t}
&=
\boldsymbol w^0+\boldsymbol A\boldsymbol u_c^{r,t}.
\label{eq:seed_parameterized_models}
\end{align}
Thus, every model lies in
$\mathcal W_{\boldsymbol Z}=\boldsymbol w^0+
\operatorname{range}(\boldsymbol Z)$.

In these coordinates, the local and global objectives are simply
\begin{align}
F_{\boldsymbol Z,c}(\boldsymbol u)
&=
f_c(\boldsymbol w^0+\boldsymbol A\boldsymbol u),
\notag\\
F_{\boldsymbol Z}(\boldsymbol u)
&=
\frac1M\sum_{c=1}^{M}F_{\boldsymbol Z,c}(\boldsymbol u)
=
f(\boldsymbol w^0+\boldsymbol A\boldsymbol u).
\label{eq:seed_parameterized_objective}
\end{align}
Equation~\eqref{eq:seed_parameterized_objective} is exactly the original
federated objective restricted to the affine space
$\mathcal W_{\boldsymbol Z}$. In particular,
\begin{equation}
\label{eq:restricted_problem_equivalence}
\inf_{\boldsymbol u\in\mathbb R^K}
F_{\boldsymbol Z}(\boldsymbol u)
=
\inf_{\boldsymbol w\in\mathcal W_{\boldsymbol Z}}
f(\boldsymbol w).
\end{equation}
This fixed random-subspace interpretation is consistent with the finite-seed
representation used by FedKSeed~\cite{qin2024federated}.

By the chain rule,
\begin{align}
\nabla F_{\boldsymbol Z,c}(\boldsymbol u)
&=
\boldsymbol A^{\top}
\nabla f_c(\boldsymbol w^0+\boldsymbol A\boldsymbol u),
\notag\\
\nabla F_{\boldsymbol Z}(\boldsymbol u)
&=
\boldsymbol A^{\top}
\nabla f(\boldsymbol w^0+\boldsymbol A\boldsymbol u).
\label{eq:seed_parameterized_gradient}
\end{align}
If every $f_c$ is $L$-smooth in model space, then every
$F_{\boldsymbol Z,c}$ is $L_{\boldsymbol Z}$-smooth in parameter space,
where
\begin{equation}
\label{eq:seed_parameterized_smoothness}
L_{\boldsymbol Z}
=
L\|\boldsymbol A\|_{\mathrm{op}}^2
=
\frac{L}{K}\|\boldsymbol Z\|_{\mathrm{op}}^2.
\end{equation}

The gradient of the restricted objective has a direct full-space interpretation. At 
$\boldsymbol w=\boldsymbol w^0+\boldsymbol A\boldsymbol u$,
\begin{equation}
\label{eq:seed_gradient_energy}
\|\nabla F_{\boldsymbol Z}(\boldsymbol u)\|^2
=
\frac1K
\|\boldsymbol Z^{\top}\nabla f(\boldsymbol w)\|^2.
\end{equation}
Thus the convergence quantity below is simply the squared full gradient measured along the shared seed directions; no separate surrogate objective is introduced.

For $x\in\mathcal D_c$ and $k\in[K]$, define the fixed-seed central
difference
\begin{equation}
\label{eq:fixed_seed_central_difference}
\begin{aligned}
d_{\mu,k}(\boldsymbol u;x)
={}&\frac1{2\mu}\bigl[
\ell(\boldsymbol w^0+\boldsymbol A\boldsymbol u+\mu\boldsymbol z_k;x)
\\[-1mm]
&\qquad-
\ell(\boldsymbol w^0+\boldsymbol A\boldsymbol u-\mu\boldsymbol z_k;x)
\bigr].
\end{aligned}
\end{equation}
Because $\boldsymbol A\boldsymbol e_k=\boldsymbol z_k/\sqrt K$, a query
along $\boldsymbol z_k$ is exactly a coordinatewise central difference of the
parameterized original loss, with coordinate step $\mu\sqrt K$.

Let
\begin{align}
[\boldsymbol d_{\mu}(\boldsymbol u;x)]_k
&=
d_{\mu,k}(\boldsymbol u;x),
\notag\\
\boldsymbol d_{\mu,c}(\boldsymbol u)
&=
\frac1{n_c}
\sum_{x\in\mathcal D_c}
\boldsymbol d_{\mu}(\boldsymbol u;x).
\label{eq:fixed_seed_client_difference}
\end{align}
For FedZOO-NS, define the clipped counterparts by
\begin{align}
[\boldsymbol d_{\mu}^{C}(\boldsymbol u;x)]_k
&=
\operatorname{clip}_{C}
(d_{\mu,k}(\boldsymbol u;x)),
\notag\\
\boldsymbol d_{\mu,c}^{C}(\boldsymbol u)
&=
\frac1{n_c}
\sum_{x\in\mathcal D_c}
\boldsymbol d_{\mu}^{C}(\boldsymbol u;x).
\label{eq:fixed_seed_client_clipped_difference}
\end{align}

The accumulator update already gives the corresponding step in these coordinates. If
$\Delta\boldsymbol a_c^{r,t}=\boldsymbol a_c^{r,t}-\boldsymbol a_c^{r,t-1}$,
then
\begin{equation}
\boldsymbol u_c^{r,t}
=
\boldsymbol u_c^{r,t-1}
-
\eta\sqrt K\,\Delta\boldsymbol a_c^{r,t}.
\label{eq:parameter_space_local_recursions}
\end{equation}
For \textsc{FedZOO-S}, uniform mini-batch and seed sampling therefore gives
\begin{equation}
\mathbb E\!\left[
\sqrt K\,\Delta\boldsymbol a_c^{r,t}
\,\middle|\,\mathcal F_{r,t-1},c,\boldsymbol Z
\right]
=
\frac1{\sqrt K}
\boldsymbol d_{\mu,c}(\boldsymbol u_c^{r,t-1}),
\label{eq:scaled_clean_mean}
\end{equation}
where $\mathcal F_{r,t-1}$ contains the fixed seed pool, the current client set, and everything revealed before this local step. For \textsc{FedZOO-NS}, clipping changes the mean to
\begin{equation}
\mathbb E\!\left[
\sqrt K\,\Delta\boldsymbol a_c^{r,t}
\,\middle|\,\mathcal F_{r,t-1},c,\boldsymbol Z
\right]
=
\frac1{\sqrt K}
\boldsymbol d_{\mu,c}^{C}(\boldsymbol u_c^{r,t-1}).
\label{eq:scaled_clipped_mean}
\end{equation}
The added scalar Gaussian noises are independent and mean zero. After mapping them to seed coordinates, their contribution to the conditional second moment is $Kbq\sigma_\rho^2$; repeated use of the same seed coordinate does not create cross terms because the scalar noises are independent.

Define the finite-difference and clipping biases by
\begin{align}
\boldsymbol b_{\mu,c}(\boldsymbol u)
&=
\frac1{\sqrt K}
\boldsymbol d_{\mu,c}(\boldsymbol u)
-
\nabla F_{\boldsymbol Z,c}(\boldsymbol u),
\label{eq:scaled_fd_bias}
\\
\boldsymbol b_{C,c}(\boldsymbol u)
&=
\frac1{\sqrt K}
(\boldsymbol d_{\mu,c}^{C}(\boldsymbol u)
-
\boldsymbol d_{\mu,c}(\boldsymbol u)).
\label{eq:scaled_clipping_bias}
\end{align}
If every sample loss is $L$-smooth, then
\begin{equation}
\label{eq:scaled_fd_bias_bound}
\|\boldsymbol b_{\mu,c}(\boldsymbol u)\|^2
\le
\beta_{\mu,\boldsymbol Z}^2,
\qquad
\beta_{\mu,\boldsymbol Z}^2
=
\frac{L^2\mu^2}{4K}
\sum_{k=1}^{K}\|\boldsymbol z_k\|^4.
\end{equation}
Indeed, the error of the $k$th central difference is at most
$L\mu\|\boldsymbol z_k\|^2/2$.

With this fixed-seed parameterization in place, we next quantify the optimization progress of both learners on the restricted form of the original federated objective.

\subsection{Convergence in the Seed Subspace}
\label{sec:finite_seed_utility}

Because all model movement is confined to the span of the shared seeds, the quantity we can control directly is the full gradient projected onto this span. The main result below shows how this projected stationarity depends on the number of rounds, participating clients, local steps, finite-difference error, and client heterogeneity. Unless stated otherwise, probabilities and expectations in this subsection are conditional on $\boldsymbol Z$.

At one local step, the only sampling variability comes from choosing a mini-batch and then choosing seed coordinates for the selected examples. We can compute this variance directly, without introducing another estimator solely for the proof. If a \textsc{FedZOO-S} step starts from \(\boldsymbol u\), then
\begin{align}
&\mathbb E
\left\|
\sqrt K\,\Delta\boldsymbol a_c^{r,t}
-
\frac1{\sqrt K}\boldsymbol d_{\mu,c}(\boldsymbol u)
\right\|^2
\notag\\
&\quad=
\frac{K-q}{bqK}
\frac1{n_c}
\sum_{x\in\mathcal D_c}
\|\boldsymbol d_{\mu}(\boldsymbol u;x)\|^2
\notag\\
&\qquad+
\frac{n_c-b}{b(n_c-1)K}
\frac1{n_c}
\sum_{x\in\mathcal D_c}
\|\boldsymbol d_{\mu}(\boldsymbol u;x)
-
\boldsymbol d_{\mu,c}(\boldsymbol u)\|^2.
\label{eq:scaled_exact_sampling_variance}
\end{align}
The second term is zero when \(b=n_c\), with the ratio interpreted as zero when \(n_c=b=1\). For \textsc{FedZOO-NS}, the same calculation applied before the Gaussian perturbation gives
\begin{align}
&\mathbb E
\left\|
\sqrt K\,\Delta\boldsymbol a_{c,\mathrm{clip}}^{r,t}
-
\frac1{\sqrt K}\boldsymbol d_{\mu,c}^{C}(\boldsymbol u)
\right\|^2
\notag\\
&\quad\le
\frac{C^2}{b}
\left(
\frac{K-q}{q}
+
\frac{n_c-b}{n_c-1}
\right).
\label{eq:scaled_clipped_variance_bound}
\end{align}
Here \(\Delta\boldsymbol a_{c,\mathrm{clip}}^{r,t}\) denotes only the clipped finite-difference part of the accumulator increment at that step. Adding the independent Gaussian coefficients contributes another \(Kbq\sigma_\rho^2\) to the conditional variance. These calculations expose the only differences between the two learners that
matter for convergence: the mean of a local seed-coordinate update, its
variation around that mean, and its overall second moment. We state the
regularity conditions directly in terms of the update produced by the
algorithm.

\begin{assumption}[Utility regularity]
\label{ass:finite_seed_utility}
Conditionally on the realized seed pool \(\boldsymbol Z\), the following hold for the learner \(\mathsf A\in\{\mathsf S,\mathsf{NS}\}\).

\emph{(i)} Every sample loss is \(L\)-smooth, and the original federated objective is lower bounded:
\begin{equation}
\label{eq:full_objective_lower_bound}
f(\boldsymbol w)
\ge
f^\star> -\infty,
\qquad
\boldsymbol w\in\mathbb R^d.
\end{equation}

\emph{(ii)} Client gradients in the fixed seed coordinates have bounded heterogeneity:
\begin{equation}
\label{eq:coefficient_heterogeneity}
\frac1M
\sum_{c=1}^{M}
\|\nabla F_{\boldsymbol Z,c}(\boldsymbol u)
-
\nabla F_{\boldsymbol Z}(\boldsymbol u)\|^2
\le
\zeta_{\boldsymbol Z}^2,
\qquad
\boldsymbol u\in\mathbb R^K.
\end{equation}

\emph{(iii)} When one local step starts from \(\boldsymbol u\), let \(\Delta\boldsymbol a_{\mathsf A,c}(\boldsymbol u)\) be the random accumulator increment produced by that step. There exist finite constants \(\beta_{\mathsf A},\nu_{\mathsf A},G_{\mathsf A}\) such that, uniformly over clients and starting points,
\begin{align}
&\left\|
\mathbb E\!\left[\sqrt K\,\Delta\boldsymbol a_{\mathsf A,c}(\boldsymbol u)\mid\boldsymbol Z\right]
-
\nabla F_{\boldsymbol Z,c}(\boldsymbol u)
\right\|^2
\le
\beta_{\mathsf A}^2,
\notag\\
&\mathbb E\!\left[
\left\|\sqrt K\,\Delta\boldsymbol a_{\mathsf A,c}(\boldsymbol u)
-
\mathbb E[\sqrt K\,\Delta\boldsymbol a_{\mathsf A,c}(\boldsymbol u)\mid\boldsymbol Z]
\right\|^2
\middle|\boldsymbol Z
\right]
\le
\nu_{\mathsf A}^2,
\notag\\
&\mathbb E\!\left[
\|\sqrt K\,\Delta\boldsymbol a_{\mathsf A,c}(\boldsymbol u)\|^2
\middle|\boldsymbol Z
\right]
\le
G_{\mathsf A}^2.
\label{eq:uniform_conditional_estimator_moments}
\end{align}
\end{assumption}

The bounds above also give explicit choices for these constants. For \textsc{FedZOO-S}, the only systematic bias is the central-difference error, so
\begin{equation}
\label{eq:scaled_bias_bounds}
\beta_{\mathsf S}^2
\le
\beta_{\mu,\boldsymbol Z}^2.
\end{equation}
For \textsc{FedZOO-NS}, clipping adds another bias, and one may take
\begin{equation}
\beta_{\mathsf{NS}}^2
\le
2\beta_{\mu,\boldsymbol Z}^2
+
\frac{2}{K}
\sup_{c,\boldsymbol u}
\|\boldsymbol d_{\mu,c}^{C}(\boldsymbol u)
-
\boldsymbol d_{\mu,c}(\boldsymbol u)\|^2.
\label{eq:scaled_ns_bias_bound}
\end{equation}
Likewise, \(\nu_{\mathsf S}^2\) can be chosen as the supremum of the right-hand side of Eq.~\eqref{eq:scaled_exact_sampling_variance}, while for \textsc{FedZOO-NS} one may use
\begin{equation}
\label{eq:scaled_variance_ns}
\nu_{\mathsf{NS}}^2
\le
\sup_{c,\boldsymbol u}
\frac{C^2}{b}
\left(
\frac{K-q}{q}
+
\frac{n_c-b}{n_c-1}
\right)
+
Kbq\sigma_\rho^2.
\end{equation}
Because every clipped coefficient has magnitude at most \(C/(bq)\), the noisy learner also satisfies the simple second-moment bound
\begin{equation}
\label{eq:scaled_ns_second_moment}
G_{\mathsf{NS}}^2
\le
KC^2+Kbq\sigma_\rho^2.
\end{equation}

The heterogeneity assumption above is no stronger than a standard model-space condition up to the geometry of the seed matrix. In particular, if
\begin{equation}
\frac1M
\sum_{c=1}^{M}
\|\nabla f_c(\boldsymbol w)-\nabla f(\boldsymbol w)\|^2
\le
\zeta_f^2,
\end{equation}
then
\begin{equation}
\label{eq:model_to_parameterized_heterogeneity}
\zeta_{\boldsymbol Z}^2
\le
\|\boldsymbol A\|_{\mathrm{op}}^2\zeta_f^2
=
\frac{\|\boldsymbol Z\|_{\mathrm{op}}^2}{K}\zeta_f^2.
\end{equation}

A standard model-space gradient bound is sufficient for these moment conditions. If there is a finite \(G_f\) such that
\begin{equation}
\label{eq:uniform_model_gradient_moment}
\frac1{n_c}\sum_{x\in\mathcal D_c}
\|\nabla\ell(\boldsymbol w;x)\|^2
\le G_f^2,
\qquad c\in[M],\ \boldsymbol w\in\mathbb R^d,
\end{equation}
then sample-wise smoothness gives
\begin{equation}
\label{eq:uniform_directional_moment_bound}
\frac1{Kn_c}\sum_{x\in\mathcal D_c}
\|\boldsymbol d_\mu(\boldsymbol u;x)\|^2
\le
2\|\boldsymbol A\|_{\mathrm{op}}^2G_f^2
+
2\beta_{\mu,\boldsymbol Z}^2.
\end{equation}
Together with Eqs.~\eqref{eq:scaled_exact_sampling_variance}--\eqref{eq:scaled_ns_second_moment}, this is enough to make the constants in Assumption~\ref{ass:finite_seed_utility} finite for both learners. Equation~\eqref{eq:uniform_model_gradient_moment} is only a sufficient condition; it is not implied by smoothness and is not otherwise required by the theorem.

For uniform size-$m$ client sampling without replacement, define
\begin{equation}
\label{eq:client_finite_population_factor}
\chi_m
=
\begin{cases}
\displaystyle
\frac{M-m}{m(M-1)},
&M>1,\\[1mm]
0,
&M=1.
\end{cases}
\end{equation}

\begin{theorem}[Convergence in the fixed seed subspace]
\label{thm:finite_seed_stationarity}
Fix a realized pool $\boldsymbol Z$ and suppose
Assumption~\ref{ass:finite_seed_utility} holds for
$\mathsf A\in\{\mathsf S,\mathsf{NS}\}$. If
\begin{equation}
\label{eq:utility_stepsize_condition}
\eta>0,
\qquad
4\eta L_{\boldsymbol Z}E\le1,
\end{equation}
then the corresponding learner satisfies
\begin{align}
&\frac1R
\sum_{r=1}^{R}
\mathbb E\left[
\frac1K\|\boldsymbol Z^{\top}\nabla f(\boldsymbol w^{r-1})\|^2
\,\middle|\,\boldsymbol Z\right]
\notag\\
&\quad\le
\frac{2(f(\boldsymbol w^0)-f^\star)}{\eta ER}
+
6\beta_{\mathsf A}^2
\notag\\
&\qquad+
8L_{\boldsymbol Z}\eta
\left(
E\chi_m\zeta_{\boldsymbol Z}^2
+
\frac{\nu_{\mathsf A}^2}{m}
\right)
+
2L_{\boldsymbol Z}^2\eta^2E^2G_{\mathsf A}^2.
\label{eq:full_objective_seed_stationarity}
\end{align}
The expectation is over the learner's randomness after the pool is fixed.
Averaging over pools requires the conditions stated below.
\end{theorem}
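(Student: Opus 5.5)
The plan is a standard local-SGD (FedAvg-style) descent analysis carried out entirely in the seed coordinates $\boldsymbol u$, where $F_{\boldsymbol Z}$ is $L_{\boldsymbol Z}$-smooth and $\|\nabla F_{\boldsymbol Z}(\boldsymbol u)\|^2=\frac1K\|\boldsymbol Z^\top\nabla f(\boldsymbol w)\|^2$ by Eq.~\eqref{eq:seed_gradient_energy}. Write the per-step scaled increment as $\boldsymbol g_c^{r,t}=\sqrt K\,\Delta\boldsymbol a_c^{r,t}$. Then $\boldsymbol u_c^{r,t}=\boldsymbol u_c^{r,t-1}-\eta\boldsymbol g_c^{r,t}$, and the global update is
\[
\boldsymbol u^r-\boldsymbol u^{r-1}=-\frac{\eta}{m}\sum_{c\in\mathcal C^r}\sum_{t=1}^E\boldsymbol g_c^{r,t}.
\]
I would apply the descent lemma to $F_{\boldsymbol Z}$ across one round:
\[
F_{\boldsymbol Z}(\boldsymbol u^r)\le F_{\boldsymbol Z}(\boldsymbol u^{r-1})-\eta E\langle\nabla F_{\boldsymbol Z}(\boldsymbol u^{r-1}),\bar{\boldsymbol g}^r\rangle+\tfrac{L_{\boldsymbol Z}\eta^2E^2}{2}\|\bar{\boldsymbol g}^r\|^2,
\]
where $\bar{\boldsymbol g}^r$ is the average of the $\boldsymbol g_c^{r,t}$ over participating clients and local steps. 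The key step is the decomposition
\[
\bar{\boldsymbol g}^r=\nabla F_{\boldsymbol Z}(\boldsymbol u^{r-1})+(\text{bias})+(\text{drift})+(\text{client-sampling error})+(\text{martingale noise}).
\]
The bias is controlled by $\beta_{\mathsf A}$ via Assumption~\ref{ass:finite_seed_utility}(iii). The drift is $\frac1{mE}\sum(\nabla F_{\boldsymbol Z,c}(\boldsymbol u_c^{r,t-1})-\nabla F_{\boldsymbol Z,c}(\boldsymbol u^{r-1}))$. The client-sampling error is $\frac1m\sum_{c\in\mathcal C^r}\nabla F_{\boldsymbol Z,c}(\boldsymbol u^{r-1})-\nabla F_{\boldsymbol Z}(\boldsymbol u^{r-1})$. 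The martingale noise collects the centered increments.

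For the inner product I would use $-\langle a,a+e\rangle\le-\frac12\|a\|^2+\frac12\|e\|^2$ for the deterministic-mean part, bounding $\|\text{bias}+\text{drift}\|^2\le 2\beta^2+2\|\text{drift}\|^2$ (or a three-way split, giving the constant $6\beta^2$ after dividing by $\eta E/2$). The zero-mean parts (client sampling, conditioned on $\mathcal H_{r-1}$, and martingale noise) contribute nothing in expectation to the cross term at $\boldsymbol u^{r-1}$. Their interaction with the drift, which depends on the same round's randomness, would be absorbed by evaluating the local gradients with the tower property over the filtration $\mathcal F_{r,t-1}$.

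For the quadratic term, I would bound $\mathbb E\|\bar{\boldsymbol g}^r\|^2$ by the squared mean plus variances. The finite-population formula gives the client-sampling variance $\chi_m\zeta_{\boldsymbol Z}^2$. The noise variance is at most $\nu_{\mathsf A}^2/(mE)$: martingale cross terms vanish across steps, and independent clients average. After multiplying by $L_{\boldsymbol Z}\eta^2E^2$ and dividing by $\eta E$, these produce the $L_{\boldsymbol Z}\eta(E\chi_m\zeta^2+\nu^2/m)$ terms, with factor $8$ left as slack. The $\|\nabla F\|^2$ portion of the quadratic term is absorbed into $-\frac{\eta E}{2}\|\nabla F\|^2$ using $4\eta L_{\boldsymbol Z}E\le1$, leaving $-\frac{\eta E}{4}\|\nabla F\|^2$ and hence the factor $2$ in front of the optimization term after telescoping.

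I expect the main obstacle to be the drift. I would bound it crudely with the second-moment constant rather than a recursive argument, which matches the $G_{\mathsf A}^2$ form in the statement. By smoothness, $\|\nabla F_{\boldsymbol Z,c}(\boldsymbol u_c^{r,t-1})-\nabla F_{\boldsymbol Z,c}(\boldsymbol u^{r-1})\|^2\le L_{\boldsymbol Z}^2\|\boldsymbol u_c^{r,t-1}-\boldsymbol u^{r-1}\|^2$. Then by Cauchy--Schwarz, $\mathbb E\|\boldsymbol u_c^{r,t-1}-\boldsymbol u^{r-1}\|^2\le\eta^2(t-1)^2G_{\mathsf A}^2\le\eta^2E^2G_{\mathsf A}^2$, giving the $2L_{\boldsymbol Z}^2\eta^2E^2G_{\mathsf A}^2$ term. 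Finally I would sum over $r=1,\dots,R$, telescope $F_{\boldsymbol Z}(\boldsymbol u^0)-F_{\boldsymbol Z}(\boldsymbol u^R)\le f(\boldsymbol w^0)-f^\star$, divide by $\eta ER/4$, and rewrite via Eq.~\eqref{eq:seed_gradient_energy}. Constant bookkeeping (matching exactly $6,8,2$) would be adjusted at the end.
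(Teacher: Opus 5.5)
Your outline follows the paper's route: a one-round descent step for $F_{\boldsymbol Z}$ in seed coordinates, a split of the round update into client-sampling error, local drift, bias and martingale noise, the crude drift bound $\mathbb E\|\boldsymbol u_c^{r,t-1}-\boldsymbol u^{r-1}\|^2\le\eta^2(t-1)^2G_{\mathsf A}^2$, the finite-population factor $\chi_m$, and orthogonality of centered increments. As written, however, it does not prove the stated inequality, and the constants are part of the claim.

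\textbf{The cross term.} Write $\boldsymbol\gamma=\nabla F_{\boldsymbol Z}(\boldsymbol u^{r-1})$ and $\mathbb E_r[\cdot]=\mathbb E[\cdot\mid\mathcal H_{r-1}]$. With your equal-weight inequality the cross term yields $-\frac{\eta E}{2}\|\boldsymbol\gamma\|^2$. The quadratic term adds $\frac{L_{\boldsymbol Z}\eta^2E^2}{2}\cdot 2\|\boldsymbol\gamma\|^2\le\frac{\eta E}{4}\|\boldsymbol\gamma\|^2$. The leftover $-\frac{\eta E}{4}\|\boldsymbol\gamma\|^2$, divided by $\eta ER/4$ as you propose, gives $4(f(\boldsymbol w^0)-f^\star)/(\eta ER)$, not the factor $2$ you claim.

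Later bookkeeping cannot recover the $2$. Any bound on $\mathbb E_r\|\bar{\boldsymbol g}^r\|^2$ must carry at least $\|\boldsymbol\gamma\|^2$, so at $4\eta L_{\boldsymbol Z}E=1$ the quadratic term always costs at least $\frac{\eta E}{8}\|\boldsymbol\gamma\|^2$.

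The paper instead works per local step:
\begin{itemize}
\item It writes $\mathbb E_r[\frac1m\sum_{c\in\mathcal C^r}\boldsymbol g_c^{r,t}]=\boldsymbol\gamma+\boldsymbol e_{r,t}$, and Jensen plus smoothness give $\|\boldsymbol e_{r,t}\|^2\le2L_{\boldsymbol Z}^2\eta^2(t-1)^2G_{\mathsf A}^2+2\beta_{\mathsf A}^2$.
\item The weighted inequality $\langle\boldsymbol a,\boldsymbol b\rangle\ge-\frac14\|\boldsymbol a\|^2-\|\boldsymbol b\|^2$ then gives $\frac34\|\boldsymbol\gamma\|^2-\|\boldsymbol e_{r,t}\|^2$ per step, hence $-\frac{3\eta E}{4}\|\boldsymbol\gamma\|^2$ from the cross term.
\item Absorbing $L_{\boldsymbol Z}\eta^2E^2\|\boldsymbol\gamma\|^2\le\frac{\eta E}{4}\|\boldsymbol\gamma\|^2$ leaves exactly $-\frac{\eta E}{2}\|\boldsymbol\gamma\|^2$.
\end{itemize}

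\textbf{The drift sum.} Even with that split, replacing $(t-1)^2$ by $E^2$ turns the last constant into $6$. You need $\sum_{t=1}^E(t-1)^2\le E^3/3$ in both the cross and the quadratic term to get $2L_{\boldsymbol Z}^2\eta^2E^2G_{\mathsf A}^2$.

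\textbf{The quadratic term (a real gap, not just bookkeeping).} It cannot be bounded by ``squared mean plus the client-sampling and noise variances.'' The drift and bias pieces are random functions of the same round's client set and of the noise injected at earlier local steps. They therefore fluctuate around their conditional means and are correlated with the other two pieces. For example, noise at step $s$ moves $\boldsymbol u_c^{r,t-1}$ for every $t>s$, so the noise and drift pieces are not orthogonal in general.

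The paper avoids all cross terms with the crude split
$\mathbb E_r\|\bar{\boldsymbol g}^r\|^2\le2\|\boldsymbol\gamma\|^2+8\bigl(\chi_m\zeta_{\boldsymbol Z}^2+\frac{L_{\boldsymbol Z}^2\eta^2E^2G_{\mathsf A}^2}{3}+\beta_{\mathsf A}^2+\frac{\nu_{\mathsf A}^2}{mE}\bigr)$.
It bounds each of your four pieces separately and uses martingale orthogonality only inside the pure-noise piece. That $8$ is exactly the one in the theorem.

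Conversely, the interaction you worry about in the cross term does not arise. That term is linear in the round update and $\boldsymbol\gamma$ is $\mathcal H_{r-1}$-measurable, so only the conditional mean of the update enters. With these repairs your outline reproduces the paper's one-round inequality and the stated bound.
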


The four terms reflect different parts of the learning process. More rounds and
local steps reduce the optimization term through $1/(\eta ER)$, whereas the
finite-difference and clipping bias does not disappear merely by training
longer. Averaging across participating clients reduces the centered local
variation through $\nu_{\mathsf A}^2/m$. Increasing the number of local steps
has a second effect, however: it also enlarges the contribution of client
heterogeneity and local drift through $E\chi_m\zeta_{\boldsymbol Z}^2$ and
$E^2G_{\mathsf A}^2$. In \textsc{FedZOO-NS}, the Gaussian noise required for
stability enters the last two quantities as well, so the same inequality makes
the stability--utility tradeoff explicit.

\begin{proof}[Proof of Theorem~\ref{thm:finite_seed_stationarity}]
Let $\mathcal H_{r-1}$ contain $\boldsymbol Z$ and the complete history
through round $r-1$, before sampling $\mathcal C^r$, and write
$\mathbb E_r[\cdot]=\mathbb E[\cdot\mid\mathcal H_{r-1}]$.
Fix $r$, put $\boldsymbol u=\boldsymbol u^{r-1}$ and
$\boldsymbol g=\nabla F_{\boldsymbol Z}(\boldsymbol u)$. Write
$\boldsymbol v_c^{r,t}=\sqrt K\,\Delta\boldsymbol a_c^{r,t}$
for the scaled accumulator increment at local step $t$. The bounds in
Assumption~\ref{ass:finite_seed_utility} apply to these actual updates. The round recursion is
\begin{equation}
\label{eq:parameterized_round_recursion}
\boldsymbol u^r
=
\boldsymbol u-
\eta\boldsymbol S_r,
\qquad
\boldsymbol S_r
=
\frac1m
\sum_{c\in\mathcal C^r}
\sum_{t=1}^{E}
\boldsymbol v_c^{r,t}.
\end{equation}
To use uniform client sampling without tying a client's local trajectory to
whether it is selected, imagine drawing the local randomness for every client
before $\mathcal C^r$ is sampled, independently across clients, and then
executing only the selected trajectories. This equivalent construction does
not change the law of the algorithm, and every such trajectory starts from
$\boldsymbol u$. The second-moment bound together with
$\boldsymbol u_c^{r,t-1}-\boldsymbol u=-\eta\sum_{s<t}\boldsymbol v_c^{r,s}$
therefore gives, for every client,
\begin{equation}
\label{eq:parameterized_local_drift}
\mathbb E_r
\|\boldsymbol u_c^{r,t-1}-\boldsymbol u\|^2
\le
\eta^2(t-1)^2G_{\mathsf A}^2.
\end{equation}

Let
$\overline{\boldsymbol v}^{r,t}
=m^{-1}\sum_{c\in\mathcal C^r}\boldsymbol v_c^{r,t}$.
Uniform client sampling, smoothness, Jensen's inequality, and
\eqref{eq:parameterized_local_drift} imply
\begin{equation}
\mathbb E_r[\overline{\boldsymbol v}^{r,t}]
=
\boldsymbol g+\boldsymbol e_{r,t},
\end{equation}
where
\begin{equation}
\label{eq:scaled_mean_update_error}
\|\boldsymbol e_{r,t}\|^2
\le
2L_{\boldsymbol Z}^2\eta^2(t-1)^2G_{\mathsf A}^2
+
2\beta_{\mathsf A}^2.
\end{equation}
Using
$\langle\boldsymbol a,\boldsymbol b\rangle
\ge-\|\boldsymbol a\|^2/4-\|\boldsymbol b\|^2$ for the error term,
\begin{align}
\mathbb E_r
\langle\boldsymbol g,
\overline{\boldsymbol v}^{r,t}\rangle
&\ge
\frac34\|\boldsymbol g\|^2
-
2L_{\boldsymbol Z}^2\eta^2(t-1)^2G_{\mathsf A}^2
-
2\beta_{\mathsf A}^2.
\label{eq:scaled_inner_product_bound}
\end{align}

At a fixed local step, separate each update into its local gradient, its
conditional bias, and the fresh randomness of that step. Define
\begin{equation}
\boldsymbol b_c(\boldsymbol x)
=
\mathbb E[\boldsymbol v_c^{r,t}\mid
\boldsymbol u_c^{r,t-1}=\boldsymbol x,\boldsymbol Z]
-\nabla F_{\boldsymbol Z,c}(\boldsymbol x),
\end{equation}
so Assumption~\ref{ass:finite_seed_utility} gives
$\|\boldsymbol b_c(\boldsymbol x)\|^2\le\beta_{\mathsf A}^2$.
With $\mathcal F_{r,t-1}$ denoting the history available immediately before
the sampling and Gaussian noise at local step $t$, define
\begin{align}
\boldsymbol\epsilon_c^{r,t}
&=\boldsymbol v_c^{r,t}
-\nabla F_{\boldsymbol Z,c}(\boldsymbol u_c^{r,t-1})
-\boldsymbol b_c(\boldsymbol u_c^{r,t-1}),
\notag\\
\mathbb E[\boldsymbol\epsilon_c^{r,t}\mid\mathcal F_{r,t-1}]&=\boldsymbol0.
\label{eq:utility_centered_local_error}
\end{align}
The difference between the full round update and $E\boldsymbol g$ can then be
written as four sources of error: client sampling, local drift, bias, and
fresh stochastic variation,
\begin{equation}
\boldsymbol S_r-E\boldsymbol g
=
\boldsymbol A_r+
\boldsymbol D_r+
\boldsymbol B_r+
\boldsymbol N_r,
\end{equation}
where
\begin{align}
\boldsymbol A_r
&=E\left(\frac1m\sum_{c\in\mathcal C^r}
\nabla F_{\boldsymbol Z,c}(\boldsymbol u)-\boldsymbol g\right),
\notag\\
\boldsymbol D_r
&=\frac1m\sum_{c\in\mathcal C^r}\sum_{t=1}^E
\left(\nabla F_{\boldsymbol Z,c}(\boldsymbol u_c^{r,t-1})
-\nabla F_{\boldsymbol Z,c}(\boldsymbol u)\right),
\notag\\
\boldsymbol B_r
&=\frac1m\sum_{c\in\mathcal C^r}\sum_{t=1}^E
\boldsymbol b_c(\boldsymbol u_c^{r,t-1}),
\notag\\
\boldsymbol N_r
&=\sum_{t=1}^E\frac1m\sum_{c\in\mathcal C^r}
\boldsymbol\epsilon_c^{r,t}.
\label{eq:utility_round_decomposition}
\end{align}
Each term is controlled by a different part of the assumptions. Uniform
sampling without replacement gives the finite-population variance in the first
line. Smoothness together with Eq.~\eqref{eq:parameterized_local_drift}
controls the change of each local gradient in the second. The third follows
from the bias bound. Finally, the centered errors are martingale differences
across local steps, while different clients use independent fresh sampling and
noise at a fixed step. Hence
\begin{align}
\mathbb E_r\|\boldsymbol A_r\|^2
&\le
E^2\chi_m\zeta_{\boldsymbol Z}^2,
\notag\\
\mathbb E_r\|\boldsymbol D_r\|^2
&\le
\frac{L_{\boldsymbol Z}^2\eta^2E^4}{3}
G_{\mathsf A}^2,
\notag\\
\mathbb E_r\|\boldsymbol B_r\|^2
&\le
E^2\beta_{\mathsf A}^2,
\notag\\
\mathbb E_r\|\boldsymbol N_r\|^2
&\le
\frac{E}{m}\nu_{\mathsf A}^2.
\label{eq:scaled_round_components}
\end{align}
For the drift term, Cauchy--Schwarz gives
$L_{\boldsymbol Z}^2E\sum_{t=1}^E\eta^2(t-1)^2G_{\mathsf A}^2$.
For the stochastic term, orthogonality of the martingale differences gives
\begin{equation}
\mathbb E_r\|\boldsymbol N_r\|^2
=\sum_{t=1}^E\mathbb E_r
\left\|\frac1m\sum_{c\in\mathcal C^r}
\boldsymbol\epsilon_c^{r,t}\right\|^2
\le\frac{E}{m}\nu_{\mathsf A}^2.
\label{eq:utility_round_noise_orthogonality}
\end{equation}
Only the new randomness generated at each local step is used here; the
local trajectories themselves need not be independent across their $E$ steps.
Combining the four contributions yields
\begin{align}
\mathbb E_r\|\boldsymbol S_r\|^2
&\le
2E^2\|\boldsymbol g\|^2
+
8E^2\chi_m\zeta_{\boldsymbol Z}^2
+
8E^2\beta_{\mathsf A}^2
\notag\\
&\quad+
\frac{8E}{m}\nu_{\mathsf A}^2
+
\frac{8L_{\boldsymbol Z}^2\eta^2E^4}{3}
G_{\mathsf A}^2.
\label{eq:scaled_round_second_moment}
\end{align}

We can now return to the objective. Smoothness of $F_{\boldsymbol Z}$
applied to the round update in Eq.~\eqref{eq:parameterized_round_recursion},
together with Eqs.~\eqref{eq:scaled_inner_product_bound} and
\eqref{eq:scaled_round_second_moment}, gives a descent term plus the three
error contributions above. Using
$\sum_{t=1}^{E}(t-1)^2\le E^3/3$ and the step-size condition
$4\eta L_{\boldsymbol Z}E\le1$ simplifies their coefficients, giving
\begin{align}
\mathbb E_r[F_{\boldsymbol Z}(\boldsymbol u^r)]
&\le
F_{\boldsymbol Z}(\boldsymbol u)
-
\frac{\eta E}{2}\|\boldsymbol g\|^2
+
3\eta E\beta_{\mathsf A}^2
\notag\\
&\quad+
4L_{\boldsymbol Z}\eta^2E^2
\chi_m\zeta_{\boldsymbol Z}^2
+
\frac{4L_{\boldsymbol Z}\eta^2E}{m}
\nu_{\mathsf A}^2
\notag\\
&\quad+
L_{\boldsymbol Z}^2\eta^3E^3G_{\mathsf A}^2.
\label{eq:full_objective_one_round_descent}
\end{align}
Since
$F_{\boldsymbol Z}(\boldsymbol u^r)=f(\boldsymbol w^r)\ge f^\star$,
summing over $r$ and using Eq.~\eqref{eq:seed_gradient_energy} proves
Eq.~\eqref{eq:full_objective_seed_stationarity}.
\end{proof}

The theorem controls an average over training checkpoints, rather than only the final model. Equivalently, if $J$ is drawn independently and uniformly from $\{0,\ldots,R-1\}$, then the projected gradient at $\boldsymbol w^J$ obeys the same bound in Eq.~\eqref{eq:full_objective_seed_stationarity}. This random-checkpoint view is only an interpretation of the average: both algorithms still return $\boldsymbol w^R$, and the theorem does not claim the same guarantee for that last iterate or for a checkpoint chosen adaptively from the run.

The result is conditional on the realized seed pool. For a Gaussian pool,
$\|\boldsymbol Z\|_{\mathrm{op}}$ has unbounded support, so no fixed
positive step size can satisfy the theorem's condition for every realization.
The next corollary therefore chooses a deterministic step size that is valid
on a high-probability spectral event. An unconditional expectation requires,
in addition, control of the complementary event.

\begin{corollary}[Fixed step size with a Gaussian seed pool]
\label{cor:fixed_stepsize_seed_utility}
Suppose the columns of $\boldsymbol Z$ are independent
$\mathcal N(\boldsymbol0,I_d)$ vectors. For $0<\delta<1$, set
\begin{align}
\overline L_\delta
&=
\frac{L}{K}
\left(\sqrt d+\sqrt K+\sqrt{2\log(2/\delta)}\right)^2,
\notag\\
\mathcal E_{\mathrm{sp}}(\delta)
&=
\left\{\|\boldsymbol Z\|_{\mathrm{op}}
\le\sqrt d+\sqrt K+\sqrt{2\log(2/\delta)}\right\}.
\label{eq:gaussian_seed_stepsize_event}
\end{align}
Assume utility regularity holds for every pool in
$\mathcal E_{\mathrm{sp}}(\delta)$ and choose the deterministic step size
\begin{equation}
\label{eq:deterministic_gaussian_stepsize}
0<\eta\le\frac1{4E\overline L_\delta}.
\end{equation}
Then, with probability at least $1-\delta$ over the seed pool,
Eq.~\eqref{eq:full_objective_seed_stationarity} holds, with its expectation
conditional on that pool.
\end{corollary}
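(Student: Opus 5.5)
The plan is to reduce the corollary to Theorem~\ref{thm:finite_seed_stationarity} on a high-probability event for the seed pool. The theorem is conditional on a fixed realization $\boldsymbol Z$. It needs only the utility regularity of Assumption~\ref{ass:finite_seed_utility} and the step-size condition $4\eta L_{\boldsymbol Z}E\le1$, where $L_{\boldsymbol Z}=L\|\boldsymbol Z\|_{\mathrm{op}}^2/K$ by Eq.~\eqref{eq:seed_parameterized_smoothness}. Hence it suffices to show two things. First, the event $\mathcal E_{\mathrm{sp}}(\delta)$ has probability at least $1-\delta$. Second, on this event the deterministic $\eta$ in Eq.~\eqref{eq:deterministic_gaussian_stepsize} satisfies the theorem's condition.

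The first step is the spectral bound. The matrix $\boldsymbol Z\in\mathbb R^{d\times K}$ has i.i.d.\ standard Gaussian entries, because its columns are independent $\mathcal N(\boldsymbol0,I_d)$ vectors. I would invoke the standard Gordon--Slepian bound $\mathbb E\|\boldsymbol Z\|_{\mathrm{op}}\le\sqrt d+\sqrt K$ together with Gaussian concentration for the $1$-Lipschitz function $\boldsymbol Z\mapsto\|\boldsymbol Z\|_{\mathrm{op}}$ (Frobenius to operator norm), as in~\cite{vershynin2012introduction}. This gives
\[
\mathbb P\bigl(\|\boldsymbol Z\|_{\mathrm{op}}>\sqrt d+\sqrt K+s\bigr)\le e^{-s^2/2}.
\]
With $s=\sqrt{2\log(2/\delta)}$ the tail equals $\delta/2\le\delta$, which is in fact a factor two better than needed. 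The $2/\delta$ presumably matches a two-sided statement, so I would keep it as stated.

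The second step is deterministic. On $\mathcal E_{\mathrm{sp}}(\delta)$ we have $L_{\boldsymbol Z}=L\|\boldsymbol Z\|_{\mathrm{op}}^2/K\le\overline L_\delta$. Therefore
\[
4\eta L_{\boldsymbol Z}E\le4\eta\overline L_\delta E\le1,
\]
and $\eta>0$ holds by the choice of $\eta$. Utility regularity holds on this event by hypothesis. Applying Theorem~\ref{thm:finite_seed_stationarity} to each pool in $\mathcal E_{\mathrm{sp}}(\delta)$ then yields Eq.~\eqref{eq:full_objective_seed_stationarity}, with the expectation conditional on that pool. Consequently, the set of pools where the bound holds contains $\mathcal E_{\mathrm{sp}}(\delta)$ and has probability at least $1-\delta$.

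The main obstacle is the measurability and independence bookkeeping, not the computation. The step size must be chosen without looking at $\boldsymbol Z$, so that the learner's law conditional on $\boldsymbol Z$ is exactly the one analyzed in the theorem. This is why $\eta$ is built from $\overline L_\delta$ and not from $L_{\boldsymbol Z}$. I would also stress that the constants $\beta_{\mathsf A},\nu_{\mathsf A},G_{\mathsf A},\zeta_{\boldsymbol Z}$ on the right-hand side still depend on the realized pool. The claim is therefore only a high-probability conditional statement, and no unconditional averaging is asserted.
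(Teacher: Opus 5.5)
Your proposal is correct and follows the paper's proof essentially verbatim. You bound $\mathbb P_{\boldsymbol Z}(\mathcal E_{\mathrm{sp}}(\delta))\ge1-\delta$ with the Gaussian singular-value tail from \cite{vershynin2012introduction}, note that $L_{\boldsymbol Z}\le\overline L_\delta$ on that event so the deterministic $\eta$ satisfies $4\eta L_{\boldsymbol Z}E\le1$, and apply Theorem~\ref{thm:finite_seed_stationarity} pool by pool. The only difference is the tail bound: your one-sided version $e^{-s^2/2}$ (Gordon's bound plus Lipschitz concentration) gives failure probability $\delta/2$, whereas the paper quotes the two-sided bound $2e^{-t^2/2}$, which gives exactly $\delta$; either suffices.
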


\begin{proof}[Proof of Corollary~\ref{cor:fixed_stepsize_seed_utility}]
The Gaussian singular-value bound~\cite{vershynin2012introduction}
gives, for $t\ge0$,
\begin{equation}
\label{eq:gaussian_seed_spectral_tail}
\mathbb P_{\boldsymbol Z}\!\left(
\|\boldsymbol Z\|_{\mathrm{op}}>\sqrt d+\sqrt K+t\right)
\le2e^{-t^2/2}.
\end{equation}
Thus $\mathbb P_{\boldsymbol Z}(\mathcal E_{\mathrm{sp}}(\delta))
\ge1-\delta$. On this event, $L_{\boldsymbol Z}\le\overline L_\delta$, so
Eq.~\eqref{eq:deterministic_gaussian_stepsize} implies
Eq.~\eqref{eq:utility_stepsize_condition}. Apply
Theorem~\ref{thm:finite_seed_stationarity} separately for each such pool.
\end{proof}

Under the additional global moment condition
\eqref{eq:uniform_model_gradient_moment}, the excluded pools can also be
controlled. Using its explicit constants above, the joint expectation obeys
\begin{align}
&\frac1R\sum_{r=1}^R
\mathbb E_{\boldsymbol Z,\mathrm{alg}}
\left[\frac1K\|\boldsymbol Z^{\top}\nabla f(\boldsymbol w^{r-1})\|^2\right]
\notag\\
&\quad\le
\mathbb E_{\boldsymbol Z}\!\left[
\mathbf1_{\mathcal E_{\mathrm{sp}}(\delta)}
\left(
\frac{2(f(\boldsymbol w^0)-f^\star)}{\eta ER}
+6\beta_{\mathsf A}^2
+8L_{\boldsymbol Z}\eta\left(E\chi_m\zeta_{\boldsymbol Z}^2+\frac{\nu_{\mathsf A}^2}{m}\right)
+2L_{\boldsymbol Z}^2\eta^2E^2G_{\mathsf A}^2
\right)\right]
+G_f^2\sqrt{\delta\left(d^2+\frac{2d}{K}\right)}.
\label{eq:unconditional_seed_utility_with_tail}
\end{align}
Indeed, $\|\nabla f(\boldsymbol w)\|^2\le G_f^2$ by Jensen's inequality,
so
$K^{-1}\|\boldsymbol Z^{\top}\nabla f(\boldsymbol w)\|^2
\le G_f^2\|\boldsymbol A\|_{\mathrm{op}}^2$. On the complementary event,
Cauchy--Schwarz and
$\mathbb E\|\boldsymbol A\|_{\mathrm F}^4=d^2+2d/K$ give the second term.
The latter identity follows by expanding the square of the sum of $dK$
independent squared standard Gaussians, divided by $K^2$.
On $\mathcal E_{\mathrm{sp}}(\delta)$, integrate the conditional theorem.
All terms are integrable with the explicit constants, since they grow at
most polynomially in the Gaussian entries. Without the additional moment
condition, Corollary~\ref{cor:fixed_stepsize_seed_utility} remains a
high-probability statement about the pool, not an unconditional expectation
bound or a high-probability statement about the training noise.

The leading terms in Theorem~\ref{thm:finite_seed_stationarity} have the familiar stepsize tradeoff: a smaller stepsize slows the decrease of the initial objective gap, whereas a larger one magnifies stochastic variation and client heterogeneity. Balancing these two effects gives a more transparent rate.

\begin{corollary}[Balanced step size]
\label{cor:balanced_seed_stationarity}
Under the conditions of Theorem~\ref{thm:finite_seed_stationarity}, assume
\[
E\chi_m\zeta_{\boldsymbol Z}^2+\frac{\nu_{\mathsf A}^2}{m}>0,
\qquad
f(\boldsymbol w^0)>f^\star.
\]
If
\begin{equation}
\label{eq:balanced_seed_stepsize}
\eta
=
\sqrt{
\frac{f(\boldsymbol w^0)-f^\star}
{4L_{\boldsymbol Z}ER\left(E\chi_m\zeta_{\boldsymbol Z}^2+\nu_{\mathsf A}^2/m\right)}
}
\le
\frac1{4L_{\boldsymbol Z}E},
\end{equation}
then
\begin{align}
&\frac1R
\sum_{r=1}^{R}
\mathbb E
\left[\frac1K\|\boldsymbol Z^{\top}\nabla f(\boldsymbol w^{r-1})\|^2\right]
\notag\\
&\quad\le
8\sqrt{
\frac{
L_{\boldsymbol Z}\bigl(f(\boldsymbol w^0)-f^\star\bigr)
\left(E\chi_m\zeta_{\boldsymbol Z}^2+\nu_{\mathsf A}^2/m\right)
}{ER}
}
+6\beta_{\mathsf A}^2
\notag\\
&\qquad+
\frac{
L_{\boldsymbol Z}E G_{\mathsf A}^2\bigl(f(\boldsymbol w^0)-f^\star\bigr)
}{2R\left(E\chi_m\zeta_{\boldsymbol Z}^2+\nu_{\mathsf A}^2/m\right)}.
\label{eq:balanced_seed_stationarity}
\end{align}
\end{corollary}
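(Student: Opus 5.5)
This is a direct substitution into Theorem~\ref{thm:finite_seed_stationarity}. The step-size condition in Eq.~\eqref{eq:balanced_seed_stepsize} includes $\eta\le 1/(4L_{\boldsymbol Z}E)$, so Eq.~\eqref{eq:utility_stepsize_condition} holds and Eq.~\eqref{eq:full_objective_seed_stationarity} applies. Write $\Delta_0=f(\boldsymbol w^0)-f^\star>0$ and $V=E\chi_m\zeta_{\boldsymbol Z}^2+\nu_{\mathsf A}^2/m>0$. The bound then reads
\[
\frac{2\Delta_0}{\eta ER}+6\beta_{\mathsf A}^2+8L_{\boldsymbol Z}\eta V+2L_{\boldsymbol Z}^2\eta^2E^2G_{\mathsf A}^2 .
\]
The bias term $6\beta_{\mathsf A}^2$ passes through unchanged, so only three terms need evaluating at the chosen $\eta$.

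The chosen value $\eta=\sqrt{\Delta_0/(4L_{\boldsymbol Z}ERV)}$ is the minimizer of $2\Delta_0/(\eta ER)+8L_{\boldsymbol Z}\eta V$, so the two terms contribute equally:
\[
\frac{2\Delta_0}{\eta ER}=2\Delta_0\sqrt{\frac{4L_{\boldsymbol Z}V}{\Delta_0 ER}}=4\sqrt{\frac{L_{\boldsymbol Z}\Delta_0 V}{ER}},
\qquad
8L_{\boldsymbol Z}\eta V=8L_{\boldsymbol Z}V\sqrt{\frac{\Delta_0}{4L_{\boldsymbol Z}ERV}}=4\sqrt{\frac{L_{\boldsymbol Z}\Delta_0 V}{ER}} .
\]
Their sum gives the leading term $8\sqrt{L_{\boldsymbol Z}\Delta_0V/(ER)}$.

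For the drift term, substitute $\eta^2=\Delta_0/(4L_{\boldsymbol Z}ERV)$ directly:
\[
2L_{\boldsymbol Z}^2\eta^2E^2G_{\mathsf A}^2=\frac{2L_{\boldsymbol Z}^2E^2G_{\mathsf A}^2\Delta_0}{4L_{\boldsymbol Z}ERV}=\frac{L_{\boldsymbol Z}EG_{\mathsf A}^2\Delta_0}{2RV},
\]
which is exactly the last term of Eq.~\eqref{eq:balanced_seed_stationarity}. Adding the three pieces to $6\beta_{\mathsf A}^2$ gives the claim.

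There is no real obstacle: the positivity assumptions on $\Delta_0$ and $V$ only make $\eta$ well defined and positive, and the constraint $\eta\le 1/(4L_{\boldsymbol Z}E)$ is assumed in the statement rather than derived. The one point to be careful about is the arithmetic of the constants, namely that $8L_{\boldsymbol Z}\eta V$ evaluates to $4\sqrt{L_{\boldsymbol Z}\Delta_0V/(ER)}$, matching the other balanced term.
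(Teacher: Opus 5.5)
Your proposal is correct and follows the paper's own argument. You substitute the balanced $\eta$ into Eq.~\eqref{eq:full_objective_seed_stationarity} and observe that the $1/\eta$ term and the term linear in $\eta$ each equal $4\sqrt{L_{\boldsymbol Z}(f(\boldsymbol w^0)-f^\star)(E\chi_m\zeta_{\boldsymbol Z}^2+\nu_{\mathsf A}^2/m)/(ER)}$; the $\eta^2$ term then gives the $R^{-1}$ contribution, and the step-size condition is taken directly from the hypothesis. Your constant arithmetic checks out.
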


\begin{proof}[Proof of Corollary~\ref{cor:balanced_seed_stationarity}]
Substitute Eq.~\eqref{eq:balanced_seed_stepsize} into
Eq.~\eqref{eq:full_objective_seed_stationarity}. The first term and the term linear in $\eta$ are then equal up to the displayed constants, while the $\eta^2$ term gives the final $R^{-1}$ contribution. The upper bound on $\eta$ is exactly the stepsize condition required by Theorem~\ref{thm:finite_seed_stationarity}.
\end{proof}

If the balancing choice in Eq.~\eqref{eq:balanced_seed_stepsize} is larger than $1/(4L_{\boldsymbol Z}E)$, one may instead use this boundary value and apply Theorem~\ref{thm:finite_seed_stationarity} directly. For a stream of deletion requests, the stepsize is inherited from the original learning run and is not retuned after each request.

The balanced bound also shows how client participation enters the rate. Since
\[
\chi_m=\frac{M-m}{m(M-1)}\le\frac1m,
\]
the quantity inside the square root is at most
\[
\frac{E\zeta_{\boldsymbol Z}^2+\nu_{\mathsf A}^2}{m}.
\]
Hence, when $E$, the seed pool, and the estimator moments are fixed, the leading term decreases as
\begin{equation}
\label{eq:linear_client_speedup_leading_term}
\mathcal O\!\left(\frac1{\sqrt{mER}}\right).
\end{equation}
For fixed $E$, this means that in the regime where the leading stochastic term dominates the bias and the higher-order local-drift term, multiplying the number of participating clients by a factor reduces the required number of communication rounds by the same factor. This is the usual linear client speedup. It is not a global asymptotic statement: as $m$ grows, the $R^{-1}$ local-drift term and any nonvanishing bias eventually become limiting. Full participation also removes the client-sampling heterogeneity term exactly because $\chi_M=0$.

For \textsc{FedZOO-S}, this averaging gain is not affected by any stability noise. For \textsc{FedZOO-NS}, increasing $m$ still reduces the sampling and client-heterogeneity terms, but the Gaussian variance required for stability may leave a residual that does not shrink with $mER$. In particular, when the horizon-dependent term in Eq.~\eqref{eq:fedzoons_sigma_calibration} determines $\sigma_\rho$,
\begin{equation}
\label{eq:scaled_calibrated_noise_variance}
\frac{Kbq\sigma_\rho^2}{mER}
=
\frac{4KbC^2}{M\underline n^2\rho^2}.
\end{equation}
Thus the client-averaging benefit eventually saturates at the stability-induced noise and bias floors. This dependence is important when interpreting the balanced rate: for \textsc{FedZOO-NS}, the variance and second-moment bounds inherit the horizon dependence of $\sigma_\rho$ and should not be treated as fixed asymptotic constants.

\subsection{From Seed-Subspace to Full-Space Guarantees}
\label{sec:full_space_utility}

Up to this point, the theory controls only the component of the gradient seen
through the fixed seed directions. Nothing stronger is possible for an
arbitrary objective: a low-dimensional seed span may be orthogonal to a
nonzero component of the full gradient. A full-space statement therefore needs
one additional property--the seed span must preserve the gradients that can
occur along the training trajectory.

\begin{assumption}[Gradient coverage by the seed span]
\label{ass:seed_gradient_coverage}
For the realized perturbation pool, there exist finite constants
$\kappa_{\boldsymbol Z}>0$ and $\varepsilon_{\mathrm{cov}}\ge0$, fixed
before the subsequent training randomness, such that every possible global
checkpoint $\boldsymbol w$ satisfies
\begin{equation}
\label{eq:seed_gradient_coverage}
\frac1K\|\boldsymbol Z^{\top}\nabla f(\boldsymbol w)\|^2
\ge
\kappa_{\boldsymbol Z}
\|\nabla f(\boldsymbol w)\|^2
-
\varepsilon_{\mathrm{cov}}^2.
\end{equation}
\end{assumption}

\begin{corollary}[Full-space stationarity]
\label{cor:full_space_stationarity}
Under the conditions of Theorem~\ref{thm:finite_seed_stationarity} and
Assumption~\ref{ass:seed_gradient_coverage},
\begin{align}
&\frac1R
\sum_{r=1}^{R}
\mathbb E\|\nabla f(\boldsymbol w^{r-1})\|^2
\notag\\
&\quad\le
\frac1{\kappa_{\boldsymbol Z}}
\left[
\frac{2(f(\boldsymbol w^0)-f^\star)}{\eta ER}
+
6\beta_{\mathsf A}^2
+
\varepsilon_{\mathrm{cov}}^2
\right.
\notag\\
&\qquad\left.
+
8L_{\boldsymbol Z}\eta
\left(
E\chi_m\zeta_{\boldsymbol Z}^2
+
\frac{\nu_{\mathsf A}^2}{m}
\right)
+
2L_{\boldsymbol Z}^2\eta^2E^2G_{\mathsf A}^2
\right].
\label{eq:full_space_stationarity_bound}
\end{align}
\end{corollary}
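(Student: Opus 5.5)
The bound follows by applying Assumption~\ref{ass:seed_gradient_coverage} pointwise at each global checkpoint and then invoking Theorem~\ref{thm:finite_seed_stationarity}. For every realized checkpoint $\boldsymbol w^{r-1}$, Eq.~\eqref{eq:seed_gradient_coverage} rearranges to
\[
\|\nabla f(\boldsymbol w^{r-1})\|^2
\le
\frac1{\kappa_{\boldsymbol Z}}
\left(
\frac1K\|\boldsymbol Z^{\top}\nabla f(\boldsymbol w^{r-1})\|^2
+\varepsilon_{\mathrm{cov}}^2
\right).
\]
This uses $\kappa_{\boldsymbol Z}>0$. The assumption covers every possible global checkpoint, so the inequality holds almost surely along the training trajectory. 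It applies to each $\boldsymbol w^{r-1}$, $r=1,\ldots,R$, including $\boldsymbol w^0$.

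Next, I would take the expectation conditional on the fixed pool $\boldsymbol Z$. The constants $\kappa_{\boldsymbol Z}$ and $\varepsilon_{\mathrm{cov}}$ are fixed before the training randomness, so they pass through the expectation as constants. Averaging over $r=1,\ldots,R$ gives $\frac1R\sum_r\mathbb E\|\nabla f(\boldsymbol w^{r-1})\|^2$ on the left. On the right it gives $\kappa_{\boldsymbol Z}^{-1}$ times the averaged seed-subspace gradient energy, plus $\varepsilon_{\mathrm{cov}}^2/\kappa_{\boldsymbol Z}$.

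Then I would substitute the right-hand side of Eq.~\eqref{eq:full_objective_seed_stationarity} for the averaged seed-subspace energy. This is valid because the stepsize condition and Assumption~\ref{ass:finite_seed_utility} are exactly the hypotheses of the corollary. Collecting the terms inside the bracket yields Eq.~\eqref{eq:full_space_stationarity_bound}.

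There is no real obstacle; the main point of care is measurability and conditioning. Everything must be stated conditionally on $\boldsymbol Z$, matching Theorem~\ref{thm:finite_seed_stationarity}. The coverage constants must not depend on the training randomness; otherwise pulling $1/\kappa_{\boldsymbol Z}$ out of the expectation would fail. The assumption's phrase ``fixed before the subsequent training randomness'' is precisely what licenses that step.
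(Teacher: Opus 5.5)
Your proposal is correct and follows the paper's own proof: rearrange the coverage inequality at each checkpoint, average over the iterates conditionally on $\boldsymbol Z$, and substitute the bound of Theorem~\ref{thm:finite_seed_stationarity}. Your remarks on conditioning on the pool and on why $\kappa_{\boldsymbol Z}$ and $\varepsilon_{\mathrm{cov}}$ can be taken outside the expectation make explicit what the paper's one-line proof leaves implicit.
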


\begin{proof}[Proof of Corollary~\ref{cor:full_space_stationarity}]
Rearrange~\eqref{eq:seed_gradient_coverage}, average over the iterates, and
apply Theorem~\ref{thm:finite_seed_stationarity}.
\end{proof}

\begin{proposition}[Coverage under a fixed gradient subspace]
\label{prop:gaussian_seed_coverage}
Suppose $\boldsymbol z_1,\ldots,\boldsymbol z_K$ are independent
$\mathcal N(\boldsymbol0,I_d)$ vectors. Let
$\mathcal V\subseteq\mathbb R^d$ be a fixed $r$-dimensional subspace chosen
independently of the seed pool. Assume, for every pool under consideration,
that all possible global checkpoints satisfy
\begin{equation}
\label{eq:low_gradient_dimension}
\nabla f(\boldsymbol w)
\in
\mathcal V.
\end{equation}
For $0<\delta<1$, define
\begin{equation}
\label{eq:gaussian_embedding_alpha}
\alpha_{K,r,\delta}
=
\frac{
\sqrt r+\sqrt{2\log(2/\delta)}
}{\sqrt K}.
\end{equation}
If $\alpha_{K,r,\delta}<1$, then, with probability at least $1-\delta$ over
the seed pool,
\begin{equation}
\label{eq:gaussian_uniform_embedding}
(1-\alpha_{K,r,\delta})^2\|\boldsymbol g\|^2
\le
\frac1K\|\boldsymbol Z^{\top}\boldsymbol g\|^2
\le
(1+\alpha_{K,r,\delta})^2\|\boldsymbol g\|^2
\end{equation}
simultaneously for every $\boldsymbol g\in\mathcal V$. Consequently,
Assumption~\ref{ass:seed_gradient_coverage} holds with
\begin{equation}
\label{eq:gaussian_coverage_constants}
\kappa_{\boldsymbol Z}
=
(1-\alpha_{K,r,\delta})^2,
\qquad
\varepsilon_{\mathrm{cov}}=0.
\end{equation}
\end{proposition}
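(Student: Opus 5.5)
The plan is to reduce the statement to the extreme singular values of an $r\times K$ standard Gaussian matrix and then apply the same Gaussian singular-value bound~\cite{vershynin2012introduction} used in the proof of Corollary~\ref{cor:fixed_stepsize_seed_utility}.

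First, fix an orthonormal basis $\boldsymbol V\in\mathbb R^{d\times r}$ of $\mathcal V$. Because $\mathcal V$ is chosen independently of the seed pool, $\boldsymbol V$ is deterministic relative to $\boldsymbol Z$.

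Every $\boldsymbol g\in\mathcal V$ can be written as $\boldsymbol g=\boldsymbol V\boldsymbol c$ with $\|\boldsymbol g\|=\|\boldsymbol c\|$. This gives $\boldsymbol Z^{\top}\boldsymbol g=(\boldsymbol V^{\top}\boldsymbol Z)^{\top}\boldsymbol c$.

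Set $\boldsymbol G=\boldsymbol V^{\top}\boldsymbol Z\in\mathbb R^{r\times K}$. Each column is $\boldsymbol V^{\top}\boldsymbol z_k\sim\mathcal N(\boldsymbol0,I_r)$, by rotational invariance and the fact that $\boldsymbol V^{\top}\boldsymbol V=I_r$. The columns are independent, so $\boldsymbol G^{\top}$ is a $K\times r$ matrix with i.i.d.\ standard Gaussian entries. The claim is therefore equivalent to
\[
\sqrt K-\sqrt r-t\le s_{\min}(\boldsymbol G^{\top})\le s_{\max}(\boldsymbol G^{\top})\le\sqrt K+\sqrt r+t,
\]
with $t=\sqrt{2\log(2/\delta)}$. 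Here $K\ge r$ holds automatically, since $\alpha_{K,r,\delta}<1$.

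Second, I will invoke the two-sided Gaussian singular-value bound for a tall $K\times r$ matrix. It states that, for $t\ge0$, both extreme singular values lie in $[\sqrt K-\sqrt r-t,\ \sqrt K+\sqrt r+t]$ with probability at least $1-2e^{-t^2/2}$. Choosing $t=\sqrt{2\log(2/\delta)}$ makes the failure probability exactly $\delta$.

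On that event, for every $\boldsymbol c$, I square the singular-value bounds and divide by $K$. This gives $(1-\alpha_{K,r,\delta})^2\|\boldsymbol c\|^2\le K^{-1}\|\boldsymbol G^{\top}\boldsymbol c\|^2\le(1+\alpha_{K,r,\delta})^2\|\boldsymbol c\|^2$. The assumption $\alpha_{K,r,\delta}<1$ ensures the lower factor $1-\alpha_{K,r,\delta}$ is positive, so squaring preserves the inequality. This yields \eqref{eq:gaussian_uniform_embedding} simultaneously for all $\boldsymbol g\in\mathcal V$, because the event does not depend on $\boldsymbol g$.

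Third, for coverage, note that \eqref{eq:low_gradient_dimension} places every possible checkpoint gradient in $\mathcal V$. The lower inequality therefore gives Assumption~\ref{ass:seed_gradient_coverage} with $\kappa_{\boldsymbol Z}=(1-\alpha_{K,r,\delta})^2>0$ and $\varepsilon_{\mathrm{cov}}=0$.

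The main point needing care is that the guarantee must be uniform over the trajectory-dependent checkpoints. Those checkpoints depend on $\boldsymbol Z$ and on the training randomness, so a fixed-vector concentration argument would not suffice. The subspace embedding handles this: once the single event holds, every vector in $\mathcal V$ is covered, regardless of which checkpoints the data-dependent trajectory actually visits. The constants are also determined on this pool event before any subsequent training randomness, as Assumption~\ref{ass:seed_gradient_coverage} requires.
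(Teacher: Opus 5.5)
Your proposal is correct and follows essentially the same route as the paper: expand $\boldsymbol g=\boldsymbol U\boldsymbol y$ in an orthonormal basis $\boldsymbol U$ of $\mathcal V$, note that $\boldsymbol Z^{\top}\boldsymbol U$ is a $K\times r$ standard Gaussian matrix, apply the two-sided singular-value bound with $t=\sqrt{2\log(2/\delta)}$, and use the fact that this single event is uniform over $\mathcal V$, so the pool-dependent trajectory is covered. Your explicit checks, that $\alpha_{K,r,\delta}<1$ forces $K>r$ and keeps $1-\alpha_{K,r,\delta}>0$ before squaring, are details the paper leaves implicit.
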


\begin{proof}[Proof of Proposition~\ref{prop:gaussian_seed_coverage}]
Let $\boldsymbol U\in\mathbb R^{d\times r}$ have orthonormal columns spanning
$\mathcal V$. Then $\boldsymbol Z^{\top}\boldsymbol U$ is a
$K\times r$ standard Gaussian matrix. The two-sided Gaussian
singular-value bound used in the proof of
Corollary~\ref{cor:fixed_stepsize_seed_utility} gives, with probability
at least $1-\delta$,
\begin{align}
\sigma_{\min}(\boldsymbol A^{\top}\boldsymbol U)
&\ge
1-\alpha_{K,r,\delta},
\notag\\
\sigma_{\max}(\boldsymbol A^{\top}\boldsymbol U)
&\le
1+\alpha_{K,r,\delta}.
\end{align}
Applying these inequalities to
$\boldsymbol g=\boldsymbol U\boldsymbol y$ proves
\eqref{eq:gaussian_uniform_embedding}. The event is uniform over the entire
subspace, so it remains valid although the optimization iterates depend on
the same seed pool.
\end{proof}

\begin{corollary}[High-probability full-space guarantee]
\label{cor:gaussian_full_space_stationarity}
Suppose the Gaussian-pool and fixed-subspace conditions of
Proposition~\ref{prop:gaussian_seed_coverage} hold. Let $0<\delta<1$,
$\alpha_{K,r,\delta/2}<1$, and choose a deterministic step size
\begin{equation}
\label{eq:joint_gaussian_stepsize}
0<\eta\le\frac1{4E\overline L_{\delta/2}}.
\end{equation}
Assume utility regularity holds on
$\mathcal E_{\mathrm{sp}}(\delta/2)$. With probability at least $1-\delta$
over the seed pool,
\begin{equation}
\label{eq:gaussian_full_space_bound}
\frac1R\sum_{r=1}^{R}
\mathbb E\bigl[\|\nabla f(\boldsymbol w^{r-1})\|^2
\mid\boldsymbol Z\bigr]
\le
\frac{1}{(1-\alpha_{K,r,\delta/2})^2}
\left[
\frac{2(f(\boldsymbol w^0)-f^\star)}{\eta ER}
+6\beta_{\mathsf A}^2
+8L_{\boldsymbol Z}\eta\left(E\chi_m\zeta_{\boldsymbol Z}^2+\frac{\nu_{\mathsf A}^2}{m}\right)
+2L_{\boldsymbol Z}^2\eta^2E^2G_{\mathsf A}^2
\right].
\end{equation}
\end{corollary}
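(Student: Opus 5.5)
The plan is to intersect two high-probability events over the seed pool, each of probability at least $1-\delta/2$, and then chain Theorem~\ref{thm:finite_seed_stationarity} with Corollary~\ref{cor:full_space_stationarity} on that intersection.

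\textbf{Step 1 (step-size event).} The first event is the spectral event $\mathcal E_{\mathrm{sp}}(\delta/2)$ from Corollary~\ref{cor:fixed_stepsize_seed_utility}, applied with $\delta/2$ in place of $\delta$. The Gaussian singular-value tail gives $\mathbb P_{\boldsymbol Z}(\mathcal E_{\mathrm{sp}}(\delta/2))\ge 1-\delta/2$. On this event $L_{\boldsymbol Z}\le\overline L_{\delta/2}$, so the deterministic choice~\eqref{eq:joint_gaussian_stepsize} implies the step-size condition $4\eta L_{\boldsymbol Z}E\le1$. Since utility regularity is assumed on $\mathcal E_{\mathrm{sp}}(\delta/2)$, Theorem~\ref{thm:finite_seed_stationarity} applies conditionally on each such pool and yields the seed-subspace bound~\eqref{eq:full_objective_seed_stationarity}.

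\textbf{Step 2 (coverage event).} The second event is the uniform embedding event of Proposition~\ref{prop:gaussian_seed_coverage}, applied with $\delta/2$. Since $\alpha_{K,r,\delta/2}<1$, this event has probability at least $1-\delta/2$. On it, Assumption~\ref{ass:seed_gradient_coverage} holds with $\kappa_{\boldsymbol Z}=(1-\alpha_{K,r,\delta/2})^2$ and $\varepsilon_{\mathrm{cov}}=0$. I will stress that this coverage holds simultaneously for every vector in the fixed subspace $\mathcal V$. It therefore applies to all checkpoints $\boldsymbol w^{r-1}$, even though those checkpoints depend on the same seed pool; no union bound over iterates is needed.

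\textbf{Step 3 (combine).} A union bound shows that both events hold with probability at least $1-\delta$. On their intersection, the hypotheses of Corollary~\ref{cor:full_space_stationarity} are met conditionally on $\boldsymbol Z$. Rearranging the coverage inequality pointwise gives $\|\nabla f(\boldsymbol w^{r-1})\|^2\le\kappa_{\boldsymbol Z}^{-1}K^{-1}\|\boldsymbol Z^\top\nabla f(\boldsymbol w^{r-1})\|^2$. Taking conditional expectation given $\boldsymbol Z$, averaging over $r$, and inserting the seed-subspace bound yields~\eqref{eq:gaussian_full_space_bound}, with $\varepsilon_{\mathrm{cov}}^2=0$ dropped.

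\textbf{Main obstacle.} The subtle point is measurability and ordering: the coverage constants must be fixed before the training randomness, as Assumption~\ref{ass:seed_gradient_coverage} requires. This holds because both events depend only on $\boldsymbol Z$, and the training expectation is taken conditionally on the pool. The rest is a direct composition of earlier results.
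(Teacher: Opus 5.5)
Your proposal is correct and matches the paper's proof essentially step for step. The paper also intersects $\mathcal E_{\mathrm{sp}}(\delta/2)$ with the uniform embedding event of Proposition~\ref{prop:gaussian_seed_coverage} at level $\delta/2$, uses a union bound (without independence) to get probability at least $1-\delta$, and then applies Corollary~\ref{cor:full_space_stationarity} conditionally on each pool in the intersection; your remark that the embedding is uniform over $\mathcal V$, and therefore covers the pool-dependent checkpoints, is the same point the paper makes in proving that proposition.
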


\begin{proof}[Proof of Corollary~\ref{cor:gaussian_full_space_stationarity}]
Let $\mathcal E_{\mathrm{cov}}(\delta/2)$ be the uniform embedding event
in Proposition~\ref{prop:gaussian_seed_coverage}, and set
\begin{equation}
\label{eq:joint_gaussian_utility_event}
\mathcal E_\delta
=
\mathcal E_{\mathrm{sp}}(\delta/2)
\cap\mathcal E_{\mathrm{cov}}(\delta/2).
\end{equation}
A union bound gives $\mathbb P_{\boldsymbol Z}(\mathcal E_\delta)
\ge1-\delta$; independence of the two events is not needed. On this event,
the fixed step size is valid and the coverage constant is at least
$(1-\alpha_{K,r,\delta/2})^2$. Apply
Corollary~\ref{cor:full_space_stationarity} conditionally on each such pool.
\end{proof}

If Eq.~\eqref{eq:uniform_model_gradient_moment} also holds, the same event
split yields the unconditional bound
\begin{align}
&\frac1R\sum_{r=1}^{R}
\mathbb E_{\boldsymbol Z,\mathrm{alg}}
\|\nabla f(\boldsymbol w^{r-1})\|^2
\notag\\
&\quad\le
\frac{\mathbb E_{\boldsymbol Z}\!\left[\mathbf1_{\mathcal E_\delta}
\left(
\frac{2(f(\boldsymbol w^0)-f^\star)}{\eta ER}
+6\beta_{\mathsf A}^2
+8L_{\boldsymbol Z}\eta\left(E\chi_m\zeta_{\boldsymbol Z}^2+\frac{\nu_{\mathsf A}^2}{m}\right)
+2L_{\boldsymbol Z}^2\eta^2E^2G_{\mathsf A}^2
\right)\right]}
{(1-\alpha_{K,r,\delta/2})^2}
+\delta G_f^2,
\label{eq:unconditional_full_space_with_tail}
\end{align}
because $\|\nabla f\|^2\le G_f^2$ on the excluded pools. The fixed-subspace
condition concerns the gradients themselves at all possible global
checkpoints; it is not implied merely by a low-rank Hessian or the existence of
a low-dimensional set of good solutions.

The coverage condition cannot in general be removed. If $\operatorname{rank}(\boldsymbol Z)<d$, choose
\begin{equation}
f(\boldsymbol w)=\frac12\|\boldsymbol w-\boldsymbol w^\star\|^2
\end{equation}
with $\boldsymbol w^\star-\boldsymbol w^0\notin\operatorname{range}(\boldsymbol Z)$. The minimizer over $\mathcal W_{\boldsymbol Z}$ is
\begin{equation}
\boldsymbol w_{\boldsymbol Z}^{\star}
=
\boldsymbol w^0+
\Pi_{\boldsymbol Z}(\boldsymbol w^\star-\boldsymbol w^0),
\end{equation}
where $\Pi_{\boldsymbol Z}$ is the orthogonal projector onto $\operatorname{range}(\boldsymbol Z)$. Its gradient is orthogonal to that range, so
\begin{equation}
\frac1K\|\boldsymbol Z^{\top}\nabla f(\boldsymbol w_{\boldsymbol Z}^{\star})\|^2=0,
\qquad
\|\nabla f(\boldsymbol w_{\boldsymbol Z}^{\star})\|>0.
\end{equation}
For an independent Gaussian pool with $K<d$, the same construction can be fixed before the pool is sampled: any nonzero $\boldsymbol w^\star-\boldsymbol w^0$ lies outside $\operatorname{range}(\boldsymbol Z)$ almost surely. Thus a method confined to a rank-deficient seed span cannot guarantee full-space stationarity for arbitrary smooth objectives. A condition relating the seed span to the gradients encountered during training is therefore genuinely necessary, rather than an artifact of the proof.

The exactness result now closes the loop between unlearning and utility. After
request $j$ is processed, the returned state has exactly the law of a fresh run
on the retained data. The same convergence theorem can therefore be applied to
that run without introducing a separate unlearning-error term. Only the
objective, and hence its data-dependent constants, changes after deletion.

\subsection{Utility After Unlearning}
\label{sec:post_unlearning_utility}

After request $j$, denote the retained local datasets by
$\mathcal D_c^{(j+1)}$ and write
\begin{align}
f_c^{(j)}(\boldsymbol w)
&=
\frac1{|\mathcal D_c^{(j+1)}|}
\sum_{x\in\mathcal D_c^{(j+1)}}
\ell(\boldsymbol w;x),
\notag\\
f^{(j)}(\boldsymbol w)
&=
\frac1M\sum_{c=1}^{M}f_c^{(j)}(\boldsymbol w).
\label{eq:retained_full_objective}
\end{align}
The constants in Assumption~\ref{ass:finite_seed_utility} are evaluated for
this retained objective and are denoted by
$\beta_{\mathsf A,j}$, $\nu_{\mathsf A,j}$,
$G_{\mathsf A,j}$, and $\zeta_{\boldsymbol Z,j}$; also let
$f_j^\star=\inf_{\boldsymbol w}f^{(j)}(\boldsymbol w)>-\infty$.
The seed pool and algorithmic hyperparameters are inherited from the original
run. The same sample-wise smoothness constant $L$ therefore gives the same
$L_{\boldsymbol Z}$. The other constants may change after a deletion and must
be valid along the corresponding retained-data trajectory.

\begin{corollary}[Utility after unlearning]
\label{cor:retained_finite_seed_utility}
Consider the non-adaptive request stream in
Eq.~\eqref{eq:nonadaptive_request_stream}. After request $j$, continue the
state returned by the unlearning procedure on $\mathcal D^{(j+1)}$ to the
fixed horizon $R$ with no further requests, and denote the resulting global
trajectory by $\{\boldsymbol w_{\mathsf A}^{r,(j)}\}_{r=0}^{R}$. Suppose
Assumption~\ref{ass:finite_seed_utility} holds for this retained dataset.
Condition on a realized seed pool $\boldsymbol Z$, keep the original step size,
and assume $4\eta L_{\boldsymbol Z}E\le1$. Then
\begin{equation}
\label{eq:retained_finite_seed_utility}
\frac1R
\sum_{r=1}^{R}
\mathbb E
\left[\frac1K\|\boldsymbol Z^{\top}\nabla f^{(j)}
(\boldsymbol w_{\mathsf A}^{r-1,(j)})\|^2\right]
\le
\frac{2(f^{(j)}(\boldsymbol w^0)-f_j^\star)}{\eta ER}
+6\beta_{\mathsf A,j}^2
+8L_{\boldsymbol Z}\eta
\left(E\chi_m\zeta_{\boldsymbol Z,j}^2+\frac{\nu_{\mathsf A,j}^2}{m}\right)
+2L_{\boldsymbol Z}^2\eta^2E^2G_{\mathsf A,j}^2.
\end{equation}
The expectation is over the learning and unlearning randomness, conditional on
the fixed seed pool and request sequence. Earlier deletion requests do not add
a separate error term; their effect enters only through the retained objective
and its constants.
\end{corollary}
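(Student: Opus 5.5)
The plan is to reduce Corollary~\ref{cor:retained_finite_seed_utility} to Theorem~\ref{thm:finite_seed_stationarity} applied to a fresh run on the retained data, using exactness to identify the laws. Fix the seed pool $\boldsymbol Z$ and the non-adaptive request stream. By Corollary~\ref{cor:fedzouls_sequential} for $\mathsf A=\mathsf S$, or Corollary~\ref{cor:fedzoulns_sequential} for $\mathsf A=\mathsf{NS}$, the complete state $X_j^{+}$ returned after request $j$ at position $T_{ul}^{(j)}$ has law $\mathcal O_{\mathsf A}^{[T_{ul}^{(j)}]}(\mathcal D^{(j+1)})$. The continuation to horizon $R$ uses the retained-data transition kernels of the learner with fresh randomness and no further requests. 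Hence the joint law of the continued trajectory, including the prefix checkpoints $\boldsymbol w^{0},\ldots,\boldsymbol w^{R_{ul}^{(j)}-1}$ stored in the state, equals $\mathcal O_{\mathsf A}^{[T]}(\mathcal D^{(j+1)})$. This follows from composing a state with the correct marginal law with the correct Markov kernels.

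One point must be checked here. The stored global accumulators $\boldsymbol a^{r}$ for all completed rounds are part of the maintained state covered by the exactness theorems. Consequently the entire global trajectory $\{\boldsymbol w_{\mathsf A}^{r,(j)}\}_{r=0}^{R}$, not only the final model, has the law of a fresh run. Any partially completed round is handled by the fact that the kernels for the remainder of that round, with the client set kept, are the learner's own.

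Next, the constants need attention. The seed pool and hyperparameters are inherited, so $L_{\boldsymbol Z}=L\|\boldsymbol A\|_{\mathrm{op}}^2$ is unchanged and the assumed step-size condition $4\eta L_{\boldsymbol Z}E\le1$ is exactly \eqref{eq:utility_stepsize_condition}. Assumption~\ref{ass:finite_seed_utility} is assumed for the retained dataset, with constants $\beta_{\mathsf A,j},\nu_{\mathsf A,j},G_{\mathsf A,j},\zeta_{\boldsymbol Z,j}$ and lower bound $f_j^\star$. For $\mathsf{NS}$ the noise level $\sigma_\rho$ was calibrated on the original data, but it is simply a fixed parameter of the retained-data learner, and its effect is already inside $\nu_{\mathsf{NS},j}$ and $G_{\mathsf{NS},j}$. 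Likewise, $n_c$ and the sampling variances change only through these constants.

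Finally, since the expectation in \eqref{eq:retained_finite_seed_utility} is a functional of the trajectory law alone, it coincides with the same expectation for a fresh $\mathsf A$-run on $\mathcal D^{(j+1)}$. Theorem~\ref{thm:finite_seed_stationarity}, applied to the objective $f^{(j)}$ with initial point $\boldsymbol w^0$ and gap $f^{(j)}(\boldsymbol w^0)-f_j^\star$, then gives the bound verbatim. The main obstacle is the first step: it requires that equality in law holds jointly over the whole checkpoint sequence, not only over the endpoint. I would argue this by the induction over positions already used in the sequential corollaries, noting that the pre-request checkpoints are either preserved by coupling or regenerated by replay.
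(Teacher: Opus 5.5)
Your proposal is correct and follows essentially the same route as the paper. The paper uses the sequential-request corollaries to give the law $\mathcal O_{\mathsf A}^{[T_{ul}^{(j)}]}(\mathcal D^{(j+1)})$ for the full maintained state, including the saved accumulators $\boldsymbol a^r$, notes that continuing with the retained-data kernels preserves this law through round $R$, and then applies Theorem~\ref{thm:finite_seed_stationarity} with the retained-data constants. The obstacle you flag, joint equality in law over the whole checkpoint sequence, is stronger than needed: the left-hand side is a sum of per-checkpoint expectations, so marginal equality of each $\boldsymbol w^{r-1}$ already suffices, although the exactness theorems deliver the joint statement anyway.
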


\begin{proof}[Proof of Corollary~\ref{cor:retained_finite_seed_utility}]
Condition on the seed pool and the fixed request sequence. Exact unlearning for
one request, together with the sequential-request corollaries, says that after
request $j$ the returned state has exactly the law of the corresponding
learner on $\mathcal D^{(j+1)}$ at the same logical point. Continuing both
states with the ordinary retained-data learning transitions preserves this
equality in law through round $R$. Theorem~\ref{thm:finite_seed_stationarity}
therefore applies directly to the resulting retained-data trajectory and gives
Eq.~\eqref{eq:retained_finite_seed_utility}.

If the seed pool is random, the same argument can be averaged over any event on
which the retained-data assumptions and step-size condition hold. In
particular, Corollary~\ref{cor:fixed_stepsize_seed_utility} applies on its
spectral event whenever the retained objectives satisfy the required
regularity there. An unconditional expectation additionally requires control
of the complementary seed-pool event, exactly as in
Eq.~\eqref{eq:unconditional_seed_utility_with_tail}.
\end{proof}

The preceding corollary controls the gradient component visible to the fixed
seed span. If that span also captures the retained full gradient, the same
argument gives a full-space statement.

\begin{corollary}[Full-space utility after unlearning]
\label{cor:retained_full_space_utility}
Under the conditions of Corollary~\ref{cor:retained_finite_seed_utility},
suppose that, at every global checkpoint that can arise on the retained-data
trajectory,
\begin{equation}
\frac1K\|\boldsymbol Z^{\top}\nabla f^{(j)}(\boldsymbol w)\|^2
\ge
\kappa_{\boldsymbol Z,j}
\|\nabla f^{(j)}(\boldsymbol w)\|^2
-
\varepsilon_{\mathrm{cov},j}^2
\end{equation}
for some $\kappa_{\boldsymbol Z,j}>0$ and
$\varepsilon_{\mathrm{cov},j}\ge0$. Then, conditionally on the seed pool,
\begin{equation}
\label{eq:retained_full_space_utility}
\frac1R
\sum_{r=1}^{R}
\mathbb E
\|\nabla f^{(j)}
(\boldsymbol w_{\mathsf A}^{r-1,(j)})\|^2
\le
\frac{1}{\kappa_{\boldsymbol Z,j}}
\left[
\frac{2(f^{(j)}(\boldsymbol w^0)-f_j^\star)}{\eta ER}
+6\beta_{\mathsf A,j}^2
+8L_{\boldsymbol Z}\eta
\left(E\chi_m\zeta_{\boldsymbol Z,j}^2+\frac{\nu_{\mathsf A,j}^2}{m}\right)
+2L_{\boldsymbol Z}^2\eta^2E^2G_{\mathsf A,j}^2
+\varepsilon_{\mathrm{cov},j}^2
\right].
\end{equation}
\end{corollary}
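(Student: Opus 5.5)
The plan is to combine Corollary~\ref{cor:retained_finite_seed_utility} with the retained-data coverage inequality, in the same way that Corollary~\ref{cor:full_space_stationarity} was obtained from Theorem~\ref{thm:finite_seed_stationarity}. Fix the seed pool $\boldsymbol Z$ and the non-adaptive request sequence. By exact unlearning (Theorems~\ref{thm:fedzouls_exact} and~\ref{thm:fedzoulns_exact}, extended to streams by Corollaries~\ref{cor:fedzouls_sequential} and~\ref{cor:fedzoulns_sequential}), the continued trajectory $\{\boldsymbol w_{\mathsf A}^{r,(j)}\}_{r=0}^{R}$ has the law of a fresh run of the learner on $\mathcal D^{(j+1)}$. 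Consequently, every global checkpoint it visits is one that can arise on the retained-data trajectory, almost surely. This is exactly what allows the hypothesized coverage inequality to be applied pathwise at each $\boldsymbol w_{\mathsf A}^{r-1,(j)}$.

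The steps are as follows. First, rearrange the coverage inequality at each such checkpoint:
\[
\|\nabla f^{(j)}(\boldsymbol w)\|^2\le\frac1{\kappa_{\boldsymbol Z,j}}\Bigl(\frac1K\|\boldsymbol Z^{\top}\nabla f^{(j)}(\boldsymbol w)\|^2+\varepsilon_{\mathrm{cov},j}^2\Bigr).
\]
Second, take conditional expectation given $\boldsymbol Z$ over the learning and unlearning randomness, then average over $r=1,\ldots,R$. The constants $\kappa_{\boldsymbol Z,j}$ and $\varepsilon_{\mathrm{cov},j}$ are fixed once the pool and request stream are fixed, so they pass through the expectation unchanged. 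Third, bound the resulting averaged seed-subspace term by Eq.~\eqref{eq:retained_finite_seed_utility}. The step-size condition $4\eta L_{\boldsymbol Z}E\le1$ and the retained-data regularity are inherited from the hypotheses of that corollary. Adding $\varepsilon_{\mathrm{cov},j}^2$ inside the bracket and multiplying by $1/\kappa_{\boldsymbol Z,j}$ gives Eq.~\eqref{eq:retained_full_space_utility}.

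The only delicate point is the first step. The coverage assumption must hold at every checkpoint reachable by the post-unlearning trajectory, not only at those of the original run. I will justify this through the equality in law from exact unlearning: any event of probability zero under the fresh retained-data run also has probability zero for the unlearned continuation. Beyond that, the argument is pure rearrangement plus monotonicity of expectation, and it introduces no additional unlearning error term.
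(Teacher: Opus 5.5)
Your proposal is correct and matches the paper's proof: rearrange the retained-data coverage inequality at each checkpoint, average over the trajectory, and bound the seed-subspace term by Corollary~\ref{cor:retained_finite_seed_utility}. Your added remark, that equality in law from exact unlearning makes every checkpoint of the continued trajectory almost surely one that can arise on the retained-data run, makes explicit a point the paper leaves implicit.
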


\begin{proof}[Proof of Corollary~\ref{cor:retained_full_space_utility}]
Apply the displayed gradient-coverage inequality at each retained-data
checkpoint and average it over the trajectory in
Corollary~\ref{cor:retained_finite_seed_utility}. This gives
Eq.~\eqref{eq:retained_full_space_utility} directly.

For a random Gaussian pool, the averaging event must satisfy both the step-size
condition and the retained-data coverage property. If the gradients of a
retained objective lie in a fixed subspace, the Gaussian embedding result above
can be applied to that subspace with the inherited step size. A guarantee that
holds simultaneously for several retained objectives requires a common
embedding event, or an explicit union bound over the relevant subspaces; the
coverage statement for the original objective alone is not enough.
\end{proof}

\end{document}